\documentclass[10pt]{article} 
\usepackage[preprint]{tmlr}

\usepackage[T1]{fontenc}
\usepackage[normalem]{ulem}
\usepackage{tipa}
\usepackage{csquotes}
\usepackage[hidelinks]{hyperref}
\usepackage{url}
\hypersetup{    
	colorlinks = true,
	linkcolor  = black,
	urlcolor   = gray,
	citecolor  = blue
}
\usepackage[]{natbib}
\let\cite\citep
\usepackage{titletoc} 
\usepackage{tocloft}
\usepackage[acronym]{glossaries}
\usepackage{wasysym} 
\usepackage{graphicx}
\usepackage{subcaption}
\usepackage{wrapfig}
\usepackage{booktabs}
\usepackage{arydshln}
\usepackage{multirow}
\usepackage{amssymb}
\usepackage{esint}
\usepackage{amsthm}

\theoremstyle{definition}
\newtheorem{definition}{Definition}[section]
\newtheorem{assumption}[definition]{Assumption}
\newtheorem*{assumptionNoNb}{Assumption}

\theoremstyle{plain}
\newtheorem{theorem}[definition]{Theorem}
\newtheorem*{theoremNoNb}{Theorem}
\newtheorem{lemma}[definition]{Lemma}
\newtheorem*{lemmaNoNb}{Lemma}

\theoremstyle{remark}
\newtheorem{remark}[definition]{Remark}

\theoremstyle{plain}

\allowdisplaybreaks
\usepackage{bbold}
\renewcommand{\iff}{\Longleftrightarrow}
\usepackage{algorithm}
\usepackage{algpseudocode}

\newglossaryentry{formula}{name=formula,
                           description={A mathematical expression}}

\newacronym{wrt}{w.r.t.}{with respect to}
\newacronym{EU}{EU}{European Union}
\newacronym{iid}{iid}{independent and identically distributed}
\newacronym{iff}{iff}{if and only if}
\newacronym{wlog}{w.l.o.g.}{without loss of generality}
\newacronym{Wlog}{W.l.o.g.}{Without loss of generality}
\newacronym{SLE}{SLE}{system of linear equations}
\newacronym{SNLE}{SNLE}{system of non-linear equations}
\newacronym{CDF}{CDF}{cumulative distribution function}
\newacronym{OP}{OP}{optimization problem}
\newacronym{AI}{AI}{artificial intelligence}
\newacronym{XAI}{XAI}{explainable artificial intelligence}
\newacronym{ML}{ML}{machine learning}
\newacronym{DL}{DL}{deep learning}
\newacronym{GDL}{GDL}{geometric deep learning}
\newacronym{SOTA}{SOTA}{state of the art}
\newacronym{TP}{TP}{true positives}
\newacronym{FP}{FP}{false positives}
\newacronym{FN}{FN}{false negatives}
\newacronym{TN}{TN}{true negatives}
\newacronym{ACC}{ACC}{accuracy}                     
\newacronym{TPR}{TPR}{true positive rate}           
\newacronym{FPR}{FPR}{false positive rate}          
\newacronym{FNR}{FNR}{false negative rate}          
\newacronym{TNR}{TNR}{true negative rate}           
\newacronym{PPV}{PPV}{positive predictive value}    
\newacronym{FDR}{FDR}{false discorvery rate}        
\newacronym{FOR}{FOR}{false omission rate}          
\newacronym{NPV}{NPV}{negative predictive value}    
\newacronym{ROC}{ROC-curve}{receiver operating characteristic curve}
\newacronym{AUC}{AUC}{area under the (ROC) curve}
\newacronym{IG}{IG}{information gain}
\newacronym{MSE}{MSE}{mean squared error}
\newacronym{AE}{AE}{absolute error}
\newacronym{MAE}{MAE}{mean absolute error}
\newacronym{MRAE}{MRAE}{mean relative absolute error}
\newacronym{AC}{AC}{absolute cost}
\newacronym{MAC}{MAC}{mean absolute cost}
\newacronym{DI}{DI}{disparate impact}
\newacronym{DP}{DP}{demographic parity}
\newacronym{EOs}{EOs}{equalized odds}
\newacronym{EO}{EO}{equal opportunity}
\newacronym{SVM}{SVM}{support vector machine}
\newacronym{MLP}{MLP}{multi layer perceptron}
\newacronym{LLM}{LLM}{large language model}
\newacronym{NN}{NN}{neural network}
\newacronym{PINN}{PINN}{physics-informed neural network}
\newacronym{GNN}{GNN}{graph neural network}
\newacronym{MPNN}{MPNN}{message passing neural network}
\newacronym{GCN}{GCN}{graph convolutional network}
\newacronym{PI-GNN}{PI-GNN}{physics-informed graph neural network}
\newacronym{PI-GCN}{PI-GCN}{physics-informed graph convolutional network}
\newacronym{ADR-GNN}{ADR-GNN}{advection-diffusion-reaction graph neural network}
\newacronym{QAMP}{QAMP}{quantum advection message passing}
\newacronym{MeGA-MP}{MeGA-MP}{metric graph advection message passing}
\newacronym{WL}{WL}{Weisfeiler-Lehmann}
\newacronym{WDN}{WDN}{water distribution network}
\newacronym{WDS}{WDS}{water distribution system}
\newacronym{ERC}{ERC}{European Research Council}
\newacronym{PI-Algo}{PI-algorithm for hydraulic state reconstruction}{physics-informed algorithm for hydraulic state reconstruction}
\newacronym{ODE}{ODE}{ordinary differential equation}
\newacronym{PDE}{PDE}{partial differential equation}
\newacronym{ADR}{ADR}{advection-diffusion-reaction}
\newacronym{ADR-PDE}{ADR-PDE}{advection-diffusion-reaction partial differential equation}
\newacronym{MoC}{MoC}{Method of Characteristics}

\newcommand{\n}[1]{{n_{\text{#1}}}}                     
\newcommand{\kn}[1]{{k_{\text{#1}}}}                    
\newcommand{\tn}[1]{{t_{\text{#1}}}}                    
\newcommand{\N}{\mathbb{N}}                             
\newcommand{\R}{\mathbb{R}}                             
\newcommand{\An}[1]{{A_{\text{#1}}}}                    
\newcommand{\Tn}[1]{{T_{\text{#1}}}}                    
\newcommand{\Xm}{\mathbf{X}}                            
\newcommand{\xm}{\mathbf{x}}                            
\newcommand{\ym}{\mathbf{y}}                            
\newcommand{\ymhat}{\hat{\ym}}                          
\newcommand{\vhat}{\hat{v}}                             
\newcommand{\uhat}{\hat{u}}                             
\newcommand{\G}{\mathcal{G}}                            
\newcommand{\T}{\mathcal{T}}                            
\newcommand{\That}{\hat{\T}}                            
\newcommand{\Nbh}{\mathcal{N}}                          
\newcommand{\Nbhin}{{\Nbh_{\text{in}}}}                 
\newcommand{\Nbhout}{{\Nbh_{\text{out}}}}               
\newcommand{\Children}{\mathcal{C}}                     
\newcommand{\ptilde}{{\tilde{p}}}                       
\newcommand{\phat}{{\hat{p}}}                           
\newcommand{\Pa}{\mathcal{P}}                           
\newcommand{\Pahat}{\hat{\Pa}}                          
\newcommand{\qoverline}{\overline{q}}                   
\newcommand{\nuoverline}{\overline{\nu}}                
\newcommand{\chat}{\hat{c}}                             
\newcommand{\ztilde}{\tilde{z}}                         
\newcommand{\zcurl}{\text{\textctz}}                    
\newcommand{\scurl}{\text{\textrtails}}                 
\newcommand{\that}{{\hat{t}}}                           
\newcommand{\dm}{\mathbf{d}}                            
\newcommand{\qm}{\mathbf{q}}                            
\newcommand{\num}{\boldsymbol{\nu}}                     
\newcommand{\cm}{\mathbf{c}}                            
\newcommand{\cmhat}{\hat{\cm}}                          
\newcommand{\zm}{\mathbf{z}}                            
\newcommand{\lm}{\mathbf{l}}                            
\newcommand{\Vr}{{V_{\text{r}}}}                        
\newcommand{\Vc}{{V_{\text{c}}}}                        
\newcommand{\Vb}{{V_{\text{b}}}}                        

\DeclareMathOperator{\sgn}{sgn}                         
\DeclareMathOperator{\im}{im}                           
\DeclareMathOperator{\divtext}{div}                     

\DeclareMathOperator{\relu}{ReLU}                       

\newacronym{FDM}{FDM}{finite difference method}
\newacronym{IVP}{IVP}{initial value problem}
\newacronym{BVP}{BVP}{boundary value problem}

\definecolor{pass-through}{HTML}{00346F}
\definecolor{inverse-pass-through}{HTML}{4F7CB0}
\definecolor{self-loop}{HTML}{C95100}
\definecolor{inverse-self-loop}{HTML}{FC9432}
\definecolor{new}{HTML}{254ADB}

\title{MeGA-MP: Metric Graph Advection Message Passing \\
    \normalsize
    A Physics-Informed Message Passing Operator for Advection-Dominated Metric Graphs
    }

\author{\name Janine Strotherm\thanks{Authors contributed equally.}
    \email jstrotherm@techfak.uni-bielefeld.de \\
    \addr Bielefeld University
    \AND
    \name Luca Hermes$^{*}$
    \email lhermes@techfak.uni-bielefeld.de \\
    \addr Bielefeld University
    \AND
    \name André Artelt 
    \email aartelt@techfak.uni-bielefeld.de \\
    \addr Bielefeld University
    \AND
    \name Barbara Hammer
    \email bhammer@techfak.uni-bielefeld.de \\
    \addr Bielefeld University
}

\def\month{MM}  
\def\year{YYYY} 
\def\openreview{\url{https://openreview.net/forum?id=XXXX}} 

\begin{document}

\maketitle

\begin{abstract}
    Many real-world systems are organized as networks where spatio-temporal dynamics unfold along connections and not discretely between nodes. 
    Examples include utility networks such as water distribution systems or gas networks, electrical grids, and traffic flow networks.
    Such systems are naturally modeled as \textit{metric graphs}, where edges correspond to one-dimensional Euclidean subspaces connected at vertices.
    Metric graphs are independent of an underlying global Euclidean space, limiting direct application of typical PINNs and operator-learning methods. 
    Especially transport dynamics like advection require a methodology able to capture antisymmetric and long-range dependencies on graphs, which is itself a challenge.
    We propose a novel physics-informed message passing operator that encodes linear advection on metric graphs as an inductive bias.
    In the purely advective setting, the operator provably recovers the exact dynamics up to a theoretically derived discretization error without any training.
    Combined with trainable components like MLPs, our message passing operator extends to realistic advection-reaction dynamics in water distribution systems, where we achieve superior performance compared to baselines and zero-shot generalization across different graph topologies.
\end{abstract}

\section{Introduction}
In the last years, physics-informed \glspl*{GNN} have shown impressive zero-shot generalization capabilities when addressing dynamical systems on conventional graphs \cite{Thangamuthu2022PI-GNNs_Methods_Benchmark}. 
However, many real-world systems do not exhibit discrete node-to-node dynamics. Instead, dynamics unfold along connections between nodes.
Examples include quantum dynamics, water distribution systems, gas networks, electrical grids, and traffic flow networks \cite{Böttcher2024PI-GNNs_Methods_DynmaicalProcessesOnMetricNetworks,Rossman2020WaterDomain_StateSimulations_Epanet2.2}.
While a common approach is to discretize the domain and recover a conventional graph, \textit{metric graphs} naturally capture the geometry of these systems. 
In a metric graph, each edge is associated with a one-dimensional domain -- an interval of edge-dependent length -- governed by edge dynamics, expressed as an \gls*{ODE} or, more commonly, a \gls*{PDE}. In combination with suitable coupling conditions at nodes, such as continuity or conservation of mass or flux, this defines a coupled system of \glspl*{PDE} \cite{Böttcher2024PI-GNNs_Methods_DynmaicalProcessesOnMetricNetworks,Blechschmidt2025MGNOs_NormalMGNOs_Solution-basedArchitechtures_DeepONetsForDriftDiffusionOnMetricGraphs}.

We identify \textit{advection}, i.e., the transport of a substance through the presence of a flow field, as the main driver for common edge dynamics on a variety of application domains, such as
utility networks like
water distribution systems \cite{Rossman2020WaterDomain_StateSimulations_Epanet2.2}
and gas networks \cite{Gugat2022MetricGraphs_NumericalApproaches_GasNetworks}, 
traffic flow networks \cite{Gugat2005MetricGraphs_NumericApproaches_TrafficFlowNetworks},
telecommunication networks,
and
blood flow through vascular systems \cite{Bressan2014MetricGraphs_NumericalArchitechtures_FlowNetworks}.
From the \gls*{ML} perspective, an intuitive approach is to model such dynamics with \glspl*{GNN}.
However, spatio-temporal advection-dominated dynamics on metric graphs pose well-known challenges in the context of \glspl*{GNN}. 
First, the directional nature of transport requires antisymmetric message passing. 
Second, spatial dependencies can become global over time, requiring long-range spatial integration. 
Furthermore, sharp gradients and shocks are preserved and propagated rather than smoothed out as in diffusion-dominated settings. 
These high-frequency signals cannot be captured by \glspl*{GNN} with inherent over-smoothing characteristics.

In this work, we make use of a suitable inductive bias for linear advection to overcome these challenges and model the dynamics accurately on node level. 
More precisely, we avoid computationally expensive sampling of function evaluations along each edge space by solving the linear advection \gls*{PDE} using the \gls*{MoC} and integrating the results in an iterative message passing architecture.
This enables an efficient and exact realization of the linear advection dynamics while maintaining end-to-end adaptability if observational data is available.

\paragraph{Contributions}
We propose \textit{\gls*{MeGA-MP}}, a \gls*{MPNN} framework that provably models linear advection on a metric graph iteratively without requiring learning.
More precisely, we rigorously derive \gls*{MeGA-MP} directly from the underlying physical priors and provide a theoretical bound on the discretization error (Section \ref{section_MethodMetricGraphAdvectionMessagePassing}).
We show how \gls*{MeGA-MP} can be used as a numerical solver for the spatio-temporal node forecasting task on a large-scale and realistic metric graph with advection edge dynamics and demonstrate its overall power by extending it with learning components to solve advection-reaction edge dynamics.
Moreover, we investigate the effect of both the purely physics-driven and the learnable components of the model in two ablation studies (Section \ref{section_Experiment1_Zero-ShotTransferLearningOfAdvection-ReactionDynamicsOnRealWorldMetricGraphs}).
To benchmark \gls*{MeGA-MP} against classical solvers, we use the 1-dimensional Euclidean domain as a special case of a metric graph (Section \ref{subsection_Experiments_AdvectionOnA1DEuclideanDomain}).
A browser-version of our model with live control over inflow conditions is available \href{https://anonymous.4open.science/w/tmp-preprint-BB4F/}{online}\footnote{https://anonymous.4open.science/w/tmp-preprint-BB4F/}.

\paragraph{Related Work}
\gls*{MeGA-MP} can serve as a standalone numerical solver for linear advection, expressed as a physics-informed \gls*{MPNN}-architecture. This formulation also allows a straight-forward extension with learnable components into a trainable \gls*{MPNN} to solve more complex, advection-dominated problems on metric graphs. This positions the work at the intersection of numerical \gls*{PDE} solvers for metric graphs, physics-informed \gls*{ML}, and \gls*{ML} for metric graphs. In this order, we will now discuss relevant existing approaches.

Numerical methods for metric graphs largely are specifically designed for quantum graphs, i.e., metric graphs with edge dynamics and coupling conditions associated with the self-adjoint Laplace operator \cite{Arioli2018MetricGraphs_NumericalArchitechtures_QuantumGraphs,Brio2022MetricGraphs_NumericalArchitechtures_QuantumGraphs,Böttcher2024PI-GNNs_Methods_DynmaicalProcessesOnMetricNetworks}. Such works benefit from the rich existing theory on the Laplace operator \cite{Berkolaiko2017MetricGraphs_Theory_QuantumGraphs,Berkolaiko2013MetricGraphTheory_QuantumGraphs,Kuchment2003MetricGraphs_Theory_QuantumGraphs,Kuchment2005MetricGraphs_Theory_QuantumGraphs,Kuchment2008MetricGraphs_Theory_QuantumGraphs}. In particular, the Laplace operator on a metric graph with continuity and Kirchhoff-Neumann coupling conditions has an orthogonal eigenbasis, which can be utilized in different ways, e.g., by eigenfunction-based spectral solutions \cite{Arioli2018MetricGraphs_NumericalArchitechtures_QuantumGraphs,Brio2022MetricGraphs_NumericalArchitechtures_QuantumGraphs,Böttcher2024PI-GNNs_Methods_DynmaicalProcessesOnMetricNetworks}. 
Besides quantum graphs, numeric approaches also exist for specific application domains, such as water distribution systems \cite{Rossman2020WaterDomain_StateSimulations_Epanet2.2,Shang2023WaterDomain_StateSimulations_EpanetMSX2.0}, gas networks \cite{Gugat2022MetricGraphs_NumericalApproaches_GasNetworks} or traffic networks \cite{Bressan2014MetricGraphs_NumericalArchitechtures_FlowNetworks,Gugat2005MetricGraphs_NumericApproaches_TrafficFlowNetworks}.
For a broader overview of application domains and related numeric approaches, we refer to \citet{Böttcher2024PI-GNNs_Methods_DynmaicalProcessesOnMetricNetworks}. 
However, numerical approaches typically require discretization of the continuous edge domains, resulting in large numbers of degrees of freedom and increasing computational and memory costs as the spatial resolution grows. In addition, numerical solvers can become computationally demanding when high spatial or temporal resolution is required. 
These challenges have motivated the development of \gls*{ML}-based approaches for solving dynamical systems.

To this end, \citet{Raissi2019PINNs_Methods_PINN} introduced \glspl*{PINN}, a framework for \glspl*{NN} that learns the unknown solution of a given \gls*{PDE} on a continuous time and space domain. 
Since \gls*{PDE} solutions depend on configurations like initial conditions, boundary conditions, and control signals, \glspl*{PINN} typically learn the solution to one such configuration.
Neural operators \cite{Kovachki2023PI-GNNs_Methods_NeuralOperator} address this limitation by learning a map from functions that configure a \gls*{PDE} to its solution. 
Prominent examples include DeepONet by \citet{Lu2019PI-GNNs_Methods_DeepONet,Lu2021PI-GNNs_Methods_DeepONet} and a variation of neural operators by \citet{Li2020PI-GNNs_Methods_FourierNeuralOperator, Li2020PI-GNNs_Methods_NeuralOperator, Li2020PI-GNNs_Methods_MultipoleGraphNeuralOperator,  Li2024PI-GNNs_Methods_PI-NeuralOperator}.
%
%
The majority of these methods are developed for Euclidean domains and do not directly transfer to metric graphs. 

This issue was recently addressed in different works that apply domain decomposition techniques. This transforms the learning problem into solving dynamics on multiple Euclidean domains -- the edge spaces -- while constraining correct coupling at nodes.
\citet{Blechschmidt2022MGNNs_MetricGraphNeuralODEandPDESolver_Solution-basedArchitechtures_PINNsForDriftDiffusionOnMetricGraph} use edge-specific \glspl*{PINN} in such a setup to solve drift-diffusion equations on metric graphs. Since \gls*{PDE} parametrizations are -- in line with typical \gls*{PINN} setups -- not a model input, the approach cannot generalize to changing conditions.
\citet{Laczko2025MGNNs_MetricGraphNeuralODEandPDESolver_Solution-basedArchitechtures_PINNsForSchrödingerOnQuantumGraphs} apply \glspl*{PINN} in a Richardson iterative scheme to solve the time-independent Schrödinger equation on metric graphs. This allows the graph structure to be excluded from model training, making the approach graph agnostic. However, generalization to changing \gls*{PDE} parameters is limited.
\citet{Blechschmidt2025MGNOs_NormalMGNOs_Solution-basedArchitechtures_DeepONetsForDriftDiffusionOnMetricGraphs} address this by utilizing DeepONets and building a \textit{lego}-like graph-agnostic architecture to solve drift-diffusion equations on metric graphs. However, this framework requires a node-level optimization procedure to determine interface boundary values for each new problem instance, introducing additional computational overhead at graph interfaces.
Moreover, all edge dynamics considered in these works are diffusion-dominated and thus accompanied by the aforementioned Kirchhoff-Neumann conditions -- a coupling condition fundamentally different to the ones required for advection-dominated edge dynamics.

An orthogonal line of work has shown that the discretization of \glspl*{PDE} in time and space yield powerful physics-motivated \glspl*{MPNN} suitable for conventional tasks on graphs, such as node classification \cite{Chamberlain2021PI-GNNs_Methods_GRAND,Eliasof2021PI-GNNs_Methods_PDE-GCN,Rusch2022PI-GNNs_Methods_GraphCON,Choi2023PI-GNNs_Methods_GREAD,Eliasof2024PI-GNNs_Methods_ADR-GNN}, 
but they do not directly address the continuous edge domains characteristic of metric graphs.
The reason is the difference in geometry between a conventional graph with discrete node-to-node relations vs. a metric graph with continuous edge spaces.
See Appendix \ref{section_Background_Appendix} for a detailed discussion, which we recommend reading after the next section.

The existing \gls*{ML}-based methods discussed above either learn solutions for individual \gls*{PDE} instances (PINNs), rely on interface optimization procedures (DeepONet-based approaches), or are designed for conventional graphs without continuous edge domains. In contrast, \gls*{MeGA-MP} directly embeds the numerical structure of linear advection on metric graphs into an \gls*{MPNN} architecture, providing a graph-native framework that can be extended with learnable components while preserving the underlying physical coupling mechanisms.

\section{Background}
\label{section_Background}

The common use case for graphs in dynamical system modeling is as a means of discretizing a continuous Euclidean domain $\Omega \subset \R^n$, with nodes representing spatial locations and edges encoding proximity in $\Omega$. 
For example, a grid is a graph that approximates a multidimensional Euclidean space, preserves the structure of the continuum, and is commonly used when solving PDEs with numerical schemes \cite{Brandtstetter2023GNOs_NormalGNOs_Derivative-basedArchitechtures_MP-PDESolver}.
The same concept has also been adopted to formulate physics-motivated \gls*{MPNN} architectures such as ADR-GNN by \citet{Eliasof2024PI-GNNs_Methods_ADR-GNN}.
Such a setup is not practical in cases where dynamical systems govern the evolution of signals in networks such as water distribution systems, gas networks, electrical grids or traffic flow networks. Instead, \textit{metric graphs} can be employed, as they naturally capture the topology of the original domain.
In particular, the edges $e_{uv} \in E$ of a metric graph $\G = (V,E)$ are parametrized by lengths $l_{uv} > 0$ and, potentially, capacities $\alpha_{uv} > 0$, and are associated with one-dimensional Euclidean subdomains $\Omega_{uv} = (0,l_{uv})$ aligned with the edge direction. 
Here, the graph $\G$ describes the system $\Omega := \sqcup_{e_{uv} \in E} \Omega_{uv}$ of such subdomains coupled at nodes $V$, which correspond to the boundaries $\partial \Omega_{uv} = \{0,l_{uv}\}$ of the edge spaces $\Omega_{uv}$.
We visualize the difference between metric graphs and graphs that discretize an underlying continuous space using the example of advection in Figure \ref{figure_MetricGraphVsNonMetricGraph} in Appendix \ref{section_Background_Appendix}.

\paragraph{Dynamical Systems on Metric Graphs}
In this work, we focus on metric graphs with associated dynamics that govern the spatio-temporal evolution of signals along the graph.
We model this system as a finite, connected, and symmetric directed graph $\G = (V,E)$ together with the following properties which link signals $c_v: \R \rightarrow \R_{\geq 0}, t \mapsto c_v(t)$ and  $c_{uv}: \R \times \Omega_{uv} \rightarrow \R_{\geq 0}, (t,z) \mapsto c_{uv}(t,z)$ over time $\R$, at each node $v \in V$ and along the edge space $\Omega_{uv} = (0,l_{uv})$ of each edge $e_{uv} \in E$, respectively.

\begin{enumerate}
    \item The \textit{edge dynamics} are usually governed by a \gls*{PDE} that describes the change of the signal $c_{uv}$ over time $\R$ and along the edge space $\Omega_{uv}$ of an edge $e_{uv} \in E$, that is,
    \begin{align}
    \label{align_EdgeDynamics}
        \frac{\partial c_{uv}(t,z)}{\partial t}
        =
        f(t,z,c_{uv},\nu_{uv})
    \end{align}
    \normalsize
    \noindent 
    for all $t \in \R$, $z \in \Omega_{uv} = (0,l_{uv})$ and known functions or coefficients $\nu_{uv}$ that parametrize the edge-wise \gls*{PDE} function $f$.
    
    \item The \textit{coupling conditions} define how signals propagate through nodes in between adjacent edge spaces. Specifically, a node $v \in V$ can be understood as the boundary $z = l_{uv}$ or $z = 0$ of multiple adjacent edge spaces $\Omega_{uv}$ or $\Omega_{vu}$ aligned with the direction of the corresponding incoming or outgoing edge $e_{uv} \in E$ or $e_{vu} \in E$, respectively.
\end{enumerate}

\paragraph{Advection on Metric Graphs}
We model the \textit{edge dynamics}, i.e., Equation \eqref{align_EdgeDynamics}, as the linear advection \gls*{PDE}
\begin{align}
\label{align_EdgeDynamics_Advection}
    \frac{
    \partial c_{uv}(t,z)
    }{
    \partial t
    }
    =
    - \nu_{uv}(t) ~
    \frac{
    \partial c_{uv}(t,z)
    }{
    \partial z
    }
\end{align}
\normalsize
which describes how the signal $c_{uv}$ is transported in the presence of an incompressible, i.e., divergence-free, flow field $\nu_{uv}: \R \rightarrow \R$ over time $\R$ and along the edge space $\Omega_{uv}$. The flow field additionally satisfies
\begin{align}
\label{align_ConservationOfFlows}
    \nu_{vu}(t) = -\nu_{uv}(t)
\end{align}
\normalsize
\noindent 
and we use the convention that the sign of a flow velocity $\nu_{uv}(t)$ is 
positive (negative) if the direction of the corresponding edge $e_{uv} \in E$ aligns (does not align) with the direction of the physical flow,
i.e., if the physical flow is from $u \in \Nbh(v)$ to $v \in V$ (from $v \in V$ to $u \in \Nbh(v)$).

The \textit{coupling conditions} for advection-dominated dynamics describe how signals propagate between different edge spaces. 
On the one hand, the \textit{mixing at nodes}
\begin{align}
\label{align_CouplingCondition_MixingAtNodes}
    c_v(t) 
    =&~
    \frac{
    \sum_{u \in \Nbh_-(v,t)}
    q_{uv}(t) \cdot c_{uv}(t,l_{uv})
    }{
    \sum_{u \in \Nbh_-(v,t)} 
    q_{uv}(t)
    }
\end{align}
\normalsize
\noindent
describes how signals propagate from edges into nodes.
On the other hand, the \textit{inflow continuity condition} 
\begin{align}
\label{align_CouplingCondition_InflowContinuity}
    c_{uv}(t,0) 
    =&~ 
    c_u(t)
    \text{ for all }
    u \in \Nbh_-(v,t)
\end{align}
\normalsize
\noindent
describes how signals propagate from nodes into edges and can be seen as the \textit{local inflow boundary condition} of the edge dynamics (Eq. \eqref{align_EdgeDynamics_Advection}).
In Equation \eqref{align_CouplingCondition_MixingAtNodes}, $q_{uv}(t)$ defines the \textit{flow rate} $q_{uv} = \alpha_{uv} \nu_{uv}: \R \rightarrow \R$ from (temporal) \textit{inflow neighbors} $u \in \Nbh_-(v,t) = \{u \in \Nbh(v) ~|~ \nu_{uv}(t) > 0\}$ of $v \in V$ at time $t \in \R$, that are neighbors $u \in \Nbh(v)$ where at time $t$, the physical flow field is directed from $u$ to $v$. For details, we refer to Appendix \ref{section_Background_Appendix}.

\begin{remark}[Global vs. local boundary conditions]
\label{remark_GlobalVsLocalBoundaryConditions}
    Note that Equation \eqref{align_CouplingCondition_MixingAtNodes} and \eqref{align_CouplingCondition_InflowContinuity} define \textit{local} boundary conditions of the edge dynamics, i.e., Equation \eqref{align_EdgeDynamics_Advection}.
    In contrast, \textit{global} boundary conditions of the overall dynamical system of the metric graph $\G$ correspond to Dirichlet \textit{boundary nodes} $v \in \Vb \subsetneq V$ for which the signal $c_v(t)$ is known at all times $t \in \R$, also see Definition \ref{definition_ProblemDefinition}.
\end{remark}

\section{Method: Metric Graph Advection Message Passing}
\label{section_MethodMetricGraphAdvectionMessagePassing}

In this section, we cast the advection edge dynamics (Eq. \eqref{align_EdgeDynamics_Advection}) and corresponding coupling conditions (Eq. \eqref{align_CouplingCondition_MixingAtNodes} and \eqref{align_CouplingCondition_InflowContinuity}) into a novel message-passing framework that only requires knowledge of the system states $\cm = (c_v)_{v \in V}$ at nodes and the flow field $\num = (\nu_e)_{e \in E}$ along edges.
This is based on the observation that Equation \eqref{align_CouplingCondition_MixingAtNodes} is a special case of message-passing (cf. Appendix \ref{section_DetailsOnTheMethod}) in the form of
\begin{align}
\label{align_CouplingCondition_MixingAtNodes_MessagePassingFormulation}
\begin{split}
    c_v(t) 
    =&~
    \sum_{u \in \Nbh(v)} w_{uv}(t) ~ \phi(t,e_{uv},\cm,\num)
    ~ \text{ with}
    \\
    w_{uv}(t) 
    =&~
    \frac{
    \relu(q_{uv}(t))
    }{
    \sum_{u \in \Nbh(v)} \relu(q_{uv}(t))
    }
    \quad \quad \text{ and } \quad \quad
    \phi(t,e_{uv},\cm,\num) 
    =~
    c_{uv}(t,l_{uv}),
    \end{split}
\end{align}
\normalsize
\noindent
where $w_{uv}(t)$ are the normalized weights that select inflows $q_{uv}(t) > 0$ to $v \in V$ only and scale the messages generated by the message function $\phi$.
In the current formulation, this requires knowledge of the signal $c_{uv}(t,l_{uv})$ at the outflow boundary of each edge space $\Omega_{uv} = (0, l_{uv})$. 
However, instead of explicitly solving the dynamics along the edge spaces $\Omega = \sqcup_{e_{uv} \in E} \Omega_{uv}$, we derive a representation of $\phi$ that only relies on signals $\cm$ at nodes $V$. 
This leads to the following general problem we are trying to solve.

\begin{definition}[Spatio-temporal node forecasting with global boundary conditions]
\label{definition_ProblemDefinition}
    Let $\Vb \subsetneq V$ be a subset of global boundary nodes (cf. Remark \ref{remark_GlobalVsLocalBoundaryConditions}) and
    $T = \{t_i \in \R ~|~ i \in I = \{1,...,n\} \}$ a set of discrete times.
    For a fixed time $t_i \in T$, we define 
    the \textit{history set}
    $\Tn{hi}(t_i) := \{t_1,...,t_i\}$ 
    with cardinality $\n{hi}(t_i) := |\Tn{hi}(t_i)| = i$
    and
    the \textit{future set}
    $\Tn{fu}(t_i) := \{t_{i+1},...,t_n\}$
    with cardinality $\n{fu}(t_i) = |\Tn{fu}(t_i)| = n-i$.
    For simplicity, we omit the dependency of $\n{hi}$, $\n{fu}$, $\Tn{hi}$ and $\Tn{fu}$ on $t_i$ if $t_i$ is clear from the context.
    The spatio-temporal node forecasting problem with global boundary conditions refers to the learning problem
    $\big( (c_v(t))_{v \in V \setminus \Vb, t \in \Tn{hi}}, (c_v(t))_{v \in \Vb, t \in \Tn{hi} \sqcup \Tn{fu}}) \big) \longmapsto (c_v(t))_{v \in V \setminus \Vb, t \in \Tn{fu}}$,
    that is the problem of inferring the signal $\cm$ at times in the future set from the signal $\cm$ at times in the history set and at boundary nodes $\Vb$.
\end{definition}

Throughout the rest of this section, 
we propose \gls*{MeGA-MP}, a \gls*{MPNN} that solves this problem for advection-dominated edge dynamics, where the signal $\cm$ usually refers to a \textbf{c}oncentration.
More specifically, 
in Section \ref{subsection_PhysicalDerivationOfMeGa-MP}, we derive advection message passing by relating the \textit{edge dynamics} to \textit{message generation} and the \textit{coupling conditions} to \textit{message aggregation and update}.
In Section \ref{subsection_FinalAlgorithm:MeGA-MP}, we present the overall \gls*{MeGA-MP} algorithm taking numerical aspects due to the discretization of time (Definition \ref{definition_ProblemDefinition}) into account.

\subsection{Physical Derivation of MeGA-MP}
\label{subsection_PhysicalDerivationOfMeGa-MP}

\paragraph{Characteristic Curves}
We model advection of the concentration $c_{uv}: \R \times \Omega_{uv} \rightarrow \R_{\geq 0}$ over time $\R$ and along the edge space $\Omega_{uv} = (0,l_{uv})$ in between two nodes $v \in V$ and $u \in \Nbh(v)$ as the transport process governed by Equation \eqref{align_EdgeDynamics_Advection} with incompressible flow field $\nu_{uv}: \R \rightarrow \R$. 
In this case, by the \acrfull*{MoC}, we can derive \textit{characteristic curves} along which the function $c_{uv}$ remains constant over time $\R$ and space $\Omega_{uv}$.

\begin{lemma}[Constant concentration along characteristic curves]
\label{lemma_ConstantConcentrationAlongCharacteristicCurves}
    Let $e_{uv} \in E$ hold.
    If the function $c_{uv}$ obeys Equation \eqref{align_EdgeDynamics_Advection},
    for each $t_0 \in \R$ and $z_0 \in \R$, the curve
    %
    %
    \begin{align*}
        \gamma_{t_0,z_0}: ~ T_{t_0,z_0} \longrightarrow \R_{\geq 0}, ~~
        t \longmapsto 
        c_{uv} \Big( t, z_0 + \textstyle \int_{t_0}^{t} \nu_{uv}(s) ~ds \Big)
    \end{align*}
    
    \noindent 
    is constant on
    $
    T_{t_0,z_0}
    =
    \{
    t \in \R 
    ~|~
    0 
    <
    z_0 + \int_{t_0}^{t^{'}} \nu_{uv}(s) ~ds
    <
    l_{uv}
    ~\forall t^{'} \in 
    \big[ \min\{t_0,t\} , \max\{t_0,t\} \big]
    \}.
    $
    %
\end{lemma}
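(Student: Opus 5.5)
The plan is to differentiate $\gamma_{t_0,z_0}$ along the characteristic and apply the chain rule, using Equation \eqref{align_EdgeDynamics_Advection} to cancel the terms. Set $Z(t) := z_0 + \int_{t_0}^{t} \nu_{uv}(s)\,ds$, so that $\gamma_{t_0,z_0}(t) = c_{uv}(t, Z(t))$. We assume, as is implicit in the statement, that $c_{uv}$ is a classical $C^1$ solution on $\R \times \Omega_{uv}$ and that $\nu_{uv}$ is continuous. The fundamental theorem of calculus then gives $Z'(t) = \nu_{uv}(t)$, and $Z$ is $C^1$.

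\textbf{Step 1 (well-definedness and structure of the domain).} By the definition of $T_{t_0,z_0}$, for every $t \in T_{t_0,z_0}$ the point $(t, Z(t))$ lies in $\R \times \Omega_{uv}$, so $\gamma_{t_0,z_0}$ is well defined. The key structural observation is the following. If $t \in T_{t_0,z_0}$ and $t'$ lies between $t_0$ and $t$, then $t' \in T_{t_0,z_0}$, because the interval between $t_0$ and $t'$ is contained in the one between $t_0$ and $t$. So, if nonempty, $T_{t_0,z_0}$ is an interval containing $t_0$. This forces $0<z_0<l_{uv}$, since $t_0$ belongs to every such interval. It is also open, because $Z$ is continuous and $(0,l_{uv})$ is open, so by compactness the condition on a closed interval persists under small extensions. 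If $T_{t_0,z_0}$ is empty, the claim is vacuous.

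\textbf{Step 2 (derivative vanishes).} Take $t$ in the open interval $T_{t_0,z_0}$. By the multivariable chain rule,
\begin{align*}
    \gamma_{t_0,z_0}'(t) = \frac{\partial c_{uv}}{\partial t}(t,Z(t)) + Z'(t)\,\frac{\partial c_{uv}}{\partial z}(t,Z(t)) = \frac{\partial c_{uv}}{\partial t}(t,Z(t)) + \nu_{uv}(t)\,\frac{\partial c_{uv}}{\partial z}(t,Z(t)).
\end{align*}
By Equation \eqref{align_EdgeDynamics_Advection}, evaluated at the interior point $(t,Z(t))$, this equals zero.

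\textbf{Step 3 (conclusion).} A differentiable function with vanishing derivative on an interval is constant there, by the mean value theorem. Hence $\gamma_{t_0,z_0}(t) = \gamma_{t_0,z_0}(t_0) = c_{uv}(t_0,z_0)$ for all $t \in T_{t_0,z_0}$.

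The only real obstacle is Step 1: we must make sure $T_{t_0,z_0}$ is connected, so that zero derivative actually yields constancy. This is exactly why the definition demands the bound for all $t'$ between $t_0$ and $t$, and not merely at $t$. With a nowhere-zero-derivative argument on a disconnected set, constancy would fail. The remaining steps are routine, given the regularity assumptions.
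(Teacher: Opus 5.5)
Your proposal is correct and takes essentially the same approach as the paper. The paper uses the method of characteristics to obtain $\frac{d}{dt}\, c_{uv}\bigl(t, z(t)\bigr) = 0$ along $z(t) = z_0 + \int_{t_0}^{t} \nu_{uv}(s)\,ds$, which is your chain-rule computation, and it restricts to $T_{t_0,z_0}$ so that the curve stays inside the open edge space. Your version is more careful on the one point the paper leaves implicit in this proof: you show that $T_{t_0,z_0}$ is an open interval containing $t_0$, so a vanishing derivative really gives constancy, whereas the paper proves this only in its separate lemma on $T_{t_0,z_0}$. Also, your $C^1$ hypothesis can be relaxed to the plain differentiability in the paper's smoothness assumption, which is all the chain rule needs.
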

\noindent An intuition on Lemma \ref{lemma_ConstantConcentrationAlongCharacteristicCurves} can be found in Appendix \ref{subsubsection_MessageGeneration_AdvectiveTransportAlongEdges}. 
Figure \ref{figure_CharacteristicCurves} (left) visualizes several characteristic curves for an exemplary flow field $\nu_{uv}: \R \rightarrow \R$ along an edge $e_{uv} \in E$ of length $l_{uv} = 1[m]$. 
Following Lemma \ref{lemma_ConstantConcentrationAlongCharacteristicCurves}, the concentration $c_{uv}$ is the same at all positions along such a curve. Consecutively, for the example given in Figure \ref{figure_CharacteristicCurves} (right), knowing the concentration $c_u(t - \delta t_{uv})$ at the node $u \in \Nbh(v)$ ($z = 0$) is sufficient to determine the concentration $c_v(t)$ at node $v \in V$ ($z = l_{uv}$). 
This can be observed for the heatmap in the background of the figure, which shows a high-concentration pulse traveling from one boundary at $z = 0$ to the other at $z = l_{uv}$ in $\delta t_{uv} \approx 5[s]$.

\begin{figure}[t]
    \centering
    \includegraphics[width=0.98\linewidth]{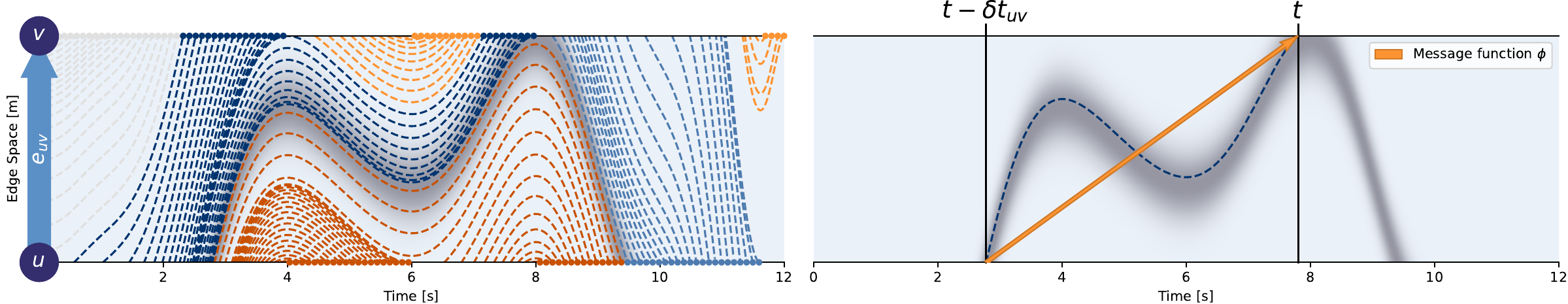}
    \caption{Characteristic curves along the edge $e_{uv} \in E$:
    If a curve originates at node $u$ at time $t - \delta t_{uv}$ and terminates at node $v$ at time $t$, 
    we call it a \textit{pass-through} (\textcolor{pass-through}{dark blue}). 
    If the direction is reversed, 
    we call it \textit{inverse pass-through} (\textcolor{inverse-pass-through}{light blue}). 
    If a curve starts and ends at the same node, 
    we call it a \textit{self-loop} (\textcolor{self-loop}{dark orange})
    if that node is $u$ and 
    \textit{inverse self-loop} (\textcolor{inverse-self-loop}{light orange}) 
    if that node is $v$. 
    The corresponding transport time $\delta t_{uv} > 0$ (right) denotes the duration between the curve’s start and end. The shaded area (\textcolor{gray}{gray}) in the background shows a high-concentration pulse entering the domain at node $u$, that is constant along characteristic curves.
    }
    \label{figure_CharacteristicCurves}
\end{figure}

\paragraph{Transport Times}
To formulate this relationship as a message function, we need a way to compute the start and end coordinates of characteristic curves that correspond to the sender and receiver node, respectively, as well as the time shift $\delta t_{uv}$, which we call \textit{transport time}. 
We also use specific terminology to differentiate the following situations, which are also described visually in Figure \ref{figure_CharacteristicCurves} (left). 
To get a deeper understanding of the different kinds of transport times, details can also be found in Appendix \ref{subsubsection_MessageGeneration_AdvectiveTransportAlongEdges}. 

\begin{definition}[Transport times]
\label{definition_TransportTimes_shortened}
    Let $e_{uv} \in E$ and $t \in \R$ hold.
    If there exists a $\delta t_{uv} := \delta t_{uv}(t) \in \R$, such that
    \begin{align*}
        \int_{t - \delta t_{uv}}^{t}
        \nu_{uv}(s) ~ds
        =&~
        {\color{pass-through}
        l_{uv}
        }
        &&
        (
        {\color{inverse-pass-through} 
        -l_{uv}
        }
        ,
        {\color{self-loop} 
        0
        }
        ,
        {\color{inverse-self-loop} 
        0
        }
        ~)
        \text{ and }
        \\
        \int_{t - \delta t_{uv}}^{t^{'}}
        \nu_{uv}(s) ~ds
        \in&~ 
        {\color{pass-through}
        (0,l_{uv})
        }
        &&
        (
        {\color{inverse-pass-through} 
        (-l_{uv},0)
        }
        ,
        {\color{self-loop} 
        (0,l_{uv})
        }
        ,
        {\color{inverse-self-loop} 
        (-l_{uv},0)
        }
        )
    \end{align*}
    \normalsize
    \noindent holds for all $t^{'} \in (\min\{t - \delta t_{uv},t\},\max\{t - \delta t_{uv},t\})$,
    we call $\delta t_{uv}$ a \textit{transport time}, or, more concretely, the 
    \textcolor{pass-through}{\textit{pass-through}}
    (\textcolor{inverse-pass-through}{\textit{inverse pass-through}},
    \textcolor{self-loop}{\textit{self-loop}},
    \textcolor{inverse-self-loop}{\textit{inverse self-loop}})
    \textit{time}
    \textit{\gls*{wrt} the edge $e_{uv} \in E$ and time $t \in \R$}. 
    Usually, $e_{uv} \in E$ and $t \in \R$ are clear from the context and we just say 
    \textit{pass-through}
    (\textit{inverse pass-through},
    \textit{self-loop},
    \textit{inverse self-loop})
    \textit{time},
    just as we omit the dependency of $\delta t_{uv}$ on $t$ for better readability.
\end{definition}

\paragraph{Message Function}
If the flow $\nu_{uv}$ is constant over time, the transport time $\delta t_{uv}$ is also constant, and concentrations are shifted between the nodes $u$ and $v$.
If the flow $\nu_{uv}$ varies, so does the transport time $\delta t_{uv}$, and instead of a clear shift, we observe more complex \textit{time warping}.
As this generalizes shift, we use the prior as our message function.
To compute a unique positive transport time for a given time $t \in \R$, we need to measure the distance traveled in an arbitrary time interval $[t - \delta t, t]$. For this purpose, we define the function
\begin{align}
\label{align_TransportTime_Functionscurlzcurl}
    \zcurl: \R \longrightarrow \R, ~
    \delta t \longmapsto \zcurl(\delta t) := \int_{t-\delta t}^{t} \nu_{uv}(s) ~ds,
\end{align}
\normalsize
\noindent 
where we omit the dependency of $\zcurl$ on $t \in \R$, $e_{uv} \in E$ and the overall flow field $\num$ for better readability.
Using $\zcurl$, we can compute a positive transport time as the solution to the following \gls*{OP}:
\begin{align}
\label{align_TransportTime_OP}
    \delta t_{uv} 
    := 
    \min
    \Big \{
    \delta t > 0 
    ~ \big |~
    \zcurl(\delta t)
    \in 
    \{-l_{uv},0,l_{uv}\}
    \Big \}.
\end{align}
\normalsize
\noindent 
In Appendix \ref{subsubsection_MessageGeneration_AdvectiveTransportAlongEdges}, we prove that if there exists a solution, this solution uniquely determines one of the transport times from Definition \ref{definition_TransportTimes_shortened} (cf. Theorem \ref{theorem_ExistenceAndUniquenessOfPositiveTransportTimes}).
This in turn allows to define our advection-message-generation functions as 
\begin{align}
\label{align_MessageFunction_Advection}
\begin{split}
    \phi(t,e_{uv},\cm,\num)
    :=
    \begin{cases}
        c_u(t - \delta t_{uv})
        & \text{ if } 
        \zcurl(\delta t_{uv}) = \pm l_{uv} 
        \\
        c_v(t - \delta t_{uv})
        & \text{ if } 
        \zcurl(\delta t_{uv}) = 0
    \end{cases}.
\end{split}
\end{align}
\normalsize
\noindent 
Especially to mention, using $\zcurl$ again, $\phi$ automatically identifies whether information is carried along a(n) (inverse) pass-through or a(n) (inverse) self-loop and outputs the information from the neighbor $u \in \Nbh(v)$ or the node $v$ itself accordingly.
A rigorous derivation of Equation \eqref{align_MessageFunction_Advection} under consideration of physical priors and the intuition behind this can be found in Appendix \ref{subsubsection_MessageGeneration_AdvectiveTransportAlongEdges}. 

\paragraph{Message Aggregation and Update}
We integrate Equation \eqref{align_MessageFunction_Advection} into a complete message passing scheme that determines the concentration $c_v(t)$ at a node $v \in V$ and time $t \in \R$ as
\begin{align}
\label{align_MessagePassing_Advection}
\begin{split}
    c_v(t)
    =&~
    \sum_{u \in \Nbh(v)}
    w_{uv}(t)
    \cdot
    \phi(t,e_{uv},\cm,\num)
\end{split}
\end{align}
\normalsize
\noindent 
with $w_{uv}(t)$ as defined in Equation \eqref{align_CouplingCondition_MixingAtNodes_MessagePassingFormulation} and $\phi(t,e_{uv},\cm,\num)$ as defined in Equation \eqref{align_MessageFunction_Advection}.

\paragraph{Theoretical Guarantees} 
As elaborated in Section \ref{section_Background},
advection on metric graphs is defined by multiple physical priors, 
the edge dynamics (Eq. \eqref{align_EdgeDynamics_Advection})
and
the coupling conditions (Eq. \eqref{align_CouplingCondition_MixingAtNodes}) and Eq. \eqref{align_CouplingCondition_InflowContinuity}).
While the former are defined on the open edge spaces $\Omega_{uv} = (0,l_{uv})$ only, 
the latter rely on the values $c_{uv}(\cdot,0)$ and $c_{uv}(\cdot,l_{uv})$ on the boundary $\partial \Omega_{uv} = \{0,l_{uv}\}$ of $\Omega_{uv}$. 
Therefore, suitable smoothness assumptions on $c_{uv}$ and $\nu_{uv}$ are required in order to relate these principles.

\begin{assumption}[Smoothness assumptions]
\label{assumption_SmoothnessAssumptions}
    For each $e_{uv} \in E$, let 
    $c_{uv}: \R \times (0,l_{uv}) \rightarrow \R_{\geq 0}$ be differentiable and
    $c_{uv}: \R \times [0,l_{uv}] \rightarrow \R_{\geq 0}$ and
    $\nu_{uv}: \R \rightarrow \R$ continuous.


\end{assumption}

Under these assumptions, in Appendix \ref{subsection_PhysicalDerivationOfMeGA-MP_Appendix}, we extend the findings from Lemma \ref{lemma_ConstantConcentrationAlongCharacteristicCurves} and investigate the different phenomena as introduced in Definition \ref{definition_TransportTimes_shortened} more thoroughly. 
Based on that, we provide a rigorous theory of why our advection message-passing scheme models the described physics \textit{exactly}, with the main results as follows:

\begin{theorem}[Physical alignment]
\label{theorem_PhysicalAlignment}
    Let $v \in V$ and $t \in \R$ hold.
    We assume that Assumption \ref{assumption_SmoothnessAssumptions} holds and that for each $u \in \Nbh(v)$, the \gls*{OP} \eqref{align_TransportTime_OP} has a solution.
    If the function $c_{v}$ obeys the mixing at nodes (Eq. \eqref{align_CouplingCondition_MixingAtNodes}) and 
    if for each $u \in \Nbh(v)$, the function $c_{uv}$ obeys the advection edge dynamics (Eq. \eqref{align_EdgeDynamics_Advection}) with inflow continuity condition (Eq. \eqref{align_CouplingCondition_InflowContinuity}), 
    then Equation \eqref{align_MessagePassing_Advection} with $\phi$ as defined in Equation \eqref{align_MessageFunction_Advection} holds.
\end{theorem}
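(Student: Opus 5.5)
The plan is to reduce the statement to an edge-wise identity. The claim will follow once we show that, for every $u \in \Nbh(v)$ with $w_{uv}(t) > 0$, we have $c_{uv}(t,l_{uv}) = \phi(t,e_{uv},\cm,\num)$.

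\textbf{Step 1: reduction to inflow neighbors.} Since $\alpha_{uv} > 0$, we have $\relu(q_{uv}(t)) > 0$ if and only if $\nu_{uv}(t) > 0$, that is, if and only if $u \in \Nbh_-(v,t)$. On $\Nbh_-(v,t)$ we have $\relu(q_{uv}) = q_{uv}$. Hence the weights in Equation \eqref{align_CouplingCondition_MixingAtNodes_MessagePassingFormulation} coincide with those of the mixing condition \eqref{align_CouplingCondition_MixingAtNodes}, and they vanish off $\Nbh_-(v,t)$. The denominator is nonzero because the mixing equation is assumed to hold. It therefore suffices to prove the edge-wise identity for $u \in \Nbh_-(v,t)$.

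\textbf{Step 2: which case of \eqref{align_TransportTime_OP} occurs.} Fix such a $u$, so $\nu_{uv}(t) > 0$. By continuity of $\nu_{uv}$ (Assumption \ref{assumption_SmoothnessAssumptions}), $\zcurl(\delta t) > 0$ for small $\delta t > 0$. Let $\delta t_{uv}$ be the minimizer of \eqref{align_TransportTime_OP}, which exists by hypothesis. Since $\zcurl$ is continuous, it cannot reach $-l_{uv}$ without first passing through $0$. So on $(0,\delta t_{uv})$ we have $\zcurl \in (0,l_{uv})$, and $\zcurl(\delta t_{uv}) \in \{0, l_{uv}\}$.

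Now parametrize the backward characteristic by $z(t') = l_{uv} - \zcurl(t - t') = \int_{t-\delta t_{uv}}^{t'} \nu_{uv}(s)\,ds + \text{const}$. Concretely, take $z(t') = z_0 + \int_{t_0}^{t'} \nu_{uv}$ with $t_0 = t - \delta t_{uv}$ and $z_0 \in \{0, l_{uv}\}$ the start point, so that $z(t) = l_{uv}$. Then $z(t') \in (0,l_{uv})$ for all $t' \in (t-\delta t_{uv}, t)$. Lemma \ref{lemma_ConstantConcentrationAlongCharacteristicCurves}, applied at an interior base point, gives that $t' \mapsto c_{uv}(t', z(t'))$ is constant on this open interval. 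By continuity of $c_{uv}$ on $\R \times [0,l_{uv}]$, the constancy extends to both endpoints, so
\begin{align*}
c_{uv}(t, l_{uv}) = c_{uv}(t - \delta t_{uv}, z_0).
\end{align*}

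\textbf{Step 3: the two cases.}
\begin{itemize}
\item Case $\zcurl(\delta t_{uv}) = l_{uv}$ (pass-through, $z_0 = 0$). The inflow continuity condition \eqref{align_CouplingCondition_InflowContinuity} gives $c_{uv}(t-\delta t_{uv}, 0) = c_u(t-\delta t_{uv})$, which matches the first branch of $\phi$.
\item Case $\zcurl(\delta t_{uv}) = 0$ (inverse self-loop, $z_0 = l_{uv}$). Here we need $c_{uv}(t-\delta t_{uv}, l_{uv}) = c_v(t-\delta t_{uv})$. Near $t-\delta t_{uv}$ the flow runs from $v$ into the edge. So we use the reversed edge $e_{vu}$, for which $v \in \Nbh_-(u,\cdot)$ by \eqref{align_ConservationOfFlows}. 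Its inflow continuity condition is $c_{vu}(s,0) = c_v(s)$, and we combine it with the identification $c_{vu}(s,z) = c_{uv}(s, l_{uv} - z)$ of the symmetric edge spaces.
\end{itemize}

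\textbf{Main obstacle.} The inflow condition is stated only for times where the flow is strictly positive into the edge. At the start time $t - \delta t_{uv}$ we may only have $\nu_{uv}(t-\delta t_{uv}) \ge 0$ (respectively $\le 0$), possibly with equality. My first attempt is a limiting argument. In the pass-through case, $\zcurl$ attains $l_{uv}$ from below at $\delta t_{uv}$. If $\nu$ vanishes there, take times slightly after $t - \delta t_{uv}$ where $\nu_{uv} > 0$; such times exist arbitrarily close, since otherwise $\zcurl$ would stay bounded away from the boundary. At those times apply inflow continuity along nearby characteristics, then pass to the limit using continuity of $c_u$ (through $c_{uv}(\cdot,0)$) and of $c_{uv}$.

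If this limiting argument fails, I would interpret \eqref{align_CouplingCondition_InflowContinuity} in the closed sense ($\nu \ge 0$), as the appendix conventions presumably do. The same care is needed to justify the edge-reversal identification in the self-loop case.
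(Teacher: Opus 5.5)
Your proof is correct and has the same skeleton as the paper's. ReLU weights reduce the mixing rule to $\Nbh_-(v,t)$; you split by the value of $\zcurl(\delta t_{uv})$, transport along a characteristic, and then apply inflow continuity at the starting point, using the edge reversal $c_{vu}(s,z)=c_{uv}(s,l_{uv}-z)$ of Remark~\ref{remark_SymmetryOfSignals} in the self-loop case. Where you differ is that you carry out the middle steps more directly.

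The paper goes through its transport-time taxonomy:
\begin{itemize}
\item Theorem~\ref{theorem_ExistenceAndUniquenessOfPositiveTransportTimes} and Lemma~\ref{lemma_TransportTimes} use the sign of $q_{uv}(t)$ to identify the two sums as pass-through and inverse self-loop times.
\item Theorem~\ref{theorem_PassThroughTime} treats the first sum; Lemma~\ref{lemma_InverseSelfLoopTime} together with Theorem~\ref{theorem_SelfLoopTime}, on the reversed edge $e_{vu}$, treats the second.
\item Those theorems reach the boundary via a family of characteristics started at $(t-\delta t_{uv},\tilde z(\epsilon))$ with $\tilde z(\epsilon)\to 0$.
\end{itemize}
You instead obtain $\zcurl(\delta t_{uv})\in\{0,l_{uv}\}$, and $\zcurl\in(0,l_{uv})$ on $(0,\delta t_{uv})$, directly from $\nu_{uv}(t)>0$ by the intermediate value theorem. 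You then use the single characteristic $z(t')=l_{uv}-\zcurl(t-t')$, which is constant on the open interval by Lemma~\ref{lemma_ConstantConcentrationAlongCharacteristicCurves} and extends to both endpoints by continuity of $c_{uv}$ on $\R\times[0,l_{uv}]$.

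What your route buys: it is shorter, and the condition defining $T_{t_0,z_0}$ holds automatically along the whole interval. The paper's Step~2 only checks the position at the final time $t-\epsilon$. The paper's own remark on $T_{t_0,z_0}$ says this is not sufficient in general, although it can be verified here for small $\epsilon$. What the paper's route buys: a reusable classification of transport times (uniqueness, and the signs of $q_{uv}$ at both ends of the characteristic). It needs that classification later for the double-assignment phenomenon and the inflow trees.

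On your main obstacle, the paper does no better. In Step~3 of Theorems~\ref{theorem_PassThroughTime} and~\ref{theorem_SelfLoopTime} it invokes \eqref{align_CouplingCondition_InflowContinuity} with only the weak sign information from Lemmas~\ref{lemma_PassThroughTime} and~\ref{lemma_SelfLoopTime}. It explicitly sets the zero-flow case aside (Remark~\ref{remark_PassThroughTime}), i.e.\ it adopts your closed-sense reading.

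Your limiting argument, by contrast, does not close on its own as phrased. From $c_{uv}(s_n,0)=c_u(s_n)$ with $s_n\downarrow t-\delta t_{uv}$ you only get $c_{uv}(t-\delta t_{uv},0)=\lim_n c_u(s_n)$. Identifying this limit with $c_u(t-\delta t_{uv})$ requires continuity of the node signal $c_u$ at that time. Assumption~\ref{assumption_SmoothnessAssumptions} does not give this. Continuity of $c_{uv}(\cdot,0)$ cannot supply it either, because the identity $c_{uv}(\cdot,0)=c_u$ is exactly what is unknown at that instant.

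To close the gap, either add continuity of node signals as a hypothesis, or use the closed-sense reading. The paper tacitly does the former, treating $c_u,c_v$ as $C^1$ in the proof of Theorem~\ref{theorem_PassThroughTime}. The same remark applies to $c_v$ in your self-loop case. With either choice, your argument is complete.
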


\subsection{Final Algorithm: MeGA-MP}
\label{subsection_FinalAlgorithm:MeGA-MP}

So far, we have introduced the advection message-passing operator (Eq. \eqref{align_MessagePassing_Advection}). We will now explain how it is applied in practice. For this, two aspects remain to be addressed: 
First, Equation \eqref{align_MessagePassing_Advection} is defined for continuous time $t \in \mathbb{R}$, whereas in practice boundary values and flow fields are typically known as measurements at discrete points in time. We therefore adapt Equation \eqref{align_MessagePassing_Advection} to handle discrete temporal information, described in paragraph \textit{\nameref{par_Interpolated_MsgFunction}}. 
Second, the edge lengths $\lm = (l_e)_{e \in E}$ and the flow velocities $\num = (\nu_e)_{e \in E}$ determine how far information travels over the metric graph. Under fast flow conditions, information can traverse multiple edges of the metric graph within the time interval $[t_1,t_n]$ (cf. Definition \ref{definition_ProblemDefinition} and Appendix \ref{section_DetailsOnTheExperiment_Appendix}). This cannot be captured by a single application of Equation \eqref{align_MessagePassing_Advection}, as it depends only on the immediate 1-hop node neighborhood. We therefore adopt an iterative message-passing scheme, described in paragraph \textit{\nameref{sec:ModelIterations}}.
We define the resulting iterative scheme described by Equation \eqref{align_MessagePassing_AdvectionIterations} as \textbf{\acrfull{MeGA-MP}}.

\paragraph{Temporal Discretization}
\label{par_Interpolated_MsgFunction}
We model the concentration $c_v: \R \rightarrow \R_{\geq 0}$ at a node $v \in V$ as a discrete time series $(c_v(t))_{t \in T}$ over equidistant discrete times $t \in T = \{t_i \in \R ~|~ i \in I = \{1,...,n\} \} = \Tn{hi} \sqcup \Tn{fu}$ with sampling interval $dt = t_{i+1} - t_i$.
In accordance to Definition \ref{definition_ProblemDefinition}, we aim to estimate the signal $\ym_v := (c_v(t))_{t \in \Tn{fu}} \in \R^\n{fu}$ at all non-boundary nodes $v \in V \setminus \Vb$ and times $t \in \Tn{fu}$.
To accommodate known values at times $t \in \Tn{hi}$ and unknown values at times $t \in \Tn{fu}$, we pad the known history $\xm_v := (c_v(t))_{t \in \Tn{hi}} \in \R^\n{hi}$ with zeros, yielding $\cmhat_v^{(0)} := (\xm_v^T,\mathbf{0}^T) \in \R^{\n{hi}+\n{fu}}$. Here, the zero entries are placeholders for the predictions $\ymhat_v := (\chat_v(t))_{t \in \Tn{fu}} \in \R^\n{fu}$ of $\ym_v$ (cf. Fig. \ref{figure_AdvectionMessagePassing} and Eq. \eqref{align_MessagePassing_AdvectionIterations} below).
Since the transport times $\delta t_{uv} >0$ are positive by definition (cf. Eq. \eqref{align_TransportTime_OP}), applying Equation \eqref{align_MessagePassing_Advection} to $\cmhat_v^{(0)}$ for all $v \in V \setminus \Vb$ shifts/warps information forward in time, populating the padded entries with the solutions to the node forecasting problem.
However, as both edge lengths $l_{uv}$ and flows $\nu_{uv}$ are still continuous, time shifts $t - \delta t_{uv}$ of the message function $\phi$ (Eq. \eqref{align_MessageFunction_Advection}) are also continuous and not constrained to the discrete time points $T$. Therefore, time warping is approximated by interpolating between the two values at times $t_j \in T$ and $t_{j+1} \in T$ for which $t - \delta t_{uv} \in [t_j,t_{j+1})$ holds. We denote the discrete version of $\phi$ by $\varphi$.
In our experiments, we use linear interpolation as detailed in Appendix \ref{subsection_FinalAlgorithm:MeGA-MP_Appendix}, but depending on the data, different interpolation methods can be considered.

\begin{figure}[t]
    \centering
    \includegraphics[width=\linewidth]{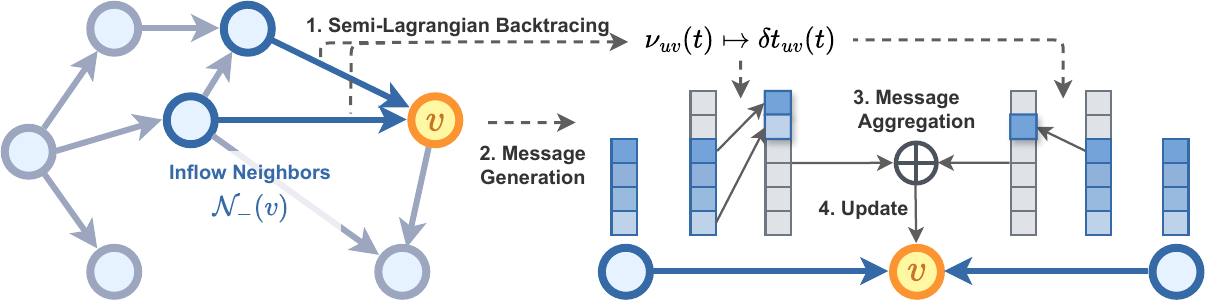}
    \caption{
    Initial iteration of \acrshort*{MeGA-MP} (Eq. \eqref{align_MessagePassing_AdvectionIterations}).
    Messages are passed from inflow neighbors $u \in \Nbh_{-}(v,t)$ to the target node $v$ (for all $v \in V$):
    First, flow velocities $\nu_{uv}$ are converted to the transport time $\delta t_{uv}$ (Eq. \eqref{align_TransportTime_OP}). Second, messages are generated via time-warping 
    (Eq. \eqref{align_MessageFunction_Advection}). Third, results are aggregated and assigned to the target node $v$ (Eq. \eqref{align_MessagePassing_Advection}). 
    The graph is directed according to the flow for visualization purposes.
    Grey entries correspond to padded entries and masked features according to Equation \eqref{align_MessagePassing_AdvectionIterations}.}    \label{figure_AdvectionMessagePassing}
\end{figure}

\paragraph{Model Iterations}
\label{sec:ModelIterations}
As mentioned above, applying Equation \eqref{align_MessagePassing_Advection} one time to $\cmhat_v^{(0)}$ for all $v \in V \setminus \Vb$ only fully solves the node forecasting problem if all evaluations of the kind $c_u(t - \delta t_{uv})$ in Equation \eqref{align_MessageFunction_Advection} index known concentration values. This is only the case if either $u \in \Vb$ is a boundary node, or if $t - \delta t_{uv} < t_i$ lies temporally before the upper bound of the known history $\Tn{hi}$. 
The latter corresponds to a constraint on the flow field $\num$.
However, generalizing this to arbitrary flow fields can be achieved by iterating Equation \eqref{align_MessagePassing_Advection} via
\begin{align}
\label{align_MessagePassing_AdvectionIterations}
\begin{split}
    \chat_v^{(0)}(t)
    :=&~
    \begin{cases}
        c_v(t) 
        &
        \text{if } v \in \Vb
        \text{ or } t \in \Tn{hi} = \{t_1,...,t_i\} 
        \\
        0 
        &
        \text{if } v \in V \setminus \Vb
        \text{ and } t \in  \Tn{fu} = \{t_{i+1},...,t_n\} 
    \end{cases},
    \\
    \chat_v^{(k)}(t)
    :=&~
    \begin{cases}
        0 
        & 
        \text{if } v \in \Vb
        \text{ or } t \in \Tn{hi}
        \\
        \sum_{u \in \Nbh_-(v,t)}
        w_{uv}(t)
        ~
        \varphi(t,e_{uv},\cmhat^{(k-1)},\num)
        &
        \text{if } v \in V \setminus \Vb 
        \text{ and } t \in \Tn{fu}
    \end{cases}
    \text{ for  all }  k \geq 1,
    \\
    \chat_v(t)
    :=&~
    \sum_{k=1}^{\infty} \chat_v^{(k)}(t)
    ~~
    \text{ for } v \in V \setminus \Vb 
    \text{ and } t \in  \Tn{fu},
\end{split}
\end{align}
\normalsize
\noindent 
where $\cmhat^{(k-1)} \in \R^{|V| \times (\n{hi}+\n{fu})}$ corresponds to the vectorized version of the function $\cm = (c_v)_{v \in V}$ that contains the values $\chat_v^{(k-1)}(t)$ for all $v \in V$ and $t \in T = \Tn{hi} \cup \Tn{fu}$. 
To build an intuition for how these iterations relax the constraint on $\num$, note that in a single iteration $k$, a node $v \in V \setminus \Vb$ receives information at time $t \in \Tn{fu}$ from its neighbors (or itself) at earlier times $t - \delta t_{uv} < t$. This defines a spatio-temporal inflow tree of depth one, along which information is passed. With each iteration, this spatio-temporal inflow tree extends further through the graph. As a result, the node $v$ receives information from increasingly distant nodes at progressively earlier times, eventually from either boundary nodes $\Vb$ or the known history $\Tn{hi}$. Importantly, only at these leaves, information is initially non-zero. Therefore, summing the values $\chat_v^{(k)}(t)$ over all $k \in \N$ correctly aggregates information from antisymmetrically distributed sending nodes along paths in the inflow tree.
A rigorous formalization of this intuition can be found in Appendix \ref{subsection_ErrorBoundsOnMeGA-MP}.
The initial iteration ($k = 0 \mapsto k = 1$) is visualized in Figure \ref{figure_AdvectionMessagePassing}. Subsequent iterations follow the same schema, but with node feature vectors that already contain information at future time steps $\Tn{fu}$.

\paragraph{Theoretical Guarantees} 
The following Theorem \ref{theorem_ConvergenceOfModelIterations} ensures that the prediction $\ymhat_v = (\chat_v(t))_{t \in \Tn{fu}}$ defined by the last row of Equation \eqref{align_MessagePassing_AdvectionIterations} is computable in finite time for all $v \in V \setminus \Vb$. 

\begin{theorem}[Convergence of model iterations]
\label{theorem_ConvergenceOfModelIterations}
    There exists a $\kappa \in \N$, such that $\chat_v^{(k)}(t)$ as defined in Equation \eqref{align_MessagePassing_AdvectionIterations} is equal to zero for all $k > \kappa$, $v \in V$ and $t \in T = \Tn{hi} \cup \Tn{fu}$.
\end{theorem}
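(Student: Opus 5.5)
Since $\chat_v^{(k)}(t)$ is defined on the finite grid $T\times V$, my plan is to attach to every nonzero value a strictly decreasing chain of "source times" and then bound the chain length. Informally: a nonzero value at time $t\in\Tn{fu}$ in iteration $k$ must come from a nonzero value at an earlier time in iteration $k-1$. Moreover, after $\kappa$ steps one must have fallen below $t_{i+1}$, where iterations $k\geq 1$ are identically zero.

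The first step is a support lemma. Let $S_k := \{(v,t) : \chat_v^{(k)}(t)\neq 0\}$ and $\tau_k := \min\{t : (v,t)\in S_k\}$, with $\tau_k=\infty$ if $S_k=\emptyset$. For $k\geq 1$, Equation \eqref{align_MessagePassing_AdvectionIterations} sets $\chat_v^{(k)}(t)=0$ whenever $v\in\Vb$ or $t\in\Tn{hi}$. Hence $S_k\subseteq (V\setminus\Vb)\times\Tn{fu}$, and so $\tau_k\geq t_{i+1}$.

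The second step gives a quantitative lower bound on transport times. Set
\[
\delta_{\min}:=\min_{e_{uv}\in E,\, t\in \Tn{fu}} \delta t_{uv}(t) >0 .
\]
This is a minimum of finitely many positive numbers, because $E$ and $\Tn{fu}$ are finite and each $\delta t_{uv}(t)>0$ by the \gls*{OP} \eqref{align_TransportTime_OP}. Here I take for granted, as the algorithm does, that the \gls*{OP} has a solution at every relevant pair; otherwise the message is simply absent or zero.

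The third step exploits the interpolated message $\varphi(t,e_{uv},\cmhat^{(k-1)},\num)$. It only reads values of $\cmhat^{(k-1)}$ at the two grid times $t_j,t_{j+1}$ with $t-\delta t_{uv}\in[t_j,t_{j+1})$. The main obstacle sits here: the right endpoint $t_{j+1}$ can exceed $t-\delta t_{uv}$, and can even equal $t$ when $\delta t_{uv}<dt$. A naive "strictly earlier" argument then fails, since a value could be read at the same grid time.

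For linear interpolation I would handle this as follows. The weight on $t_{j+1}$ is $(t-\delta t_{uv}-t_j)/dt$, so a nonzero contribution requires $t_{j+1}\leq t$. If $t_{j+1}=t$, the weight is strictly less than $1$. This still permits a same-time dependence, so to close the argument I would distinguish two cases:
\begin{itemize}
\item If $\delta_{\min}\geq dt$, then $t_{j+1}\leq t-\delta t_{uv}+dt$ is at most $t$ with equality only when the weight on that node vanishes. Otherwise $t_{j+1}<t$, so $\tau_k\geq\tau_{k-1}+dt$.
\item In general, I would argue via paths in the inflow tree: $(v,t)\in S_k$ requires a chain $(v_0,s_0),\dots,(v_k,s_k)$ with $(v_0,s_0)$ in the initial support and $s_{m}\leq s_{m+1}$.
\end{itemize}
Summed along the chain, the genuine physical displacement is $\sum_m \delta t\geq k\delta_{\min}$. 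Each interpolation can advance the read time by less than $dt$ but cannot move it past the evaluation time. I would therefore replace "grid time" by "continuous source time" $s_m-\delta t$ and show by induction that every nonzero $\chat^{(k)}_v(t)$ depends only on $\cmhat^{(0)}$ at continuous times at most $t-k\delta_{\min}+dt$.

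The induction yields $\kappa:=\lceil (t_n-t_1+dt)/\delta_{\min}\rceil$. For $k>\kappa$, every chain must read $\cmhat^{(0)}$ at times before $t_1$, where there is no data; these times are treated as zero or clamped. With clamping, chains could stall at $t_1$, so this clamping convention must be checked explicitly. Combined with the first step, this gives $\chat_v^{(k)}\equiv 0$.

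The delicate point I expect to need the most care is making this continuous-time induction rigorous under interpolation, where a grid read can land up to one $dt$ later than the true source time.
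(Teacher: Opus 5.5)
Your Case~1 argument is correct and self-contained. If every transport time satisfies $\delta t_{uv}(t)\ge dt$, then any read with nonzero interpolation weight lands on a grid time $\le t-dt$: the only way to hit $t_{j+1}=t$ is $\delta t_{uv}=dt$, and then $a(t_{j+1})=0$. Hence $\tau_k\ge\tau_{k-1}+dt$ for $k\ge 2$, and with $\tau_1\ge t_{i+1}$ you get $\chat^{(k)}\equiv 0$ for all $k>\n{fu}$. This support-propagation argument is more elementary than the paper's route, which uses the closed form over the interpolation inflow tree $\That(v,t)$ together with finiteness of its depth. Your worry about clamping is moot: for $k\ge 2$ every read is of $\cmhat^{(k-1)}$, which vanishes on $\Tn{hi}\ni t_1$.

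The general-case induction, however, is false. The round-up to $t_{j+1}$ happens at \emph{every} hop, and the next hop starts from the rounded grid time. The slack therefore accumulates to as much as $k\,dt$, not a single $dt$. Take $\delta t_{uv}\equiv 1.9\,dt$ along a long path. A chain that keeps reading the right endpoint retreats only $dt$ per hop, with weight $0.1>0$ per hop. So nonzero iterates persist for $k$ close to $\n{fu}$, far beyond your $\lceil (t_n-t_1+dt)/\delta_{\min}\rceil\approx n/1.9$. Even when $\delta_{\min}\ge dt$, the $\kappa$ you end with is wrong; use $\kappa=\n{fu}$ from Case~1 instead.

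When $\delta_{\min}<dt$ the gap cannot be closed, because the statement itself fails. Take $v\in V\setminus\Vb$, $t_m\in\Tn{fu}$, and $u\in\Nbh_-(v,t_m)$ whose transport time $\delta t_{uv}(t_m)<dt$ is an inverse self-loop time, i.e.\ a flow reversal on $e_{uv}$ just before $t_m$. Then $t_m-\delta t_{uv}\in(t_{m-1},t_m)$, so the message reads $\chat_v^{(k-1)}(t_m)$ itself with weight $a(t_m)=1-\delta t_{uv}/dt>0$. Since all weights and data are nonnegative,
\[
\chat_v^{(k)}(t_m)\;\ge\; w_{uv}(t_m)\,a(t_m)\,\chat_v^{(k-1)}(t_m)\;\ge\;\bigl(w_{uv}(t_m)\,a(t_m)\bigr)^{k-1}\chat_v^{(1)}(t_m),
\]
which is positive for every $k$ once $\chat_v^{(1)}(t_m)>0$ (for example, when $v$ has a second inflow neighbor fed by a boundary node). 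The iterates decay geometrically but never vanish. An extra hypothesis is therefore needed, e.g.\ $\delta t_{uv}(t)\ge dt$ for all edges and all $t\in\Tn{fu}$. More generally, it suffices that the relation ``reads at the same grid time with positive weight'' has no cycles.

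The paper's proof has exactly this hole. Its remark on the finite depth of $\That(v,t)$ asserts $t_0>t_1>\cdots$ along every path. That holds for the continuous inflow tree, but fails for $\That(v,t)$: the child time $t_{j+1}$ equals the parent time whenever the transport time is below $dt$, which is precisely the obstacle you flagged. The paper's argument, like yours, is valid only in your Case~1 regime.
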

\noindent
For a node $v \in \Vb$ or a time $t \in \Tn{hi}$, this statement is trivial ($\kappa(v,t)= 0$). In the proof of Theorem \ref{theorem_ConvergenceOfModelIterations}, we show that for the more interesting case of a node $v \in V \setminus \Vb$ and a time $t \in \Tn{fu}$, $\chat_v^{(k)}(t)$ is only non-zero for all iterations $k \in \N$ which correspond to the length $l_p$ of any path $p \in \Pahat(v,t)$ in the before-mentioned inflow tree $\That(v,t)$. This naturally defines $\kappa(v,t) = \max_{p \in \Pahat(v,t)} l_p$ as the maximum path length over all paths in the inflow tree; and $\kappa = \max_{v \in V} \max_{t \in t \in \Tn{hi} \sqcup \Tn{fu}} \kappa(v,t)$ as the maximum of the former over all nodes and time steps.
In practice, we compute the inflow tree and the node- and time-dependent $\kappa(v,t)$ on the fly by accumulating the transport times along paths of the inflow tree until either a boundary node $v \in \Vb$ is reached or the difference between the time $t \in \Tn{fu}$ and the accumulated transport time lies temporally before the upper bound of the known history $\Tn{hi}$. If this is the case, the end of a path in the inflow tree is reached. 

Finally, we show that \gls*{MeGA-MP} (Eq. \eqref{align_MessagePassing_AdvectionIterations}) reproduces the ground-truth (Eq. \eqref{align_MessagePassing_Advection}) if the interpolation error is zero.
An extended version of Theorem \ref{theorem_InterpolationError} with an error bound obtained through the temporal interpolation can be found in Appendix \ref{subsection_ErrorBoundsOnMeGA-MP}.

\begin{theorem}[Interpolation error]
\label{theorem_InterpolationError}
    Let $v \in V$ and $t \in T = \Tn{hi} \cup \Tn{fu}$ hold. 
    Let Assumption \ref{assumption_IdentifiablePaths} and \ref{assumption_EqualtTimeForJumpingIntoHistory} hold.
    If the temporal interpolation error is zero, i.e., if $\phi = \varphi$ holds, we obtain $\chat_v(t) = c_v(t)$.
\end{theorem}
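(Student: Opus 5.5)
We argue by strong induction along the spatio-temporal inflow tree $\That(v,t)$, unrolling the exact relation of Theorem \ref{theorem_PhysicalAlignment} in parallel with the iterations \eqref{align_MessagePassing_AdvectionIterations}. The trivial case comes first. If $v \in \Vb$ or $t \in \Tn{hi}$, then $\chat_v^{(0)}(t) = c_v(t)$ and all later iterates vanish. Here we read the prediction for such pairs as the known value, i.e.\ $\chat_v = \sum_{k \geq 0} \chat_v^{(k)}$, which coincides with the last row of \eqref{align_MessagePassing_AdvectionIterations} on $V\setminus\Vb$, $\Tn{fu}$ because $\chat_v^{(0)}(t)=0$ there. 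So we may assume $v \in V \setminus \Vb$ and $t \in \Tn{fu}$.

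For the main case, iterate Theorem \ref{theorem_PhysicalAlignment} starting at $(v,t)$. Each application expresses $c_v(t)$ as a convex combination $\sum_{u} w_{uv}(t)\,\phi(t,e_{uv},\cm,\num)$. Each message $\phi$ equals $c_{u'}(t-\delta t_{uv})$, where $u' \in \{u,v\}$ depending on whether $\zcurl(\delta t_{uv})=\pm l_{uv}$ or $0$. We expand recursively until every term refers to a pair $(u',s)$ with $u' \in \Vb$ or $s \in \Tn{hi}$. This yields the path expansion
\begin{align*}
    c_v(t) = \sum_{p \in \Pahat(v,t)} W_p \, c_{\mathrm{end}(p)}\big(t_{\mathrm{end}(p)}\big),
\end{align*}
where $W_p$ is the product of the weights $w$ along $p$. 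Assumption \ref{assumption_IdentifiablePaths} should guarantee that this expansion is well defined: the paths are identifiable and each one terminates in finitely many steps. Finiteness is the same fact used in Theorem \ref{theorem_ConvergenceOfModelIterations}, with all depths at most $\kappa$. Assumption \ref{assumption_EqualtTimeForJumpingIntoHistory} should guarantee that the stopping criterion of the iterations, namely reaching a boundary node or a time before the end of $\Tn{hi}$, coincides with the leaf criterion of $\That(v,t)$. This rules out a path being truncated in one expansion but not in the other.

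Next, we show by induction on $k$ that, when $\phi=\varphi$,
\begin{align*}
    \chat_v^{(k)}(t) = \sum_{p \in \Pahat(v,t),\, l_p = k} W_p \, c_{\mathrm{end}(p)}\big(t_{\mathrm{end}(p)}\big).
\end{align*}
Two facts drive the induction. First, $\cmhat^{(0)}$ is nonzero only at the leaves, where it equals the true $c$. Second, the iteration map $\cmhat^{(k-1)} \mapsto \cmhat^{(k)}$ is linear and masks the leaves to zero. Each step therefore extends paths of length $k-1$ by one edge and multiplies by the corresponding weight. The hypothesis $\phi=\varphi$ is exactly what makes the interpolated evaluation at the non-grid time $t-\delta t_{uv}$ equal to the exact value. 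Because $\varphi$ is linear in its node-signal argument, this exactness carries over to the intermediate iterates. Summing over $k$, and using Theorem \ref{theorem_ConvergenceOfModelIterations} so that the sum is finite, gives $\chat_v(t)=c_v(t)$.

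The main obstacle is the bookkeeping in the inductive step. We must check that masking at $\Vb$ and $\Tn{hi}$ removes exactly the leaf contributions and nothing else, so that no path is counted twice across different $k$. We must also check that intermediate times $t-\delta t_{uv}$ that fall into $\Tn{fu}$ but off the grid are handled consistently. I would settle both points first by invoking the two assumptions, and only then run the linear induction.
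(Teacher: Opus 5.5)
Your skeleton is sound: expand both sides over one tree, note that a path of length $l_p$ enters $\chat_v^{(k)}(t)$ only for $k=l_p$, then sum over $k$. However, the hypothesis is used in the wrong step, and as written the argument does not go through.

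Iterating Theorem \ref{theorem_PhysicalAlignment} gives $c_{v_l}(t_l)=\sum_{u}w_{uv_l}(t_l)\,c_{v_{l+1}}(t_l-\delta t_{uv_l})$ at off-grid times. Unrolling this yields the exact inflow tree $\T(v,t)$ of Theorem \ref{theorem_AdvectionMessagePassing_closedForm}, not an expansion over $\Pahat(v,t)$. Its leaf times are continuous, so ``$s\in\Tn{hi}$'' is not the stopping rule, and its weights $\prod_l w_{u_{l+1}v_l}(t_l)$ carry no interpolation factors.

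Conversely, in your induction on $k$ the hypothesis $\phi=\varphi$ plays no role. The identity $\chat_v^{(k)}(t)=\sum_{l_p=k}W_p\,c_{v_{l_p}}(t_{l_p})$ is plain unrolling of the masked linear recursion: Theorem \ref{theorem_AdvectionMessagePassingIterations_closedForm} together with $\chat^{(k-l_p)}_{v_{l_p}}(t_{l_p})=\delta_{kl_p}c_{v_{l_p}}(t_{l_p})$. It holds unconditionally, provided $W_p=\prod_l w_{u_{l+1}v_l}(t_l)\,a(t_{l+1})$ includes the interpolation weights; without them the sum over $\Pahat(v,t)$ double counts. Your claim that exactness ``carries over to the intermediate iterates'' has no content, since the iterates are grid arrays with no ``exact value'' at $t-\delta t_{uv}$.

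So the bridge between the exact and interpolated expansions is never built, and the Assumptions cannot supply it in the roles you assign them:
\begin{itemize}
\item finiteness comes from positive transport times (Remark \ref{remark_FiniteDepthOfInterpolationInflowTrees});
\item Assumption \ref{assumption_IdentifiablePaths} matches neighbor sequences of $\That(v,t)$ with those of $\T(v,t)$;
\item the masking in \eqref{align_MessagePassing_AdvectionIterations} coincides with the leaf rule of $\That(v,t)$ by definition.
\end{itemize}

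The repair is to apply the hypothesis to the true signal, inside the expansion of $c$. At a grid point with $v_l\notin\Vb$ and $t_l\in\Tn{fu}$, Theorem \ref{theorem_PhysicalAlignment} and $\phi=\varphi$ give
\begin{align*}
    c_{v_l}(t_l)=\sum_{u_{l+1}\in\Nbh_-(v_l,t_l)} w_{u_{l+1}v_l}(t_l)\sum_{t_{l+1}\in\{t_j,t_{j+1}\}} a(t_{l+1})\,c_{v_{l+1}}(t_{l+1}),
\end{align*}
with $v_{l+1}\in\{u_{l+1},v_l\}$ the source node as in Definition \ref{definition_InflowTreeAndInterpolationInflowTree}. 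Let $M$ be the masked map $\cmhat^{(k-1)}\mapsto\cmhat^{(k)}$. The grid values then satisfy $c|_T=\cmhat^{(0)}+M(c|_T)$, hence $c|_T=\sum_{k\le K}M^k\cmhat^{(0)}+M^{K+1}c|_T$. The finite depth behind Theorem \ref{theorem_ConvergenceOfModelIterations} gives $M^{K+1}=0$ for $K\ge\kappa$, and therefore $\chat_v(t)=c_v(t)$.

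This repaired argument is a genuinely different route from the paper's. The paper expands $c$ over $\T(v,t)$ and $\chat$ over $\That(v,t)$, then merges them via joint paths using $a(t_j)+a(t_{j+1})=1$. That merge is where Assumptions \ref{assumption_IdentifiablePaths} and \ref{assumption_EqualtTimeForJumpingIntoHistory} are needed; the latter ensures both interpolated copies of an exact path stop at the same depth. The paper then proves the quantitative bound of Theorem \ref{theorem_InterpolationError_fullVersion} and reads zero interpolation error as grid-aligned shifts ($\that_l=t_l$ with $a=1$, the other branch with $a=0$).

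The repaired argument never leaves the grid, needs neither Assumption, and contains the paper's reading as a special case. Its cost is that it gives no error bound when $\phi\neq\varphi$.
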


\begin{remark}[Initial value problems and efficiency]
\label{remark_InitialValueProblemsAndEfficiency}
    For the sake of efficiency in both computational and memory complexity, we represent the solution of advection on a metric graph as time series at nodes only (cf. Definition \ref{definition_ProblemDefinition}). 
    The actual values along the 1-dimensional edge spaces are solved implicitly. 
    This enables our model to scale linearly with the number of nodes and, consequently, to handle even large graphs.
    For this advantage, our method cannot directly receive classical \glspl*{IVP}, where dense values $c_{uv}(t,z)$ along the edge spaces $\Omega_{uv}$ are known at $t = 0$, except when they are constant. 
    However, initial values along edges can be transformed into time series at nodes and vice versa. This is visualized by the gray characteristic curves in Figure \ref{figure_CharacteristicCurves}. 
    We will demonstrate this in Appendix \ref{subsection_Experiments_AdvectionOnA1DEuclideanDomain_Appendix} as an extension of Experiment 2 in Section \ref{subsection_Experiments_AdvectionOnA1DEuclideanDomain}. This can also be used to recover the explicit signal along the edge spaces, as used in the \href{https://anonymous.4open.science/w/tmp-preprint-BB4F/}{demo}.
\end{remark}

\paragraph{Runtime Complexity}  
\label{paragraph_runtime_complexity}
The runtime of \gls*{MeGA-MP} depends on edge lengths $l_{uv}$, flow velocities $\nu_{uv}$, number of history and predictions steps $\n{hi}$ and $\n{fu}$, respectively, and those of edges $|E|$ in the graph. We analyze runtime in two parts: Computation of transport times and the iterative message passing procedure (cf. Algorithms \ref{algorithm_FlowToTransportTime}, \ref{algorithm:MeGA-MP}).

The number of iterations required to solve the dynamics depends on the number of edges that information traverses within the future horizon $\Tn{fu}$ (cf. Appendix \ref{section_Background_Appendix}). 
An upper bound on the number of iterations can be defined as $\kn{up} = \frac{t_n - t_i}{t_{\min}}$, where $t_{\min} = \min_{e_{uv} \in E, t \in \Tn{fu}, \nu_{uv}(t) \neq 0} \frac{l_{uv}}{|\nu_{uv}(t)|}$.
In line with other message-passing algorithms, \gls*{MeGA-MP} also applies a message function per edge that processes vectors of length $n = \n{hi} + \n{fu}$, resulting in a worst-case runtime of $\mathcal{O}(\kn{up} n|E|)$.

The runtime for computing the transport times $\delta t_{uv}$ from the flow field $\nu_{uv}$ (cf. Algorithm \ref{algorithm_FlowToTransportTime}) depends on edge lengths $l_{uv}$ and velocities $\nu_{uv}$. The worst case number of iterations of the inner loop is $m = \min(n, \max_{e_{uv} \in E, t \in \Tn{fu}, \nu_{uv}(t) \neq 0} \frac{l_{uv}}{|\nu_{uv}(t)| dt})$, and since the outer loop iterates $n$ times for every edge in $E$, the resulting complexity is $\mathcal{O}(|E| m n)$ with an upper bound of $\mathcal{O}(|E|n^2)$.
Therefore, the overall worst-case runtime amounts to 
$\mathcal{O}(|E|(\kn{up} n + n^2)).$

\subsection{Extended Algorithm: MeGA-MP$^+$}
\label{subsection_ExtendendAlgorithm:MeGA-MP^+}

We defined \gls*{MeGA-MP} as the algorithm introduced in Section \ref{subsection_FinalAlgorithm:MeGA-MP}, characterized by 
the message aggregation scheme (Eq. \eqref{align_MessagePassing_Advection}),
induced from the mixing at nodes (Eq. \eqref{align_CouplingCondition_MixingAtNodes}), and 
the physics-informed advection message function $\phi$ (Eq. \eqref{align_MessageFunction_Advection}) within, 
derived from the advection edge dynamics (Eq. \eqref{align_EdgeDynamics_Advection}) 
with inflow continuity condition (Eq. \eqref{align_CouplingCondition_InflowContinuity}).
Formulating \gls*{MeGA-MP} as a message-passing operator allows straight-forward adaptations to more flexible and also trainable models. This can be achieved by extending the (discrete) message-function $\varphi$ (Eq. \eqref{align_MessagePassing_AdvectionIterations}/\eqref{align_MessageFunction_Advection_LinearInterpolation}). We refer to such extensions of $\varphi$ as $\varphi^+$, and denote the overall extended method as \gls*{MeGA-MP}$^+$. Note that this name stands for a concept rather than a specific method, since depending on the specific formulation of $\varphi^+$, multiple variations are possible.
On the one hand, added components can depend on the local metric graph neighborhood $\mathcal{N}(v)$, just as $\varphi$ does. The resulting \gls*{MeGA-MP}$^+$ operator can adapt to spatial dynamics other than linear advection. 
On the other hand, added components that do not depend on the neighborhood yield a \gls*{MeGA-MP}$^+$ operator that can adapt to additional non-spatial dynamics, while spatially still representing linear-advection. We showcase one such extension in Section \ref{section_Experiment1_Zero-ShotTransferLearningOfAdvection-ReactionDynamicsOnRealWorldMetricGraphs}.

\section{Experiment 1: Zero-Shot Transfer Learning of Advection-Reaction Dynamics on Real-World Metric Graphs}
\label{section_Experiment1_Zero-ShotTransferLearningOfAdvection-ReactionDynamicsOnRealWorldMetricGraphs}

In this section, we demonstrate the power of \gls*{MeGA-MP} using water quality data, a classical example of an advection-reaction metric graph.
To demonstrate that \gls*{MeGA-MP} can be used as a standalone solver for linear advection, but also that it can be extended by a learnable component to account for additional dynamics, we consider two physical configurations: a pure linear advection system, and a coupled advection-reaction system.
The objective of this experiment is to solve the node-level regression task, predicting the temporal evolution of signals at nodes, as outlined in Definition \ref{definition_ProblemDefinition}.

We benchmark both \gls*{MeGA-MP} for advection and an example of \gls*{MeGA-MP}$^+$ for advection-reaction dynamics against baseline models, and run an ablation study to show the effect of the different components of both models.
Since advection is the only spatial dynamic in both configurations,  \gls*{MeGA-MP} and \gls*{MeGA-MP}$^+$ should theoretically generalize to unseen topologies without retraining. We validate this zero-shot capability by testing both configurations on an additional metric graph structure not contained in the training set.

The datasets are described in detail in Section \ref{subsection_Domain_Dataset}, 
the comparison to baselines is in Section \ref{subsection_Experiment1.1_ComparisonToBaselines}, 
and the ablation study is in Section \ref{subsection_Experiment1.2_AblationStudies}.

\subsection{Domain and Dataset}
\label{subsection_Domain_Dataset}

\paragraph{Domain}
In this experiment, we consider \glspl*{WDS}, in which nodes, such as reservoirs or consumer junctions, are connected through pipes. As the pipes are associated with lengths, this domain can be modeled as a metric graph.
The \glspl*{WDS} Hanoi \cite{Vrachamis2018WaterDomain_WaterDistributionNetworks_Hanoi} and L-Town \cite{Vrachimis2020WaterDomain_WaterDistributionNetworks_L-Town} are shown in Figures \ref{subfigure_Hanoi} and \ref{subfigure_L-Town}, respectively.
Both constitute well-known benchmark networks, used by engineers in the water community. 

\begin{figure}[!h]
\centering
    \begin{minipage}{0.49\linewidth}
        \centering
        \includegraphics[width=0.9\linewidth]{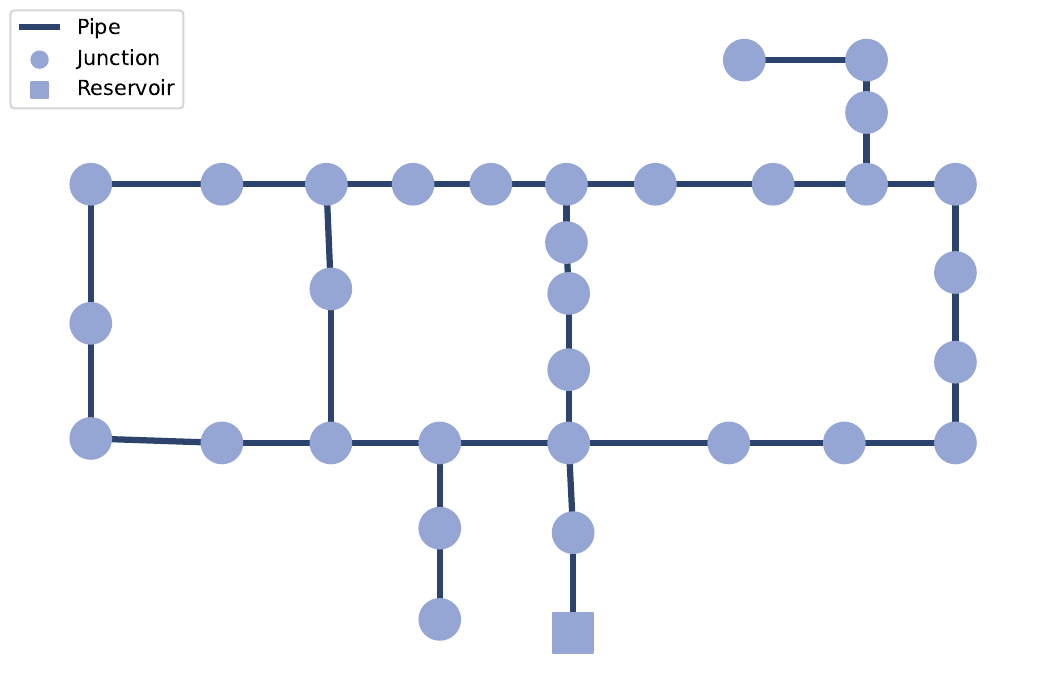}
        \caption{The \gls*{WDS} Hanoi with 32 nodes (31 consumer nodes $\Vc$ and one reservoir node $\Vr$) and 34 pipes used \textbf{for training and testing}.}
        \label{subfigure_Hanoi}
    \end{minipage}
    \hfill
    \begin{minipage}{0.49\linewidth}
        \centering
        \includegraphics[width=0.9\linewidth]{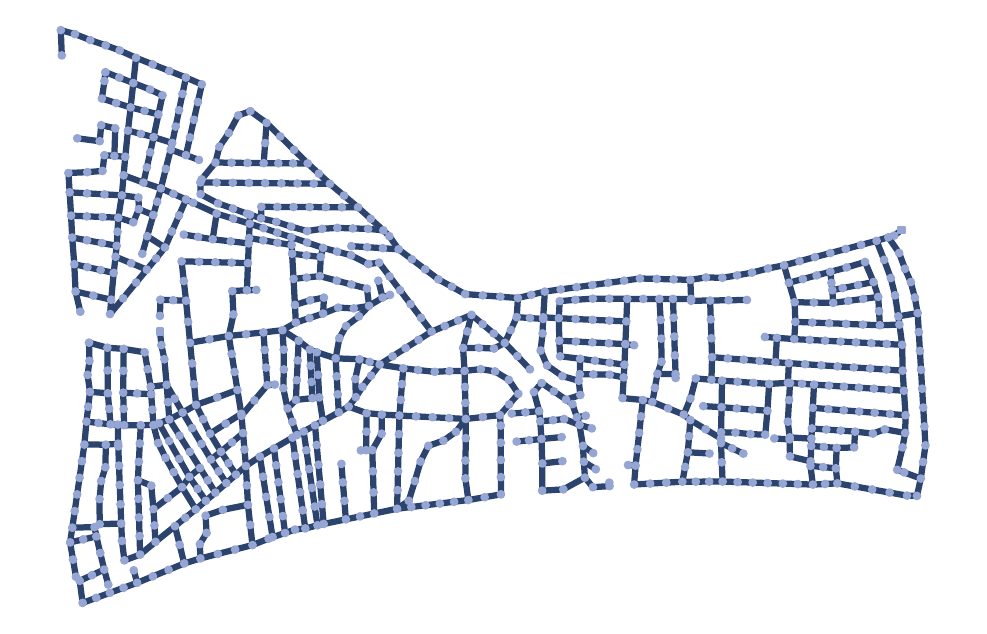}
        \caption{The \gls*{WDS} L-Town with 784 nodes (782 consumer nodes $\Vc$ and two reservoir nodes $\Vr$) and 908 pipes used \textbf{only for testing}.}
        \label{subfigure_L-Town}
    \end{minipage}
\end{figure}

A common task in \glspl*{WDS} is to estimate the water quality by predicting the distribution of a substance throughout the \gls*{WDS}. The two main drivers of such processes are advection and \textit{reactions} \cite{Rossman2020WaterDomain_StateSimulations_Epanet2.2}.
Reactions describe purely temporal changes in state variables such as chemical and physical transformations, interactions with the environment, and decay. Typically, such dynamics are expressed as \glspl*{ODE}.
When combined with advection, they extend Equation \eqref{align_EdgeDynamics_Advection} as follows:
\begin{align}
\label{align_EdgeDynamics_AdvectionReaction}
    \frac{\partial c_{uv}(t,z)}{\partial t}
    =&~
    - \nu_{uv}(t) ~
    \frac{\partial c_{uv}(t,z)}{\partial z}
    +
    f_r(t,z,e_{uv}),
\end{align}
\normalsize
\noindent
where $f_r$ is a substance-specific reaction term.
For this experiment, we simulate the injection of chlorine into the \gls*{WDS}. Chlorine decays over time due to chemical and physical interactions. Details on the specific formalization can be found in Appendix \ref{section_Experiment1_Zero-ShotTransferLearningOfAdvection-ReactionDynamicsOnRealWorldMetricGraphs_Appendix}.

\paragraph{Dataset} 
We generate two training, validation and test datasets by simulating water quality dynamics on the Hanoi \gls*{WDS} under the two previously mentioned configurations. Specifically, the first dataset models pure linear advection (Eq. \eqref{align_EdgeDynamics_Advection}), while the second dataset models coupled advection-reaction dynamics (Eq. \eqref{align_EdgeDynamics_AdvectionReaction}). To evaluate out-of-distribution topology generalization, we extend each test set by simulating the same dynamics on a different metric graph, the L-Town \gls*{WDS}. All ground truth simulations are generated using EPyt-Flow \cite{Artelt2024_WaterDomain_StateSimulation_EPyT-Flow} based on EPANET-MSX \cite{Shang2023WaterDomain_StateSimulations_EpanetMSX2.0}, a hydraulic and water quality simulator for \glspl*{WDS} that supports advection and user-defined reactions such as Equation \eqref{align_EdgeDynamics_Reaction_ChlorineDecay}. 
Details on the parametrization of these setups can be found in Appendix \ref{section_Experiment1_Zero-ShotTransferLearningOfAdvection-ReactionDynamicsOnRealWorldMetricGraphs_Appendix}.
Each sample of the datasets consists of a time series $T$ over $n = 701$ steps, where the first time step $t_1$ corresponds to the history set $\Tn{hi}(t_1) = \{t_1\}$ ($\n{hi} = 1)$ and the remaining $\n{fu} = 700$ steps correspond to the future set $\Tn{fu}(t_1)$ (cf. Definition \ref{definition_ProblemDefinition}).

\subsection{Experiment 1.1: Comparison to Baselines}
\label{subsection_Experiment1.1_ComparisonToBaselines}

\paragraph{Models}
We consider two variants of our method: \gls*{MeGA-MP}, the purely physics-informed message-passing operator derived in Section \ref{subsection_FinalAlgorithm:MeGA-MP}, and \gls*{MeGA-MP}$^+$, an extension with a non-spatial learnable component enabling adaptation to the advection-reaction dynamics (Eq. \eqref{align_EdgeDynamics_AdvectionReaction}) by learning the reaction part from data.
In the spirit of Section \ref{subsection_ExtendendAlgorithm:MeGA-MP^+}, we modify \gls*{MeGA-MP} by extending its message function $\varphi$ (Eq. \eqref{align_MessagePassing_AdvectionIterations}/\eqref{align_MessageFunction_Advection_LinearInterpolation}) with a \gls*{MLP}.  
The new message function $\varphi^+$ is designed to be suitable for first-order reaction kinetics, where the solution will be an exponential function. The resulting message function $\varphi^+$ is given by 
\begin{align}
\label{align_MessageFunction_AdvectionReaction}
    \varphi^+(t,e_{uv},\cmhat^{(k)},\num) 
    = 
    \underbrace{
    \varphi(t,e_{uv},\cmhat^{(k)},\num)
    }_{\text{Advection}} 
    \cdot 
    \underbrace{
    \exp(|\delta t_{uv}| \cdot \text{MLP}(\rho_{uv}^{-1})),
    }_{\text{Reaction}},
\end{align}
\normalsize
\noindent 
where $\rho_{uv} \in (0,1]$ denotes normalized pipe diameters. Details on the architecture and the training of the \gls*{MLP} can be found in Appendix \ref{section_Experiment1_Zero-ShotTransferLearningOfAdvection-ReactionDynamicsOnRealWorldMetricGraphs_Appendix}. 

To show the ability of our model for transfer learning, we train it on the \gls*{WDS} Hanoi (cf. Figure \ref{subfigure_Hanoi}) and test it on both Hanoi and the unseen, much larger \gls*{WDS} L-Town (cf. Figure \ref{subfigure_L-Town}).

\paragraph{Baselines}
As this is the first approach for modeling advection on metric graphs, we select a few representative baselines from the general graph domain.
We compare ourselves to the \gls*{GNN} by \citet{Gilmer2017GNNs_Methods_MPNN} (\enquote{NNConv}) as a representative of conventional \glspl*{GNN}, 
the \gls*{GNN} specifically designed for \glspl*{WDS} by \citet{Hermes2025WaterDomain_WaterQuality_Methods} (\enquote{\gls*{PDE}-GNN}), 
the transformer \gls*{GNN} by \citet{rampasek2022GPSConv} (\enquote{GPSConv}) as a representative for \glspl*{GNN} potentially able to handle global graph dependencies,
and 
the dynamic-system based \gls*{GNN} by \citet{gravina2023antisymmetric} (\enquote{A-DGN}) as a representative for \glspl*{GNN} potentially able to handle dynamical systems with long-range dependencies on graphs. 
For details regarding these baselines, we also refer to Appendix \ref{section_Experiment1_Zero-ShotTransferLearningOfAdvection-ReactionDynamicsOnRealWorldMetricGraphs_Appendix}. 
There, we also provide details on why the adaptation of the methods for metric graphs from \citet{Blechschmidt2022MGNNs_MetricGraphNeuralODEandPDESolver_Solution-basedArchitechtures_PINNsForDriftDiffusionOnMetricGraph,Blechschmidt2025MGNOs_NormalMGNOs_Solution-basedArchitechtures_DeepONetsForDriftDiffusionOnMetricGraphs} and \citet{Laczko2025MGNNs_MetricGraphNeuralODEandPDESolver_Solution-basedArchitechtures_PINNsForSchrödingerOnQuantumGraphs} were not applicable to our task.

\paragraph{Results and Analysis}

\begin{table}[!h]
\vspace{-2pt}
    \centering
    \caption{\acrshort*{MAE} between simulator ground truth (EPANET-MSX) and model prediction of the \textbf{advection} and \textbf{advection-reaction} dynamical systems averaged across samples, nodes and time steps of the test datasets. Hanoi is the graph seen in training, L-Town is the generalization network not seen in training. 
    The computation time for both networks and for each method is reported below (batch size = 1). The \textit{preprocessing} time is the time it takes to convert the flow field to transport times (Algorithm \ref{algorithm_FlowToTransportTime}). For a fair comparison to the simulator, this time has to be added to the computation time of all \gls{ML}-driven models.}
    \label{table_ComparisonBaselines}
    \begin{tabular}{cc||cc|cc}
       &    & \multicolumn{2}{c}{\textbf{Advection}} & \multicolumn{2}{c}{\textbf{Advection-Reaction}} \\
       &WDS & Hanoi & L-Town & Hanoi & L-Town
       \\
       &  &  (train/test graph) & (test graph) & (train/test graph) & (test graph)
       \\
       & & \scriptsize$|V| = 32, |E| = 34$ & \scriptsize$|V| = 784, |E| = 908$ & \scriptsize$|V| = 32, |E| = 34$ & \scriptsize$|V| = 784, |E| = 908$
       \\ \hline \hline 
       \multirow{6}{*}{\rotatebox[origin=c]{90}{MAE}}
        & NNConv & 0.1693{\scriptsize$\pm 0.1610$} & 0.2758{\scriptsize$\pm 0.2345$} & 0.1242{\scriptsize$\pm 0.1132$} & 0.2248{\scriptsize$\pm 0.2113$} \\ 
        & \gls*{PDE}-\gls*{GNN} & 0.0442{\scriptsize$\pm 0.0770$} & 0.1640{\scriptsize$\pm 0.1909$} & 0.0651{\scriptsize$\pm 0.0808$} & 0.4136{\scriptsize$\pm 0.4206$} \\ 
        & GPSConv & 0.1878{\scriptsize$\pm 0.2911$} & 0.7429{\scriptsize$\pm 2.7174$} & 0.1381{\scriptsize$\pm 0.1137$} & 0.7319{\scriptsize$\pm 1.7254$} \\ 
        & A-DGN & 0.0916{\scriptsize$\pm 0.0885$} & 0.2568{\scriptsize$\pm 0.2225$} & 0.1841{\scriptsize$\pm 0.1310$} & 0.2606{\scriptsize$\pm 0.1280$} \\ 
        & A-DGN$_{Dia}$ & 0.4260{\scriptsize$\pm 0.3247$} & 0.2502{\scriptsize$\pm 0.1612$} & 0.4441{\scriptsize$\pm 0.3340$} & 0.1614{\scriptsize$\pm 0.1060$} \\ 
        & \gls*{MeGA-MP}$\;\;\,$ & \textbf{0.0015}{\scriptsize$\pm 0.0076$} & \textbf{0.0017}{\scriptsize$\pm 0.0061$} & 0.1062{\scriptsize$\pm 0.1240$} & 0.2349{\scriptsize$\pm 0.2236$} \\ 
        & \gls*{MeGA-MP}$^+$ & \textbf{0.0015}{\scriptsize$\pm 0.0076$} & 0.0018{\scriptsize$\pm 0.0061$} & \textbf{0.0013}{\scriptsize$\pm 0.0037$} & \textbf{0.0008}{\scriptsize$\pm 0.0018$} \\ 
        \hline

       \multirow{8}{*}{\rotatebox[origin=c]{90}{Comp. Time (\textit{sec})}} 
       & NNConv & 0.0041{\scriptsize$\pm 0.0278$} & 0.0029{\scriptsize$\pm 0.0003$} & 0.0023{\scriptsize$\pm 0.0001$} & 0.0028{\scriptsize$\pm 0.0001$} \\ 
& \gls*{PDE}-\gls*{GNN} & 1.8648{\scriptsize$\pm 0.0168$} & 1.9251{\scriptsize$\pm 0.0162$} & 1.8926{\scriptsize$\pm 0.0231$} & 1.8925{\scriptsize$\pm 0.0170$} \\ 
& GPSConv & 0.0085{\scriptsize$\pm 0.0004$} & 0.0145{\scriptsize$\pm 0.0005$} & 0.0083{\scriptsize$\pm 0.0001$} & 0.0176{\scriptsize$\pm 0.0085$} \\ 
& A-DGN & 0.0266{\scriptsize$\pm 0.0004$} & 0.0271{\scriptsize$\pm 0.0004$} & 0.0260{\scriptsize$\pm 0.0004$} & 0.0267{\scriptsize$\pm 0.0001$} \\ 
& A-DGN$_{Dia}$ & 0.0074{\scriptsize$\pm 0.0001$} & 0.0420{\scriptsize$\pm 0.0003$} & 0.0073{\scriptsize$\pm 0.0001$} & 0.0418{\scriptsize$\pm 0.0004$} \\ 
& \gls*{MeGA-MP}$\;\;\,$ & 0.0447{\scriptsize$\pm 0.0055$} & 0.3964{\scriptsize$\pm 0.0189$} & 0.0449{\scriptsize$\pm 0.0049$} & 0.4063{\scriptsize$\pm 0.0202$} \\ 
& \gls*{MeGA-MP}$^+$ & 0.0450{\scriptsize$\pm 0.0050$} & 0.4068{\scriptsize$\pm 0.0194$} & 0.0450{\scriptsize$\pm 0.0050$} & 0.4036{\scriptsize$\pm 0.0173$} \\ \cdashline{2-6} 
& \textit{preprocessing} & 0.0920{\scriptsize$\pm 0.0295$} & 1.5156{\scriptsize$\pm 0.1350$} & 0.0992{\scriptsize$\pm 0.0299$} & 1.5184{\scriptsize$\pm 0.1270$} \\ \cdashline{2-6} 
& \textit{Simulator} & 0.6724{\scriptsize$\pm 0.1567$} & 31.6906{\scriptsize$\pm 2.6501$} & 0.7520{\scriptsize$\pm 0.1642$} & 34.8503{\scriptsize$\pm 2.9193$} \\
       \hline 
    \end{tabular}
\vspace{-5pt}
\end{table}

Table \ref{table_ComparisonBaselines} reports the average \gls*{MAE}\footnote{
    We clip the predictions of all baselines to the upper and lower bound of chlorine injections as especially PDE-GNN explodes for some samples in the L-Town evaluation dataset.}
for both configurations and for both topologies together with the standard deviation over all samples, nodes and time steps. 
Compared to all baselines, \gls*{MeGA-MP} and \gls*{MeGA-MP}$^+$ show superior performance in the dynamics they are designed for, while \gls*{MeGA-MP} can -- as expected -- not accurately model the additional reaction dynamics (cf. Section \ref{subsection_Experiment1.2_AblationStudies}).
Under zero-shot transfer from Hanoi to L-Town, they show no significant increase in error, whereas the performance of the baseline models degrades. An exception is the A-DGN$_{Dia}$ model, however, this baseline underfits in the training setting, indicating that the improvement in performance may be due to factors other than actual generalization. 
Table \ref{table_ComparisonBaselines} also reports the compute time required to run the models on individual samples (batch-size = 1). Since we use the transport times as input to each model, we report the latter without the time it takes to convert the flow field to transport times and instead disclose it as a separate row. Even when adding the preprocessing time  to the compute time, each of the models is faster on the large test network L-Town than the simulator that we use as the ground truth. The NNConv model is the fastest, as it is a relatively simple \gls*{MPNN}. 
The runtime complexity of \gls*{MeGA-MP} and \gls*{MeGA-MP}$^+$, on the other hand, additionally depend on the edge lengths and flow field, as detailed in Section \ref{subsection_FinalAlgorithm:MeGA-MP}. However, while \gls*{MeGA-MP} and \gls*{MeGA-MP}$^+$ are slower than most baselines, it is faster than PDE-GNN, which has second-best performance on the in-distribution setting.

We now compare the results qualitatively, first for the pure advection dataset, then for the advection-reaction dataset.

\textbf{Pure Advection Configuration:} 
In Figure \ref{figure_ComparisonBaselines_Advection_Hanoi_with_map}, we show the predictions of all models evaluated on the metric graph Hanoi. Note that in this case, \gls*{MeGA-MP} does not have to learn, while the other \gls*{GNN}-baselines do. 
The PDE-GNN models the time series the most accurate, the error mostly stems from misalignment, indicating a failure to properly model transport times. A-DGN can follow the dynamics but exhibits smoothing and fails to capture the high frequency components of the signal. NNConv and GPSConv clearly underfit the data. In contrast, \gls*{MeGA-MP} is able to accurately model the time series at all visualized nodes.
In Figure \ref{figure_ComparisonBaselines_Advection_L-Town_with_map}, we show the predictions of all models evaluated on L-Town. 
It can be observed that none of the baselines can precisely capture the signal at all nodes. PDE-GNN and A-DGN can follow the dynamics on some nodes, like nodes 1, 2, 3, and 8. These are the nodes closest to the injection/boundary nodes (stars), which suggests a failure to capture the long-range spatio-temporal dependencies between more distant nodes. \gls*{MeGA-MP} does not show a clear deviation from the ground truth.

\begin{figure}[!h]
    \centering
    \includegraphics[width=0.99\linewidth]{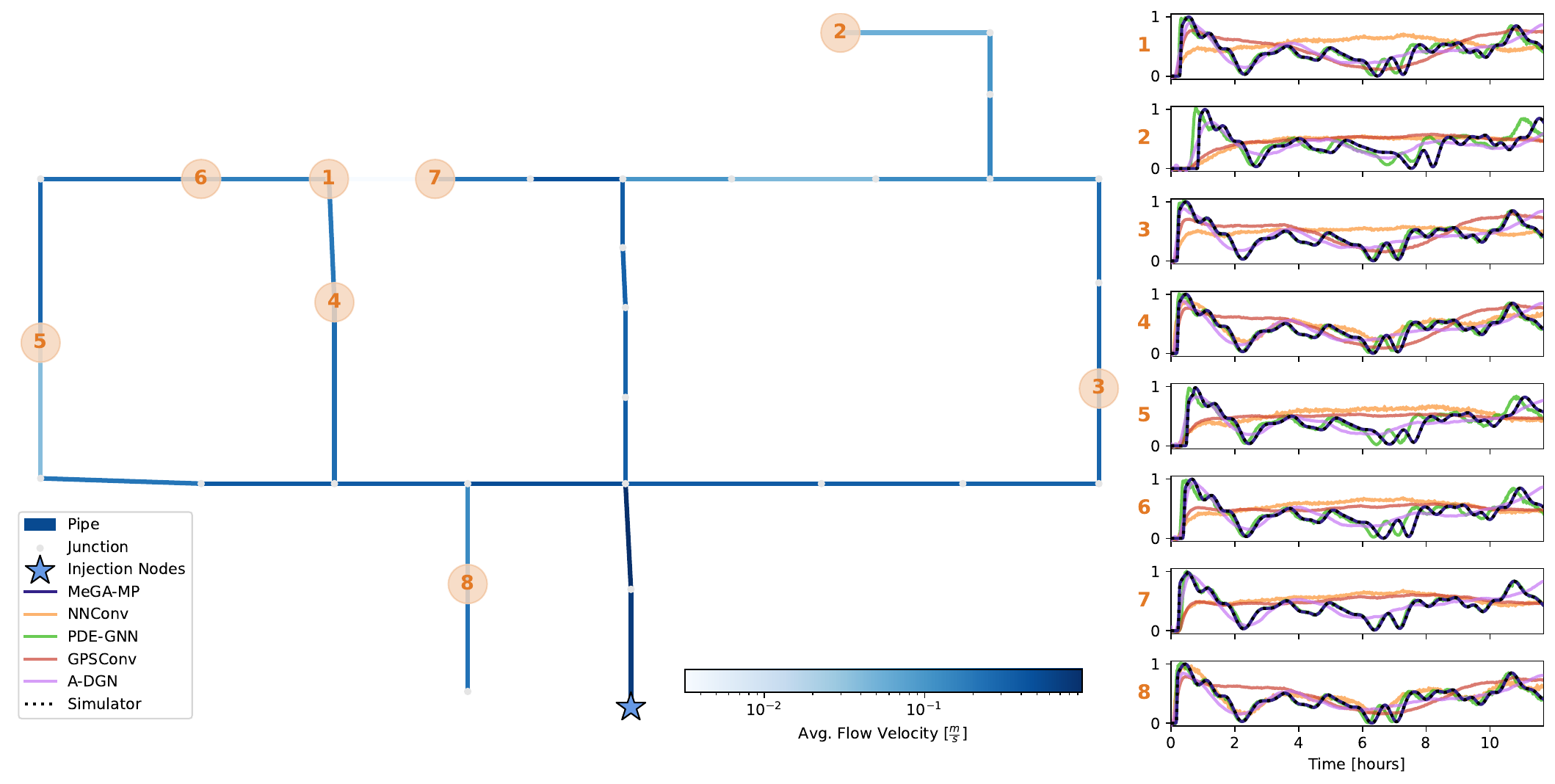}
    \caption{
    Predictions of baselines and \gls*{MeGA-MP} 
    for the \textbf{advection} dynamical system
    at a randomly chosen subset of nodes of the \acrshort*{WDS} \textbf{Hanoi}.
    Hanoi is the graph on which all models are trained.
    Edge color corresponds to the flow velocity averaged across all time steps of the sample.}
    \label{figure_ComparisonBaselines_Advection_Hanoi_with_map}
\end{figure}

\begin{figure}[!h]
    \centering
    \includegraphics[width=0.99\linewidth]{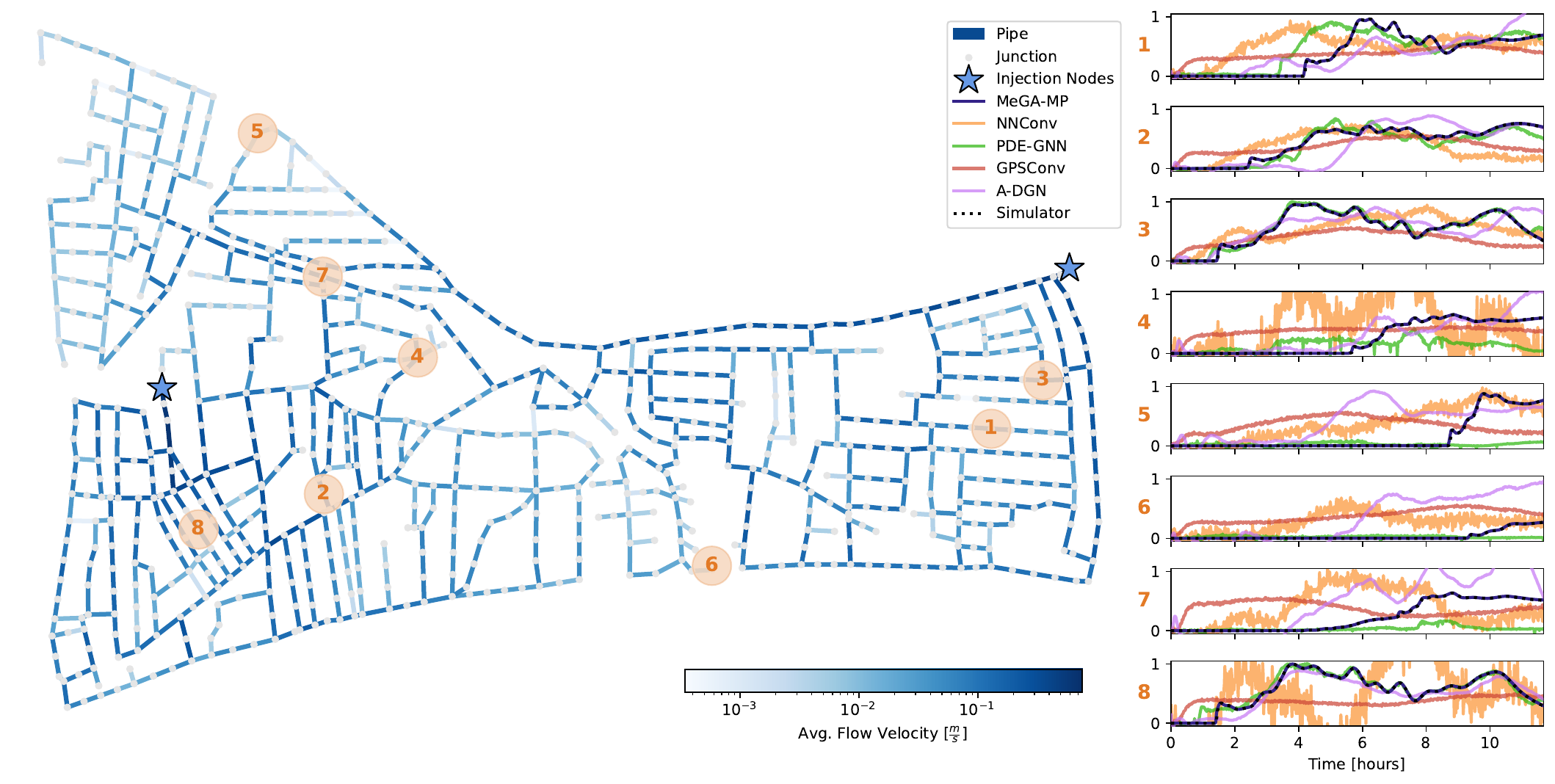}
    \caption{
    Predictions of baselines and \gls*{MeGA-MP} 
    for the \textbf{advection} dynamical system
    at a randomly chosen subset of nodes of the \acrshort*{WDS} \textbf{L-Town}.
    The L-Town metric graph is not part of the training set.
    Edge color corresponds to the flow velocity averaged across all time steps of the sample.}
    \label{figure_ComparisonBaselines_Advection_L-Town_with_map}
\end{figure}

\newpage
\textbf{Advection-Reaction Configuration:} 
In Figure \ref{figure_ComparisonBaselines_AdvectionReaction_Hanoi_with_map}, we show the predictions of all models evaluated on Hanoi. 
The observations are similar to the purely advective setting, the only significant difference is that A-DGN underfits here, and for most nodes resembles a highly-smoothed average signal, similar to the other two baselines NN-Conv and GPSConv. Again, \gls*{MeGA-MP}$^+$ accurately models the signal.
In Figure \ref{figure_ComparisonBaselines_AdvectionReaction_L-Town}, showing predictions of all models evaluated zero-shot (without any retraining) on L-Town, a similar degree of underfitting can be observed.  
Only \gls*{MeGA-MP}$^+$ can model the signal with high precision.

Overall, \gls*{MeGA-MP} and \gls*{MeGA-MP}$^+$ are the only models that perform consistently accurate over multiple nodes and generalize well from the training \gls*{WDS} Hanoi to the test \gls*{WDS} L-Town on the advection and the advection-reaction dynamics, respectively. The different generalization behaviors of the different baselines emphasize the challenges in modeling advection-dominated dynamics on metric graphs and why a suitable physical prior is necessary.

\begin{figure}[!h]
    \centering
    \includegraphics[width=\linewidth]{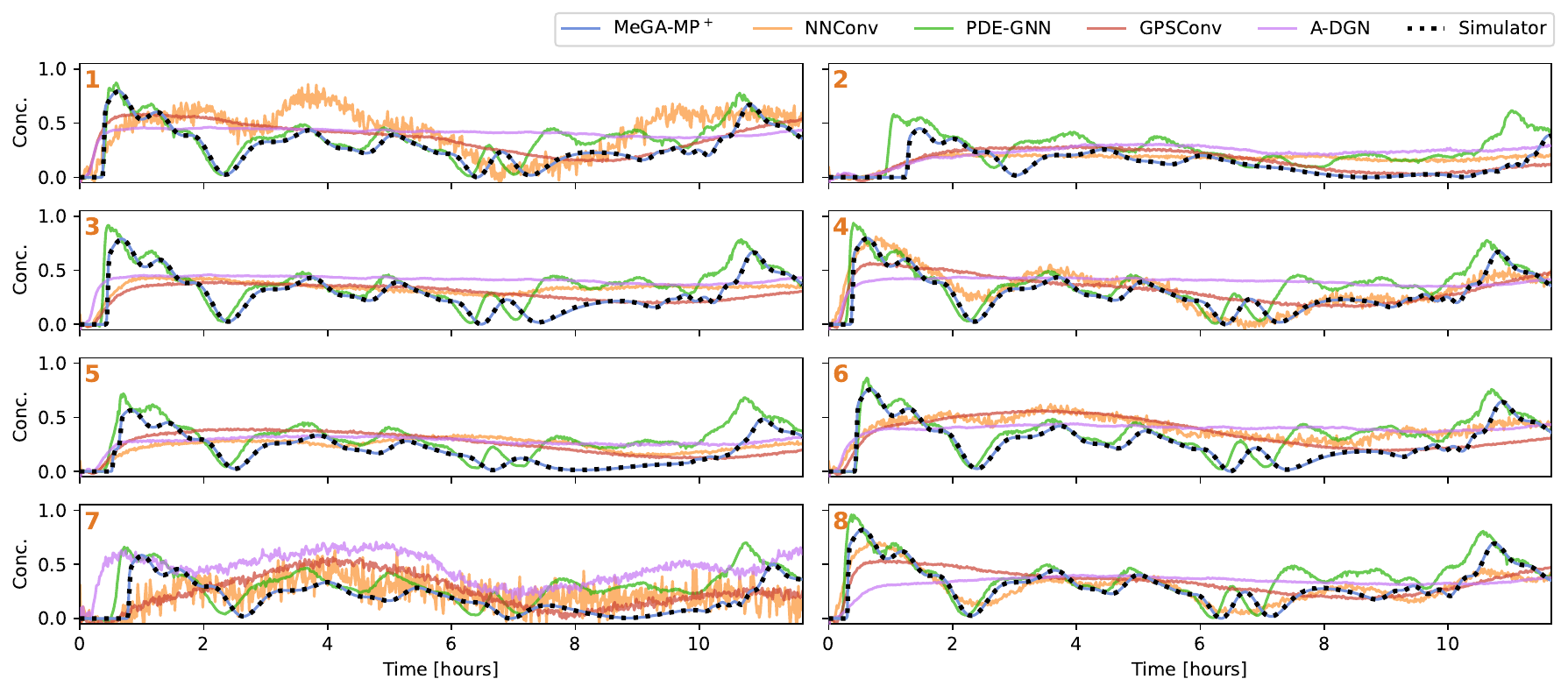}
    \caption{
    Predictions of baselines and \gls*{MeGA-MP}$^+$ 
    for the \textbf{advection-reaction} dynamical system 
    on the \acrshort*{WDS} \textbf{Hanoi}. 
    The selected nodes are the same as in Figure \ref{figure_ComparisonBaselines_Advection_Hanoi_with_map}.}
    \label{figure_ComparisonBaselines_AdvectionReaction_Hanoi_with_map}
\end{figure}

\begin{figure}[!h]
    \centering
    \includegraphics[width=\linewidth]{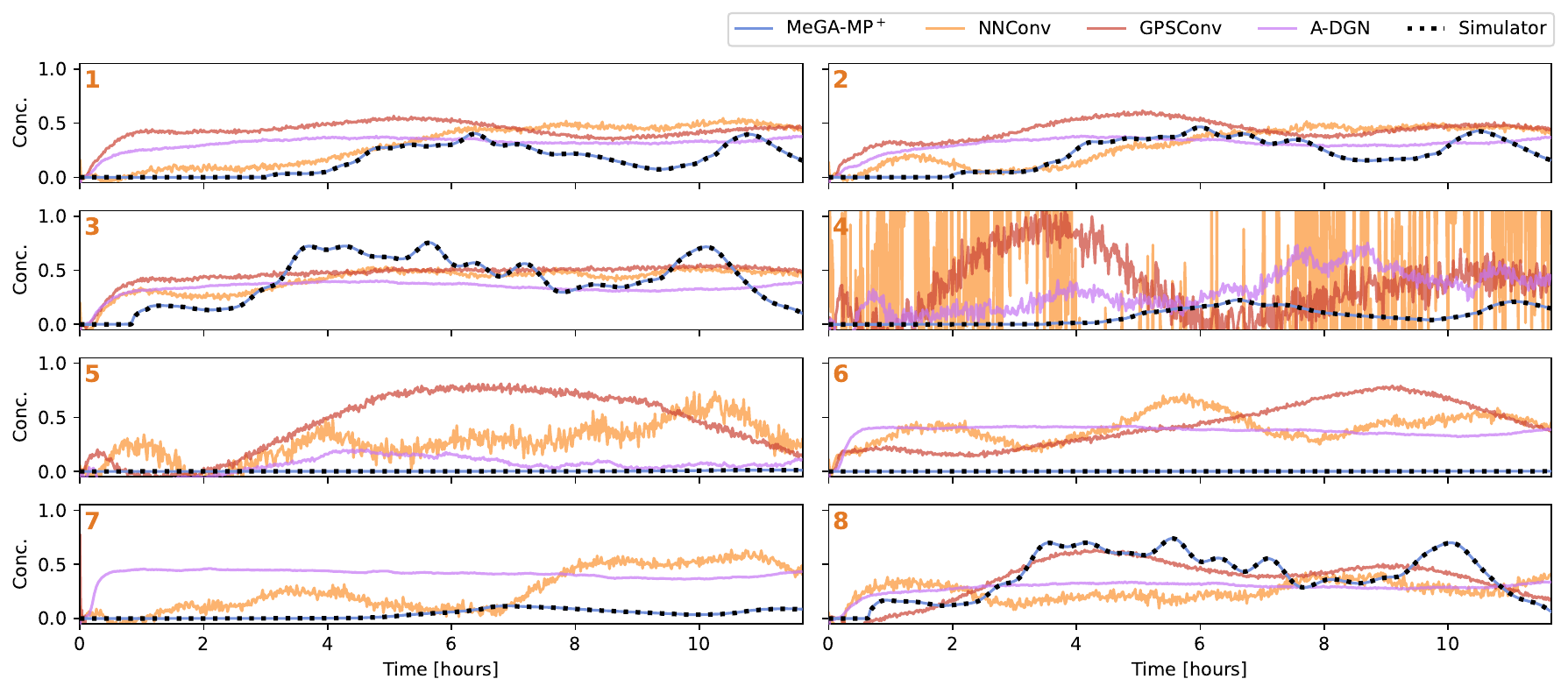}
    \caption{
    Predictions of baselines and \gls*{MeGA-MP}$^+$ 
    for the \textbf{advection-reaction} dynamical system
    on the \acrshort*{WDS} \textbf{L-Town}. 
    The selected nodes are the same as in Figure \ref{figure_ComparisonBaselines_Advection_L-Town_with_map}. 
    In the shown data sample, PDE-GNN explodes and has therefore been excluded.}
    \label{figure_ComparisonBaselines_AdvectionReaction_L-Town}
\end{figure}

\subsection{Experiment 1.2: Ablation Studies}
\label{subsection_Experiment1.2_AblationStudies}

In this section, we investigate how much the physics-informed components of \gls*{MeGA-MP} and \gls*{MeGA-MP}$^+$ contribute to the superior performance observed in Section \ref{subsection_Experiment1.1_ComparisonToBaselines}.

\paragraph{Models}
In our first ablation study, we investigate the contribution of the two components in the message function $\varphi^+$ of \gls*{MeGA-MP}$^+$, i.e., the non-learnable and physics-informed advection message function $\varphi$ (Eq. \eqref{align_MessagePassing_AdvectionIterations}/\eqref{align_MessageFunction_Advection_LinearInterpolation}), 
derived from the advection edge dynamics (Eq. \eqref{align_EdgeDynamics_Advection}) 
with inflow continuity condition (Eq. \eqref{align_CouplingCondition_InflowContinuity}),
and the learnable reaction \gls*{MLP} (Eq. \eqref{align_MessageFunction_AdvectionReaction}).
We do so by comparing the performance of \gls*{MeGA-MP} -- designed for advection dynamics -- and \gls*{MeGA-MP}$^+$ -- designed for advection-reaction dynamics -- on both the dataset for advection and advection-reaction dynamics.

In our second ablation study, we investigate the contribution of the message aggregation scheme (Eq. \eqref{align_MessagePassing_Advection}),
induced from the mixing at nodes (Eq. \eqref{align_CouplingCondition_MixingAtNodes}). 
We do so by training a model that has the same message aggregation structure as given in Equation \eqref{align_MessagePassing_Advection}, but where we replace the message function $\varphi$ not by $\varphi^+$, which yields \gls*{MeGA-MP}$^+$, but by a fully learnable \gls*{MLP} $\varphi^-$. Specifically, we design $\varphi^-$ as a 2-layer \gls*{MLP} with SeLU activation \cite{SELU2017} which has access to adjacent node information over the whole time horizon $T = \Tn{hi} \cup \Tn{fu}$ and the corresponding pre-processed transport times:
\begin{align*}
    \varphi^-(t,e_{uv},\cmhat^{(k-1)},\num)
    = 
    \text{MLP}\left( 
    \cmhat_v^{(k-1)}, 
    \cmhat_u^{(k-1)}, 
    \mathbf{\delta} \mathbf{t}_{uv}
    \right)_t,
\end{align*}

\noindent
where similar to $\cmhat^{(k-1)} \in \R^{|V| \times (\n{hi}+\n{fu})}$ (cf. Eq. \eqref{align_MessagePassing_AdvectionIterations}), $\cmhat_v^{(k-1)} \in \R^{\n{hi}+\n{fu}}$ corresponds to the vectorized version of the function $\cm = (c_v)_{v \in V}$ containing the values $\chat_v^{(k-1)}(t)$ for all $t \in T = \Tn{hi} \cup \Tn{fu}$,
and 
$\mathbf{\delta} \mathbf{t}_{uv}$ corresponds to the pre-processed vector $\mathbf{\delta t}_{uv} = (\delta t_{uv})_{t\in T_{hi}\cup T_{fu}} \in \mathbb{R}^{\n{hi} + \n{fu}}$ of transport times.
We denote this modification of \gls*{MeGA-MP} by \gls*{MeGA-MP}$^-$.

An overview of the different models in these ablations can be found in Table \ref{table_Ablations}.
The setup and datasets used in this ablation are equal to the setup of experiment 1 described in Section \ref{subsection_Experiment1.1_ComparisonToBaselines}.

\begin{table}[!ht]
\vspace{-2pt}
    \centering
    \caption{\acrshort*{MAE} between simulator ground truth (EPANET-MSX) and predictions of different modifications to \gls*{MeGA-MP} on the \textbf{advection} and \textbf{advection-reaction} dynamical systems averaged across samples, nodes and time steps of the test datasets. The differences between the models is highlighted by the first three columns.}
    \label{table_Ablations}
    \resizebox{\linewidth}{!}{
    \begin{tabular}{c||ccc||cc|cc}
       &
       \multirow{3}{*}{\rotatebox[origin=c]{0}{\begin{tabular}[t!]{@{}c@{}} \\ Msg. \\ update \\ via Eq. \eqref{align_MessagePassing_Advection} \end{tabular}}} & 
       \multirow{3}{*}{\rotatebox[origin=c]{0}{\begin{tabular}[t!]{@{}c@{}} \\ Msg. \\ function \end{tabular}}} &  
       \multirow{3}{*}{\rotatebox[origin=c]{0}{\begin{tabular}[t!]{@{}c@{}} \\ Learn. \\ comp. \end{tabular}}} 
        
       & \multicolumn{2}{c}{\textbf{Advection}} 
       & \multicolumn{2}{c}{\textbf{Advection-Reaction}} \\
       & & & 
       & Hanoi & L-Town & Hanoi & L-Town
       \\
       & & & &  (train/test graph) & (test graph) & (train/test graph) & (test graph)
       \\
       & & & & \scriptsize$|V| = 32, |E| = 34$ & \scriptsize$|V| = 784, |E| = 908$ & \scriptsize$|V| = 32, |E| = 34$ & \scriptsize$|V| = 784, |E| = 908$
       \\ \hline \hline 
        \gls*{MeGA-MP}~~ & $\checkmark$ & $\varphi$~~ & $\times$
        & \textbf{0.0015}{\scriptsize$\pm 0.0076$} & \textbf{0.0017}{\scriptsize$\pm 0.0061$} & 0.1062{\scriptsize$\pm 0.1240$} & 0.2349{\scriptsize$\pm 0.2236$} \\ 
        \gls*{MeGA-MP}$^+$ & $\checkmark$ & $\varphi^+$ & $\checkmark$
        & \textbf{0.0015}{\scriptsize$\pm 0.0076$} & 0.0018{\scriptsize$\pm 0.0061$} & \textbf{0.0013}{\scriptsize$\pm 0.0037$} & \textbf{0.0008}{\scriptsize$\pm 0.0018$} \\ 
        \gls*{MeGA-MP}$^-$ & $\checkmark$ & $\varphi^-$ & $\checkmark$
        & 0.0910{\scriptsize$\pm 0.0875$} & 0.2169{\scriptsize$\pm 0.1722$} & 0.0971{\scriptsize$\pm 0.0852$} & 0.1873{\scriptsize$\pm 0.3045$} \\ 
    \end{tabular}%
    }%
\end{table}


\paragraph{Results and Analysis}
Table \ref{table_Ablations} also shows the performance of the different modifications to \gls*{MeGA-MP}. 
As an extension of Table \ref{table_Ablations}, we provide corresponding figures in Appendix \ref{section_Experiment1_Zero-ShotTransferLearningOfAdvection-ReactionDynamicsOnRealWorldMetricGraphs_Appendix}.
Firstly, it is evident that \gls*{MeGA-MP}$^-$ performs significantly worse than the other two architectures in the purely advective setting.
This indicates that the full strength of our approach does not only come from the message aggregation scheme (Eq. \eqref{align_MessagePassing_Advection}),
induced from the mixing at nodes (Eq. \eqref{align_CouplingCondition_MixingAtNodes}, but also from the inductive bias in the advection message functions $\varphi$ (Eq. \eqref{align_MessagePassing_AdvectionIterations}/\eqref{align_MessageFunction_Advection_LinearInterpolation}) and thus, in $\varphi^+$ (Eq.  \eqref{align_MessageFunction_AdvectionReaction}),
derived from the advection edge dynamics (Eq. \eqref{align_EdgeDynamics_Advection}) 
with inflow continuity condition (Eq. \eqref{align_CouplingCondition_InflowContinuity}). Removing the physical priors from $\varphi$ and thus, $\varphi^+$ by replacing it with a fully learnable function $\varphi^-$ does not suffice to model the complex antisymmetric and long-range dependencies induced by the advection dynamics.
Secondly, the on-par-performance of \gls*{MeGA-MP} and \gls*{MeGA-MP}$^+$ in the advection setting indicates that the additional \gls*{MLP} is not required when advection is the only dynamic. 

In the case of advection-reaction, only \gls*{MeGA-MP}$^+$ is able to accurately capture the added complexity from the reaction term, emphasizing the utility of the simple modification in $\varphi^+$.
Due to the inability to learn dynamics, \gls*{MeGA-MP} fails to capture the reaction dynamic without the modification used in \gls*{MeGA-MP}$^+$. 
Furthermore, the fact that \gls*{MeGA-MP}$^-$ performs worse than \gls*{MeGA-MP}$^+$ indicates that the other \gls*{GNN}-based baselines from Section \ref{subsection_Experiment1.1_ComparisonToBaselines} do not perform worse because of a different message aggregation function.

In summary, our ablation studies show the utility of coding the advection characteristics directly into the model architecture. Without it, the model cannot reliably capture the challenging spatio-temporal dependencies. This  once again emphasizes the strength of physics-informed \gls*{ML} and carefully constructed inductive bias over purely data-driven approaches in advection-dominated settings.

\section{Experiment 2: Advection on a 1D Euclidean Domain}
\label{subsection_Experiments_AdvectionOnA1DEuclideanDomain}

In this section, we empirically validate \gls*{MeGA-MP} by using it as a numerical solver for advection in a domain where well-known versatile numerical solvers exist: On a 1-dimensional Euclidean domain. The advantage of this domain is that it also resembles a metric path graph. The goal of this experiment is to compare the characteristics of MeGA-MP to those of classical solvers. To this end, we compare our method against two classical baselines: A semi-Lagrangian solver, as well as a method-of-lines formulation integrating over time with a Runge-Kutta (RK4) solver using a WENO5 scheme \cite{WENO5_1996} to approximate the spatial derivatives. For details regarding the baselines, we refer to Appendix \ref{subsection_Experiments_AdvectionOnA1DEuclideanDomain_Appendix}.

\paragraph{Domain and Dataset}
The domain for this experiment is $\Omega = [0, l]$, describing a 1-dimensional space of length $l=100$.
We aim to predict the solution $c(t,z)$ of a synthetic 
\gls*{BVP} at equidistant grid points $t$ and $z$ of the time and space interval $[0,100]$ and $[0,100]$ for different temporal and spatial discretizations $dt$ and $dz$ from $\{\frac{100}{1000},\frac{100}{316},\frac{100}{100},\frac{100}{32},\frac{100}{10}\}$, respectively.
Conceptually, this models an injection at the boundary $z = 0$, advecting along $\Omega$ over time $t \in [0,100]$.
To enable a fair comparison, we choose the \gls*{BVP} such that an analytical solution exists.
The setup is described in detail in Appendix \ref{subsection_Experiments_AdvectionOnA1DEuclideanDomain_Appendix}.
In light of our discussion on \glspl*{IVP} and efficiency (Remark \ref{remark_InitialValueProblemsAndEfficiency}), we discuss a similar setup posed as an \gls*{IVP} with boundary values in Appendix \ref{subsection_Experiments_AdvectionOnA1DEuclideanDomain_Appendix}, too.

\begin{table}[b]
\centering
    \caption{\acrshort*{MAE} between analytical ground truth and different solver solutions over all temporal and spatial grid points. 
    The entry \enquote{--} indicates divergence of the method.
    Find the full version of this table and the runtime analysis in Appendix \ref{subsection_Experiments_AdvectionOnA1DEuclideanDomain_Appendix}.}
    \label{table_SyntheticBoundaryValueProblem}
    \begin{tabular}{cr||cccc} \toprule
        $dt$ & $dz$ & semi- Lagr. & RK4 & \gls{MeGA-MP}
        \\
\hline \multirow[t]{5}{*}{$0.1$} & \hfill$ 0.10$ & \textbf{0.0016}{\scriptsize$\pm 0.0050$} & -- & 0.0024{\scriptsize$\pm 0.0081$} \\ 
  & \hfill 1.00 & 0.0108{\scriptsize$\pm 0.0230$} & 0.0035{\scriptsize$\pm 0.0073$} & \textbf{0.0012}{\scriptsize$\pm 0.0068$} \\ 
  & \hfill10.00 & 0.0562{\scriptsize$\pm 0.0764$} & 0.0288{\scriptsize$\pm 0.0415$} & \textbf{0.0057}{\scriptsize$\pm 0.0182$} \\ 
\hline \multirow[t]{5}{*}{$1.0$} & \hfill$ 0.10$ & 0.0080{\scriptsize$\pm 0.0144$} & -- & 0.0157{\scriptsize$\pm 0.0298$} \\ 
  & \hfill 1.00 & 0.0097{\scriptsize$\pm 0.0137$} & \textbf{0.0088}{\scriptsize$\pm 0.0155$} & 0.0103{\scriptsize$\pm 0.0208$} \\ 
  & \hfill10.00 & 0.0545{\scriptsize$\pm 0.0720$} & 0.0291{\scriptsize$\pm 0.0395$} & \textbf{0.0115}{\scriptsize$\pm 0.0222$} \\ 
\hline \multirow[t]{5}{*}{$10.0$} & \hfill$ 0.10$ & \textbf{0.0763}{\scriptsize$\pm 0.0964$} & -- & 0.0939{\scriptsize$\pm 0.0981$} \\ 
  & \hfill 1.00 & \textbf{0.0728}{\scriptsize$\pm 0.0923$} & -- & 0.0915{\scriptsize$\pm 0.0952$} \\ 
  & \hfill10.00 & \textbf{0.0671}{\scriptsize$\pm 0.0842$} & 0.0728{\scriptsize$\pm 0.0936$} & 0.0761{\scriptsize$\pm 0.0763$} \\ 
        \hline
    \end{tabular}
\end{table}

\paragraph{Results and Analysis}
Table \ref{table_SyntheticBoundaryValueProblem} shows the \acrshort*{MAE} of the different solvers at different space-time resolutions. 
We find that \gls*{MeGA-MP} approximates the ground truth well and provides results competitive to other solvers. It can also be observed that \gls*{MeGA-MP} shows the best performance when the temporal resolution is high ($dt = 0.1$), while it is less affected by changes in spatial resolution. 
This is an expected result, as our method interpolates the signal at nodes temporally. Therefore, the interpolation error correlates with the temporal resolution, as derived in depth in Appendix \ref{subsection_ErrorBoundsOnMeGA-MP}.
This is in contrast to the baseline semi-Lagrangian solver, which interpolates spatially and thus benefits from higher spatial resolution. RK4 becomes unstable for certain combinations of spatial and temporal resolutions.

The \gls*{MeGA-MP} solutions for $dz = 1.0$ are visualized in Figure \ref{figure_SyntheticBoundaryValueProblem_MeGAMP_Dts}, where the numerical smoothing from temporal interpolation is visible, as the solution degrades with increasing $dt$. This is amplified because of the sharp peak and therefore high-frequency content of the signal. One can also observe that smoothing increases in $z$-direction, i.e. in the direction of the flow field, as this coincides with the direction of spatial integration and error accumulation (cf. Appendix \ref{subsection_ErrorBoundsOnMeGA-MP}).
More detailed visualizations of all combinations of space-time discretization, an extended version of Table \ref{table_SyntheticBoundaryValueProblem} and a runtime analysis can be found in Appendix \ref{subsection_Experiments_AdvectionOnA1DEuclideanDomain_Appendix}.

\begin{figure}[t]
    \centering
    \includegraphics[width=\linewidth]{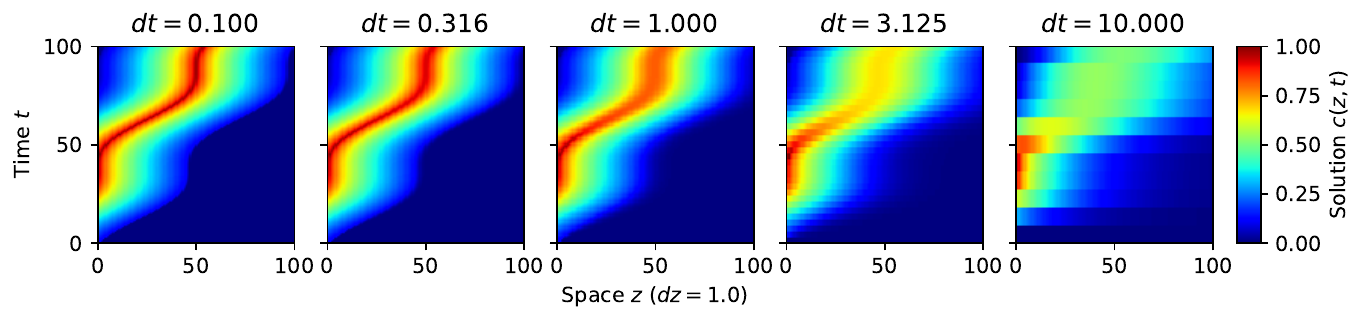}
    \caption{\gls*{MeGA-MP} solution of the \gls*{BVP} \eqref{align_SyntheticBoundaryValueProblem} for a fixed spatial discretization of $dz = 1.0$ for varying values of temporal discretizations $dt$. Each vertical pixel row corresponds to the time series at a node (i.e., a grid point), each horizontal row corresponds to one time step across all nodes. The sinusoidal flow field (Eq. \eqref{align_euclidean_flow_field}) is always positive, i.e. the injected signal at the left boundary ($z = 0$) is transported towards the right boundary ($z = 100$).}
\label{figure_SyntheticBoundaryValueProblem_MeGAMP_Dts}
\end{figure}


\section{Discussion \& Conclusion}
In this work, we have introduced \gls*{MeGA-MP}, a message-passing formulation solving linear advection dynamical systems on metric graphs. We have shown theoretically and empirically that it is a valid solver with a provable error bound and that it can be extended to solve more challenging, advection-dominated dynamics. In this work, we used an \gls*{MLP} as the extension. However, following the structural archetype of \glspl*{MPNN} in principle also allows more complex integrations with existing \glspl*{GNN}.

The comparison to other \gls*{PDE} solvers shows the saliency of our method, as our solution is comparable to that of established numerical solvers in scenarios where a comparison is possible.  
By experimenting on dynamics in \glspl*{WDS} we did not only show that our method can solve advection-dominated dynamics on metric graphs, but also that \gls*{MeGA-MP}($^+$) is directly applicable as a surrogate model for water quality, specifically the modeling of chlorine. It closely models the simulator EPANET-MSX, while being considerably faster and more flexible, as it integrates much better into the existing \gls*{ML} infrastructure.
Successful zero-shot transfer to unseen metric graphs further strengthens the positioning of \gls*{MeGA-MP}($^+$) as a viable surrogate model. 
We attribute this transferability to novel geometries mainly to the inductive bias in our message-passing scheme, allowing a separation of spatial integration of graph signals by \gls*{MeGA-MP} and data-driven adaptation to the additional non-spatial reaction dynamic using \gls*{MeGA-MP}$^+$. It is unclear to what extent the transferability persists when learnable spatial components are added. We will discuss this in the following sections, but argue that in the presence of advection, \gls*{MeGA-MP} always contributes a suitable inductive bias at low cost.

\paragraph{Limitations} 
\gls*{MeGA-MP}($^+$) is specifically designed for linear advection-dominated dynamics on metric graphs and is naturally limited to \glspl*{PDE} where linear advection is the dominant spatial dynamic. 
The approximation error is controlled by two factors: the temporal discretization and the number of message passing iterations required to propagate information across the graph. 
The latter depends on the edge lengths and flow velocities; short lengths and fast flows require more iterations, leading to greater accumulation of temporal interpolation errors. 
Both effects are consistent with our theoretical error bound (Theorem \ref{theorem_InterpolationError}) and empirically supported by Experiment 2 (Section \ref{subsection_Experiments_AdvectionOnA1DEuclideanDomain}). 
We therefore recommend choosing sufficiently fine temporal discretizations relative to the flow velocities in the system.
Beyond numerical accuracy, \gls*{MeGA-MP}($^+$) currently assumes exact knowledge of the flow field, which could be unavailable in some practical settings. To this end, \gls*{MeGA-MP}($^+$) does not provide explicit uncertainty quantification, but the low execution time allows feasible Monte-Carlo estimates of uncertainty. 

\paragraph{Future Work}
Several directions can build naturally on this work.
First, extending \gls*{MeGA-MP} to more complex spatial dynamics, such as semi- and non-linear advection or advection-diffusion, and to more complex temporal dynamics, such as multi-species reactions, via additional learnable operators is a promising avenue. 
The latter, in particular, would bring this method closer to a full \gls*{ML}-driven surrogate model for the simulator EPANET-MSX for \glspl*{WDS}.
Second, exploring inverse problems such as identifying flow fields or reaction coefficients from observational data would not only address the current limitation of requiring exact flow field knowledge, but would further emphasize the practical advantages of such surrogates over numerical simulators.
Thirdly, a systematic investigation of robustness to noisy or incomplete parameters such as flow fields is an important prerequisite for deployment of such systems in the real-world.
Finally, metric graphs governed by other \glspl*{PDE} beyond advection-dominated edge dynamics pose fundamentally different challenges. A systematic study of these different problem classes, including benchmark datasets and baseline evaluations, is an important direction for future work and the comparability of different methods across domains.

\section*{Impact Statement}
This work presents a physics-informed \gls*{ML} method for modeling spatio-temporal dynamics on advection-dominated metric graph domains. 
The data used in this work is self-generated and does not contain personal or sensitive information, ensuring no privacy concerns under GDPR or comparable regulations, or ethical concerns. 
However, real-world deployment of such methods may require sensor data or consumption records that can be linked to individual consumers. 
In such cases, data collection and processing must comply with applicable data protection regulations, and anonymization or aggregation techniques should be applied where possible.

The primary intended application of this work is modeling in scientific and engineering context, with the goal of improving the efficiency and accuracy of simulations. 
We also acknowledge the potential broader impacts of physics-informed \gls*{ML} including misuse and emphasize that responsible development and independent validation are essential. 
For example, in safety-critical settings, incorrect model predictions could have negative consequences for the provision of essential services.
Therefore, before deploying such systems in the real world, a thorough investigation of failure modes is essential, including error-proneness under distribution shift, sensitivity to misspecified physical parameters and the absence of uncertainty quantification in the current formulation.

\section*{LLM Statement}
This research was partially assisted by using \glspl*{LLM} such as ChatGPT, Gemini or Claude.
More precisely, \glspl*{LLM} were used for retrieval and discovery, such as finding related work. 
However, all related work cited in this paper was read and verified by the authors.
Moreover, \glspl*{LLM} were used to generate boilerplate code that has been checked for correctness by the authors.
Finally, \glspl*{LLM} were used to assist with editing and proofreading of selected parts of the manuscript with a focus on improving clarity and readability. 
The scientific content, the experimental design, the code implementing the novel methods presented in this work, the results, most parts of the text in this manuscript as well as the theory and equations were not generated by \glspl*{LLM}.

\bibliography{main}
\bibliographystyle{tmlr}


\newpage
\appendix
\onecolumn

\section*{Organization of the Supplementary Material}
\startcontents[sections]
\printcontents[sections]{l}{1}{\setcounter{tocdepth}{3}}

\newpage
\section{Details on the Background}
\label{section_Background_Appendix}

\paragraph{Advection on Metric Graphs}
Generally speaking, advection describes transport through the presence of a flow field $\nu_{uv}: \R \times \Omega_{uv} \rightarrow \R$.
For each time $t \in \R$, the flow $\nu_{uv}(t,\cdot): \Omega_{uv} \rightarrow \R$ defines a one-dimensional vector field along the edge space $\Omega_{uv} = (0,l_{uv})$. 
By definition of incompressibility, $\divtext(\nu_{uv}(t,\cdot)) = \frac{\partial \nu_{uv}(t,\cdot)}{\partial z} = 0$ holds,
or equivalently, 
the vector field $\nu_{uv}(t,\cdot)$ is constant on $\Omega_{uv} = (0,l_{uv})$, modeling the edge $e_{uv} \in E$.
Analogously, 
the vector field $\nu_{vu}(t,\cdot)$ is constant on $\Omega_{vu} = (0,l_{vu}) = (0,l_{uv})$, modeling the contrasting edge $e_{vu} \in E$.
Since both constant vector fields $\nu_{uv}(t,\cdot)$ and $\nu_{vu}(t,\cdot)$ model the same physical phenomenon, namely the constant, actually observable \textit{physical} flow field between the nodes $v \in V$ and $u \in \Nbh(v)$, but along contrasting directions (the beginning of the edge $e_{uv}$ is the end of the edge $e_{vu}$ and vice versa), the orientation of $\nu_{vu}(t,\cdot)$ must be the opposite of the one of $\nu_{uv}(t,\cdot)$. Since they both correspond to the same, constant physical flow field, $\nu_{vu}(t,\cdot) = -\nu_{uv}(t,\cdot)$ (Eq. \eqref{align_ConservationOfFlows}) must hold, and since they do not depend on space, we omit the spatial dependency.

Together with the convention of signs (cf. Section \ref{section_Background}), this allows to formally define the inflow neighbors from $v \in V$ at time $t \in \R$, i.e., neighbors where there is a flow from $u \in \Nbh(v)$ \textit{in}to $v \in V$ at time $t \in \R$, as
\begin{align}
\label{align_InflowNeighbors}
    \Nbh_{-}(v,t) 
    := 
    \{ u \in \Nbh(v) ~|~ \sgn(\nu_{vu}(t)) = -1 \} 
    = 
    \{ u \in \Nbh(v) ~|~ \sgn(\nu_{uv}(t)) = 1 \}.
\end{align}

\noindent 
While incompressibility of a single flow field $\nu_{uv}(t,\cdot)$ implies the independence of the spatial variable $z \in \Omega_{uv}$, the incompressibility of the overall flow field $\num = (\nu_e)_{e \in E}$ translates to the fact that the sum of flow rates $q_{uv}(t) = a_{uv} \nu_{uv}(t)$ into a node $v \in V$ is equal to the sum of the flow rates $q_{vu}(t) = a_{vu} \nu_{vu}(t)$ out of $v$:
\begin{align*}
    \sum_{u \in \mathcal{N}_-(v,t)} 
    q_{uv}(t)
    =
    \sum_{u \in \mathcal{N}_+(v,t)} 
    q_{vu}(t).
\end{align*}

\paragraph{The Challenges of Advection-dominated Dynamics on Metric Graphs}
Advection-dominated dynamical systems on metric graphs entail the challenge of complex, antisymmetric, and spatio-temporal dependency structures.
We want visualize this using the simple example of an advection metric path graph, that is, a sequence of nodes $(v_0,v_1,...,v_n)$ with edges $e_{j(j+1)}$ in-between two consecutive nodes $v_j$ and $v_{j+1}$.
We assume that the corresponding edge spaces $\Omega_{j(j+1)}$ are of equal size $l_{j(j+1)} = 1m$ and have a constant flow field of $\nu_{j(j+1)} = 2m/s$  for all $e_{j(j+1)} \in E$.
If we access information at each of the nodes $v_j$ at discrete time points $t_0,...,t_i,t_{i+1},...$ with temporal discretization $dt = t_{i+1} - t_i = 60s$, information travels $dt \cdot \nu_{j(j+1)} = 60s \cdot 2m/s = 120m$ per time step $dt$, which corresponds to $\frac{dt \cdot \nu_{j(j+1)}}{l_{j(j+1)}} = 120$ hops in the path graph. By this, the signal $c_{v}(t_i)$ at a node $v$ at time $t_i$ depends on the signal $c_{u}(t_j)$ at time $t_j$ and a node $u$ which lies 120 hops upstream of $v$. 
If such path does not exist, the path will be shorter and will originate at boundary nodes. 
The above example provides a sketch how the spatial dependency structure depends on the lengths, the flow field and the temporal discretization. 

\begin{figure}[!h]
\centering
    \includegraphics[width=0.7\linewidth]{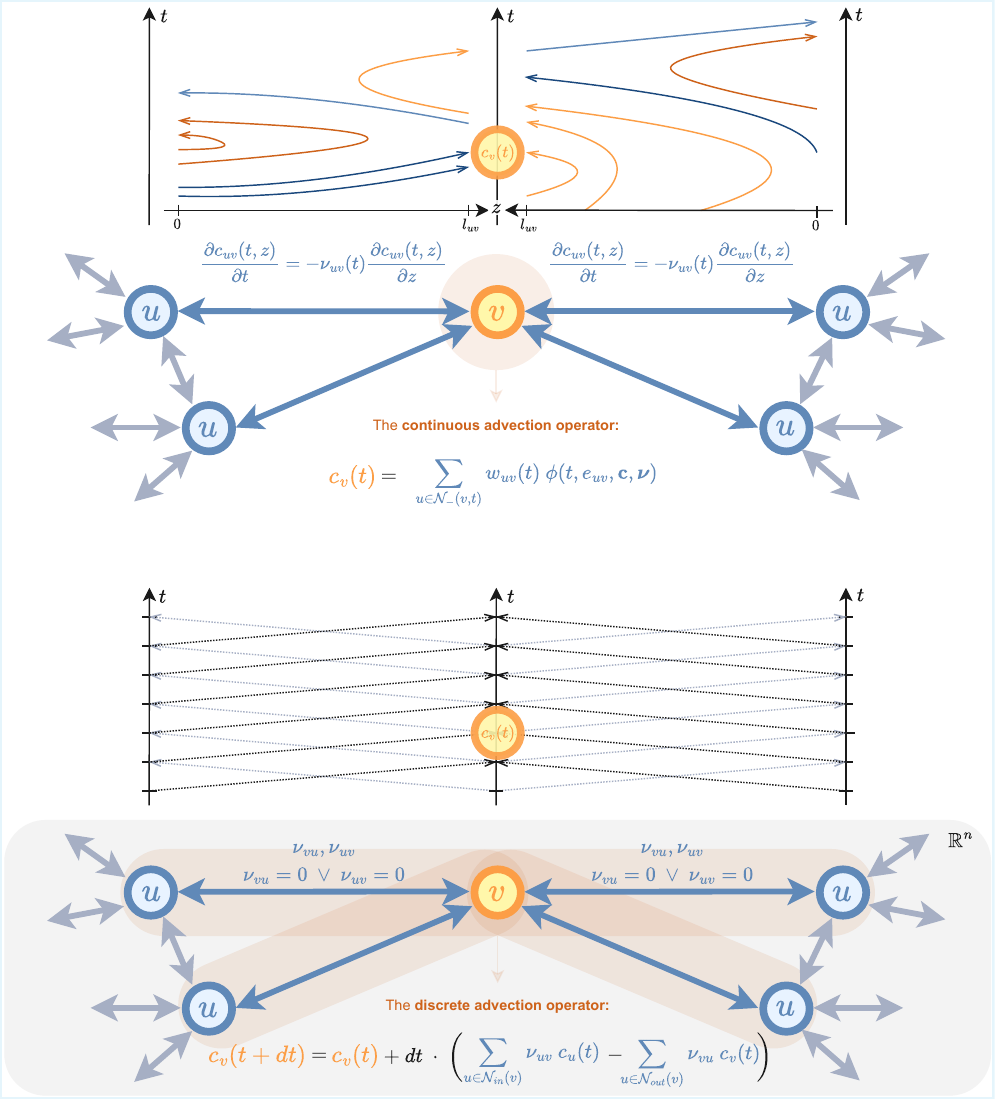}
    \caption{
    Advection on a metric graph (top) vs. on a non-metric graph (bottom):
    A metric graph is a system of one-dimensional subspaces $\Omega_{uv} = (0,l_{uv})$, corresponding to edges $e_{uv} \in E$, which are coupled at nodes $V$.
    Each edge $e_{uv}$ is associated with a one-dimensional \gls*{PDE} that defines the dynamics of the signal $c_{uv}(t,z)$ over time $t \in \R$ and along the edge space $z \in \Omega_{uv} = (0,l_{uv})$, parametrized by a known function $\nu_{uv}$.
    In case of advection, the known functions $\num = (\nu_{uv})_{e_{uv} \in E}$ correspond to the time-dependent flow field of the metric graph affecting the spatio-temporal evolution of the signals $c_{uv}$ (\textcolor{pass-through}{dark blue}, \textcolor{inverse-pass-through}{light blue}, \textcolor{self-loop}{dark orange} and \textcolor{inverse-self-loop}{light orange} lines).
    The coupling condition that defines the signal $c_v(t)$ at a node $v \in V$ and over time $t \in \R$ can be expressed as a weighted aggregation of features $\phi(t,e_{uv},\cm,\num) = c_{uv}(t,l_{uv})$ at the boundary $z = l_{uv}$ (\textcolor{inverse-self-loop}{orange-shaded ball} around $v$) of the edge spaces $\Omega_{uv}$ associated with incoming edges $e_{uv}$ (cf. Eq. \eqref{align_CouplingCondition_MixingAtNodes_MessagePassingFormulation}).
    Leveraging the edge dynamics, their value can be traced back to either features $c_u(t-\delta t_{uv})$ of neighboring nodes $u \in \Nbh_-(v,t)$ (\textcolor{pass-through}{dark blue lines}) or features $c_v(t-\delta t_{uv})$ of the node itself (\textcolor{inverse-self-loop}{light orange lines}) and at earlier, over the time and neighboring nodes varying times $t-\delta t_{uv}$ (cf. Eq. \eqref{align_MessageFunction_Advection}).
    \\
    In contrast, a (conventional, non-metric) graph can be considered as a discretization of the spatial Euclidean space $\Omega = \R^n$ (\textcolor{gray}{gray area}). 
    Here, edges $e_{uv} \in E$ preserve proximity in between nodes $V$, corresponding to coordinates in the Euclidean domain.
    The overall space $\Omega = \R^n$ is associated with a multi-dimensional \gls*{PDE} that defines the dynamics of the signal $c(t,v) := c_v(t)$ over time $t \in \R$ and space $v \in V \subset \Omega$, parametrized by a function $\num$. 
    In case of advection, the discretized advection-operator of the underlying multi-dimensional and continuous advection-\gls*{PDE} $\frac{\partial c_v(t)}{\partial t} = - \num(t) \cdot \nabla c_v(t,\zm)$ defines a node update $c_v(t+dt)$ that aggregates the net inflow from neighboring nodes $u \in \Nbhin(v)$ minus the outflow to neighboring nodes $u \in \Nbhout(v)$ and at a fixed time $t \in \R$ (black and \textcolor{gray}{gray} lines and \textcolor{inverse-self-loop}{orange-shaded area} aorund $v$). 
    In practice, the discretized version $\num = (\nu_{uv})_{e_{uv} \in E}$ of the continuous flow field is a learned parameter of a \gls*{GNN}.
    }
    \label{figure_MetricGraphVsNonMetricGraph}
\end{figure}

For more complex structured graphs and more complex configurations of \gls*{PDE}-parameters, the spatial dependency structure will be a tree that extends from the target (root) node $v$ at time $t$ to upstream nodes (leafs) as the node $u$ in the example above (also see Appendix \ref{subsection_ErrorBoundsOnMeGA-MP}). 
The dependency tree will extend further in directions with high flow rate and less in direction where the flow rate is lower.
This results in two major challenges for standard \glspl*{MPNN}: Firstly, the extent of the receptive field can vary drastically across the space of the metric graph and over time due to varying \gls*{PDE}-parameters, making it impossible to find a single suitable number of message passing layers. Secondly, the receptive field can be highly asymmetric. 
These two properties require highly selective and anisotropic message passing that depends on the structure of the metric graph, flow field and the temporal discretization. 

While there already exist work that discusses graphs and \glspl*{GNN} specifically taylored to advection problems \cite{Chapman2015Graphs_AdvectionOnGraphs,Eliasof2024PI-GNNs_Methods_ADR-GNN}, the underlying graph structure corresponds to a discretization of the Euclidean space; a paradigm different from the one on metric graphs (cf. Section \ref{section_Background}). 
We highlight this difference in Figure \ref{figure_MetricGraphVsNonMetricGraph}.


\newpage
\section{Details on the Method}
\label{section_DetailsOnTheMethod}

\paragraph{Message Passing}
has become a common principle in \glspl*{GNN} \cite{Gilmer2017GNNs_Methods_MPNN}. It describes how local information at the nodes of a graph is passed to neighboring nodes. 
Message passing as defined in \citet{Gilmer2017GNNs_Methods_MPNN} consists of three consecutive steps: Message generation, message aggregation, and node update. 
Message generation describes how a feature vector at one node is transformed as it is sent to a neighbor. 
Message aggregation and node update describe how multiple messages arriving at a node are accumulated into a single feature vector and update the features at the target node. 
A general formulation of this method is given by
\begin{align}
\label{align_MessagePassing}
    \xm_v(t+1) = \theta\left(\sum_{u \in \Nbh(v)} \phi \big(\xm_u(t), \xm_v(t), \qm_{uv}(t)\big) ,~ \xm_v(t)\right),
\end{align}

\noindent where $\phi$ is the message generation function, $\sum$ is a permutation-invariant aggregation function (e.g. sum or mean), $\theta$ is a function that updates the features $\xm_v$ of node $v \in V$, and $\qm_{uv}$ are optional edge features of the edge $e_{uv} \in E$ from node $u \in \Nbh(v)$ to node $v \in V$. Moreover, $t$ denotes the message-passing iteration step.

\subsection{Final Algorithm: MeGA-MP}
\label{subsection_FinalAlgorithm:MeGA-MP_Appendix}

\paragraph{Interpolation}\label{par_MoCInterpolationOperator}
Different interpolation schemes can be applied for time warping, depending on the use case. 
A version of linear interpolation of a value $c_u(t - \delta t_{uv})$ can be modeled as 
\begin{align}
\label{align_LinearInterpolation}
    c_u(t - \delta t_{uv}) 
    \approx~
    a(t_j)
    \cdot 
    c_u(t_j)
    +
    a(t_{j+1})
    \cdot 
    c_u(t_{j+1})
\end{align}

\noindent 
with $a(t_{j+1}) :=  \frac{t - \delta t_{uv} - t_j}{t_{j+1} - t_j}$ and $a(t_j) := 1 -  \frac{t - \delta t_{uv} - t_j}{t_{j+1} - t_j}$ 
for the $j \in I$ such that $t - \delta t_{uv} \in [t_j,t_{j+1})$ holds.

\paragraph{Model Iterations}
Based on Equation \eqref{align_LinearInterpolation}, we formally define $\varphi(t,e_{uv},\cmhat^{(k)},\num)$ from Equation \eqref{align_MessagePassing_AdvectionIterations} as

\begin{align}
\label{align_MessageFunction_Advection_LinearInterpolation}
\begin{split}
    \varphi(t,e_{uv},\cmhat^{(k)},\num) 
    :=
    \begin{cases}
        a(t_j) \cdot \chat_u^{(k)}(t_j)
        +
        a(t_{j+1}) \cdot \chat_u^{(k)}(t_{j+1})
        & \text{ if } 
        \zcurl(\delta t_{uv}) \in \{-l_{uv},l_{uv}\}
        \\
        a(t_j) \cdot \chat_v^{(k)}(t_j)
        +
        a(t_{j+1}) \cdot \chat_v^{(k)}(t_{j+1})
        & \text{ if } 
        \zcurl(\delta t_{uv}) = 0
    \end{cases}
\end{split}
\end{align}

\noindent 
for the $j \in I$ for which $t - \delta t_{uv} \in [t_j,t_{j+1})$ holds.
In practice, we precompute the transport times $\delta t_{uv}$ and implement $\varphi$ as an interpolation operator $\mathcal{I}[c_{w}^{(k)}](t - \delta t_{uv})$ at the correct node $w \in \{u,v\}$ that does not depend on the flow field $\num$ anymore (cf. Algorithm \ref{algorithm:MeGA-MP}).

Further details on the implementation can be found in algorithms \ref{algorithm_FlowToTransportTime} and \ref{algorithm:MeGA-MP}.

\begin{algorithm}
\caption{Preprocessing: Converting flows $\nu_{uv}(t)$ to transport times $\delta t_{uv}(t)$ (Definition \ref{definition_TransportTimes}). This algorithm is implemented in a parallelized/vectorized form for efficiency.}
\label{algorithm_FlowToTransportTime}
\begin{algorithmic}[1]  
\State \textbf{Inputs:} Flow velocities $\nu_{uv} \in \mathbb{R}^n$ for edge $e_{uv} \in E$ for $n$ time steps ($n = \n{hi} + \n{fu}$); Edge length $l_{uv} \in \mathbb{R}$ for edge $e_{uv} \in E$.
\State $C \gets \text{cumsum}(\nu_{uv}) \in \mathbb{R}^{n+1}$ \Comment{cumulative distance traveled over time, with $C_0 = 0$}
\State $\Delta T \in \mathbb{R}^{n}$ \text{ init. empty} \Comment{array to store transport times}
\State $M^{sl} \in \{0,1\}^{n}$ \text{ init. zero} \Comment{array to store if self-loop occurs at time $t$} \\
\Comment{Compute (inverse) pass-through transport times}
    \For{$t = 0 \textbf{ to } n-1$} \Comment{Trace characteristic curve starting at time $t$}
        \For{$x = t \textbf{ down to } 0$} \Comment{Trace backward along time steps}
            \State $d_{x,t} \gets C_t - C_x$ \Comment{distance traveled between $x$ and $t$}
            \If{$d_{x,t} \ge l_{uv}$} \Comment{true for pass-throughs}
                \State interpolate $\hat{x}$ between $C_x$ and $C_{x+1}$ if needed
                \State $\Delta T_t \gets t - \hat{x}$ \textbf{; break}
            \ElsIf{$d_{x,t} \le -l_{uv}$} \Comment{true for inverse pass-throughs}
                \State interpolate $\hat{x}$ between $C_x$ and $C_{x+1}$ if needed
                \State $\Delta T_t \gets \hat{x} - t$ \textbf{; break}
            \EndIf
        \EndFor
    \EndFor
\Comment{Compute self-loop transport times}
    \For{$t = 1 \textbf{ to } n-1$} \Comment{Trace characteristic curve starting at time $t$}
        \For{$x = t \textbf{ down to } 1$}
            \State $d_1 \gets C_{x-1} - C_t$; $d_2 \gets C_x - C_t$ \Comment{distance differences between consecutive steps}
            \If{$\text{sign}(d_1) \neq \text{sign}(d_2)$} \Comment{zero-crossing indicates self-loop}
                \State interpolate zero crossing: $\hat{x} \gets x - d_2 / (d_2 - d_1)$
                \State $\Delta T_t \gets \text{sign}(d_2) \cdot (t - \hat{x})$
                \State $M^{sl}_t \gets 1$  \textbf{; break}
            \EndIf
        \EndFor
    \EndFor
\State \textbf{Output:} $\Delta T \in \mathbb{R}^{n}$; $M^{sl} \in \{0,1\}^{n}$ \Comment{$\Delta T_t > 0 \Leftrightarrow$ transport in edge dir., $\Delta T_t < 0 \Leftrightarrow$ transport opposing edge dir.}
\end{algorithmic}
\end{algorithm}

\begin{algorithm}
\caption{MeGA-MP: Metric Graph Advection Message Passing}
\label{algorithm:MeGA-MP}
\begin{algorithmic}[1]  
    \State \textbf{Inputs:} Graph structure $\mathcal{G} = (V, E)$; Initial node states / known history $x \in \mathbb{R}^{|V| \times k_{hi}}$; Transport times $\delta t_{uv}(t)$ for each edge $e_{uv} \in E$ and for each time step $t_i \in T$, where $|T| = n = \n{hi} + \n{fu}$.
    \State $x^{(0)} \gets \text{pad}(x, \n{fu})$ \Comment{pad initial condition by prediction length $\n{fu}$}
    \State Apply Dirichlet boundary conditions to $x^{(0)}$ at boundary nodes
    \State $x_{\text{out}} \gets x^{(0)}$ \Comment{initialize output accumulator}
    \For{$i = 1$ \textbf{to} $\kappa$} \Comment{Iterate until convergence, c.f. Theorem \ref{theorem_ConvergenceOfModelIterations}}
        \For{each node $v \in V$ and time $t$} \Comment{Message-Passing}
            \State $\begin{aligned}\textstyle
                x^{(i)}_{v, t} \gets 
                \sum_{u \in \Nbh(v)} w_{uv}(t) ( 
                & \mathcal{I}[x^{(i-1)}_u](t - \delta t_{uv}(t)) \mathbb{1}_{M^{sl}_{uv}(t) = 0} \\
                + \quad
                & \mathcal{I}[x^{(i-1)}_v](t - \delta t_{uv}(t)) \mathbb{1}_{M^{sl}_{uv}(t) = 1})
            \end{aligned}$
            \Comment{cf. Eq. \ref{align_CouplingCondition_MixingAtNodes_MessagePassingFormulation} and Sec. \ref{par_Interpolated_MsgFunction} and \ref{par_MoCInterpolationOperator}}
            
        \EndFor
        \State $x^{(i)}_{v, t} \gets 0 \quad \forall v \in V_b, \, \forall t \in T$ \Comment{Set to zero at boundary nodes.}
        \State $x_{\text{out}} \gets x_{\text{out}} + x^{(i)}$
    \EndFor

    \If{boundary values exist} 
        \State Set $x_{\text{out}}$ at boundary nodes
    \EndIf

    \State \textbf{Output:} $x_{\text{out}}$ \Comment{final node states after advection message passing}
\end{algorithmic}
\end{algorithm}

\newpage
\section{Details on Theoretical Guarantees}
\label{section_DetailsOnTheoreticalGuarantees_Appendix}

\subsection{Physical Derivation of MeGA-MP}
\label{subsection_PhysicalDerivationOfMeGA-MP_Appendix}

While in Section \ref{subsection_PhysicalDerivationOfMeGa-MP}, we provide an intuition on how \gls*{MeGA-MP} incorporates the physical priors from Section \ref{section_Background},
in this section, we present a rigorous derivation of our metric graph advection message passing algorithm based on these priors. 
More precisely, we prove that 
Equation \eqref{align_MessagePassing_Advection} 
is equal to 
the mixing at nodes (Eq. \eqref{align_CouplingCondition_MixingAtNodes}) 
given 
the advection dynamics (Eq. \eqref{align_EdgeDynamics_Advection}) 
with inflow continuity condition (Eq. \eqref{align_CouplingCondition_InflowContinuity}) 
and 
suitable smoothness assumptions (Assumption \eqref{assumption_SmoothnessAssumptions}).
We repeat these priors for convenience while giving an overview of the rest of this section.

In Section \ref{subsubsection_MessageGeneration_AdvectiveTransportAlongEdges},
we derive the message-generation function $\phi$ as defined in Equation \eqref{align_MessageFunction_Advection}
based on the \textit{advection edge dynamics}.
For each $e_{uv} \in E$, 
this refers to 
the change of the concentration $c_{uv}: \R \times \Omega_{uv} \rightarrow \R_{\geq 0}$
over time $\R$ and along the edge space $\Omega_{uv} = (0,l_{uv})$,
determined by the incompressible \textit{flow field} $\nu_{uv}: \R \rightarrow \R$ over time $\R$. 
Formally, this is defined by the \gls*{PDE}
\begin{align*}
    \frac{
    \partial c_{uv}(t,z)
    }{
    \partial t
    }
    =
    - \nu_{uv}(t) ~
    \frac{
    \partial c_{uv}(t,z)
    }{
    \partial z
    }
    \quad
    \text{ (Equation \eqref{align_EdgeDynamics_Advection})}.
\end{align*}

\noindent
In Section \ref{subsection_MessageAggregationAndUpdate_MixingAtNodes}, 
we derive the message aggregation and node update scheme $c_v(t)$ as defined in Equation \eqref{align_MessagePassing_Advection} 
based on the \textit{coupling conditions} given by the \textit{mixing at nodes} and the \textit{inflow continuity condition}.
For each $v \in V$,
the former refers to 
the concentration $c_v: \R \rightarrow \R_{\geq 0}$ 
over time $\R$,
determined by a weighted aggregation of the concentrations $c_{uv}(\cdot,l_{uv})$ carried by the \textit{flow rates} $q_{uv} = \alpha_{uv} \nu_{uv}: \R \rightarrow \R$ from (temporal) inflow neighbors $u \in \Nbh_-(v,\cdot)$ over time $\R$.
Formally, this is defined as
\begin{align*}
    c_v(t) 
    =
    \frac{
    \sum_{u \in \Nbh_-(v,t)}
    q_{uv}(t) \cdot c_{uv}(t,l_{uv})
    }{
    \sum_{u \in \Nbh_-(v,t)} 
    q_{uv}(t)
    }
    \quad
    \text{ (Equation \eqref{align_CouplingCondition_MixingAtNodes})}.
\end{align*}

\noindent
For each edge $e_{uv} \in E$, 
the latter refers to
the concentration $c_{uv}: \R \times \Omega_{uv} \rightarrow \R_{\geq 0}$
over time $\R$ and at the temporal inflow boundary $z = 0$ of the edge space $\Omega_{uv} = (0,l_{uv})$,
determined by the concentration $c_u: \R \rightarrow \R$ of the temporal inflow neighbor $u \in \Nbh_-(v,\cdot)$ over time $\R$. 
Formally, this is defined by 
\begin{align*}
    c_{uv}(t,0) 
    =&~ 
    c_u(t)
    \text{ for all }
    u \in \Nbh_-(v,t)
    \quad
    \text{ (Equation \eqref{align_CouplingCondition_InflowContinuity})}.
\end{align*}

\noindent 
The mixing at nodes and the inflow continuity condition formalize the following intuition:
On the one hand, 
if there is a (physical) flow from $u \in \Nbh_-(v,t)$ to $v \in V$ entering the edge $e_{uv} \in E$ at time $t \in \R$, 
the concentration $c_{uv}(t,0)$ at the beginning $z = 0$ of that edge $e_{uv}$ 
is equal to the concentration $c_u(t)$ at the node $u$ out of which the flow flows \textit{into that edge} (\textit{local inflow boundary condition}). 
On the other hand, 
if there is a (physical) flow from $u \in \Nbh_-(v,t)$ to $v \in V$  leaving the edge $e_{uv} \in E$ at time $t \in \R$, 
the concentration $c_{uv}(t,l_{uv})$ at the end $z = l_{uv}$ of that edge $e_{uv}$ 
contributes to the weighted aggregation at the node $v$ into which the flow flows \textit{out of that edge} (\textit{local outflow boundary condition}).

While the edge dynamics (Eq. \eqref{align_EdgeDynamics}) are defined on the open edge spaces $\Omega_{uv} = (0,l_{uv})$, 
the coupling conditions (Eq. \eqref{align_CouplingCondition_MixingAtNodes} and Eq. \eqref{align_CouplingCondition_InflowContinuity}) rely on the values $c_{uv}(\cdot,0)$ and $c_{uv}(\cdot,l_{uv})$ on the boundary $\partial \Omega_{uv} = \{0,l_{uv}\}$ of $\Omega_{uv}$. 
Therefore, suitable smoothness assumptions on $c_{uv}$ and $\nu_{uv}$ are required to be able to transfer information in-between the edge spaces and its coupling boundaries.\footnote{
    Note that strictly speaking, we also need some part of the assumptions simply for Equation \eqref{align_EdgeDynamics_Advection} or more generally, Equation \eqref{align_EdgeDynamics}, to be well-defined.
} 

\begin{assumptionNoNb}[Assumption \ref{assumption_SmoothnessAssumptions}]
    For each $e_{uv} \in E$, let 
    \begin{itemize}
        \item $c_{uv}: \R \times (0,l_{uv}) \rightarrow \R_{\geq 0}$ be differentiable,

        \item $c_{uv}: \R \times [0,l_{uv}] \rightarrow \R_{\geq 0}$ be continuous and

        \item $\nu_{uv}: \R \rightarrow \R$ be continuous.
    \end{itemize}
\end{assumptionNoNb}

\begin{remark}[Symmetry of signals]
\label{remark_SymmetryOfSignals}   
    Note that next to the physical priors introduced above, simply because of the fact that we model a metric graph as symmetric directed graph $\G$ (cf. Section \ref{section_Background}), 
    \begin{align}
    \label{align_SymmetryOfSignals}
        c_{vu}(t,z) = c_{uv}(t,l_{vu}-z)
    \end{align}

    \noindent 
    holds for all $v \in V$, $u \in \Nbh(v)$, $t \in \R$ and $z \in \overline{\Omega}_{vu} = [0,l_{vu}]$. 
    That is, given that we consider a symmetric directed graph $\G = (V,E)$ and the contrasting edges $e_{vu}, e_{uv} \in E$ of length $l_{vu} = l_{uv}$, we expect that the signal $c_{vu}(t,z)$ along the edge $e_{vu}$ at time $t$ and position $z \in [0,l_{vu}]$ is equal to the signal along the edge $e_{uv}$ at time $t$ and position $l_{vu} - z = l_{uv} - z \in [0,l_{uv}] = \overline{\Omega}_{uv}$.
\end{remark}

\subsubsection{Message Generation: Advective Transport Along Edges}
\label{subsubsection_MessageGeneration_AdvectiveTransportAlongEdges}

Throughout the rest of this section, we assume that for each $e_{uv} \in E$ (that is, for each $v \in V$ and $u \in \Nbh(v)$), Equation \eqref{align_EdgeDynamics_Advection} with coupling conditions as defined by Equation \eqref{align_CouplingCondition_MixingAtNodes} and \eqref{align_CouplingCondition_InflowContinuity} and smothness assumptions as defined by Assumption \ref{assumption_SmoothnessAssumptions} holds.
According to Equation \eqref{align_CouplingCondition_MixingAtNodes_MessagePassingFormulation}, the message-generation function $\phi(t,e_{uv},\cm,\num) = c_{uv}(t,l_{uv})$ relies on 
the signal $c_{uv}(\cdot,l_{uv})$ over time $\R$ and at the temporal \textit{outflow} boundary $z = l_{uv}$ of the edge space $\Omega_{uv} = (0,l_{uv})$.
Since we are only aware of 
the signal $c_{uv}(\cdot,0)$ over time $\R$ and at the temporal \textit{inflow} boundary $z = 0$ of each edge space $\Omega_{uv} = (0,l_{uv})$ 
through the inflow boundary condition (Eq. \eqref{align_CouplingCondition_InflowContinuity}),
computing $\phi(t,e_{uv},\cm,\num)$ would normally require to explicitly solve the edge dynamics along the whole edge space $\overline{\Omega}_{uv} = [0,l_{uv}]$.
This is typically done by computationally expensive numerical solvers. 
However, our message-generation function $\phi$ builds upon the \gls*{MoC} to avoid the explicit computation of concentrations $c_{uv}(\cdot, z)$ in between the boundaries, i.e., for all $z \in (0, l_{uv})$.
More specifically, we can derive \textit{characteristic curves} along which the function $c_{uv}$ remains constant over time $\R$ and space $\Omega_{uv}$.

\begin{lemmaNoNb}[Constant concentration along characteristic curves (Lemma \ref{lemma_ConstantConcentrationAlongCharacteristicCurves})]
    Let $e_{uv} \in E$ hold.
    If the function $c_{uv}$ obeys Equation \eqref{align_EdgeDynamics_Advection},
    for each $t_0 \in \R$ and $z_0 \in \R$, the curve
    %
    %
    \begin{align*}
        \gamma_{t_0,z_0}: ~ T_{t_0,z_0} \longrightarrow \R_{\geq 0}, ~~
        t \longmapsto 
        c_{uv} \Big( t, z_0 + \textstyle \int_{t_0}^{t} \nu_{uv}(s) ~ds \Big)
    \end{align*}
    
    \noindent 
    is constant on
    $
    T_{t_0,z_0}
    =
    \Big \{
    t \in \R 
    ~|~
    0 
    <
    z_0 + \int_{t_0}^{t^{'}} \nu_{uv}(s) ~ds
    <
    l_{uv}
    ~\forall t^{'} \in 
    \big[ \min\{t_0,t\} , \max\{t_0,t\} \big]
    \Big \}.
    $
    %
\end{lemmaNoNb}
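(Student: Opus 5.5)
We prove the lemma with the standard method-of-characteristics argument: differentiate along the curve and apply the chain rule. Fix $t_0, z_0 \in \R$ and write $Z(t) := z_0 + \int_{t_0}^{t} \nu_{uv}(s)\,ds$. By Assumption \ref{assumption_SmoothnessAssumptions}, $\nu_{uv}$ is continuous, so the fundamental theorem of calculus makes $Z$ continuously differentiable on $\R$ with $Z'(t) = \nu_{uv}(t)$.

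\textbf{Step 1: $T_{t_0,z_0}$ is an interval.} If $T_{t_0,z_0}$ is nonempty, then $t_0 \in T_{t_0,z_0}$ (since $Z(t_0) = z_0 \in (0,l_{uv})$). For $t \in T_{t_0,z_0}$, the whole segment between $t_0$ and $t$ keeps $Z$ inside $(0,l_{uv})$, so every point of that segment also lies in $T_{t_0,z_0}$. Hence $T_{t_0,z_0}$ is an interval containing $t_0$. It is open, because $Z$ is continuous and $(0,l_{uv})$ is open. If $T_{t_0,z_0}$ is empty, the statement is vacuous.

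\textbf{Step 2: the derivative vanishes.} For $t \in T_{t_0,z_0}$ we have $(t, Z(t)) \in \R \times (0,l_{uv})$. On this set $c_{uv}$ is differentiable by Assumption \ref{assumption_SmoothnessAssumptions}, so the chain rule for a differentiable function composed with the $C^1$ map $t \mapsto (t, Z(t))$ gives
\begin{align*}
\gamma_{t_0,z_0}'(t) = \frac{\partial c_{uv}}{\partial t}(t,Z(t)) + \nu_{uv}(t)\,\frac{\partial c_{uv}}{\partial z}(t,Z(t)).
\end{align*}
This vanishes by Equation \eqref{align_EdgeDynamics_Advection}.

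\textbf{Step 3: conclude.} A function with zero derivative on an interval is constant, by the mean value theorem. Hence $\gamma_{t_0,z_0}$ is constant on $T_{t_0,z_0}$.

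The only delicate point is the chain rule. It needs full (Fréchet) differentiability of $c_{uv}$ in $(t,z)$, not merely the existence of partial derivatives. We read the word ``differentiable'' in Assumption \ref{assumption_SmoothnessAssumptions} in this sense. We also use the openness of $(0,l_{uv})$ from Step 1, so that the curve never touches the boundary where the PDE is not assumed to hold.
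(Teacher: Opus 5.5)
Your proof is correct and follows essentially the same route as the paper: the paper also invokes the method of characteristics, using the ODEs $\frac{dc_{uv}}{dt}=0$ and $\frac{dz}{dt}=\nu_{uv}$ along $z(t)=z_0+\int_{t_0}^{t}\nu_{uv}(s)\,ds$, and then restricts to times whose whole segment stays inside $(0,l_{uv})$. Your version is more explicit than the paper's: you write out the chain-rule computation the paper leaves implicit, and you check that $T_{t_0,z_0}$ is an interval containing $t_0$, which is exactly what is needed to pass from a vanishing derivative to constancy.
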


Intuitively, in Lemma \ref{lemma_ConstantConcentrationAlongCharacteristicCurves}, we fix an arbitrary time $t_0 \in \R$ and a position $z_0 \in \R$ and follow any particle that is at this position $z_0$ at this time $t_0$ along the path $z(t) := z_0 + \int_{t_0}^{t} \nu_{uv}(s) ~ds$ it travels during the time $t - t_0$ (or $t_0 - t$) for some time $t \in T_{t_0,z_0}$ with possibly varying flow velocities $\nu_{uv}(s)$ for all $s \in [\min\{t_0,t\} , \max\{t_0,t\}]$. 
If we assign a concentration $c_{uv}(t_0,z_0)$ to this particle, the concentration should stay constant along the path this particle travels.
The choice of $t \in T_{t_0,z_0}$ guarantees that the before-mentioned path remains within the edge $e_{uv}$, or in other words, in the open sub-domain $\R \times \Omega_{uv}$ of the function $c_{uv}$ on which Equation \eqref{align_EdgeDynamics_Advection} is defined on, as we will investigate in detail in the next Remark \ref{remark_OnT_{t_0,z_0}}.

\begin{remark}[On $T_{t_0,z_0}$]
\label{remark_OnT_{t_0,z_0}}
    Since for $e_{uv} \in E$, the velocity $\nu_{uv}:\R \rightarrow \R$ can change its sign,
    for each $t_0 \in \R$ and $z_0 \in \R$, the function
    \begin{align*}
        z: \R \longrightarrow \R, ~
        t \longmapsto z(t) := z_0 + \int_{t_0}^{t} \nu_{uv}(s) ~ds
    \end{align*}
    
    \noindent is differentiable (and thus, continuous), but \textit{not} monotone, meaning that even if $z(t_0) = z_0 \in \Omega_{uv}$ and $z(t) \in \Omega_{uv}$ holds for a $t \in \R$, this does \textit{not} automatically mean that $z(t^{'}) \in \Omega_{uv}$ holds for all $t^{'} \in [\min\{t_0,t\} , \max\{t_0,t\}]$, too.
    However, since Equation \eqref{align_EdgeDynamics_Advection} only holds on the open sub-domain $\R \times \Omega_{uv}$ of the function $c_{uv}: \R \times [0,l_{uv}] \rightarrow \R_{\geq 0}$, any path from $z(t_0) = z_0$ to $z(t) = z_0 + \int_{t_0}^{t} \nu_{uv}(s) ~ds$ parametrized by time needs to be fully included in the space interval $\Omega_{uv} = (0,l_{uv})$.
    In other words, $t \in T_{t_0,z_0}$ needs to hold.
\end{remark}

\noindent Since the set $T_{t_0,z_0}$ can also be empty, we are interested in when this is not the case and what properties it satisfies in this case. The following Lemma \ref{lemma_OnT_{t_0,z_0}} answers these questions.

\begin{lemma}[On $T_{t_0,z_0}$]
\label{lemma_OnT_{t_0,z_0}}
    Let $e_{uv} \in E$, $t_0 \in \R$, $z_0 \in \R$ hold and let $T_{t_0,z_0}$ be defined as in Lemma \ref{lemma_ConstantConcentrationAlongCharacteristicCurves}.

    \begin{enumerate}
        \item If $z_0 \not \in \Omega_{uv}$ holds, then $T_{t_0,z_0} = \emptyset$ holds.

        \item If $z_0 \in \Omega_{uv}$ holds, then $T_{t_0,z_0} \neq \emptyset$ holds. Even more, $t_0 \in T_{t_0,z_0}$ holds.


        
        \item If $z_0 \in \Omega_{uv}$ holds, 
        there exist $\tn{l} \in [-\infty,t_0)$ and $\tn{r} \in (t_0, \infty]$ 
        such that 

        \begin{enumerate}
            \item if $\tn{l} \neq - \infty$ holds, $z_0 + \int_{t_0}^{\tn{l}} \nu_{uv}(s) ~ds \in \{0,l_{uv}\}$ holds,
            
            \item if $\tn{r} \neq + \infty$ holds, $z_0 + \int_{t_0}^{\tn{r}} \nu_{uv}(s) ~ds \in \{0,l_{uv}\}$ holds and

            \item $T_{t_0,z_0} = (\tn{l},\tn{r}) = (t_0 - (t_0 - \tn{l}),t_0 + (\tn{r} - t_0))$ holds.
        \end{enumerate}
        
        \noindent Equivalently, there exist $\delta \tn{l}, \delta \tn{r} \in (0,\infty]$
        such that $T_{t_0,z_0} = (t_0 - \delta \tn{l},t_0 + \delta \tn{r})$ holds.
    \end{enumerate}
\end{lemma}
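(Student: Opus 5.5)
The statement concerns the continuous function $z(t) = z_0 + \int_{t_0}^{t}\nu_{uv}(s)\,ds$, which is continuous (indeed $C^1$) by continuity of $\nu_{uv}$ (Assumption \ref{assumption_SmoothnessAssumptions}). By definition, $t \in T_{t_0,z_0}$ if and only if $z(t') \in \Omega_{uv} = (0,l_{uv})$ for every $t'$ in the closed interval between $t_0$ and $t$. The plan is therefore to describe this set as a connected component of a preimage of an open set.

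For item 1, take any $t \in T_{t_0,z_0}$. The interval $[\min\{t_0,t\},\max\{t_0,t\}]$ always contains $t' = t_0$, which forces $z(t_0) = z_0 \in \Omega_{uv}$. Contrapositively, if $z_0 \notin \Omega_{uv}$, the set is empty.

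For item 2, if $z_0 \in \Omega_{uv}$, choose $t = t_0$. The interval collapses to $\{t_0\}$, and the condition reduces to $z_0 \in (0,l_{uv})$, so $t_0 \in T_{t_0,z_0}$.

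For item 3, first I will show that $T_{t_0,z_0}$ is an interval containing $t_0$. If $t \in T_{t_0,z_0}$ and $t''$ lies between $t_0$ and $t$, then the interval from $t_0$ to $t''$ is contained in the one from $t_0$ to $t$, so $t'' \in T_{t_0,z_0}$. Next I will show that $T_{t_0,z_0}$ is open. Since $z$ is continuous, $U := z^{-1}(\Omega_{uv})$ is open. I will identify $T_{t_0,z_0}$ with the connected component of $U$ containing $t_0$: every $t \in T_{t_0,z_0}$ is joined to $t_0$ by an interval inside $U$, and conversely any $t$ in that component has the whole interval to $t_0$ inside $U$. Components of open subsets of $\R$ are open intervals, so $T_{t_0,z_0} = (\tn{l},\tn{r})$ with $\tn{l} := \inf T_{t_0,z_0} \in [-\infty,t_0)$ and $\tn{r} := \sup T_{t_0,z_0} \in (t_0,\infty]$. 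The strict inequalities follow from openness together with $t_0 \in T_{t_0,z_0}$.

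The only step needing care is the boundary claims (a) and (b); consider $\tn{r}$ finite, the case of $\tn{l}$ being symmetric. On $[t_0,\tn{r})$ we have $z \in (0,l_{uv})$, so by continuity $z(\tn{r}) \in [0,l_{uv}]$. If $z(\tn{r})$ were in the open interval $(0,l_{uv})$, then by continuity $z$ would stay in $\Omega_{uv}$ on $[t_0,\tn{r}+\varepsilon]$ for some $\varepsilon > 0$. That would put $\tn{r}+\varepsilon$ in $T_{t_0,z_0}$, contradicting the definition of the supremum. Hence $z(\tn{r}) \in \{0,l_{uv}\}$. Finally, setting $\delta\tn{l} := t_0 - \tn{l}$ and $\delta\tn{r} := \tn{r} - t_0$, both in $(0,\infty]$, gives the equivalent form $T_{t_0,z_0} = (t_0 - \delta\tn{l}, t_0 + \delta\tn{r})$.
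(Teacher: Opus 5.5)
Your proof is correct, and items 1 and 2 match the paper's argument, but for item 3 you take a different route.

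\textbf{The paper's route.} It works with the exit sets $A_{\text{l}} = \{t' \le t_0 : z(t') \notin \Omega_{uv}\}$ and $A_{\text{r}} = \{t' \ge t_0 : z(t') \notin \Omega_{uv}\}$. It sets $\tn{l} := \max A_{\text{l}}$ and $\tn{r} := \min A_{\text{r}}$, or $\mp\infty$ when these sets are empty. It proves (a) and (b) by contradiction via the intermediate value theorem: if, say, $z(\tn{l}) < 0$, there would be a zero of $z$ strictly between $\tn{l}$ and $t_0$, i.e.\ a larger element of $A_{\text{l}}$. It then obtains (c) by a double inclusion with a four-case analysis in each direction.

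\textbf{Your route.} You first show abstractly that $T_{t_0,z_0}$ is an open interval: it is star-shaped about $t_0$ and equals the connected component of $z^{-1}(\Omega_{uv})$ containing $t_0$. You then take $\tn{l},\tn{r}$ as its infimum and supremum, which makes (c) immediate. You get (a) and (b) from a closure-plus-extension argument that does not need the intermediate value theorem.

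\textbf{Comparison.} Your version is shorter and avoids the case distinctions. It also sidesteps a point the paper leaves implicit: that $\max A_{\text{l}}$ and $\min A_{\text{r}}$ are attained. This is true because these sets are closed, since $z$ is continuous and $\R\setminus\Omega_{uv}$ is closed. The paper's construction is fully elementary, using only the intermediate value theorem and no facts about components of open sets. It also defines $\tn{l},\tn{r}$ directly as the last exit time before $t_0$ and the first exit time after $t_0$, i.e.\ the moments the characteristic through $(t_0,z_0)$ reaches a node. That is the interpretation later exploited for transport times; in your proof it is recovered only a posteriori from (a)--(b).
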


\noindent Since according to the previous Lemmas \ref{lemma_ConstantConcentrationAlongCharacteristicCurves} and \ref{lemma_OnT_{t_0,z_0}}, for any $t_0 \in \R$ and $z_0 \in \Omega_{uv}$, the concentration $c_{uv}$ is constant along characteristic curves $\gamma_{t_0,z_0}$ on an non-empty time interval $T_{t_0,z_0} = (\tn{l},\tn{r})$, it suffices to know the value of $c_{uv}$ at one position in this time interval to have knowledge of the concentration over all times in the interval:

\begin{theorem}[Advective transport along edges]
\label{theorem_AdvectiveTransportAlongEdges}
    Let $e_{uv} \in E$ hold.
    If the function $c_{uv}$ obeys Equation \eqref{align_EdgeDynamics_Advection}, for each $t_0 \in \R$ and $z_0 \in \Omega_{uv}$,
    \begin{align*}
        c_{uv} \left( t, z_0 + \int_{t_0}^{t} \nu_{uv}(s) ~ds \right) 
        = 
        c_{uv} \left( t, z_0 + \nuoverline_{uv}(t_0,t) ~ (t-t_0) \right)
        =
        c_{uv}(t_0,z_0)
    \end{align*}
    
    \noindent holds for all $t \in T_{t_0,z_0}$ with $T_{t_0,z_0}$ as defined in Lemma \ref{lemma_ConstantConcentrationAlongCharacteristicCurves} and for the mean velocity
    \begin{align*}
        \nuoverline_{uv}(t_0,t)
        :=
        \begin{cases}
            \fint_{t_0}^{t} \nu_{uv}(s) ~ds
            & \text{ if } t \neq t_0
            \\
            0
            & \text{ if } t = t_0
        \end{cases}
        =
        \begin{cases}
            \frac{1}{t - t_0}
            \int_{t_0}^{t} \nu_{uv}(s) ~ds
            & \text{ if } t \neq t_0
            \\
            0
            & \text{ if } t = t_0
        \end{cases}
        .
    \end{align*}
\end{theorem}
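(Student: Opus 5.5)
The statement is essentially a restatement of Lemma \ref{lemma_ConstantConcentrationAlongCharacteristicCurves}, together with the algebraic rewriting of the traveled distance through the mean velocity. I would therefore split it into two equalities and treat them separately. First I fix $t_0 \in \R$ and $z_0 \in \Omega_{uv}$. By Lemma \ref{lemma_OnT_{t_0,z_0}} (part 2), $t_0 \in T_{t_0,z_0}$, and by part 3 the set $T_{t_0,z_0} = (\tn{l},\tn{r})$ is an open interval containing $t_0$.

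\textbf{Second equality.} Lemma \ref{lemma_ConstantConcentrationAlongCharacteristicCurves} says that $\gamma_{t_0,z_0}$ is constant on $T_{t_0,z_0}$. Hence for every $t \in T_{t_0,z_0}$ we have $\gamma_{t_0,z_0}(t) = \gamma_{t_0,z_0}(t_0)$. Since $\int_{t_0}^{t_0} \nu_{uv}(s)\,ds = 0$, the right-hand side is $c_{uv}(t_0,z_0)$.

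If one wants this step to be self-contained rather than a citation, the argument is as follows. Put $z(t) = z_0 + \int_{t_0}^{t}\nu_{uv}(s)\,ds$; it is $C^1$ with $z' = \nu_{uv}$ because $\nu_{uv}$ is continuous (Assumption \ref{assumption_SmoothnessAssumptions}). For $t \in T_{t_0,z_0}$ we have $z(t) \in (0,l_{uv})$, where $c_{uv}$ is differentiable, so the chain rule gives
\[
\tfrac{d}{dt}\, c_{uv}(t,z(t)) = \partial_t c_{uv} + \nu_{uv}(t)\,\partial_z c_{uv} = 0
\]
by Equation \eqref{align_EdgeDynamics_Advection}. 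A function with zero derivative on an interval is constant there. This is exactly why the connectedness of $T_{t_0,z_0}$ from Lemma \ref{lemma_OnT_{t_0,z_0}} matters.

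\textbf{First equality.} This is pure algebra: I show $\int_{t_0}^{t}\nu_{uv}(s)\,ds = \nuoverline_{uv}(t_0,t)\,(t-t_0)$ for all $t$. If $t \neq t_0$, the definition of $\nuoverline_{uv}$ as $\frac{1}{t-t_0}\int_{t_0}^{t}\nu_{uv}$ gives the identity directly, including the case $t<t_0$ where the signs are consistent. If $t = t_0$, both sides vanish, which is why the case distinction in the definition sets $\nuoverline_{uv}(t_0,t_0)=0$. Since the two spatial arguments coincide, the two $c_{uv}$ values coincide.

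The only point needing care is keeping the characteristic inside the open domain $\R\times\Omega_{uv}$, where the PDE holds. That is handled entirely by restricting to $t\in T_{t_0,z_0}$ and invoking Lemma \ref{lemma_OnT_{t_0,z_0}}, so I expect no real obstacle beyond this.
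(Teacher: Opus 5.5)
Your proposal is correct and follows essentially the same route as the paper: invoke Lemma \ref{lemma_ConstantConcentrationAlongCharacteristicCurves}, identify the constant by evaluating at $t_0 \in T_{t_0,z_0}$ (Lemma \ref{lemma_OnT_{t_0,z_0}}), and rewrite $\int_{t_0}^{t}\nu_{uv}(s)\,ds$ as $\nuoverline_{uv}(t_0,t)\,(t-t_0)$. The paper phrases the evaluation at $t_0$ as a contradiction argument and leaves the case $t = t_0$ implicit, while your optional chain-rule derivation and explicit $t = t_0$ check are minor additions.
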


\noindent According to Lemma \ref{lemma_OnT_{t_0,z_0}}.3, for any $t_0 \in \R$ and $z_0 \in \Omega_{uv}$, the set $T_{t_0,z_0}$ usually corresponds to a time interval $T_{t_0,z_0} = (\tn{l},\tn{r})$ such that on its boundary $\partial (\tn{l},\tn{r}) = \{\tn{l},\tn{r}\}$, the information reaches exactly one of the ends of the edge $e_{uv}$, i.e., the node $v \in V$ or the node $u \in \Nbh(v)$. 
In view of the coupling conditions (Eq. \eqref{align_CouplingCondition_MixingAtNodes}), this motivates to leverage the advective transport along edges, i.e., Theorem \ref{theorem_AdvectiveTransportAlongEdges}, and combine it with the inflow boundary conditions defined through Equation \eqref{align_CouplingCondition_InflowContinuity}, in order to derive a 
representation of the advection coupling conditions (Eq. \eqref{align_CouplingCondition_MixingAtNodes}) that does not depend on the concentrations $c_{uv}(t,l_{uv})$ at the end $z = l_{uv}$ of the edges $e_{uv}$ for each $u \in \Nbh_-(v,t)$, 
but on the concentrations $c_u(t-\delta t_{uv})$ and $c_v(t-\delta t_{uv})$ at suitable times $t-\delta t_{uv} \in \R$.
This, in turn, will serve as the basis for our message-passing algorithm (cf. Eq. \eqref{align_CouplingCondition_MixingAtNodes_MessagePassingFormulation} and \eqref{align_MessageFunction_Advection}).

Intuitively, the time shift $\delta t_{uv} \in \R$ will correspond to the time that any information requires to travel along the edge $e_{uv}$ with the flow $\nu_{uv}: \R \rightarrow \R$ from one of the nodes $v \in V$ or $u \in \Nbh(v)$ until it reaches the node $v \in V$ again. 
From the sender perspective, two different scenarios can occur:
Either, the information leaves the node $u \in \Nbh(v)$ at time $t-\delta t_{uv} \in \R$ and enters the node $v \in V$ at time $t \in \R$. 
We will call this scenario a  \textit{pass-through}.
Or, the information leaves the node $v \in V$ at time $t-\delta t_{uv} \in \R$ and starts traveling to the node $u \in \Nbh(v)$, but before it reaches that node, the flow $\nu_{vu} = -\nu_{uv}: \R \rightarrow \R$ changes its sign and the information enters the node $v \in V$ at time $t \in \R$ again. 
We will call this scenario a \textit{self-loop}.
Of course, the sign of the flow can also change multiple times before the information enters the node $v \in V$.
As a first step, we now formalize the aforementioned pass-through and self-loop times formally and collect some helpful properties to get a deeper understanding of such. In practice, also corresponding inverse phenomena will appear. In summary, we call these times \textit{transport times}, a name which we will motivate later.

\begin{definition}[Transport times]
\label{definition_TransportTimes}
    Let $e_{uv} \in E$ and $t \in \R$ hold.
    If there exists a $\delta t_{uv} \in \R$, such that
    \begin{align}
    \label{align_PassThroughTime_Condition1}
        \int_{t - \delta t_{uv}}^{t}
        \nu_{uv}(s) ~ds
        =&~
        l_{uv}
        \text{ and }
        \\
    \label{align_PassThroughTime_Condition2}
        \int_{t - \delta t_{uv}}^{t^{'}}
        \nu_{uv}(s) ~ds
        \in&~ 
        (0,l_{uv})
    \end{align}
    
    \noindent hold for all $t^{'} \in (\min\{t - \delta t_{uv},t\},\max\{t - \delta t_{uv},t\})$,
    we call $\delta t_{uv}$ the 
    \textit{pass-through time} 
    \textit{\gls*{wrt} the edge $e_{uv} \in E$ and time $t \in \R$}. 
    If instead, there exists a $\delta t_{uv} \in \R$, such that
    \begin{align}
    \label{align_PassThroughTime_Condition1_Inverse}
        \int_{t - \delta t_{uv}}^{t}
        \nu_{uv}(s) ~ds
        =&~
        -l_{uv}
        \text{ and }
        \\
    \label{align_PassThroughTime_Condition2_Inverse}
        \int_{t - \delta t_{uv}}^{t^{'}}
        \nu_{uv}(s) ~ds
        \in&~ 
        (-l_{uv},0)
    \end{align}
    
    \noindent hold for all $t^{'} \in (\min\{t - \delta t_{uv},t\},\max\{t - \delta t_{uv},t\})$,
    we call $\delta t_{uv}$ the 
    \textit{inverse pass-through time} 
    \textit{\gls*{wrt} the edge $e_{uv} \in E$ and time $t \in \R$}.
    If instead, there exists a $\delta t_{uv} \in \R_{\neq 0}$, such that
    \begin{align}
    \label{align_SelfLoopTime_Condition1}
        \int_{t - \delta t_{uv}}^{t}
        \nu_{uv}(s) ~ds
        =&~
        0
        \text{ and }
        \\
    \label{align_SelfLoopTime_Condition2}
        \int_{t - \delta t_{uv}}^{t^{'}}
        \nu_{uv}(s) ~ds
        \in&~ 
        (0,l_{uv}),
    \end{align}
    
    \noindent hold for all $t^{'} \in (\min\{t - \delta t_{uv},t\},\max\{t - \delta t_{uv},t\})$,
    we call $\delta t_{uv}$ the 
    \textit{self-loop time} 
    \textit{\gls*{wrt} the edge $e_{uv} \in E$ and time $t \in \R$}.
    If instead, there exists a $\delta t_{uv} \in \R_{\neq 0}$, such that
    \begin{align}
    \label{align_SelfLoopTime_Condition1_Inverse}
        \int_{t - \delta t_{uv}}^{t}
        \nu_{uv}(s) ~ds
        =&~
        0
        \text{ and }
        \\
    \label{align_SelfLoopTime_Condition2_Inverse}
        \int_{t - \delta t_{uv}}^{t^{'}}
        \nu_{uv}(s) ~ds
        \in&~ 
        (-l_{uv},0),
    \end{align}
    
    \noindent hold for all $t^{'} \in (\min\{t - \delta t_{uv},t\},\max\{t - \delta t_{uv},t\})$,
    we call $\delta t_{uv}$ the 
    \textit{inverse self-loop time} 
    \textit{\gls*{wrt} the edge $e_{uv} \in E$ and time $t \in \R$}.
    
    Usually, $e_{uv} \in E$ and $t \in \R$ are clear from the context and we just say 
    \textit{pass-through time},
    \textit{inverse pass-through time},
    \textit{self-loop time} or
    \textit{inverse self-loop time}.
\end{definition}

\noindent The following lemmas help to get a deeper understanding of the different kinds of transport times.

\begin{lemma}[Pass-through time]
\label{lemma_PassThroughTime} 
    Let $e_{uv} \in E$ and $t \in \R$ hold.
    If there exists a pass-through time $\delta t_{uv} \in \R$ \gls*{wrt} $e_{uv} \in E$ and $t \in \R$, the following properties hold:

    \begin{enumerate}
        \item $\delta t_{uv} \in \R_{\neq 0}$ and thus, $t - \delta t_{uv} \neq t$ holds.

        \item There exists no other pass-through time of the same sign 
        (i.e., there exists either exactly one or no positive pass-through time,
        and there exists exactly one or no negative pass-through time).

        \item The mean velocity 
        \begin{align*}
            \nuoverline_{uv}(t) 
            := 
            \nuoverline_{uv}(t - \delta t_{uv},t) 
            = 
            \fint_{t - \delta t_{uv}}^{t} \nu_{uv}(s) ~ds 
            = 
            \frac{1}{\delta t_{uv}} \int_{t - \delta t_{uv}}^{t} \nu_{uv}(s) ~ds
        \end{align*}
     
        during the time interval $[\min\{t - \delta t_{uv},t\},\max\{t - \delta t_{uv},t\}]$ 
        as defined in Theorem \ref{theorem_AdvectiveTransportAlongEdges}
        is equal to
        $\nuoverline_{uv}(t) = \frac{l_{uv}}{\delta t_{uv}}$
        and thus satisfies
        \begin{align*}
            \nuoverline_{uv}(t) > 0 &\iff \delta t_{uv} > 0 \text{ and} \\
            \nuoverline_{uv}(t) < 0 &\iff \delta t_{uv} < 0
            .
        \end{align*}

        \item The flows $q_{uv}(t - \delta t_{uv})$ and $q_{uv}(t)$ linked to the velocity $\nu_{uv}(\cdot) = \frac{q_{vu(\cdot)}}{\alpha_{vu}}$ satisfy
        \begin{align*}
            \begin{cases}
                q_{uv}(t - \delta t_{uv}), q_{uv}(t) \geq 0
                & \text{ if } \delta t_{uv} > 0
                \\
                q_{uv}(t - \delta t_{uv}), q_{uv}(t) \leq 0
                & \text{ if } \delta t_{uv} < 0
            \end{cases}
            .
        \end{align*}
    \end{enumerate}
\end{lemma}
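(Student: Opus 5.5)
We prove the four claims of Lemma \ref{lemma_PassThroughTime} in order, working only with the function $\zcurl(\delta t) = \int_{t-\delta t}^{t} \nu_{uv}(s)\,ds$. This function is continuously differentiable by Assumption \ref{assumption_SmoothnessAssumptions}, with $\zcurl(0)=0$.

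\textbf{Claim 1.} This is immediate. If $\delta t_{uv}=0$, then $\int_t^t \nu_{uv}=0\neq l_{uv}$, since $l_{uv}>0$, which contradicts condition \eqref{align_PassThroughTime_Condition1}.

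\textbf{Claim 3.} This is a one-line computation. By \eqref{align_PassThroughTime_Condition1}, $\nuoverline_{uv}(t) = \frac{1}{\delta t_{uv}}\int_{t-\delta t_{uv}}^t \nu_{uv} = l_{uv}/\delta t_{uv}$. Because $l_{uv}>0$, the sign of $\nuoverline_{uv}(t)$ equals the sign of $\delta t_{uv}$.

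\textbf{Claim 4.} We use a first-order argument at the endpoints, treating $\delta t_{uv}>0$; the negative case is symmetric with reversed orientation. Put $a=t-\delta t_{uv}<t$ and $F(t')=\int_a^{t'}\nu_{uv}(s)\,ds$. Then $F(a)=0$, $F(t)=l_{uv}$, and by \eqref{align_PassThroughTime_Condition2}, $F(t')\in(0,l_{uv})$ for all $t'\in(a,t)$.
\begin{itemize}
\item Since $F(t')>0=F(a)$ for all $t'$ just right of $a$, the right difference quotients at $a$ are positive. Hence $F'(a)=\nu_{uv}(a)\ge 0$.
\item Since $F(t')<l_{uv}=F(t)$ for all $t'$ just left of $t$, the left difference quotients at $t$ are positive. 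Hence $\nu_{uv}(t)\ge 0$.
\end{itemize}
Multiplying by $\alpha_{uv}>0$ gives $q_{uv}(a),q_{uv}(t)\ge 0$.

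For $\delta t_{uv}<0$, we have $a>t$, and $F(t')=-\int_{t'}^{a}\nu_{uv}$ takes values in $(0,l_{uv})$ on $(t,a)$. Comparing with $F(a)=0$ from the left and with $F(t)=l_{uv}$ from the right gives $\nu_{uv}(a)\le 0$ and $\nu_{uv}(t)\le 0$.

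\textbf{Claim 2.} This is the main point, and we argue by contradiction. Suppose $0<\delta_1<\delta_2$ are both positive pass-through times. Then $\zcurl(\delta_1)=l_{uv}$. But $t-\delta_1$ lies in the open interval $(t-\delta_2,t)$, so condition \eqref{align_PassThroughTime_Condition2} for $\delta_2$ applies at $t'=t-\delta_1$.

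The obstacle is that \eqref{align_PassThroughTime_Condition2} is anchored at the left endpoint $t-\delta_2$, while $\zcurl$ integrates up to $t$. We bridge this with additivity:
\[
\int_{t-\delta_2}^{t-\delta_1}\nu_{uv} \;=\; \int_{t-\delta_2}^{t}\nu_{uv} - \int_{t-\delta_1}^{t}\nu_{uv} \;=\; l_{uv}-l_{uv} \;=\; 0 .
\]
This contradicts \eqref{align_PassThroughTime_Condition2} for $\delta_2$, which requires that integral to lie in $(0,l_{uv})$.

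For two negative pass-through times $\delta_2<\delta_1<0$, the same argument runs on the interval $(t,t-\delta_2)$. Here $t'=t-\delta_1$ gives $\int_{t-\delta_2}^{t'}\nu_{uv} = l_{uv}-l_{uv}=0$, again contradicting the requirement that it lie in $(0,l_{uv})$.
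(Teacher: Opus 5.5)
Your proof is correct and follows the paper's argument essentially step for step. Claims 1--3 are identical to the paper's: the same contradiction $0=l_{uv}>0$, the same additivity computation $\int_{t-\delta_2}^{t-\delta_1}\nu_{uv}(s)\,ds = l_{uv}-l_{uv}=0$ contradicting condition \eqref{align_PassThroughTime_Condition2}, and the same one-line mean-velocity computation.

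The only cosmetic difference is in Claim 4. You read off the sign of $\nu_{uv}$ at the endpoints from one-sided difference quotients of $F$ via the fundamental theorem of calculus. The paper instead argues by contradiction, using that a continuous $\nu_{uv}$ keeps a strict sign near the endpoint. Both arguments rest on exactly the continuity of $\nu_{uv}$ from Assumption \ref{assumption_SmoothnessAssumptions}.
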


\noindent According to Lemma \ref{lemma_PassThroughTime},
if 
a pass-through time $\delta t_{uv} > 0$ is positive,
so does $\nuoverline_{uv}(t) > 0$ hold,
and 
there is an average flow from the node $u \in \Nbh(v)$ to the node $v \in V$
during the time $[t - \delta t_{uv},t]$.\footnote{
    More precisely, since in this case, 
    $q_{uv}(t - \delta t_{uv}), q_{uv}(t) \geq 0$ holds,
    we know that at time $t - \delta t_{uv} \in \R$ and $t \in \R$,
    there is a flow from $u \in \Nbh_-(v,t - \delta t_{uv}) \cap \Nbh_-(v,t)$ to $v \in V$.
    Note that we tacitly ignored the case of zero flows here due to Remark \ref{remark_PassThroughTime}.
}
If in contrast,
a pass-through time $\delta t_{uv} < 0$ is negative,
so does $\nuoverline_{uv}(t) < 0$ hold,
and 
there is an average flow from the node $v \in V$ to the node $u \in \Nbh(v)$
during the time $[t,t - \delta t_{uv}]$.\footnote{
    More precisely, since in this case, 
    $q_{uv}(t - \delta t_{uv}), q_{uv}(t) \leq 0$ holds,
    we know that at time $t \in \R$ and $t - \delta t_{uv} \in \R$,
    there is a flow from $v \in V$ to $u \in \Nbh_+(v,t) \cap \Nbh_+(v,t - \delta t_{uv})$.
    Note that we tacitly ignored the case of zero flows here due to Remark \ref{remark_PassThroughTime}.
}
In both cases, the time $\delta t_{uv} = \frac{l_{uv}}{\nuoverline_{uv}(t)}$ is exactly the time that any particle transported by the average flow $\qoverline_{uv}(t) = \alpha_{uv} ~\nuoverline_{uv}(t)$ needs to travel along the edge $e_{uv}$ of length $l_{uv}$. 
Therefore, we call $\delta t_{uv}$ the \textit{pass-through time}.
The cases are intuitively connected by the equality\footnote{
    The proof follows similar arguments as given in, e.g., the proof of Lemma \ref{lemma_InversePassThroughTime}.
}
\begin{align*}
    \int_{t - \delta t_{uv}}^{t}
    \nu_{uv}(s) ~ds
    =
    \int_{t}^{t - \delta t_{uv}}
    - \nu_{uv}(s) ~ds
    =
    \int_{t}^{t - \delta t_{uv}}
    \nu_{vu}(s) ~ds,
\end{align*}

\noindent where in dependence of the sign of the pass-through time $\delta t_{uv} \in \R_{\neq 0}$, we use the oriented of the two integrals on the left- or right-hand side for interpretation.
We can make similar observations for self-loop times.

\begin{lemma}[Self-loop time]
\label{lemma_SelfLoopTime}
    Let $e_{uv} \in E$ and $t \in \R$ hold.
    If there exists a self-loop time $\delta t_{uv} \in \R_{\neq 0}$ \gls*{wrt} $e_{uv} \in E$ and $t \in \R$, the following properties hold:

    \begin{enumerate}
        \item There exists no other self-loop time of the same sign 
        (i.e., there exists either exactly one or no positive self-loop time,
        and there exists exactly one or no negative self-loop time).

        \item The mean velocity 
        \begin{align*}
            \nuoverline_{uv}(t) 
            := 
            \nuoverline_{uv}(t - \delta t_{uv},t) 
            = 
            \fint_{t - \delta t_{uv}}^{t} \nu_{uv}(s) ~ds 
            = 
            \frac{1}{\delta t_{uv}} \int_{t - \delta t_{uv}}^{t} \nu_{uv}(s) ~ds
        \end{align*}

        \noindent during the time interval $[\min\{t - \delta t_{uv},t\},\max\{t - \delta t_{uv},t\}]$ 
        as defined in Theorem \ref{theorem_AdvectiveTransportAlongEdges}
        satisfies
        $\nuoverline_{uv}(t) = 0$.

        \item The flows $q_{uv}(t - \delta t_{uv})$ and $q_{uv}(t)$ linked to the velocity $\nu_{uv}(\cdot) = \frac{q_{vu(\cdot)}}{\alpha_{vu}}$ satisfy
        \begin{align*}
            \begin{cases}
                q_{uv}(t - \delta t_{uv}) \geq 0,~ q_{uv}(t) \leq 0
                & \text{ if } \delta t_{uv} > 0
                \\
                q_{uv}(t - \delta t_{uv}) \leq 0,~ q_{uv}(t) \geq 0
                & \text{ if } \delta t_{uv} < 0
            \end{cases}
            .
        \end{align*}
    \end{enumerate}
\end{lemma}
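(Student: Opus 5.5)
The plan is to prove all three properties by working with the single auxiliary function
\begin{align*}
    F: \R \longrightarrow \R, ~ \tau \longmapsto F(\tau) := \int_{t - \delta t_{uv}}^{\tau} \nu_{uv}(s) ~ds .
\end{align*}
By Assumption \ref{assumption_SmoothnessAssumptions}, $\nu_{uv}$ is continuous, so the fundamental theorem of calculus makes $F$ continuously differentiable with $F' = \nu_{uv}$. In this notation, Definition \ref{definition_TransportTimes} says that a self-loop time $\delta t_{uv} \neq 0$ satisfies
\begin{align*}
    F(t - \delta t_{uv}) = 0 = F(t) \quad \text{ and } \quad F(\tau) \in (0, l_{uv})
\end{align*}
for every $\tau$ strictly between $t - \delta t_{uv}$ and $t$. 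Each property then reduces to elementary statements about $F$. Throughout, I will keep track of orientation because for $\delta t_{uv} < 0$ the lower limit $t - \delta t_{uv}$ lies to the right of $t$.

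\emph{Property 1 (uniqueness per sign).} Suppose there are two self-loop times $\delta_1 \neq \delta_2$ of the same sign. Order them so that $t - \delta_1$ lies strictly between $t - \delta_2$ and $t$. This means $0 < \delta_1 < \delta_2$ in the positive case and $\delta_2 < \delta_1 < 0$ in the negative case. Apply condition \eqref{align_SelfLoopTime_Condition2} for $\delta_2$ at $t' = t - \delta_1$, which gives $\int_{t-\delta_2}^{t-\delta_1} \nu_{uv} \in (0, l_{uv})$. On the other hand, additivity of oriented integrals and \eqref{align_SelfLoopTime_Condition1} for both times give
\begin{align*}
    \int_{t-\delta_2}^{t-\delta_1} \nu_{uv}(s) ~ds = \int_{t-\delta_2}^{t} \nu_{uv}(s) ~ds - \int_{t-\delta_1}^{t} \nu_{uv}(s) ~ds = 0 - 0 = 0 ,
\end{align*}
which is a contradiction. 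This argument mirrors the one I would expect for Lemma \ref{lemma_PassThroughTime}.2, with the value $0$ in place of $l_{uv}$.

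\emph{Property 2 (mean velocity).} This is immediate: since $\delta t_{uv} \neq 0$, we have $\nuoverline_{uv}(t) = \frac{1}{\delta t_{uv}} F(t) = 0$.

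\emph{Property 3 (endpoint signs).} Since $q_{uv} = \alpha_{uv} \nu_{uv}$ with $\alpha_{uv} > 0$, it suffices to determine the signs of $\nu_{uv}(t - \delta t_{uv}) = F'(t - \delta t_{uv})$ and $\nu_{uv}(t) = F'(t)$. I will write each as a one-sided difference quotient taken from inside the interval, where $F > 0$ and $F = 0$ at both endpoints.
\begin{itemize}
    \item \textbf{Case $\delta t_{uv} > 0$.} The interval is $(t - \delta t_{uv}, t)$. The quotient $F(\tau) / (\tau - (t - \delta t_{uv}))$ with $\tau \downarrow t - \delta t_{uv}$ is positive, so its limit $\nu_{uv}(t - \delta t_{uv})$ is $\geq 0$. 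The quotient $-F(\tau) / (t - \tau)$ with $\tau \uparrow t$ is negative, so its limit $\nu_{uv}(t)$ is $\leq 0$.
    \item \textbf{Case $\delta t_{uv} < 0$.} The interval is $(t, t - \delta t_{uv})$, and the roles of the endpoints swap. Approaching $t$ from the right gives $\nu_{uv}(t) \geq 0$. Approaching $t - \delta t_{uv}$ from the left gives the quotient $-F(\tau) / ((t - \delta t_{uv}) - \tau) \leq 0$, so $\nu_{uv}(t - \delta t_{uv}) \leq 0$.
\end{itemize}
Multiplying by $\alpha_{uv} > 0$ yields the claimed signs of $q_{uv}$.

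The only real obstacle I expect is the sign bookkeeping with oriented integrals in the negative case $\delta t_{uv} < 0$. There, $F(\tau) = -\int_\tau^{t - \delta t_{uv}} \nu_{uv}(s) ~ds$, and the definition still demands $F(\tau) > 0$ on the interval. I will handle this by always phrasing the conditions through $F$ rather than through the raw integrals, so that both cases follow from the same limit arguments. Only weak inequalities are obtained, since a positive quotient can tend to $0$; this is consistent with the zero-flow caveat mentioned for Lemma \ref{lemma_PassThroughTime}.
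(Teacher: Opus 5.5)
Your proposal is correct and follows the paper's proof. Properties 1 and 2 are argued exactly as there: additivity of the oriented integrals gives $0-0=0$ at $t'=t-\delta_1$, contradicting the $(0,l_{uv})$ condition, and the mean velocity is evaluated directly. For Property 3 you read off the endpoint signs of $\nu_{uv}$ directly as one-sided difference quotients of $F$ with $F'=\nu_{uv}$. The paper instead argues by contradiction: continuity gives a one-sided neighborhood where $\nu_{uv}$ has the strict wrong sign, and integrating over it violates $F\in(0,l_{uv})$. The two are equivalent local arguments resting on the same positivity of $F$ inside the interval.
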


\noindent According to Lemma \ref{lemma_SelfLoopTime},
independent on whether 
a self-loop time $\delta t_{uv} \in \R_{\neq 0}$ is positive or negative,
$\nuoverline_{uv}(t) = 0$ holds.
If 
a self-loop time $\delta t_{uv} > 0$ is positive,
there is an average flow from the node $u \in \Nbh(v)$ to itself
during the time $[t - \delta t_{uv},t]$.\footnote{
    More precisely, since in this case, 
    $q_{uv}(t - \delta t_{uv}) \geq 0$ and $q_{uv}(t) \leq 0$ holds,
    we know that at time $t - \delta t_{uv} \in \R$,
    there is a flow from $u \in \Nbh_-(v,t - \delta t_{uv})$ to $v \in V$,
    while at time $t \in \R$,
    there is a flow from $v \in V$ to $u \in \Nbh_+(v,t)$.
    Note that we tacitly ignored the case of zero flows here due to Remark \ref{remark_SelfLoopTime}.
}
If in contrast,
a self-loop time $\delta t_{uv} < 0$ is negative,
there is an average flow from the node $u \in \Nbh(v)$ to itself
during the time $[t,t - \delta t_{uv}]$.\footnote{
    More precisely, since in this case, 
    $q_{uv}(t) \geq 0$ and $q_{uv}(t - \delta t_{uv}) \leq 0$ holds,
    we know that at time $t \in \R$,
    there is a flow from $u \in \Nbh_-(v,t)$ to $v \in V$,
    while at time $t - \delta t_{uv} \in \R$,
    there is a flow from $v \in V$ to $u \in \Nbh_+(v,t - \delta t_{uv})$.
    Note that we tacitly ignored the case of zero flows here due to Remark \ref{remark_SelfLoopTime}.
}
In both cases, the time $\delta t_{uv}$ is exactly the time that any particle transported by the average flow $\qoverline_{uv}(t) = \alpha_{uv} ~\nuoverline_{uv}(t) = 0$ needs to travel from $u$ along some part of the edge $e_{uv}$ until the flow changes its sign and the particle travels back to the node $u$ it came from. 
Therefore, we call $\delta t_{uv}$ the \textit{self-loop time}.

As another result that will turn out to be useful in practise, by the symmetry of the underlying graph $\G$ and the conservation of flows (Eq. \eqref{align_ConservationOfFlows}), 
an inverse pass-through time induces a pass-through time and vice versa, while
an inverse self-loop time induces a self-loop time and vice versa.

\begin{lemma}[Inverse pass-through time]
\label{lemma_InversePassThroughTime}
    Let $e_{uv} \in E$ and $t \in \R$ hold.
    $\delta t_{uv} \in \R$ is an inverse pass-through time \gls*{wrt} $e_{uv} \in E$ and $t \in \R$ 
    \gls*{iff}
    $\delta t_{vu} := \delta t_{uv} \in \R$ is a pass-through time \gls*{wrt} $e_{vu} \in E$ and $t \in \R$ 
\end{lemma}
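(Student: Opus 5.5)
The plan is to reduce the statement to a direct comparison of the two defining conditions, using only the conservation of flows (Eq. \eqref{align_ConservationOfFlows}), $\nu_{vu} = -\nu_{uv}$, and the symmetry of edge lengths, $l_{vu} = l_{uv}$, from the symmetric directed graph $\G$. Fix $e_{uv} \in E$, $t \in \R$ and $\delta t := \delta t_{uv} = \delta t_{vu} \in \R$. The key identity is that for every $t^{'} \in \R$
\begin{align*}
    \int_{t - \delta t}^{t^{'}} \nu_{vu}(s) ~ds
    =
    - \int_{t - \delta t}^{t^{'}} \nu_{uv}(s) ~ds ,
\end{align*}
which follows from linearity of the (oriented) Riemann integral. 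This is well defined because $\nu_{uv}$ is continuous by Assumption \ref{assumption_SmoothnessAssumptions}, and hence so is $\nu_{vu}$.

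First, for condition \eqref{align_PassThroughTime_Condition1_Inverse} versus \eqref{align_PassThroughTime_Condition1}, I take $t^{'} = t$. Then $\int_{t-\delta t}^{t} \nu_{uv} = -l_{uv}$ holds if and only if $\int_{t-\delta t}^{t} \nu_{vu} = l_{uv} = l_{vu}$, which is exactly the first pass-through condition for $e_{vu}$.

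Second, for condition \eqref{align_PassThroughTime_Condition2_Inverse} versus \eqref{align_PassThroughTime_Condition2}, the quantifier range $t^{'} \in (\min\{t-\delta t,t\},\max\{t-\delta t,t\})$ depends only on $t$ and $\delta t$. It is therefore identical for both edges. For each such $t^{'}$, the identity shows that $\int_{t-\delta t}^{t^{'}} \nu_{uv} \in (-l_{uv},0)$ if and only if $\int_{t-\delta t}^{t^{'}} \nu_{vu} \in (0,l_{uv}) = (0,l_{vu})$, since negation maps the open interval $(-l,0)$ bijectively onto $(0,l)$.

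Combining both equivalences gives both directions of the \gls*{iff} simultaneously, because every step is an equivalence.

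There is no real obstacle. The only point needing care is bookkeeping: one must check that the interval of admissible $t^{'}$ and the lower integration limit $t - \delta t$ are unchanged when passing from $e_{uv}$ to $e_{vu}$. One must also use $l_{vu} = l_{uv}$ explicitly, which holds because the contrasting edges model the same pipe (cf. Remark \ref{remark_SymmetryOfSignals}). No nonzero-ness of $\delta t$ needs to be assumed separately, since Definition \ref{definition_TransportTimes} places no such restriction on (inverse) pass-through times. In any case $\delta t = 0$ is excluded by the first condition, as $l_{uv} > 0$.
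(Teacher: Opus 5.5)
Your proposal is correct and follows essentially the same route as the paper's proof. Both rest on the identity $\int_{t-\delta t}^{t'} \nu_{vu}(s)\,ds = -\int_{t-\delta t}^{t'} \nu_{uv}(s)\,ds$ together with $l_{vu} = l_{uv}$, and then match the two defining conditions of Definition~\ref{definition_TransportTimes}; the differences are cosmetic (you write a single chain of equivalences over the common $t'$-range, you take $\nu_{vu} = -\nu_{uv}$ directly from Eq.~\eqref{align_ConservationOfFlows} instead of via $q$ and $\alpha$, and the paper proves the two directions separately).
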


\begin{lemma}[Inverse self-loop time]
\label{lemma_InverseSelfLoopTime}
    Let $e_{uv} \in E$ and $t \in \R$ hold.
    $\delta t_{uv} \in \R$ is an inverse self-loop time \gls*{wrt} $e_{uv} \in E$ and $t \in \R$ 
    \gls*{iff}
    $\delta t_{vu} := \delta t_{uv} \in \R$ is a self-loop time \gls*{wrt} $e_{vu} \in E$ and $t \in \R$ 
\end{lemma}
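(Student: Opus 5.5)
The plan is to unfold the definitions and use the antisymmetry of the flow field, Equation \eqref{align_ConservationOfFlows}, that is $\nu_{vu}(s) = -\nu_{uv}(s)$ for all $s \in \R$, together with $l_{vu} = l_{uv}$. For fixed $t \in \R$ and a candidate $\delta t := \delta t_{uv} = \delta t_{vu}$, we compare conditions \eqref{align_SelfLoopTime_Condition1_Inverse}--\eqref{align_SelfLoopTime_Condition2_Inverse} for $e_{uv}$ with conditions \eqref{align_SelfLoopTime_Condition1}--\eqref{align_SelfLoopTime_Condition2} for $e_{vu}$. The nonzero requirement $\delta t \in \R_{\neq 0}$ is identical on both sides, so only the integral conditions need to be checked.

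\emph{First condition.} By antisymmetry and linearity of the integral,
\[
\int_{t-\delta t}^{t} \nu_{vu}(s)\,ds = -\int_{t-\delta t}^{t} \nu_{uv}(s)\,ds .
\]
Hence one side vanishes if and only if the other does, so \eqref{align_SelfLoopTime_Condition1_Inverse} for $e_{uv}$ is equivalent to \eqref{align_SelfLoopTime_Condition1} for $e_{vu}$.

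\emph{Second condition.} For every $t'$ in the open interval $(\min\{t-\delta t,t\},\max\{t-\delta t,t\})$, which depends only on $t$ and $\delta t$ and so is the same for both edges,
\[
\int_{t-\delta t}^{t'} \nu_{vu}(s)\,ds = -\int_{t-\delta t}^{t'} \nu_{uv}(s)\,ds .
\]
The map $x \mapsto -x$ is a bijection from $(-l_{uv},0)$ onto $(0,l_{uv}) = (0,l_{vu})$. Therefore the membership required by \eqref{align_SelfLoopTime_Condition2_Inverse} for $e_{uv}$ holds for all such $t'$ if and only if the membership required by \eqref{align_SelfLoopTime_Condition2} for $e_{vu}$ holds for all such $t'$.

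Combining both equivalences gives the ``iff'' in both directions simultaneously. The argument is the same sign-flip used for Lemma \ref{lemma_InversePassThroughTime}, with $\pm l$ replaced by $0$ in the first condition. There is no real obstacle. The only care needed is to note explicitly that the symmetric directed graph gives $l_{vu} = l_{uv}$ and $e_{vu} \in E$, and that the quantifier range over $t'$ coincides for both edges. The oriented integral $\int_{t-\delta t}^{t'}$ is used as is, so its sign convention when $\delta t < 0$ needs no separate treatment.
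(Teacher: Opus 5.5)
Your proposal is correct and follows the paper's proof: both directions come from $\nu_{vu} = -\nu_{uv}$ (Eq.~\eqref{align_ConservationOfFlows}) and $l_{vu} = l_{uv}$, which flip the sign of each oriented integral and map $(-l_{uv},0)$ onto $(0,l_{vu})$, exactly as in Lemma~\ref{lemma_InversePassThroughTime}. Your write-up is slightly cleaner than the paper's. You get both directions at once from the bijection $x \mapsto -x$. You also keep the upper limit $t'$ in the second condition, whereas the paper's displayed computation has $t$ there, which is a typo.
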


\noindent We now collected all relevant properties which help us to prove the main results of this section as motivated above:
Given a pass-through time $\delta t_{uv} \in \R$ \gls*{wrt} $e_{uv} \in E$ and $t \in \R$, 
we can re-write the concentrations $c_{uv}(t,l_{uv})$ and $c_{vu}(t - \delta t_{uv},l_{vu})$ at the end $z = l_{uv} = l_{vu}$ of the edges $e_{uv}$ and $e_{vu}$ using the concentrations $c_u: \R \rightarrow \R_{\geq 0}$ and $c_v: \R \rightarrow \R_{\geq 0}$, respectively: 

\begin{theorem}[Pass-through time]
\label{theorem_PassThroughTime}
    Let $e_{uv} \in E$ and $t \in \R$ hold.
    If the function $c_{uv}$ obeys Equation \eqref{align_EdgeDynamics_Advection} and there exists a pass-through time $\delta t_{uv} \in \R_{\neq 0}$ \gls*{wrt} $e_{uv} \in E$ and $t \in \R$,
    we obtain
    \begin{align*}
        c_{uv}(t,l_{uv})
        =
        c_{uv}(t - \delta t_{uv},0)
        =
        c_{uv} \left( t - \frac{l_{uv}}{\nuoverline_{uv}(t)},0 \right).
    \end{align*}

    \noindent Even more, the concentrations $c_{uv}(\cdot,l_{uv})$ and $c_{vu}(\cdot,l_{vu})$ at the end $z = l_{uv} = l_{vu}$ of the edges $e_{uv} \in E$ and $e_{vu} \in E$ are connected to the concentrations $c_u$ and $c_v$ of the nodes of that edge, respectively, by
    \begin{align*}
        \begin{cases}
            c_{uv}(t,l_{uv})
            =
            c_u(t - \delta t_{uv})
            & \text{ if } \delta t_{uv} > 0
            \\
            c_{vu}(t - \delta t_{uv},l_{vu})
            =
            c_v(t)
            & \text{ if } \delta t_{uv} < 0
        \end{cases}
        .
    \end{align*}
\end{theorem}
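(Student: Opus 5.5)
The plan is to use Theorem \ref{theorem_AdvectiveTransportAlongEdges} along interior points of the characteristic through $(t,l_{uv})$, and then pass to the boundary by the continuity in Assumption \ref{assumption_SmoothnessAssumptions}. Set $s_0 := t-\delta t_{uv}$ and $z(\tau) := \int_{s_0}^{\tau}\nu_{uv}(s)\,ds$. Condition \eqref{align_PassThroughTime_Condition1} gives $z(t)=l_{uv}$, and trivially $z(s_0)=0$. Condition \eqref{align_PassThroughTime_Condition2} gives $z(\tau)\in(0,l_{uv})$ for every $\tau$ strictly between $s_0$ and $t$. The problem is that Theorem \ref{theorem_AdvectiveTransportAlongEdges} needs a starting point $z_0\in\Omega_{uv}$. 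We therefore fix an interior time $\tau_0$ strictly between $s_0$ and $t$ and put $z_0:=z(\tau_0)\in\Omega_{uv}$. The curve $\tau\mapsto z_0+\int_{\tau_0}^{\tau}\nu_{uv}$ coincides with $z(\tau)$, so it stays in $(0,l_{uv})$ on the whole open interval between $s_0$ and $t$. That open interval is therefore contained in $T_{\tau_0,z_0}$. By Lemma \ref{lemma_OnT_{t_0,z_0}}.3 we have $T_{\tau_0,z_0}=(t_{\text{l}},t_{\text{r}})$ with endpoints where the curve hits $\{0,l_{uv}\}$; here they are exactly $\min\{s_0,t\}$ and $\max\{s_0,t\}$.

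Theorem \ref{theorem_AdvectiveTransportAlongEdges} then gives $c_{uv}(\tau,z(\tau))=c_{uv}(\tau_0,z_0)$ for all $\tau$ in this open interval. Now let $\tau\to t$ and $\tau\to s_0$ from inside. Since $c_{uv}$ is continuous on $\R\times[0,l_{uv}]$ and $z$ is continuous (because $\nu_{uv}$ is), the left side converges to $c_{uv}(t,l_{uv})$ and to $c_{uv}(s_0,0)$ respectively. This yields $c_{uv}(t,l_{uv})=c_{uv}(t-\delta t_{uv},0)$. Lemma \ref{lemma_PassThroughTime}.1 ensures $\delta t_{uv}\neq 0$, so the interval is nondegenerate and a $\tau_0$ exists. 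Lemma \ref{lemma_PassThroughTime}.3 gives $\delta t_{uv}=l_{uv}/\nuoverline_{uv}(t)$, which is the second form of the identity.

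For the case split, first take $\delta t_{uv}>0$. Then $s_0<t$, and Lemma \ref{lemma_PassThroughTime}.4 gives $q_{uv}(s_0)\ge 0$. The inflow continuity condition \eqref{align_CouplingCondition_InflowContinuity} requires $u\in\Nbh_-(v,s_0)$, i.e.\ $\nu_{uv}(s_0)>0$; zero flow is the edge case. If $\nu_{uv}(s_0)>0$, continuity gives $c_{uv}(s_0,0)=c_u(s_0)$, and hence $c_{uv}(t,l_{uv})=c_u(t-\delta t_{uv})$.

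Next take $\delta t_{uv}<0$. Rewrite $c_{uv}(t,l_{uv})=c_{uv}(s_0,0)$ via the symmetry of signals, Remark \ref{remark_SymmetryOfSignals}, as $c_{vu}(t,0)=c_{vu}(s_0,l_{vu})$. Alternatively, by Lemma \ref{lemma_InversePassThroughTime} and the flow relation \eqref{align_ConservationOfFlows}, we can view this as transport from $v$ at time $t$ to $u$ at time $s_0>t$ along $e_{vu}$. Here $\nu_{vu}(t)=-\nu_{uv}(t)\ge 0$ by Lemma \ref{lemma_PassThroughTime}.4, so $v\in\Nbh_-(u,t)$ and inflow continuity on $e_{vu}$ gives $c_{vu}(t,0)=c_v(t)$. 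Hence $c_{vu}(t-\delta t_{uv},l_{vu})=c_v(t)$.

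The main obstacle is the degenerate endpoint flow, where $\nu_{uv}(s_0)=0$ (respectively $\nu_{uv}(t)=0$). In that case the node does not strictly belong to $\Nbh_-$, so \eqref{align_CouplingCondition_InflowContinuity} does not literally apply. I would first try a limiting argument. Since $z(\tau)>0$ just after $s_0$, there are times $s_n\downarrow s_0$ with $\nu_{uv}(s_n)>0$. Inflow continuity at those times, together with continuity of $c_{uv}(\cdot,0)$ and of $c_u$ (from the mixing relation), then lets us pass to the limit. Failing that, I would appeal to the remark the paper flags about ignoring zero flows and treat this as a convention.
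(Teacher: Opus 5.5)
Your proof is correct and has the same skeleton as the paper's: move into the open edge, apply Theorem \ref{theorem_AdvectiveTransportAlongEdges}, return to the boundary by continuity of $c_{uv}$ on $\R\times[0,l_{uv}]$, then use inflow continuity with the signs from Lemma \ref{lemma_PassThroughTime}.4, and the symmetry \eqref{align_SymmetryOfSignals} when $\delta t_{uv}<0$. The interior step, however, is done differently.

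The paper fixes the start time $t-\delta t_{uv}$ and launches a \emph{shifted} characteristic at height $\ztilde(\epsilon)=\frac{1}{2}\bigl(l_{uv}-z(\epsilon)\bigr)$. It transports this curve to time $t-\epsilon$ and lets $\epsilon\to 0$ at both ends simultaneously.

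You instead anchor at an interior point $(\tau_0,z(\tau_0))$ of the true characteristic, so the transported curve is $z(\cdot)$ itself. Containment of the whole open interval in $T_{\tau_0,z_0}$ is then literally condition \eqref{align_PassThroughTime_Condition2}, and both boundary values follow from one-sided limits.

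Your route is the more rigorous one. The paper only checks that the shifted curve lies in $(0,l_{uv})$ at the final time $t-\epsilon$, but membership in $T_{t_0,z_0}$ requires this at all intermediate times. If $\int\nu_{uv}$ is not monotone, $\ztilde(\epsilon)+\int_{t-\delta t_{uv}}^{t'}\nu_{uv}(s)\,ds$ can exceed $l_{uv}$ at some intermediate $t'$. Anchoring on the actual characteristic avoids this entirely.

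On the zero-flow endpoint: the paper has the same issue and simply applies \eqref{align_CouplingCondition_InflowContinuity} with $q\geq 0$, tacitly ignoring $q=0$. Your limiting patch is sound as far as it goes. Condition \eqref{align_PassThroughTime_Condition2} does force $\nu_{uv}(s_n)>0$ along some $s_n\downarrow t-\delta t_{uv}$. Analogously, when $\delta t_{uv}<0$ it forces $\nu_{vu}(\tau_n)>0$ along some $\tau_n\downarrow t$.

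To close the argument, though, you need one-sided continuity of $c_u$ at $t-\delta t_{uv}$ (resp. of $c_v$ at $t$). This is not part of Assumption \ref{assumption_SmoothnessAssumptions}. It also does not follow from the mixing relation in general, for example when $u\in\Vb$ carries prescribed data or when the total inflow into $u$ vanishes. You should state it as an extra hypothesis rather than derive it; the paper's proof likewise just takes $c_u,c_v$ to be continuous.
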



\noindent According to Theorem \ref{theorem_PassThroughTime},
if
a pass-through time $\delta t_{uv} > 0$ is positive,
the concentration $c_{uv}(t,l_{uv})$ at time $t \in \R$ and at the end of the edge $e_{uv} \in E$, 
along which 
there is an average flow from the node $u \in \Nbh(v)$ to the node $v \in V$
during the time $[t - \delta t_{uv},t]$,
is equal to the concentration $c_u(t - \delta t_{uv})$ at the node $u \in \Nbh(v)$ and $-\delta t_{uv} < 0$ units of time \textit{earlier} than $t$.
If in contrast,
a pass-through time $\delta t_{uv} < 0$ is negative,
the concentration $c_{vu}(t - \delta t_{uv},l_{vu})$ at time $t - \delta t_{uv} \in \R$ and at the end of the edge $e_{vu} \in E$, 
along which 
there is an average flow from the node $v \in V$ to the node $u \in \Nbh(v)$
during the time $[t,t - \delta t_{uv}]$,
is equal to the concentration $c_v(t)$ at the node $v \in V$ and $+ \delta t_{uv} < 0$ units of time \textit{earlier} than $t - \delta t_{uv}$.

Analogously, 
given a self-loop time $\delta t_{uv} \in \R$ \gls*{wrt} $e_{uv} \in E$ and $t \in \R$, 
we can re-write the concentration $c_{vu}(t,l_{vu})$ at the end $z = l_{vu} = l_{uv}$ of the edge $e_{vu}$ using the concentration $c_u:\R \rightarrow \R_{\geq 0}$: 

\begin{theorem}[Self-loop time]
\label{theorem_SelfLoopTime}
    Let $e_{uv} \in E$ and $t \in \R$ hold.
    If the function $c_{uv}$ obeys Equation \eqref{align_EdgeDynamics_Advection} and there exists a self-loop time $\delta t_{uv} \in \R_{\neq 0}$ \gls*{wrt} $e_{uv} \in E$ and $t \in \R$,
    we obtain
    \begin{align*}
        c_{uv}(t,0)
        =
        c_{uv}(t - \delta t_{uv},0).
    \end{align*}

    \noindent Even more, the concentration $c_{vu}(\cdot,l_{vu})$ at the end $z = l_{vu} = l_{uv}$ of the edge $e_{vu} \in E$ is connected to the concentration $c_u$ of the node of that edge by
    \begin{align*}
        \begin{cases}
            c_{vu}(t,l_{vu})
            =
            c_u(t - \delta t_{uv})
            & \text{ if } \delta t_{uv} > 0
            \\
            c_{vu}(t - \delta t_{uv},l_{vu})
            =
            c_u(t)
            & \text{ if } \delta t_{uv} < 0
        \end{cases}
        .
    \end{align*}
\end{theorem}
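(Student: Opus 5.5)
The plan is to follow the characteristic curve that leaves the inflow boundary $z=0$ of $\Omega_{uv}$ at time $t-\delta t_{uv}$ and comes back to $z=0$ at time $t$. Then Lemma \ref{lemma_OnT_{t_0,z_0}}, Theorem \ref{theorem_AdvectiveTransportAlongEdges} and the continuity in Assumption \ref{assumption_SmoothnessAssumptions} carry the value along it. Write $I$ for the open interval between $t-\delta t_{uv}$ and $t$; it is non-empty because $\delta t_{uv}\neq 0$. I use an interior anchor point because Theorem \ref{theorem_AdvectiveTransportAlongEdges} needs $z_0\in\Omega_{uv}$.

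First I fix $t_0\in I$, say the midpoint, and set $z_0:=\int_{t-\delta t_{uv}}^{t_0}\nu_{uv}(s)\,ds$. By \eqref{align_SelfLoopTime_Condition2}, $z_0\in(0,l_{uv})=\Omega_{uv}$. For every $t'\in I$, additivity of the integral gives
\[
z(t'):=z_0+\int_{t_0}^{t'}\nu_{uv}(s)\,ds=\int_{t-\delta t_{uv}}^{t'}\nu_{uv}(s)\,ds\in(0,l_{uv}),
\]
again by \eqref{align_SelfLoopTime_Condition2}. Hence $I\subseteq T_{t_0,z_0}$. In fact, by \eqref{align_SelfLoopTime_Condition1} and since $z(t-\delta t_{uv})=0$, both endpoints are exit times, so $T_{t_0,z_0}=I$ in the sense of Lemma \ref{lemma_OnT_{t_0,z_0}}.3; only the inclusion is needed, though. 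Theorem \ref{theorem_AdvectiveTransportAlongEdges} then gives $c_{uv}(t',z(t'))=c_{uv}(t_0,z_0)$ for all $t'\in I$.

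Next I let $t'\to t$ and $t'\to t-\delta t_{uv}$ within $I$. The map $z$ is continuous and $c_{uv}$ is continuous on $\R\times[0,l_{uv}]$ by Assumption \ref{assumption_SmoothnessAssumptions}. The limits are $z(t)=0$ by \eqref{align_SelfLoopTime_Condition1} and $z(t-\delta t_{uv})=0$, so $c_{uv}(t,0)=c_{uv}(t_0,z_0)=c_{uv}(t-\delta t_{uv},0)$. This proves the first claim.

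For the node representation I first use Remark \ref{remark_SymmetryOfSignals}: $c_{vu}(s,l_{vu})=c_{uv}(s,0)$ for every $s$.

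\emph{Case $\delta t_{uv}>0$.} Here $c_{vu}(t,l_{vu})=c_{uv}(t,0)=c_{uv}(t-\delta t_{uv},0)$. By Lemma \ref{lemma_SelfLoopTime}.3, $q_{uv}(t-\delta t_{uv})\geq 0$, so $u$ is an inflow neighbour of $v$ at time $t-\delta t_{uv}$. The inflow continuity condition \eqref{align_CouplingCondition_InflowContinuity} then gives $c_{uv}(t-\delta t_{uv},0)=c_u(t-\delta t_{uv})$.

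\emph{Case $\delta t_{uv}<0$.} Symmetrically, $c_{vu}(t-\delta t_{uv},l_{vu})=c_{uv}(t-\delta t_{uv},0)=c_{uv}(t,0)$. Lemma \ref{lemma_SelfLoopTime}.3 gives $q_{uv}(t)\geq 0$, and \eqref{align_CouplingCondition_InflowContinuity} yields $c_{uv}(t,0)=c_u(t)$.

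The main obstacle is the degenerate case in which the flow at the relevant endpoint is exactly zero, since $\Nbh_-(v,\cdot)$ requires strictly positive velocity. In the case $\delta t_{uv}>0$, for instance, we only know $q_{uv}(t-\delta t_{uv})\geq 0$, not $>0$. In that situation I would argue as follows. Because $\int_{t-\delta t_{uv}}^{t'}\nu_{uv}>0$ for all $t'$ just after $t-\delta t_{uv}$, there are times $s\downarrow t-\delta t_{uv}$ with $\nu_{uv}(s)>0$, and at those times inflow continuity holds. Passing to the limit, however, needs continuity of $c_u$, which is not in Assumption \ref{assumption_SmoothnessAssumptions}. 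So I would either add it, or, as the paper does for the pass-through case, tacitly exclude zero flows at the endpoints via a remark analogous to Remark \ref{remark_SelfLoopTime}.
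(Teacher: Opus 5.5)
Your proof is correct. For the identity $c_{uv}(t,0)=c_{uv}(t-\delta t_{uv},0)$ it takes a different and cleaner route than the paper.

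\textbf{The paper's route.} For each $\epsilon$ strictly between $0$ and $\delta t_{uv}$, the paper sets $z(\epsilon)=\int_{t-\delta t_{uv}}^{t-\epsilon}\nu_{uv}(s)\,ds$. It anchors a \emph{shifted} characteristic at the departure time $t-\delta t_{uv}$ with starting height $\tilde z(\epsilon)=\frac{1}{2}z(\epsilon)$. Theorem \ref{theorem_AdvectiveTransportAlongEdges} then gives $c_{uv}(t-\epsilon,\frac{3}{2}z(\epsilon))=c_{uv}(t-\delta t_{uv},\frac{1}{2}z(\epsilon))$, and the paper lets $\epsilon\to 0$ on both sides.

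\textbf{Your route.} You anchor the unshifted characteristic at an interior point. You read off $I\subseteq T_{t_0,z_0}$ directly from \eqref{align_SelfLoopTime_Condition2}, since every intermediate time lies in the interval $I$. You then take one-sided limits at both ends.

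\textbf{What the difference buys.} Your route gives a complete check of the hypothesis of Theorem \ref{theorem_AdvectiveTransportAlongEdges}. The paper checks the position only at the endpoint $t-\epsilon$, and it does so via the inclusion $(0,\frac{3}{2}l_{uv})\subset(0,l_{uv})$, which is false as written. Repairing it needs one more observation: $\max_{s}\int_{t-\delta t_{uv}}^{s}\nu_{uv}<l_{uv}$ on the closed interval. With that, the shifted curve stays in $\Omega_{uv}$ for all sufficiently small $\epsilon$. The paper's construction keeps the anchor at the physical departure point, but pays for it with this $\epsilon$-family. Your construction avoids the issue entirely, and would work verbatim for the pass-through case as well.

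\textbf{Node-level step.} This matches the paper: Remark \ref{remark_SymmetryOfSignals}, the signs from Lemma \ref{lemma_SelfLoopTime}.3 (which the paper cites as item 4), and \eqref{align_CouplingCondition_InflowContinuity}.

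\textbf{Zero-flow caveat.} Your caveat about zero flow at the relevant endpoint is justified. The paper also only has $q_{uv}\geq 0$ there and applies inflow continuity anyway. Your limiting argument closes the gap:
\begin{itemize}
\item for $\delta t_{uv}>0$ you get times $s\downarrow t-\delta t_{uv}$ with $\nu_{uv}(s)>0$;
\item for $\delta t_{uv}<0$ you get times $s\downarrow t$ with $\nu_{uv}(s)>0$, because $\int_t^{t'}\nu_{uv}=\int_{t-\delta t_{uv}}^{t'}\nu_{uv}>0$ on $I$.
\end{itemize}
This requires $c_u$ to be continuous. The paper tacitly assumes this when it writes $c_u\in C^1(\R,\R_{\geq 0})$, although it is not part of Assumption \ref{assumption_SmoothnessAssumptions}.
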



\noindent According to Theorem \ref{theorem_SelfLoopTime}, 
if 
a self-loop time $\delta t_{uv} > 0$ is positive,
the concentration $c_{vu}(t,l_{vu})$ at time $t \in \R$ and at the end of the edge $e_{vu} \in E$, 
along which 
there is an average flow from the node $u \in \Nbh(v)$ to itself
during the time $[t - \delta t_{uv},t]$,
is equal to the concentration $c_u(t - \delta t_{uv})$ at the node $u \in \Nbh(v)$ and $-\delta t_{uv} < 0$ units of time \textit{earlier} than $t$.
If in contrast,
a self-loop time $\delta t_{uv} < 0$ is negative,
the concentration $c_{vu}(t - \delta t_{uv},l_{vu})$ at time $t - \delta t_{uv} \in \R$ and at the end of the edge $e_{vu} \in E$, 
along which 
there is an average flow from the node $u \in \Nbh(v)$ to itself
during the time $[t,t - \delta t_{uv}]$,
is equal to the concentration $c_u(t)$ at the node $u \in \Nbh(v)$ and $+ \delta t_{uv} < 0$ units of time \textit{earlier} than $t - \delta t_{uv}$.
\\
\\
In summary, Theorem \ref{theorem_PassThroughTime} and Theorem \ref{theorem_SelfLoopTime} illustrate the time shift required to pass information from one node to another.
Therefore we call these shifts \textit{transport times},
and the underlying concepts serve as a basis for our advection-message-generation function $\phi$ (cf. Eq. \eqref{align_CouplingCondition_MixingAtNodes_MessagePassingFormulation}). To be able to properly define it, we need to make sure that for each edge $e_{uv} \in E$ and time $t \in \R$, such a transport time exists and is unique. For the sake of brevity, for such a pair, we define the function
\begin{align*}
    \zcurl: \R \longrightarrow \R, ~
    \delta t \longmapsto \zcurl(\delta t) := \int_{t-\delta t}^{t} \nu_{uv}(s) ~ds
    \quad
    \text{ (Equation \eqref{align_TransportTime_Functionscurlzcurl})},
\end{align*}

\noindent
where we omit the dependency of $\zcurl$ on $e_{uv} \in E$, $t \in \R$ and the overall flow field $\num$ for better readability.
Then we obtain the following existence and uniqueness result:

\begin{theorem}[Existence and uniqueness of positive transport times]
\label{theorem_ExistenceAndUniquenessOfPositiveTransportTimes}
    Let $e_{uv} \in E$ and $t \in \R$ hold.
    If $\nu_{uv}(t) \neq 0$ holds and the set
    \begin{align*}
        A_{uv}(t) :=& 
        \bigg \{
        \delta t \in (0,\infty) 
        ~ \big |~
        \int_{t - \delta t}^{t} \nu_{uv}(s) ~ds
        \in 
        \{-l_{uv},0,l_{uv}\}
        \bigg \}
        \\
        =&
        \bigg \{
        \delta t \in (0,\infty) 
        ~ \big |~
        \zcurl(\delta t)
        \in 
        \{-l_{uv},0,l_{uv}\}
        \bigg \}
    \end{align*}

    \noindent is not empty, the following properties hold:

    \begin{enumerate}
        \item $\delta t_{uv} := \min A_{uv}(t) = \min \{\delta t \in (0,\infty) ~ \big |~ \zcurl(\delta t) \in \{-l_{uv},0,l_{uv}\}\} \in (0,\infty)$ exists.

        \item $\delta t_{uv}$ is a positive pass-through time, inverse pass-through time, self-loop time or inverse self-loop time.
        More specifically, 
        $\delta t_{uv}$ is (always \gls*{wrt} $e_{uv}$ and $t$) ...
        \begin{align*}
            & \text{... an inverse pass-through time}
            && \iff
            \zcurl(\delta t_{uv}) = -l_{uv},
            \\
            & \text{... a self-loop time} 
            \text{ or }
            \text{an inverse self-loop time}
            && \iff
            \zcurl(\delta t_{uv}) = 0,
            \\
            & \text{... a pass-through time}
            && \iff
            \zcurl(\delta t_{uv}) = l_{uv}.
        \end{align*}

        \item There exists no other positive pass-through time, inverse pass-through time, self-loop time or inverse self-loop time than $\delta t_{uv}$.
    \end{enumerate}
\end{theorem}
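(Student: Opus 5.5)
The plan is to study the continuous function $\zcurl(\delta t)=\int_{t-\delta t}^{t}\nu_{uv}(s)\,ds$ on $[0,\infty)$. By Assumption \ref{assumption_SmoothnessAssumptions}, $\zcurl$ is continuously differentiable with $\zcurl(0)=0$ and $\zcurl'(\delta t)=\nu_{uv}(t-\delta t)$. In particular $\zcurl'(0)=\nu_{uv}(t)\neq 0$.

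\textbf{Step 1 (existence of the minimum).} Since $\nu_{uv}(t)\neq 0$, continuity of $\nu_{uv}$ gives an $\varepsilon>0$ with $\zcurl$ strictly monotone on $[0,\varepsilon]$. Then $0<|\zcurl(\delta t)|<l_{uv}$ for $\delta t\in(0,\varepsilon']$ with some $\varepsilon'\le \varepsilon$, because $|\zcurl|$ is continuous and small near $0$. Hence $A_{uv}(t)\cap(0,\varepsilon']=\emptyset$.

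Next, $A_{uv}(t)=\zcurl^{-1}(\{-l_{uv},0,l_{uv}\})\cap(0,\infty)$. Its intersection with $[\varepsilon',\infty)$ is closed, being the preimage of a closed set under a continuous map restricted to a closed set. It is nonempty by assumption and bounded below, so its infimum is attained. Therefore $\delta t_{uv}:=\min A_{uv}(t)\ge\varepsilon'>0$ exists, which proves part 1.

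\textbf{Step 2 (classification).} By minimality, for all $\delta t\in(0,\delta t_{uv})$ we have $\zcurl(\delta t)\notin\{-l_{uv},0,l_{uv}\}$. By the intermediate value theorem, $\zcurl((0,\delta t_{uv}))$ is a connected set avoiding these three values, so it lies entirely in $(-l_{uv},0)$ or entirely in $(0,l_{uv})$. The sign is that of $\nu_{uv}(t)$.

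Now write $t'=t-\delta t$, so that $\int_{t-\delta t_{uv}}^{t'}\nu_{uv}=\zcurl(\delta t_{uv})-\zcurl(t-t')$. This translates the "for all $t'$" conditions of Definition \ref{definition_TransportTimes}, which are stated from the start point $t-\delta t_{uv}$, into statements about $\zcurl$ measured from $t$. The translation requires care: I need to match the definition's conditions, which are anchored at the start point $t-\delta t_{uv}$, with the image of $\zcurl$, which is anchored at $t$. I expect this to be the main obstacle, together with checking the open-interval boundary cases where some $t'$ lies in the middle.

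\textbf{Case $\zcurl(\delta t_{uv})=l_{uv}$.} The values on the interior lie in $(0,l_{uv})$, so $l_{uv}-\zcurl(\cdot)\in(0,l_{uv})$. This gives \eqref{align_PassThroughTime_Condition1} and \eqref{align_PassThroughTime_Condition2}, so $\delta t_{uv}$ is a pass-through time.

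\textbf{Case $\zcurl(\delta t_{uv})=-l_{uv}$.} Symmetrically, $\delta t_{uv}$ is an inverse pass-through time.

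\textbf{Case $\zcurl(\delta t_{uv})=0$.} Here $-\zcurl(\cdot)$ lies either in $(0,l_{uv})$ or in $(-l_{uv},0)$, which yields a self-loop time or an inverse self-loop time respectively.

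The reverse implications of the three equivalences are immediate, because each transport-time type prescribes the value of $\zcurl(\delta t_{uv})$ by its first defining condition, and the three values $-l_{uv},0,l_{uv}$ are distinct.

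\textbf{Step 3 (uniqueness).} Suppose $\delta t'>0$ is any positive transport time of any of the four kinds. Its first defining condition puts $\delta t'\in A_{uv}(t)$, so $\delta t'\ge\delta t_{uv}$.

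If $\delta t'>\delta t_{uv}$, then $t'':=t-\delta t_{uv}$ lies in the open interval $(t-\delta t',t)$. Applying the second defining condition of $\delta t'$ at $t''$ gives $\int_{t-\delta t'}^{t''}\nu_{uv}=\zcurl(\delta t')-\zcurl(\delta t_{uv})$. Both terms lie in $\{-l_{uv},0,l_{uv}\}$. I will check case by case that this difference cannot lie in the required open interval $(0,l_{uv})$ or $(-l_{uv},0)$.

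In the pass-through case the difference is $l_{uv}-\zcurl(\delta t_{uv})\in\{0,l_{uv},2l_{uv}\}$, none of which lies in $(0,l_{uv})$. The other three cases are analogous. This contradiction forces $\delta t'=\delta t_{uv}$, which proves part 3.
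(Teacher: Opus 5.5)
Your proposal is correct and follows essentially the same route as the paper: continuity of $\int_{t-\delta t}^{t}\nu_{uv}(s)\,ds$ together with $\nu_{uv}(t)\neq 0$ keeps $A_{uv}(t)$ bounded away from $0$, so $A_{uv}(t)$ is closed and its minimum is attained; the intermediate value theorem plus the identity $\int_{t-\delta t_{uv}}^{t-\delta t}\nu_{uv}(s)\,ds = \zcurl(\delta t_{uv})-\zcurl(\delta t)$ gives the classification; and uniqueness holds because a difference of two values in $\{-l_{uv},0,l_{uv}\}$ never lies in $(0,l_{uv})$ or $(-l_{uv},0)$. Two points you leave implicit match the paper's Steps 2.1--2.3: the interior image cannot lie in $(l_{uv},\infty)$ or $(-\infty,-l_{uv})$ (by your Step 1), and $\zcurl(\delta t_{uv})=\pm l_{uv}$ puts the interior values on the matching side (by continuity at $\delta t_{uv}$); the translation you flag as the main obstacle is just additivity of the integral.
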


\noindent Theorem \ref{theorem_ExistenceAndUniquenessOfPositiveTransportTimes} guarantees that the mapping
\begin{align}
\label{align_TransportTime_Functionscurl}
    \scurl: \R \times E  \times F(\R,\R^{\n{e}}) \longrightarrow (0,\infty), ~
    (t,e_{uv},\num) \longmapsto \scurl(t,e_{uv},\num) := \delta t_{uv} 
\end{align}

\noindent is well-defined.
Throughout most parts of this work, $t \in \R$, $e_{uv} \in E$ and $\num = (\nu_e)_{e \in E}$ are known from the context and we use the notation $\delta t_{uv}$ as a shortcut for $\scurl(t,e_{uv},\num)$ to ease notation whenever it is reasonable.
Finally, this allows to define the advection-message-generation function in accordance to the results of Theorems \ref{theorem_PassThroughTime} and \ref{theorem_SelfLoopTime} as
\begin{align*}
    \phi: \R \times E \times F(\R,\R^{\n{n}}) \times F(\R,\R^{\n{e}}) \longrightarrow&~ \R_{\geq 0},\\
    (t,e_{uv},\cm,\num) \longmapsto&~ \phi(t,e_{uv},\cm,\num) 
    :=~ 
    \begin{cases}
        c_u(t - \delta t_{uv})
        & \text{ if } 
        \zcurl(\delta t_{uv}) \in \{-l_{uv},l_{uv}\}
        \\
        c_v(t - \delta t_{uv})
        & \text{ if } 
        \zcurl(\delta t_{uv}) = 0
    \end{cases}
    \quad
    \text{ (Equation \eqref{align_MessageFunction_Advection})}.
\end{align*}

\begin{remark}[Double assignment phenomenon]
\label{remark_DoubleAssignmentPhenomenon}
    Note that through the relation\footnote{
    See, e.g., the proof of Lemma \ref{lemma_InversePassThroughTime}.
    }
    \begin{align*}
        \int_{t - \delta t_{uv}}^{t}
        \nu_{vu}(s) ~ds
        =
        -
        \int_{t - \delta t_{uv}}^{t}
        \nu_{uv}(s) ~ds,
    \end{align*}
    
    two neighboring nodes $v \in V$ and $u \in \Nbh(v)$ will be assigned to the same transport time $\delta t_{uv}$, corresponding to either pass-through and inverse pass-through times (cf. Lemma \ref{lemma_InversePassThroughTime}) or self-loop and inverse self-loop times (cf. Lemma \ref{lemma_InverseSelfLoopTime}). 
    We call this the \textit{double-assignment phenomenon}.
\end{remark}

In the following Section \ref{subsection_MessageAggregationAndUpdate_MixingAtNodes}, we integrate our advection-message-generation function $\phi$ into the coupling conditions (Eq. \eqref{align_CouplingCondition_MixingAtNodes}) and show that the resulting representation of such corresponds to our advection-message-passing algorithm as presented in Section \ref{section_MethodMetricGraphAdvectionMessagePassing} (Eq. \eqref{align_MessagePassing_Advection}) and which automatically handles the double-assignment phenomenon.

\subsubsection{Message Aggregation and Update: Mixing at Nodes}
\label{subsection_MessageAggregationAndUpdate_MixingAtNodes}
 
As investigated in the  previous Section \ref{subsubsection_MessageGeneration_AdvectiveTransportAlongEdges}, 
for a node $v \in V$ and a time $t \in \R$, we are interested in a 
representation of the advection coupling conditions (Eq. \eqref{align_CouplingCondition_MixingAtNodes}) that does not depend on the concentrations $c_{uv}(t,l_{uv})$ at the end $z = l_{uv}$ of the edges $e_{uv}$ for each $u \in \Nbh_-(v,t)$,
but on the concentrations $c_u(t-\delta t_{uv})$ and $c_v(t-\delta t_{uv})$ shifted by suitable \textit{transport times} $\delta t_{uv} \in \R$.
Since in practice, we can only assume to have knowledge of the past, the suitability refers to the requirement that $\delta t_{uv} > 0$ should hold.
Theorem \ref{theorem_ExistenceAndUniquenessOfPositiveTransportTimes} guarantees the existence and uniqueness of such.

In this subsection, we prove that the message-generation function as defined in Equation \eqref{align_MessageFunction_Advection} indeed assigns to each time $t \in \R$ and each edge $e_{uv} \in E$ a message $\phi(t,e_{uv},\cm,\num)$ that determines the correct representation of $c_{uv}(t,l_{uv})$ from the correct source under consideration of the concentrations $\cm$ and the flow field $\num$.
The following lemma will help reaching this goal.

\begin{lemma}[Transport times]
\label{lemma_TransportTimes}
    Let $e_{uv} \in E$ and $t \in \R$ hold.
    In the setting of Theorem \ref{theorem_ExistenceAndUniquenessOfPositiveTransportTimes},
    the following property holds:

    \begin{enumerate}
        \item $\delta t_{uv}$ is  (always \gls*{wrt} $e_{uv}$ and $t$) ...
        \begin{align*}
            & \text{... a pass-through time}
            && \iff
            \zcurl(\delta t_{uv}) \in \{-l_{uv},l_{uv}\}
            \text{ and }
            q_{uv}(t) > 0,
            \\
            & \text{... an inverse pass-through time}
            && \iff
            \zcurl(\delta t_{uv}) \in \{-l_{uv},l_{uv}\}
            \text{ and }
            q_{uv}(t) < 0,
            \\
            & \text{... a self-loop time} 
            && \iff
            \zcurl(\delta t_{uv}) = 0
            \text{ and }
            q_{uv}(t) < 0,
            \\
            &\text{... an inverse self-loop time} 
            && \iff
            \zcurl(\delta t_{uv}) = 0
            \text{ and }
            q_{uv}(t) > 0.
        \end{align*}
    \end{enumerate}
\end{lemma}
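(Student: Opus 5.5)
The plan is to reduce every case to the behaviour of the function $\zcurl$ on the interval $(0,\delta t_{uv})$. We use that $q_{uv}(t)=\alpha_{uv}\nu_{uv}(t)$ with $\alpha_{uv}>0$, so $\sgn q_{uv}(t)=\sgn \nu_{uv}(t)$, and that $\nu_{uv}(t)\neq 0$ in the setting of Theorem \ref{theorem_ExistenceAndUniquenessOfPositiveTransportTimes}. By Assumption \ref{assumption_SmoothnessAssumptions}, $\zcurl$ is continuously differentiable with $\zcurl(0)=0$ and $\zcurl'(\delta t)=\nu_{uv}(t-\delta t)$, so $\zcurl'(0)=\nu_{uv}(t)$.

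\textbf{Step 1 (sign trapping).} First, $\zcurl(\delta t)$ has the sign of $\nu_{uv}(t)$ for all sufficiently small $\delta t>0$. Second, by minimality of $\delta t_{uv}=\min A_{uv}(t)$, $\zcurl$ takes no value in $\{-l_{uv},0,l_{uv}\}$ on $(0,\delta t_{uv})$. By the intermediate value theorem, $\zcurl$ then stays in one connected component of $\R\setminus\{-l_{uv},0,l_{uv}\}$ on that interval. Hence:
\begin{itemize}
\item if $\nu_{uv}(t)>0$, then $\zcurl\bigl((0,\delta t_{uv})\bigr)\subseteq(0,l_{uv})$, and by continuity $\zcurl(\delta t_{uv})\in\{0,l_{uv}\}$;
\item if $\nu_{uv}(t)<0$, then $\zcurl\bigl((0,\delta t_{uv})\bigr)\subseteq(-l_{uv},0)$ and $\zcurl(\delta t_{uv})\in\{-l_{uv},0\}$.
\end{itemize}
In particular, $\zcurl(\delta t_{uv})=l_{uv}$ forces $q_{uv}(t)>0$, and $\zcurl(\delta t_{uv})=-l_{uv}$ forces $q_{uv}(t)<0$.

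\textbf{Step 2 (pass-through cases).} Combine Step 1 with item 2 of Theorem \ref{theorem_ExistenceAndUniquenessOfPositiveTransportTimes}. First, $\delta t_{uv}$ is a pass-through time iff $\zcurl(\delta t_{uv})=l_{uv}$. Second, by Step 1, $\zcurl(\delta t_{uv})=l_{uv}$ holds iff $\zcurl(\delta t_{uv})\in\{\pm l_{uv}\}$ and $q_{uv}(t)>0$, since the alternative $-l_{uv}$ is excluded when $q_{uv}(t)>0$. The inverse pass-through case is symmetric. Note that Theorem \ref{theorem_ExistenceAndUniquenessOfPositiveTransportTimes} already makes the four types mutually exclusive.

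\textbf{Step 3 (self-loop cases).} Let $\zcurl(\delta t_{uv})=0$. For $t'\in(t-\delta t_{uv},t)$ set $s:=t-t'\in(0,\delta t_{uv})$. Then
\[
\int_{t-\delta t_{uv}}^{t'}\nu_{uv}\,ds=\zcurl(\delta t_{uv})-\zcurl(s)=-\zcurl(s).
\]
Condition \eqref{align_SelfLoopTime_Condition2} thus reads $\zcurl(s)\in(-l_{uv},0)$ for all such $s$, and condition \eqref{align_SelfLoopTime_Condition2_Inverse} reads $\zcurl(s)\in(0,l_{uv})$. By Step 1, the first holds exactly when $\nu_{uv}(t)<0$ and the second exactly when $\nu_{uv}(t)>0$. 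Since $\nu_{uv}(t)\neq0$, exactly one of them holds. This gives self-loop $\iff \zcurl(\delta t_{uv})=0$ and $q_{uv}(t)<0$, and inverse self-loop $\iff \zcurl(\delta t_{uv})=0$ and $q_{uv}(t)>0$.

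The main obstacle is Step 1. The definitions of the transport times constrain the integrals $\int_{t-\delta t_{uv}}^{t'}$ measured from the start time, whereas minimality constrains $\zcurl$, which integrates back from $t$. The substitution in Step 3, $\int_{t-\delta t_{uv}}^{t'}\nu_{uv}=\zcurl(\delta t_{uv})-\zcurl(t-t')$, translates between the two and must be applied carefully, together with the local sign argument at $\delta t=0$ that uses $\nu_{uv}(t)\neq0$.
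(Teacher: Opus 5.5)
Your proof is correct, but it is organized differently from the paper's. The paper gets the forward implications from endpoint-sign results it has already proved for arbitrary positive transport times:
\begin{itemize}
\item Lemma~\ref{lemma_PassThroughTime}.4 gives $q_{uv}(t)\geq 0$ for a pass-through time.
\item Lemma~\ref{lemma_SelfLoopTime}.3 gives $q_{uv}(t)\leq 0$ for a self-loop time.
\item The two inverse cases are handled by passing to the reversed edge $e_{vu}$ via Lemmas~\ref{lemma_InversePassThroughTime} and~\ref{lemma_InverseSelfLoopTime} and $q_{vu}=-q_{uv}$.
\end{itemize}
The assumption $q_{uv}(t)\neq 0$ then makes these inequalities strict. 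The reverse implications are proved by contradiction: Theorem~\ref{theorem_ExistenceAndUniquenessOfPositiveTransportTimes}.2 leaves two candidate types, and the wrong one would contradict the forward direction.

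You instead stay on the single edge $e_{uv}$ and work only with $\zcurl$. The local sign at $0$ (from $\zcurl'(0)=\nu_{uv}(t)\neq 0$), together with minimality and the intermediate value theorem, pins $\zcurl\bigl((0,\delta t_{uv})\bigr)$ to the component selected by $\sgn\nu_{uv}(t)$. This sharpens Steps 2.2--2.4 in the proof of Theorem~\ref{theorem_ExistenceAndUniquenessOfPositiveTransportTimes}, which only show that this image lies in \emph{one} of $(-l_{uv},0)$ and $(0,l_{uv})$. The identity $\int_{t-\delta t_{uv}}^{t'}\nu_{uv}(s)\,ds=\zcurl(\delta t_{uv})-\zcurl(t-t')$ then turns the defining interior conditions into statements about that image.

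Both arguments rest on the same local fact: continuity of $\nu_{uv}$ at $t$ with $\nu_{uv}(t)\neq 0$. Your version is self-contained, avoids edge reversal and flow conservation, and proves all four equivalences directly from one observation. The paper's version reuses lemmas whose forward content needs no minimality. Their endpoint information, including the flow sign at the departure time $t-\delta t_{uv}$, is also needed anyway in Theorems~\ref{theorem_PassThroughTime} and~\ref{theorem_SelfLoopTime}.

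One small overstatement: Theorem~\ref{theorem_ExistenceAndUniquenessOfPositiveTransportTimes}.2 does not by itself separate self-loop from inverse self-loop times, since both have $\zcurl(\delta t_{uv})=0$. Their exclusivity comes from the disjoint interior conditions, which your Step~3 supplies. It is not needed for the four equivalences in any case.
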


\noindent Lemma \ref{lemma_TransportTimes} justifies why in the definition of the advection-message-generation function $\phi$ (Eq. \eqref{align_MessageFunction_Advection}), we do not need to distinguish between pass-through and inverse pass-through or self-loop and inverse self-loop times and though automatically handle the double assignment phenomenon (cf. Remark \ref{remark_DoubleAssignmentPhenomenon}):
Taking into account the sign of the flow rate $q_{uv}(t)$ at time $t \in \R$, corresponding counterparts will vanish in the coupling condition due to the summation over inflow neighbors $u \in \Nbh_-(v,t)$ only (cf. Eq. \eqref{align_CouplingCondition_MixingAtNodes}) and in our advection-message-passing algorithm due to the $\relu$-function (cf. Eq. \eqref{align_CouplingCondition_MixingAtNodes_MessagePassingFormulation} and \eqref{align_MessagePassing_Advection}).

Moreover, the assumptions of Theorem \ref{theorem_ExistenceAndUniquenessOfPositiveTransportTimes} and Lemma \ref{lemma_TransportTimes} are naturally satisfied in the contexts where they are applied:
On the one hand, summands where the velocity $\nu_{uv}(t)$ is equal to zero do not contribute by construction of the coupling conditions \eqref{align_CouplingCondition_MixingAtNodes}.
On the other hand, whenever the set $A_{uv}(t)$ is empty (that is, no transport time exists), we assume to have knowledge of the values $c_v(t)$ for the respective times $t$: In cases of initial value problems, these values are known by definition (cf. Remark \ref{remark_InitialValueProblemsAndEfficiency}); for boundary value problems we assume an initial state of zero at non-boundary nodes, these values $c_v(t)$ for the respective times $t$ are then also zero by definition.

Consequently, we can combine the results from the advective transport along edges (Section \ref{subsubsection_MessageGeneration_AdvectiveTransportAlongEdges}) with the results of this Section to obtain a message-passing representation of the coupling conditions (Eq. \eqref{align_CouplingCondition_MixingAtNodes}) which indeed coincides with our final advection-message-passing scheme \eqref{align_MessagePassing_Advection}: 

\begin{theorem}[Physical alignment (full version)]
\label{theorem_PhysicalAlignment_fullVersion}
    Let $v \in V$ and $t \in \R$ hold.
    We assume that Assumption \ref{assumption_SmoothnessAssumptions} holds and that for each $u \in \Nbh(v)$, 
    $A_{uv}(t)$ is not empty 
    (that is, the \gls*{OP} \eqref{align_TransportTime_OP} has a solution).
    If the function $c_{v}$ obeys the mixing at nodes (Eq. \eqref{align_CouplingCondition_MixingAtNodes}) and 
    if for each $u \in \Nbh(v)$, the function $c_{uv}$ obeys the advection edge dynamics (Eq. \eqref{align_EdgeDynamics_Advection}) with inflow continuity condition (Eq. \eqref{align_CouplingCondition_InflowContinuity}), 
    then
    \begin{align*}
        c_v(t) 
        =~
        \sum_{ 
        \substack{
        u \in \Nbh(v) \\ 
        \zcurl(\delta t_{uv}) = \pm l_{uv}
        }}
        w_{uv}(t) \cdot c_u(t - \delta t_{uv})
        +
        \sum_{ 
        \substack{
        u \in \Nbh(v) \\ 
        \zcurl(\delta t_{uv}) = 0
        }}
        w_{uv}(t) \cdot c_v(t - \delta t_{uv})
        =~
        \sum_{u \in \Nbh(v)}
        w_{uv}(t) \cdot \varphi(t,e_{uv},\cm,\num)
    \end{align*}

    \noindent
    holds, where $\delta t_{uv} = \min A_{uv}(t)$ is the transport time as defined in Theorem \ref{theorem_ExistenceAndUniquenessOfPositiveTransportTimes}.
    In other words, Equation \eqref{align_MessagePassing_Advection} with $\phi$ as defined in Equation \eqref{align_MessageFunction_Advection} holds.

\end{theorem}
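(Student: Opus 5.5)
We start from the mixing at nodes (Eq. \eqref{align_CouplingCondition_MixingAtNodes}) and first rewrite it in the message-passing form of Equation \eqref{align_CouplingCondition_MixingAtNodes_MessagePassingFormulation}. Since $q_{uv}(t) = \alpha_{uv}\nu_{uv}(t)$ with $\alpha_{uv} > 0$, we have $\relu(q_{uv}(t)) = q_{uv}(t)$ exactly for $u \in \Nbh_-(v,t)$ and $\relu(q_{uv}(t)) = 0$ otherwise. Hence
\begin{align*}
    c_v(t) = \sum_{u \in \Nbh(v)} w_{uv}(t)\, c_{uv}(t,l_{uv}),
\end{align*}
where only terms with $q_{uv}(t) > 0$ carry nonzero weight. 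So it suffices to show that $c_{uv}(t,l_{uv}) = \phi(t,e_{uv},\cm,\num)$ for every $u$ with $q_{uv}(t) > 0$. Terms with $q_{uv}(t) \le 0$ vanish on both sides of the claimed identity because $w_{uv}(t) = 0$, whatever $\phi$ is. This is exactly how the double-assignment phenomenon of Remark \ref{remark_DoubleAssignmentPhenomenon} is absorbed.

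Fix $u$ with $q_{uv}(t) > 0$, so $\nu_{uv}(t) \neq 0$. Since $A_{uv}(t) \neq \emptyset$, Theorem \ref{theorem_ExistenceAndUniquenessOfPositiveTransportTimes} gives a unique positive transport time $\delta t_{uv} = \min A_{uv}(t)$. Lemma \ref{lemma_TransportTimes} with $q_{uv}(t) > 0$ then leaves only two cases.

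\textbf{Case 1: $\zcurl(\delta t_{uv}) = \pm l_{uv}$.} Here $\delta t_{uv}$ is a positive pass-through time. Theorem \ref{theorem_PassThroughTime} with $\delta t_{uv} > 0$ gives $c_{uv}(t,l_{uv}) = c_{uv}(t-\delta t_{uv},0)$. By Lemma \ref{lemma_PassThroughTime}.4, $q_{uv}(t-\delta t_{uv}) \ge 0$. Inflow continuity (Eq. \eqref{align_CouplingCondition_InflowContinuity}) then yields $c_{uv}(t-\delta t_{uv},0) = c_u(t-\delta t_{uv})$, which is the first branch of $\phi$. Theorem \ref{theorem_PassThroughTime} already states this conclusion, so we mainly need to check that its hypotheses hold.

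\textbf{Case 2: $\zcurl(\delta t_{uv}) = 0$.} Here $\delta t_{uv}$ is a positive inverse self-loop time for $e_{uv}$. By Lemma \ref{lemma_InverseSelfLoopTime}, it is a positive self-loop time for $e_{vu}$. Theorem \ref{theorem_SelfLoopTime}, applied with the roles of $u$ and $v$ swapped and $\delta t > 0$, gives $c_{uv}(t,l_{uv}) = c_v(t-\delta t_{uv})$. This is the second branch of $\phi$. The step uses the symmetry $c_{uv}(t,l_{uv}) = c_{vu}(t,0)$ from Remark \ref{remark_SymmetryOfSignals}, together with inflow continuity on $e_{vu}$ at time $t-\delta t_{uv}$, where $\nu_{vu}(t-\delta t_{uv}) \ge 0$ by Lemma \ref{lemma_SelfLoopTime}.3.

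Substituting both cases into the sum gives the split formula of Theorem \ref{theorem_PhysicalAlignment_fullVersion}, and hence Equation \eqref{align_MessagePassing_Advection}.

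The main obstacle is the boundary degeneracy: inflow continuity is stated only for \emph{strict} inflow neighbors, $q > 0$, while Lemmas \ref{lemma_PassThroughTime}.4 and \ref{lemma_SelfLoopTime}.3 only give $q \ge 0$ at the departure time. If the flow vanishes exactly at $t-\delta t_{uv}$, we would first try to show that the departure flow cannot vanish given the minimality of $\delta t_{uv}$. Failing that, we would use continuity of $c_{uv}$ on $[0,l_{uv}]$ and of $\nu$ (Assumption \ref{assumption_SmoothnessAssumptions}) to pass to the limit along nearby characteristics that start at strictly positive flow.
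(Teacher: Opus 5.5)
Your argument is correct and is essentially the paper's proof. You rewrite the mixing rule with $\relu$-weights so that only $u\in\Nbh_-(v,t)$ contribute, use Lemma~\ref{lemma_TransportTimes} to see that $\delta t_{uv}$ is then a pass-through or an inverse self-loop time, and conclude with Theorem~\ref{theorem_PassThroughTime}, respectively Lemma~\ref{lemma_InverseSelfLoopTime} plus Theorem~\ref{theorem_SelfLoopTime}. You are even slightly more careful than the paper, since you invoke Theorem~\ref{theorem_ExistenceAndUniquenessOfPositiveTransportTimes} only where $\nu_{uv}(t)\neq 0$.

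The zero-departure-flow degeneracy you flag sits inside those earlier theorems, and the paper simply ignores it (cf.\ Remark~\ref{remark_PassThroughTime}). Your first idea cannot work: for example, $\nu_{uv}(s)=s^2$, $t=1$, $l_{uv}=1/3$ gives $\delta t_{uv}=\min A_{uv}(t)=1$ with $\nu_{uv}(t-\delta t_{uv})=0$. Your limiting argument would additionally need right-continuity of $c_u$, which Assumption~\ref{assumption_SmoothnessAssumptions} does not provide.
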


\subsection{Error Bounds on MeGA-MP}
\label{subsection_ErrorBoundsOnMeGA-MP}

In this section, 
we first provide some theory on a special graph structure, trees, and introduce further notation along the way (Section \ref{subsubsection_Preliminaries_TreesWithChildren-AggregationScheme}).
This does not only allow to formalize the full version of Theorem \ref{theorem_InterpolationError}, but also to prove Theorem \ref{theorem_ConvergenceOfModelIterations} and \ref{theorem_InterpolationError} (Section \ref{subsubsection_ConstructionOfInflowTrees}).
Nevertheless, the following sections can also help to get a deeper understanding of our iterative algorithm, Equation \eqref{align_MessagePassing_AdvectionIterations}, whose intuition was already describe in Section \ref{subsection_FinalAlgorithm:MeGA-MP}.

\subsubsection{Preliminaries: Trees with Children-Aggregation Scheme}
\label{subsubsection_Preliminaries_TreesWithChildren-AggregationScheme}

In this section, we first investigate on the intuition behind the message passing scheme \eqref{align_MessagePassing_Advection} (and analogously, Eq. \eqref{align_MessagePassing_AdvectionIterations}), which -- by the definition of the weights $w_{uv}$ (Eq. \eqref{align_CouplingCondition_MixingAtNodes_MessagePassingFormulation}) -- can also be written as
\begin{align*}
    c_v(t)
    =
    \sum_{u \in \Nbh_-(v,t)}
    w_{uv}(t)
    \cdot
    \phi(t,e_{uv},\cm,\num).
\end{align*}

\noindent 
For simplicity, let us assume that the message function $\phi$ outputs $\phi(t,e_{uv},\cm,\num) = c_u(t - \delta t_{uv})$ (cf. Eq. \eqref{align_MessageFunction_Advection}), that is, that at time $t$, we observe a pass-through along the edge $e_{uv}$ and not a an inverse self-loop (cf. Lemma \ref{lemma_TransportTimes}).
In this case, Equation \eqref{align_MessagePassing_Advection} can -- with
(1) a change in notation and
(2) using the more formal notation from Equation \eqref{align_TransportTime_Functionscurl} 
-- be written as
\begin{align*}
    c_v(t)
    \overset{\text{(1)}}{=}
    \sum_{v_1 \in \Nbh_-(v,t)}
    w_{v_1v}(t)
    \cdot
    c_{v_1}(t - \delta t_{{v_1}v})
    \overset{\text{(2)}}{=}
    \sum_{v_1 \in \Nbh_-(v,t)}
    w_{v_1v}(t)
    \cdot
    c_{v_1}(
    \underbrace{
    t - \scurl(t,e_{v_1v},\num)
    }_{
    =: t_1
    }
    )
    .
\end{align*}

For each inflow neighbor $v_1 \in \Nbh_-(v,t)$ of $v$ at time $t$, we can again apply Equation \eqref{align_MessagePassing_Advection}, yielding
\begin{align*}
    c_v(t)
    =
    \sum_{v_1 \in \Nbh_-(v,t)}
    \sum_{v_2 \in \Nbh_-(v_1,t_1)}
    w_{v_1v}(t)
    \cdot
    w_{v_2v_1}(t_1)
    \cdot
    c_{v_2}(
    \underbrace{
    t_1 - \scurl(t_1,e_{v_2v_1},\num)
    }_{
    =: t_2
    }
    ).
\end{align*}

\noindent Intuitively, we can repeat this process as often as we want until a neighborhood $\Nbh_-(v_l,t_l)$ is empty, which by definition is the case if $v_l \in \Vb$ is a source node. What we create along the way are paths $p = ((v_0,t_0) := (v,t),(v_1,t_1),(v_2,t_2),...)$ of both locations $v_l \in V$ and times $t_l \in \R$ for indices $l = 1,2,...$ along which we trace back information that contributes to the signal $c_v(t)$.\footnote{
    Note that the indices $l = 1,2,...$ do \textit{not} necessarily correspond to the discrete times indices $I$ in Section \ref{subsection_FinalAlgorithm:MeGA-MP}.
}
Knowledge of such paths, which we call \textit{inflow paths}, can be helpful in case where the signal $c_v(t)$ at time $t$ is unknown, expressing it through signals $c_{v_l}(t_l)$ at earlier times $t_l < t$.
While the graph $\G$ is a symmetric directed graph (cf. Section \ref{section_Background}), the set of all such inflow paths $\Pa(v,t)$ defines a tree structure\footnote{
    Note that by definition, the transport time is positive (Theorem \ref{theorem_ExistenceAndUniquenessOfPositiveTransportTimes}).
    Therefore, $t > t_1 > t_2 > ...$ holds.
}, 
which we construct formally and iteratively from the root node $(v_0,t_0) = (v,t)$ to leaf nodes in the next Section \ref{subsubsection_ConstructionOfInflowTrees}. 
Before, we provide a general result on trees with children-aggregation aggregation schemes:

\begin{figure}[!h]
    \centering
    \includegraphics[width=0.8\textwidth]{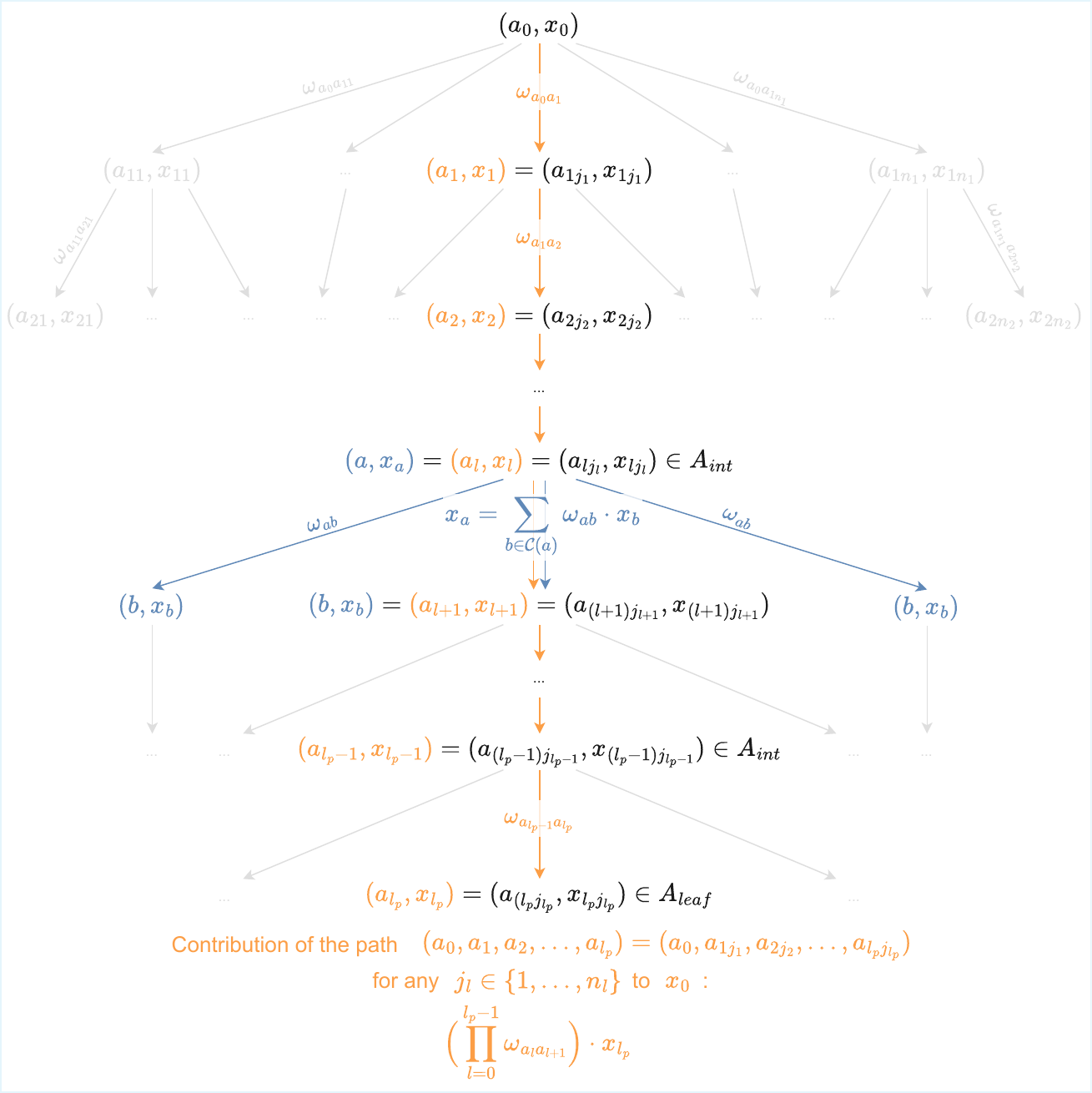}
    \caption{A visualization of the concepts from Theorem \ref{theorem_TreesWithChildrenAggregationScheme}.}
    \label{figure_TreesWithChildren-AggregationScheme}
\end{figure}

\begin{theorem}[Trees with children-aggregation scheme]
\label{theorem_TreesWithChildrenAggregationScheme}
    Let $\T = (A,E)$ be a finite tree with root node $a_0 \in A$, \textbf{int}ernal nodes $\An{int}$ and \textbf{leaf} nodes $\An{leaf}$ (that is, $A = \{a_0\} \sqcup \An{int} \sqcup \An{leaf}$ holds).
    If each node $a \in A$ is associated with a magnitude $x_a \in \R$ that satisfies
    \begin{align*}
            x_a
            =
            \sum_{b \in \Children(a)}
            \omega_{ab} \cdot x_b
            \text{ for all } 
            a \in \{a_0\} \sqcup \An{int},
    \end{align*}

    \noindent 
    where the set $\Children(a) \subset A$ consists of the children of $a \in A$ and $w_{ab}$ is a weight of the edge $e_{ab} \in E$ that depends on both the parent $a \in A$ and its child $b \in \Children(a)$, 
    then the magnitude $x_0 := x_{a_0}$ of the root node $a_0 \in A$ is determined by the magnitudes $x_{l_p} := x_{a_{l_p}}$ of the leaf nodes $a_{l_p} \in \An{leaf}$ by
    \begin{align*}
        x_0
        =
        \sum_{p = (a_0,...,a_{l_p}) \in \Pa(a_0,\An{leaf})}
        \omega(a_0,...,a_{l_p})
        \cdot
        x_{l_p},
    \end{align*}

    \noindent 
    where 
    $\Pa(a_0,\An{leaf})$ is the set of paths connecting the root node $a_0$ to one of its leaf nodes $\An{leaf}$ 
    and
    \begin{align*}
        \omega(a_0,...,a_{l_p})
        :=
        \prod_{l = 0}^{l_p-1}
        \omega_{a_la_{l+1}}
    \end{align*}

    \noindent 
    is the product of weights along a(n) (arbitrary) path $p = (a_0,...,a_{l_p}) \in \Pa$ with length $l_p \in \N$.
\end{theorem}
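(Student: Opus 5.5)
We prove Theorem \ref{theorem_TreesWithChildrenAggregationScheme} by strong induction on the height of the subtree rooted at a node, establishing a statement for every node $a \in A$ and not only for the root. For $a \in A$, let $\T_a$ be the subtree rooted at $a$. Let $\Pa(a,\An{leaf})$ denote the set of paths $(a = b_0, b_1, \dots, b_{l})$ in $\T_a$ that lead from $a$ to a leaf, each step going from a parent to a child, and let $\omega(b_0,\dots,b_l) := \prod_{j=0}^{l-1}\omega_{b_jb_{j+1}}$, with the empty product equal to $1$. The claim to prove is
\begin{align*}
    x_a = \sum_{p=(b_0,\dots,b_{l_p}) \in \Pa(a,\An{leaf})} \omega(b_0,\dots,b_{l_p}) \cdot x_{b_{l_p}}
    \quad \text{for all } a \in A.
\end{align*}
Taking $a = a_0$ gives the theorem. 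Finiteness of $\T$ ensures every subtree has finite height, so the induction is well-founded and all sums are finite.

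\textbf{Base case (height $0$).} Here $a$ is a leaf. The only path from $a$ to a leaf is the trivial path $(a)$, whose weight is the empty product $1$, so the right-hand side equals $x_a$. There is one degenerate situation to flag: the root may itself be a leaf, when $\T$ is a single node. The hypothesis imposes the aggregation equation on $a_0$, but with $\Children(a_0)=\emptyset$ that equation would force $x_{a_0}=0$. We interpret the statement so that a childless root is treated as a leaf, with the trivial path. Alternatively one can assume, as the decomposition $A = \{a_0\}\sqcup\An{int}\sqcup\An{leaf}$ suggests, that the root has children. We note this convention explicitly.

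\textbf{Inductive step.} Suppose $a \in \{a_0\}\sqcup\An{int}$, so $a$ has at least one child, and the claim holds for all nodes of smaller height. In particular it holds for every $b \in \Children(a)$. We substitute the inductive formula for each $x_b$ into the hypothesis $x_a = \sum_{b\in\Children(a)}\omega_{ab}x_b$ and distribute $\omega_{ab}$ over the inner sum. The key combinatorial step is that prepending $a$ defines a bijection
\begin{align*}
    \bigsqcup_{b \in \Children(a)} \Pa(b,\An{leaf}) \longrightarrow \Pa(a,\An{leaf}), \qquad (b,\dots,b_{l}) \longmapsto (a,b,\dots,b_{l}).
\end{align*}
Three facts justify this. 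Every root-to-leaf path starting at a non-leaf $a$ passes through exactly one child first. Distinct children have disjoint subtrees, because $\T$ is a tree. The weight is multiplicative along the path, $\omega(a,b,\dots,b_l) = \omega_{ab}\,\omega(b,\dots,b_l)$. Reindexing the double sum along this bijection then yields the claim for $a$.

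The only real subtlety is making this bijection and disjointness precise, which uses the uniqueness of paths in a tree, together with handling the degenerate root case above. The rest is bookkeeping.
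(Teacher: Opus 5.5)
Your proof is correct, but it runs the induction in the opposite direction from the paper. You strengthen the claim to every subtree $\T_a$ and induct bottom-up on height. Each inductive step is then one exact reindexing along the bijection $\bigsqcup_{b\in\Children(a)}\Pa(b,\An{leaf})\to\Pa(a,\An{leaf})$ obtained by prepending $a$.

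The paper instead works only at the root and inducts top-down on a depth cutoff $l$. It proves a truncated expansion of $x_{a_0}$: the contributions of all root-to-leaf paths of length at most $l$, plus a frontier term $\omega(a_0,\dots,a_l)\,x_{a_l}$ for paths that continue below depth $l$. Each step expands the frontier by one level using the aggregation equation at the internal nodes $a_l$. It concludes by taking $l=\max_{p}l_p$, where the frontier is empty.

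The paper's route yields the truncated identity at every depth $l$. That is exactly the shape needed after finitely many message-passing iterations, and it is what the case $k<\max_p l_p$ in the proof of Theorem~\ref{theorem_ConvergenceOfModelIterations} implicitly uses.

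Your route gives exact bookkeeping. The paper's frontier term must be indexed by distinct depth-$l$ prefixes, i.e.\ internal nodes at depth $l$. It is instead written as a sum over full paths with $l_p>l$, which read literally counts each prefix once per leaf beneath it. Your bijection handles these multiplicities automatically.

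One small remark: your caveat about a childless root is unnecessary. The decomposition $A=\{a_0\}\sqcup\An{int}\sqcup\An{leaf}$ makes such a root a non-leaf, so $\Pa(a_0,\An{leaf})=\emptyset$ and both sides of the identity equal $0$.
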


A visualization of such a tree can be found in Figure \ref{figure_TreesWithChildren-AggregationScheme}.

\subsubsection{Construction of Inflow Trees}
\label{subsubsection_ConstructionOfInflowTrees}

In this section, we first provide some intermediate results -- Theorem \ref{theorem_AdvectionMessagePassing_closedForm} and Theorem \ref{theorem_AdvectionMessagePassingIterations_closedForm} -- that help proving Theorem \ref{theorem_ConvergenceOfModelIterations} and \ref{theorem_InterpolationError}.
In order to prove the former, we construct suitable trees in the sense of Theorem \ref{theorem_TreesWithChildrenAggregationScheme} as already motivated in Section \ref{subsubsection_Preliminaries_TreesWithChildren-AggregationScheme}:

\begin{definition}[Inflow tree and interpolation inflow tree]
\label{definition_InflowTreeAndInterpolationInflowTree}
    Let $\G = (V,E)$ be an advection metric graph as defined in Section \ref{section_Background}.
    Let $v \in V$ and $t \in \R$ hold.

    We define the inflow tree $\T(v,t)$ of $v$ at time $t$ using the notation from Theorem \ref{theorem_TreesWithChildrenAggregationScheme} by iteratively defining the nodes and children per layer $l \in \N_0$ of the tree until a stopping criteria is fulfilled:

    \begin{itemize}
        \item $l = 0$-th layer:
        We choose the root node $a_0 = (v_0,t_0) := (v,t)$.

        \item $l+1$-th layer:
        Given a parent node $a_l = (u_l,v_l,t_l)$\footnote{
            Note that the definition of $a_{l+1}$ does not depend on $u_l$,
            which is why this step is well-defined for $l=0$ and $l+1=1$, too. 
        } in the $l$-th layer,
        we choose its children $a_{l+1} = (u_{l+1},v_{l+1},t_{l+1}) \in \Children(a_l)$ as each combination of
        
        \begin{itemize}
            \item an inflow neighbor $u_{l+1} \in \Nbh_-(v_l,t_l)$ of $v_l$ at time $t_l$ and

            \item the corresponding shifted time $t_{l+1} := t_l - \scurl(t_l,e_{u_{l+1}v_l},\num)$ obtained through the time shift applied to $t_l$ using the (positive) transport time $\scurl(t_l,e_{u_{l+1}v_l},\num) > 0$ (cf. Eq. \eqref{align_TransportTime_Functionscurl}) along the edge $e_{u_{l+1}v_l} \in E$ and

            \item the corresponding source node\footnote{
                In the former case,
                we observe a pass-through along the edge $e_{u_{l+1}v_l}$,
                that is, information flows from $v_{l+1} = u_{l+1}$ to $v_l$.
                In the latter case,
                we observe an inverse self-loop along the edge $e_{u_{l+1}v_l}$,
                that is, information flows from $v_{l+1} = v_l$ to $v_l$.
                For more details, see Section \ref{subsection_PhysicalDerivationOfMeGA-MP_Appendix}.
            }
            \begin{align}
            \label{align_(Interpolation)InflowTree_SourceNode}
                v_{l+1} 
                :=
                \begin{cases}
                    u_{l+1}
                    & \text{ if }
                    \zcurl(\scurl(t_l,e_{u_{l+1}v_l},\num)) \neq 0,
                    \\
                    v_l
                    & \text{ if }
                    \zcurl(\scurl(t_l,e_{u_{l+1}v_l},\num)) = 0.
                \end{cases}
            \end{align}
        \end{itemize}

        \item We stop the construction after an $a_{l+1} = (u_{l+1},v_{l+1},t_{l+1})$ if a stopping criteria is fulfilled, which for now can be any stopping criteria, or until naturally, the neighborhood $\Nbh_-(v_{l+1},t_{l+1})$ is empty, which is by definition the case if $v_{l+1} \in \Vb$ is a source node.

        \item We denote the set of paths connecting the root node $a_0 = (v_0,t_0) = (v,v,t)$ to one of its leaf nodes $\An{leaf}$ by $\Pa(v,t)$, which is an abbreviation for $\Pa(a_0,\An{leaf})$.
    \end{itemize}

    We similarly define the interpolation inflow tree $\That(v,t)$ of $v$ at time $t$ using the notation from Theorem \ref{theorem_TreesWithChildrenAggregationScheme} by iteratively defining the nodes and children per layer $l \in \N_0$ of the tree until a stopping criteria is fulfilled:

    \begin{itemize}
        \item $l = 0$-th layer:
         We choose the root node $a_0 = (v_0,t_0) := (v,t)$.

        \item $l+1$-th layer:
        Given a parent node $a_l = (u_l,v_l,t_l)$ in the $l$-th layer, 
        we choose its children $a_{l+1} = (u_{l+1},v_{l+1},t_{l+1}) \in \Children(a_l)$ as each combination of
        
        \begin{itemize}
            \item an inflow neighbor $u_{l+1} \in \Nbh_-(v_l,t_l)$ of $v_l$ at time $t_l$ and

            \item an corresponding interpolation time $t_{l+1} \in \{t_j,t_{j+1}\}$ for the $j \in I$ for which $t_l - \scurl(t_l,e_{u_{l+1}v_l},\num) \in [t_j,t_{j+1})$ holds (cf. Eq. \eqref{align_MessageFunction_Advection_LinearInterpolation})\footnote{
                 Similar to the construction to the inflow tree $\T(v,t)$,
                $t_l - \scurl(t_l,e_{u_{l+1}v_l},\num)$ is the -- possibly not in the observable, discrete subdomain of $\R$ lying -- shifted time obtained through the time shift applied to $t_l$ using the (positive) transport time $\scurl(t_l,e_{u_{l+1}v_l},\num) > 0$ (cf. Eq. \eqref{align_TransportTime_Functionscurl}) along the edge $e_{u_{l+1}v_l} \in E$.
                Note that for $l \in \N$, $t_l$ already corresponds to one of the discrete time points that are known, that is, there exists an $j \in I$ such that $t_l = t_j$ holds.
            } and

            \item the corresponding source node $v_{l+1}$ analogously to Equation \eqref{align_(Interpolation)InflowTree_SourceNode}.\footnote{
                Note that the definition of $v_{l+1}$ does not depend on $t_{l+1}$, which is why the corresponding source node $v_{l+1}$ of both $t_{l+1} = t_j$ and $t_{l+1} = t_{j+1}$ is the same.
            }
        \end{itemize}

    \item We stop the construction after an $a_{l+1} = (u_{l+1},v_{l+1},t_{l+1})$ for which $v_{l+1} \in \Vb$ or $t_{l+1} \in  \Tn{hi} = \{t_1,...,t_i\}$ holds.\footnote{
        Note that similar to the construction to the inflow tree $\T(v,t)$, we can also chose any other suitable stopping criteria. 
        We chose the one that we will need later. 
        Also note that with this stopping criterium, if $v \in \Vb$ or $t \in \Tn{hi}$ holds, the creation of the tree $\That(v,t)$ stops immediately and does only consist of the path $p = ((v_0,t_0) = (v,t))$ of length zero.
    }

    \item We denote the set of paths connecting the root node $a_0 = (v_0,t_0) = (v,t)$ to one of its leaf nodes $\An{leaf}$ by $\Pahat(v,t)$, which is an abbreviation for $\Pahat(a_0,\An{leaf})$.

    \end{itemize}
\end{definition}

\begin{remark}[Finite depths of interpolation inflow tree]
\label{remark_FiniteDepthOfInterpolationInflowTrees} 
    We want to explicitly highlight that for each $v \in V$ and $t \in \R$, the stopping criteria for the interpolation inflow tree $\That(v,t)$ is fulfilled at some point.
    In other words, the interpolation inflow tree $\That(v,t)$ has a finite depth and each path $p = ((v_0,t_0)=(v,t),(u_1,v_1,t_1),...,(u_{l_p},v_{l_p},t_{l_p}))\in \Pahat(v,t)$ has a finite length $l_p \in \N$.
    This is because by definition of the \gls*{OP} \eqref{align_TransportTime_OP}, we assume positive transport times (also see discussion above Theorem \ref{theorem_PhysicalAlignment_fullVersion}).
    Consequently, $t = t_0 > t_1 > t_2 > ...$ holds and for arbitrary $t \in \R$, at some point, $t_{l_p} \in \Tn{hi}$ must hold.
\end{remark}

We now leverage these trees together with Theorem \ref{theorem_TreesWithChildrenAggregationScheme} to prove the following two intermediate results, which allow to reconstruct the magnitudes $c_v(t)$ and $\chat_v^{(k)}(t)$ of interest along inflow paths:

\begin{theorem}[Closed form of advection message passing]
\label{theorem_AdvectionMessagePassing_closedForm}
    Let $v \in V$ and $t \in \R$ hold.
    Let $\T(v,t)$ be the inflow tree of $v$ at time $t$ as defined in Definition \ref{definition_InflowTreeAndInterpolationInflowTree}.
    If $c_v(t)$ is defined by Equation \eqref{align_MessagePassing_Advection}, then
    \begin{align}
    \label{align_MessagePassing_Advection_closedForm}
        c_v(t)
        =~
        \sum_{ p = ((v_0,t_0)=(v,t),(u_1,v_1,t_1),...,(u_{l_p},v_{l_p},t_{l_p})) \in \Pa(v,t) } ~
        \left(
        \prod_{l = 0}^{l_p-1}  
        w_{u_{l+1}v_l}(t_l)
        \right)
        \cdot
        c_{v_{l_p}}(t_{l_p})
    \end{align}

    \noindent 
    with weights $w_{u_{l+1}v_l}(t_l)$ according to Equation \eqref{align_CouplingCondition_MixingAtNodes_MessagePassingFormulation} holds.
\end{theorem}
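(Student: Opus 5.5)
The plan is to reduce the statement to Theorem \ref{theorem_TreesWithChildrenAggregationScheme} applied to the inflow tree $\T(v,t)$ of Definition \ref{definition_InflowTreeAndInterpolationInflowTree}. To each tree node $a_l = (u_l,v_l,t_l)$ I will attach the magnitude $x_{a_l} := c_{v_l}(t_l)$, and to each tree edge from a parent $a_l$ to a child $a_{l+1}$ the weight $\omega_{a_l a_{l+1}} := w_{u_{l+1}v_l}(t_l)$. Once the hypothesis of Theorem \ref{theorem_TreesWithChildrenAggregationScheme} is verified, its conclusion is exactly Equation \eqref{align_MessagePassing_Advection_closedForm}. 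The path weight $\omega(a_0,\dots,a_{l_p})$ becomes $\prod_{l=0}^{l_p-1} w_{u_{l+1}v_l}(t_l)$, and the leaf magnitude becomes $c_{v_{l_p}}(t_{l_p})$.

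The key step is to check the children-aggregation identity $x_{a_l} = \sum_{b\in\Children(a_l)} \omega_{a_l b} x_b$ at every non-leaf node $a_l$. I will apply Equation \eqref{align_MessagePassing_Advection} at the pair $(v_l,t_l)$. Since $\relu$ kills every term with $q_{uv_l}(t_l)\le 0$, the sum runs only over $u\in\Nbh_-(v_l,t_l)$. The children of $a_l$ are in bijection with exactly these inflow neighbours $u_{l+1}$.

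For each such child, the message $\phi(t_l,e_{u_{l+1}v_l},\cm,\num)$ equals $c_{v_{l+1}}(t_{l+1})$. This holds because $t_{l+1} = t_l - \scurl(t_l,e_{u_{l+1}v_l},\num) = t_l - \delta t_{u_{l+1}v_l}$, and because the case split on $\zcurl$ in Equation \eqref{align_(Interpolation)InflowTree_SourceNode} matches the case split in Equation \eqref{align_MessageFunction_Advection} verbatim. If $\zcurl\ne0$, i.e. $\zcurl=\pm l$, then $v_{l+1}=u_{l+1}$. If $\zcurl=0$, then $v_{l+1}=v_l$. Hence $c_{v_l}(t_l) = \sum_{u_{l+1}} w_{u_{l+1}v_l}(t_l)\, c_{v_{l+1}}(t_{l+1})$, which is the required identity.

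Two technical points remain.

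\textbf{Finiteness of the tree.} Theorem \ref{theorem_TreesWithChildrenAggregationScheme} requires a finite tree. The construction of $\T(v,t)$ allows an arbitrary stopping criterion, so I will take the tree to be truncated by such a criterion, for instance a depth bound or reaching $\Vb$. The branching is finite because $\G$ is finite. Internal nodes are by construction those where we expand, so the identity is only needed there; leaves need no condition.

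\textbf{Well-definedness of the transport times.} Each $\scurl(t_l,\cdot)$ must exist, and I will use Theorem \ref{theorem_ExistenceAndUniquenessOfPositiveTransportTimes} for this. It applies because $u_{l+1}\in\Nbh_-$ gives $\nu\neq0$, and non-emptiness of $A$ is assumed wherever Equation \eqref{align_MessagePassing_Advection} is used.

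I expect the main obstacle to be this bookkeeping: stating precisely that Equation \eqref{align_MessagePassing_Advection} is assumed to hold at every $(v_l,t_l)$ appearing as an internal node, not only at the root. It also requires matching the source-node convention with the message function in the inverse self-loop case. Both are handled by the case analysis above together with Lemma \ref{lemma_TransportTimes}. After that, the proof is a direct invocation of Theorem \ref{theorem_TreesWithChildrenAggregationScheme}.
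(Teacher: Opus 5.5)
Your proposal is correct and follows the paper's proof step for step. You attach $x_{a_l}=c_{v_l}(t_l)$ and $\omega_{a_la_{l+1}}=w_{u_{l+1}v_l}(t_l)$, verify the children-aggregation identity at each internal node by applying Equation \eqref{align_MessagePassing_Advection} at $(v_l,t_l)$ and matching the $\zcurl$ case split of Equation \eqref{align_MessageFunction_Advection} with the source-node rule \eqref{align_(Interpolation)InflowTree_SourceNode}, and then invoke Theorem \ref{theorem_TreesWithChildrenAggregationScheme}. Your explicit handling of the finiteness of $\T(v,t)$ makes precise a point the paper leaves implicit, whereas the appeal to Lemma \ref{lemma_TransportTimes} is unnecessary, since the identity is purely algebraic once the two case splits are matched.
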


\begin{theorem}[Closed form of advection message passing iterations]
\label{theorem_AdvectionMessagePassingIterations_closedForm}
    Let $v \in V$ and $t \in T = \Tn{hi} \cup \Tn{fu}$ hold.
    Let $\That(v,t)$ be the interpolation inflow tree of $v$ at time $t$ as defined in Definition \ref{definition_InflowTreeAndInterpolationInflowTree}.
    Let $k \geq \max_{p \in \Pahat(v,t)} l_p$ hold.
    If $\chat_v^{(k)}(t)$ is defined by Equation \eqref{align_MessagePassing_AdvectionIterations}, then
    \begin{align}
    \label{align_MessagePassing_AdvectionIterations_closedForm}
        \chat_v^{(k)}(t)
        =~
        \sum_{ p = ((v_0,t_0)=(v,t),(u_1,v_1,t_1),...,(u_{l_p},v_{l_p},t_{l_p})) \in \Pahat(v,t) } ~
        \left(
        \prod_{l = 0}^{l_p-1}  
        w_{u_{l+1}v_l}(t_l)
        \cdot
        a(t_{l+1})
        \right)
        \cdot
        \chat_{v_{l_p}}^{(k-l_p)}(t_{l_p})
    \end{align}

    \noindent 
    with weights $w_{u_{l+1}v_l}(t_l)$ and $a(t_{l+1})$ according to Equation \eqref{align_CouplingCondition_MixingAtNodes_MessagePassingFormulation} and \eqref{align_LinearInterpolation}, respectively, holds.
\end{theorem}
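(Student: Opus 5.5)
We would prove the closed form for the iterates by applying Theorem \ref{theorem_TreesWithChildrenAggregationScheme} to the interpolation inflow tree $\That(v,t)$, with node magnitudes chosen so that the tree recursion is exactly one step of Equation \eqref{align_MessagePassing_AdvectionIterations}. Concretely, to a tree node $a_l=(u_l,v_l,t_l)$ in layer $l$ we assign the magnitude $x_{a_l} := \chat_{v_l}^{(k-l)}(t_l)$. The layer index is thus tied to the iteration index, which decreases by one per layer. To the tree edge from $a_l$ to a child $a_{l+1}=(u_{l+1},v_{l+1},t_{l+1})$ we assign the weight $\omega_{a_la_{l+1}} := w_{u_{l+1}v_l}(t_l)\cdot a(t_{l+1})$. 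Here $a(t_{l+1})$ is the linear-interpolation coefficient from Equation \eqref{align_LinearInterpolation} belonging to whichever of the two grid points $t_j,t_{j+1}$ the child uses. The assumption $k\ge \max_{p} l_p$ guarantees $k-l\ge 0$ along every path, so all magnitudes are well defined.

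The key step is to verify the children-aggregation identity $x_{a_l}=\sum_{b\in\Children(a_l)}\omega_{a_lb}x_b$ at every non-leaf node $a_l$. By the stopping rule in Definition \ref{definition_InflowTreeAndInterpolationInflowTree}, a non-leaf node satisfies $v_l\in V\setminus\Vb$ and $t_l\in\Tn{fu}$, and $k-l\ge 1$ since the node has descendants at depth at most $k$. Hence the second case of Equation \eqref{align_MessagePassing_AdvectionIterations} applies:
\[
\chat_{v_l}^{(k-l)}(t_l)=\sum_{u\in\Nbh_-(v_l,t_l)}w_{uv_l}(t_l)\,\varphi(t_l,e_{uv_l},\cmhat^{(k-l-1)},\num).
\]
We then expand $\varphi$ via Equation \eqref{align_MessageFunction_Advection_LinearInterpolation}. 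The branch selecting $u$ or $v_l$ is exactly the source-node rule in Equation \eqref{align_(Interpolation)InflowTree_SourceNode}, the two interpolation terms are exactly the two children with $t_{l+1}\in\{t_j,t_{j+1}\}$, and the inner sum over $\Nbh_-(v_l,t_l)$ is exactly the enumeration of $u_{l+1}$. So each summand equals $\omega_{a_la_{l+1}}\,\chat_{v_{l+1}}^{(k-l-1)}(t_{l+1})$, which is $\omega_{a_la_{l+1}}x_{a_{l+1}}$. One technicality must be handled here. If the shifted time lands exactly on $t_j$, or if $a(t_{j+1})=0$, then either the two children coincide or one carries weight zero. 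We treat the children as formally distinct tree nodes, which the tree formalism allows, and note that a zero-weight child is harmless.

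Once the identity holds at all non-leaf nodes, Theorem \ref{theorem_TreesWithChildrenAggregationScheme} gives $x_{a_0}=\sum_{p}\prod_{l=0}^{l_p-1}\omega_{a_la_{l+1}}\,x_{a_{l_p}}$. Substituting the definitions yields Equation \eqref{align_MessagePassing_AdvectionIterations_closedForm}, with root magnitude $x_{a_0}=\chat_v^{(k)}(t)$ and leaf magnitudes $\chat_{v_{l_p}}^{(k-l_p)}(t_{l_p})$. Finiteness of the tree, which Theorem \ref{theorem_TreesWithChildrenAggregationScheme} requires, follows from Remark \ref{remark_FiniteDepthOfInterpolationInflowTrees}.

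The main obstacle is the bookkeeping in the middle step. Tree nodes are triples $(u_l,v_l,t_l)$ and may repeat the same $(v_l,t_l)$ in different branches, so the magnitudes must be genuinely indexed by tree nodes, with the iteration level carried through the depth $l$, rather than by pairs $(v,t)$. The matching of the interpolation weights $a(\cdot)$ with the correct child also has to be made precise.
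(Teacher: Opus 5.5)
Your proposal is correct and is essentially the paper's proof: it uses the same magnitudes $x_{a_l}=\chat_{v_l}^{(k-l)}(t_l)$ and the same edge weights $\omega_{a_la_{l+1}}=w_{u_{l+1}v_l}(t_l)\cdot a(t_{l+1})$, verifies the children-aggregation identity by expanding $\varphi$ via Equation~\eqref{align_MessageFunction_Advection_LinearInterpolation}, and then applies Theorem~\ref{theorem_TreesWithChildrenAggregationScheme}. Your extra remarks make explicit two points the paper's proof leaves implicit: why every non-leaf node falls into the second case of Equation~\eqref{align_MessagePassing_AdvectionIterations} with $k-l\ge 1$, and why the magnitudes must be indexed by tree positions rather than by triples.
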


Theorem \ref{theorem_AdvectionMessagePassingIterations_closedForm} already allows a proof of Theorem \ref{theorem_ConvergenceOfModelIterations}, which is given together with the other proofs in Appendix \ref{section_Proofs}.

For the proof of Theorem \ref{theorem_InterpolationError}, we need to make some further assumptions:
While Equation \eqref{align_MessagePassing_Advection_closedForm} and \eqref{align_MessagePassing_AdvectionIterations_closedForm} are already of similar structure,
the number of paths $\phat \in \Pahat(v,t)$ grows exponentially by a factor of $2^l$, where $l$ is the depth of the tree $T(v,t)$ and $\That(v,t)$, as compared to the number of paths $p \in P(v,t)$.
This is due the fact that per time that we observe in the inflow tree $T(v,t)$, we observe two interpolation times in the interpolation inflow tree $\That(v,t)$ (cf. Definition \ref{definition_InflowTreeAndInterpolationInflowTree}).
To make each summand in Equation \eqref{align_MessagePassing_AdvectionIterations_closedForm} comparable to a summand in Equation \eqref{align_MessagePassing_Advection_closedForm}, we need to make the following assumptions:

\begin{assumption}[Identifiable paths]
\label{assumption_IdentifiablePaths}
    Let $v \in V$ and $t \in \R$ hold.
    Let $T(v,t)$ and $\That(v,t)$ be the inflow tree and the interpolation inflow tree from Definition \ref{definition_InflowTreeAndInterpolationInflowTree}, respectively.
    We assume that each path 
    $\phat = ((\vhat_0,\that_0)=(v,t),(\uhat_1,\vhat_1,\that_1)...,(\uhat_{l_p},\vhat_{l_p},\that_{l_p})) \in \Pahat(v,t)$ 
    is uniquely identifiable with a path 
    $p = ((v_0,t_0)=(v,t),(u_1,v_1,t_1),...,(u_{l_p},v_{l_p},t_{l_p})) \in \Pa(v,t)$ 
    in the sense that 
    $\uhat_l = u_l$ and $\vhat_l = v_l$
    holds for all $l = 1,...,l_p$.
\end{assumption}

\begin{assumption}[Equal time for jumping into history]
\label{assumption_EqualtTimeForJumpingIntoHistory}
    Let $v \in V$ and $t \in \R$ hold.
    Let $\That(v,t)$ be the interpolation inflow tree from Definition \ref{definition_InflowTreeAndInterpolationInflowTree}.
    In the definition of each $t_{l+1} \in \{t_j,t_{j+1}\}$, we assume that $t_j, t_{j+1} \in \Tn{fu}$ or $t_j,t_{j+1} \in \Tn{hi}$ holds.
\end{assumption}

Assumption \ref{assumption_IdentifiablePaths} is satisfied as long as the inflow neighborhoods $\Nbh_-(v_l,t_l)$, from which we choose an inflow neighbor $u_{l+1}$ when creating a path in $\T(v,t)$, are equal to the corresponding neighborhoods $\Nbh_-(\vhat_l,\that_l)$, from which we choose an inflow neighbor $\uhat_{l+1}$ when creating a path in $\That(v,t)$. This is guaranteed if the underlying flow field $\num$ that defines these neighborhoods does not change its orientation during the time intervals $[t_j,t_{j+1}]$ for which $t_l \in [t_j,t_{j+1})$ holds.
If the assumption holds, it allows the following definition.

\begin{definition}[Joint paths]
\label{definition_JointPaths}
    Let $v \in V$ and $t \in \R$ hold.
    Let $T(v,t)$ and $\That(v,t)$ be the inflow tree and the interpolation inflow tree from Definition \ref{definition_InflowTreeAndInterpolationInflowTree}, respectively.
    Let assumption \ref{assumption_IdentifiablePaths} hold. 
    If the path 
    $\phat = ((\vhat_0,\that_0)=(v,t),(\uhat_1,\vhat_1,\that_1)...,(\uhat_{l_p},\vhat_{l_p},\that_{l_p})) \in \Pahat(v,t)$ 
    is uniquely identifiable with the path
    $p = ((v_0,t_0)=(v,t),(u_1,v_1,t_1),...,(u_{l_p},v_{l_p},t_{l_p})) \in \Pa(v,t)$,
    we define the joint path of $\phat$ by
    \begin{align}
    \label{align_JointPaths}
        \ptilde = ((v_0,t_0)=(v,t),(u_1,v_1,t_1,\that_1),...,(u_{l_p},v_{l_p},t_{l_p},\that_{l_p})).
    \end{align}

    \noindent 
    We can still consider $\ptilde$ as an element of $\That(v,t)$, since we only added information that are unused, in, e.g., Theorem \ref{theorem_AdvectionMessagePassingIterations_closedForm}. 
\end{definition}

\noindent 
Moreover, Assumption \ref{assumption_EqualtTimeForJumpingIntoHistory} guarantees that there exist two such paths that are exactly equal up to the last $\that_{l_p} \in \{t_j,t_{j+1}\}$ for a $j \in I$ in the sense of Definition \ref{definition_InflowTreeAndInterpolationInflowTree}.
Finally, using the property that per such two paths ending in such two left and right interpolation bounds, the factors $a(t_j)$ and $a(t_{j+1})$ add up to one, we can canonically extend the sum in Equation \eqref{align_MessagePassing_Advection_closedForm} to the sum in Equation \eqref{align_MessagePassing_AdvectionIterations_closedForm} using the joint paths.
This allows to prove Theorem \ref{theorem_InterpolationError}, which in its full version is phrased as follows:

\begin{theorem}[Interpolation error (full version)]
\label{theorem_InterpolationError_fullVersion}
    Let $v \in V$ and $t \in T = \Tn{hi} \cup \Tn{fu}$ hold.
    Let $\T(v,t)$ and $\That(v,t)$ be the inflow tree and the interpolation inflow tree of $v$ at time $t$ as defined in Definition \ref{definition_InflowTreeAndInterpolationInflowTree}, respectively.
    Let Assumption \ref{assumption_IdentifiablePaths} and \ref{assumption_EqualtTimeForJumpingIntoHistory} hold.
    Then the prediction error in-between $c_v(t)$ and $\chat_v(t)$ is given by
    \begin{align*}
        &~~~
        |c_v(t) - \chat_v(t)| 
        \\
        \leq&~
        \sum_{ \ptilde \in \Pahat(v,t) } ~
        a(\that_1,...,\that_{l_p})
        \cdot
        \left(
        \underbrace{
        |c_{v_{l_p}}(t_{l_p}) - c_{v_{l_p}}(\that_{l_p})|
        }_{
        \text{Temporal interpolation error}
        }
        +
        |c_{v_{l_p}}(t_{l_p})|
        \cdot
        \underbrace{
        |w(t_0,...,t_{l_p-1}) - w(\that_0,...,\that_{l_p-1})|
        }_{
        \text{Accumulated weighting error}
        }
        \right)
    \end{align*}

    \noindent
    with joint paths $\ptilde = ((v_0,t_0)=(v,t),(u_1,v_1,t_1,\that_1),...,(u_{l_p},v_{l_p},t_{l_p},\that_{l_p})) \in \Pahat(v,t)$ according to Definition \ref{definition_JointPaths},
    \begin{align*}
        a(\that_1,...,\that_{l_p})
        :=
        \prod_{l = 0}^{l_p-1}
        a(\that_{l+1})
        \quad \text{ and } \quad
        w(\that_0,...,\that_{l_p-1})
        :=
        \prod_{l = 0}^{l_p-1}  
        w_{u_{l+1}v_l}(\that_l)
    \end{align*}

    \noindent 
    with weights $w_{u_{l+1}v_l}(t_l)$ and $a(t_{l+1})$ according to Equation \eqref{align_CouplingCondition_MixingAtNodes_MessagePassingFormulation} and \eqref{align_LinearInterpolation}, respectively.
    If the temporal interpolation error is zero, we obtain $|c_v(t) - \chat_v(t)| = 0$.
\end{theorem}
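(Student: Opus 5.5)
We would prove the bound by writing both $c_v(t)$ and $\chat_v(t)$ as sums over paths of the same interpolation inflow tree, and then comparing the two sums path by path.

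\textbf{Closed form of $\chat_v(t)$.} Theorem \ref{theorem_AdvectionMessagePassingIterations_closedForm} expresses $\chat_v^{(k)}(t)$ as a sum over $\Pahat(v,t)$. The terminal factor is $\chat_{v_{l_p}}^{(k-l_p)}(\that_{l_p})$. The leaf $(\vhat_{l_p},\that_{l_p})$ satisfies $\vhat_{l_p}\in\Vb$ or $\that_{l_p}\in\Tn{hi}$. By the first and second cases of Equation \eqref{align_MessagePassing_AdvectionIterations}, $\chat^{(m)}$ is nonzero at such a leaf only for $m=0$, and there it equals the true value $c_{v_{l_p}}(\that_{l_p})$. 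Summing over $k$, as in the last line of \eqref{align_MessagePassing_AdvectionIterations}, each path therefore contributes exactly once, namely for $k=l_p$. This gives
\begin{align*}
\chat_v(t)=\sum_{\phat\in\Pahat(v,t)} a(\that_1,\dots,\that_{l_p})\, w(\that_0,\dots,\that_{l_p-1})\, c_{v_{l_p}}(\that_{l_p}).
\end{align*}
Theorem \ref{theorem_ConvergenceOfModelIterations} guarantees that the sum over $k$ is finite, so the exchange of sums is harmless.

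\textbf{Closed form of $c_v(t)$.} Theorem \ref{theorem_AdvectionMessagePassing_closedForm} gives $c_v(t)$ as a sum over $\Pa(v,t)$ of $w(t_0,\dots,t_{l_p-1})\,c_{v_{l_p}}(t_{l_p})$. Here we build $\T(v,t)$ with the stopping criterion matching that of $\That(v,t)$, which Definition \ref{definition_InflowTreeAndInterpolationInflowTree} explicitly permits.

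\textbf{Lifting the true sum to the joint paths.} We use Assumption \ref{assumption_IdentifiablePaths} and the joint paths of Definition \ref{definition_JointPaths}. Each $p\in\Pa(v,t)$ corresponds to the family of $\ptilde\in\Pahat(v,t)$ sharing its node sequence; these differ only in the choice of left or right interpolation time at each layer. The key identity is
\begin{align*}
\sum_{\ptilde \text{ over } p} a(\that_1,\dots,\that_{l_p})=1 .
\end{align*}
We prove it by induction on depth, using $a(t_j)+a(t_{j+1})=1$ at each layer. Assumption \ref{assumption_EqualtTimeForJumpingIntoHistory} is needed here: it ensures that both interpolation branches stop at the same depth. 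Otherwise one branch would terminate in $\Tn{hi}$ while the other continues in $\Tn{fu}$, and the paths would not pair up. With the identity we may write
\begin{align*}
c_v(t)=\sum_{\ptilde} a(\that_1,\dots,\that_{l_p})\, w(t_0,\dots,t_{l_p-1})\, c_{v_{l_p}}(t_{l_p}).
\end{align*}

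\textbf{Termwise comparison.} Subtracting the two expressions, in each summand we add and subtract $w(\that_0,\dots,\that_{l_p-1})\,c_{v_{l_p}}(t_{l_p})$. We then apply the triangle inequality, together with $0\le w(\that_0,\dots,\that_{l_p-1})\le1$ (the weights are normalized ReLU weights) and $a\ge0$. This yields the temporal interpolation term and the accumulated weighting term of the claimed bound.

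\textbf{Zero-error case.} If $\phi=\varphi$, then along each joint path the interpolated times coincide with the exact shifted times, so $\that_l=t_l$ for every layer $l$. Both error terms then vanish, giving $|c_v(t)-\chat_v(t)|=0$.

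\textbf{Main obstacle.} The hardest step is the bookkeeping in lifting the true sum to the joint paths. It requires the inductive $a$-summation together with a rigorous matching of stopping depths between $\T(v,t)$ and $\That(v,t)$ under Assumptions \ref{assumption_IdentifiablePaths} and \ref{assumption_EqualtTimeForJumpingIntoHistory}. One must also check that intermediate weights evaluated at $\that_l$ rather than $t_l$ do not change which neighbors are inflow neighbors; Assumption \ref{assumption_IdentifiablePaths} is meant to cover exactly this.
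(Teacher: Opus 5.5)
Your proposal is correct and follows the paper's proof essentially step for step. It writes both closed forms, with each interpolation-tree path contributing only at $k=l_p$. It redistributes each true path's contribution over its joint paths using $a(t_j)+a(t_{j+1})=1$, and it applies the termwise add-and-subtract estimate with $w\in[0,1]$. One small correction in the zero-error case: joint paths that take a right interpolation endpoint do not satisfy $\that_l=t_l$, but they carry $a$-weight zero, so only the all-left path survives, which is how the paper concludes.
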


Theorem \ref{theorem_InterpolationError_fullVersion} states that the error between the ground truth $c_v(t)$ and the prediction $\chat_v(t)$ solely stems from the time interpolation:
On the one hand, the temporal interpolation error is the error from observing a value $c_{v_{l_p}}(\that_{l_p})$ at a node $v_{l_p} \in V$ and at a discrete  time $t_{l_p} \in \Tn{hi}$, when actually, we needed to observe the value $c_{v_{l_p}}(t_{l_p})$ at an eventually non-discrete time $t_{l_p} \in \R$.
On the other hand, for exactly the same reason, the weights $w_{u_{l+1}v_l}(\that_l)$ that we observe at discrete times $\that_l \in \Tn{hi} \sqcup \Tn{fu}$ might differ from the weights $w_{u_{l+1}v_l}(t_l)$ that we needed to observe at non-discrete times $t_l \in \R$ for all $l = 1,...,l_p-1$, yielding the accumulated weighted error. 
Both errors decrease with decreasing content of high frequencies in concentrations $\cm$ and flow fields $\num$.

\section{Details on the Experiments}
\label{section_DetailsOnTheExperiment_Appendix}

\subsection{Experiment 1: Zero-Shot Transfer Learning of Advection-Reaction Dynamics on Real-World Metric Graphs}
\label{section_Experiment1_Zero-ShotTransferLearningOfAdvection-ReactionDynamicsOnRealWorldMetricGraphs_Appendix}

\paragraph{Domain}
In both networks Hanoi and L-Town, we model chlorine decay, which is part of the edge dynamics in Equation \eqref{align_EdgeDynamics_AdvectionReaction}, as a first-order reaction over time and also include interactions with the pipe wall, which depend on the pipe diameter $\rho_{uv} \in \R_{\geq 0}$. This is formally defined as
\begin{align}
\label{align_EdgeDynamics_Reaction_ChlorineDecay}
    f_r(t, z, e_{uv}) 
    = 
    -\left(k_b + \frac{4 k_w}{\rho_{uv}} \right) 
    c_{uv}(t, z),
\end{align}

\noindent where the parameters $k_b \in \R_{\geq 0}$ and $k_w \in \R_{\geq 0}$ control the intensity of chlorine decay over time and due to wall reactions, respectively.

\paragraph{Dataset} 
We generate datasets of 1000 training, 200 validation, and 200 testing samples for the Hanoi \gls*{WDS}, and 200 testing samples for the L-Town \gls*{WDS}.
Each sample consists of $n = 701$ time steps with a time difference of $dt = 60$ (in s) and corresponds to a time series $T$ that describes the distribution of chlorine $\cm$ over discrete time points $t \in T$ and through the \gls*{WDS} $\G$, determined by advection or advection-reaction dynamics as described in section \ref{subsection_Domain_Dataset}.
The chlorine is injected over time $t \in T$ at boundary nodes $\Vb$. 
The samples differ in graph topology, in flow fields and injection pattern (boundary condition).
We now describe this data generation process in more detail.
As stated in Section \ref{section_Experiment1_Zero-ShotTransferLearningOfAdvection-ReactionDynamicsOnRealWorldMetricGraphs}, we use EPyT-Flow \cite{Artelt2024_WaterDomain_StateSimulation_EPyT-Flow} based on EPANET-MSX \cite{Shang2023WaterDomain_StateSimulations_EpanetMSX2.0} as the simulator.

\begin{itemize}
    \item \textbf{Fixed parameters:} Boundary nodes $\Vb$ at which chlorine is injected correspond to reservoir nodes $\Vr$. 
    This corresponds to one and two boundary nodes in the case on Hanoi and L-Town, respectively.
    For the advection-reaction datasets, the values of the reaction coefficients in Equation \eqref{align_EdgeDynamics_Reaction_ChlorineDecay} that model the decay of the concentration are set to $k_b = 0.04$ and $k_w = 0.034$.

    \item \textbf{Varying graph topology:} In order to vary graph topology, we sample 
    pipe lengths
    $l_{uv} \sim U(0.1, 80)$ (in m) and 
    pipe diameters
    $\rho_{uv} \sim U(0.06, 0.025)$ (in m)
    uniformly per sample at random. To maintain the typical \gls*{WDS} topology of larger pipes at reservoirs and successively smaller pipes towards leaf nodes, diameters are ordered from low to high, as in the original \glspl*{WDS} Hanoi and L-Town.

    \item \textbf{Varying flow fields:} In order to generate scenarios with different flow fields, we vary nodal demands $\dm(t) = (d_v(t))_{v \in V}$ that directly determine the flow field $\num(t) = (\nu_e(t))_{e \in E}$ through hydraulic equations \cite{Rossman2020WaterDomain_StateSimulations_Epanet2.2}. 
    Empirically, independent random demands yield spatially homogeneous flow fields.  
    To introduce spatial variations of the flow fields, we apply spectral clustering $s : V \rightarrow \{0, ..., \n{c}\}$ on the binary network adjacency matrix to partition the network into $\n{c}=4$ clusters for Hanoi and $\n{c}=7$ clusters for L-Town. Then the demand pattern for a node $v \in V$ is sampled from a cluster-shared Gaussian process plus independent noise:
    \begin{align*}
        d_v(t) 
        = 
        f_{s(v)}(t) + \epsilon_v, 
        \quad 
        \epsilon_{v} \sim \mathcal{N}(0, 0.1^2), 
        \quad 
        f_{s(v)} \sim \mathcal{GP}(0, K), 
        \quad 
        K(i, j) = \exp\left(-\frac{(i - j)^2}{2 \cdot 0.1^2 \cdot (n - 1)^2} \right).
    \end{align*}
    
    \noindent
    The resulting hydraulics show flow velocities $\num$ that vary in space and time and can also change direction. Assigning demands $d_v(t)$ may result in physically implausible states that generate negative pressures, however, these states can be easily made plausible by offsetting the heads accordingly, which would have no effect on the flows. Therefore, the flows are physically plausible and since the quality simulation only depends on flows and not on pressures, the resulting evolution of chlorine in the system is still valid.
    
    \item \textbf{Varying injection patterns:} 
    The injection patterns $c_v(t)$ (Dirichlet boundary condition) in which chlorine is injected at boundary nodes $v \in \Vb$ are sampled from the Fourier function space (using the first $M$ modes) as follows:
    \begin{align*}
        \overline{c}_v(t) 
        &= 
        \sum_{k = 1}^{M} \frac{1}{k^\alpha}
        \left(
        a_k \cos\left(2k\pi \frac{t}{60 \cdot 700}\right) + b_k \sin\left(2k\pi \frac{t}{60 \cdot 700}\right)
        \right), \\
        c_v(t) 
        &= 
        \frac{
        \overline{c}_v(t) 
        - 
        \min_{t \in T} \hat{c}_v(t)
        }{
        \max_{t \in T} \overline{c}_v(t) 
        - 
        \min_{t \in T}\overline{c}_v(t)
        }
        \in [0,1],
    \end{align*}
    
    \noindent
    with $M \sim \mathcal{U}_{int}(3, 30)$, $\alpha \sim \mathcal{U}(0.5, 1.5)$, and $a_k, b_k \sim \mathcal{N}(0, 1)$, for all $v \in \Vb$, $t \in T$. Here $\mathcal{U}_{int}$ corresponds to an integer-valued uniform distribution.
    Three samples from this function space are visualized in Figure \ref{fig_InjectionPattern}.  
    
    \begin{figure}[!h]
        \centering
        \includegraphics[width=\linewidth]{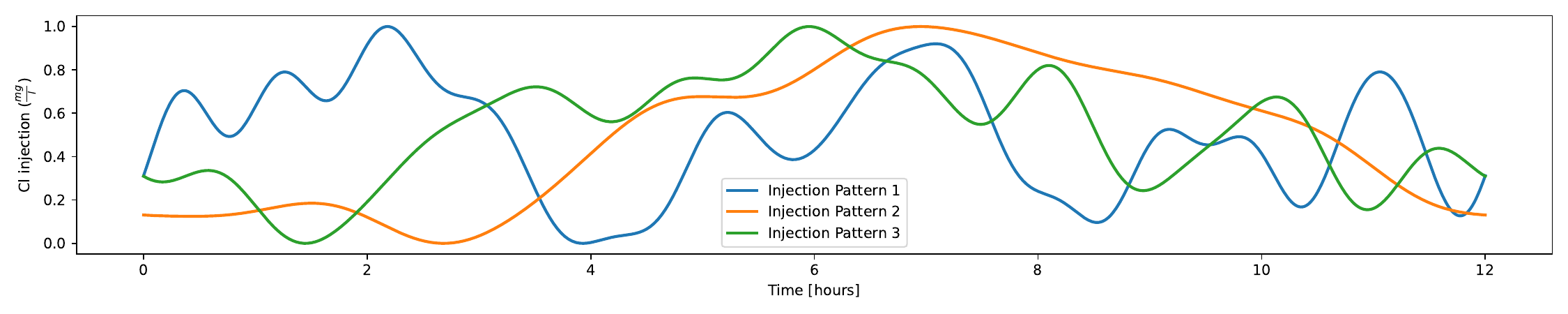}
        \caption{Example patterns used for the chlorine injection at boundary nodes $\Vb$.}
        \label{fig_InjectionPattern}
    \end{figure}
\end{itemize}

\paragraph{Model}
The \gls*{MLP} that is intended to learn the reactions consists of two linear layers with $8$ neurons each, and a SELU activation \cite{SELU2017} in between. We train the model with the Adam optimizer \cite{Kingma2014AdamAM} over $250$ epochs with a batch size of $256$ using a learning rate of $0.001$ that is reduced by a factor of $0.2$ if the training loss did not decrease for the last 3 epochs. The loss function is the \gls*{MAE}. Boundary nodes are masked.

\paragraph{Baselines}
The baselines we compare ourselves to are:

\begin{itemize} 
    \item \textbf{NNConv:} As a simple baseline, we use the kernel-based \gls*{MPNN} from \citet{Gilmer2017GNNs_Methods_MPNN}. The node feature matrix $\Xm \in \mathbb{R}^{|V| \times n}$ with $n = \n{hi} + \n{fu}$ is populated with the complete time series at boundary nodes $\Vb$, as well as the first time step as the known history $\Tn{hi}$ at all other nodes $V \setminus \Vb$. The edge feature matrix $\mathbf{E} \in \mathbb{R}^{|E| \times 3 n}$ contains the time series of scaled edge flows, edge capacities, and transport times.
    Node and edge features are first mapped into a latent space via two separate linear layers. After that, GG-NN convolutional layers \cite{Gilmer2017GNNs_Methods_MPNN} (NNConv implementation from torch-geometric) are applied.
    The message function is implemented as a two-layer \gls*{MLP} with a ReLU activation between the two layers. 
    The messages are aggregated via sum aggregation and decoded by a single linear layer calibrated manually.
    
    \item \textbf{PDE-GNN:} \citet{Hermes2025WaterDomain_WaterQuality_Methods} is a \gls*{PDE}-style \gls*{GNN} which solves initial and boundary value problems and has been proposed for chlorine estimation in \glspl*{WDS}. The model learns a PDE function that is integrated by Euler integration. Calibrated manually, we apply the same scheduler with $0.2$ learning rate decay upon training loss plateau over 3 epochs and we set the initial learning rate to 0.001. The time step used for Euler integration is set to $dt = 0.25$. 
    Some hyperparameters are tuned, other hyperparameters are left as specified in \citet{Hermes2025WaterDomain_WaterQuality_Methods}.

    \item \textbf{GPSConv:}
    \citet{rampasek2022GPSConv} is a graph transformer that we include as a representative of an architecture capable of capturing global dependencies. As positional encodings, we utilize the graph Laplacian eigenvectors, since positional encodings based on random walks would degenerate to zero if the initial conditions are zero. The Laplacian positional encoding is truncated to 30 components, allowing the transfer-learning experiment from Hanoi (32 nodes) to the L-Town (784 nodes), and embedded to 8 dimensions via a linear layer. We tune the message-passing module, which corresponds to the parameter $\Phi$ in Table \ref{table_hyperparameter}, candidates are NNConv, GINE, and GatedGCN \cite{Gilmer2017GNNs_Methods_MPNN, hu2020gine, bresson2017gatedGCN}, which are architectures that can accommodate edge features.

    \item \textbf{A-DGN:}
    \citet{gravina2023antisymmetric} is a dynamical-systems based \gls*{GNN}, where the message-passing operation acts as a differential operator, similar to an \gls*{ODE}. As such, it computes updates $\frac{\partial \mathbf{X}}{\partial l}$ of the node features $\mathbf{X}$ that are integrated over several layers $l$ via Euler integration with a step size of $\epsilon$. This model allows deep architectures that are stable and non-dissipative, mediated through antisymmetric weight matrices applied to the output of any message passing layer $\Phi$. We use this model in the weight-sharing style and select the same options for $\Phi$ as for GPSConv. 
    As generalization is a focus here, we evaluate two variations A-DGN and A-DGN$_{Dia}$, to account for the difference in size of the training and evaluation graph: The first is a hyperparameter tuned baseline, while for the second variant, we change the number of iterations of the trained model to the graph diameter. We have found that the latter works much better when the step size $\epsilon$ is adapted as well. Specifically, we set $\epsilon'=\epsilon \frac{num\_iters}{diameter(\mathcal{G})}$, and $num\_iters' = diameter(\mathcal{G})$.

     \item \textbf{\gls*{PINN}-based approaches} on metric graphs \cite{Blechschmidt2022MGNNs_MetricGraphNeuralODEandPDESolver_Solution-basedArchitechtures_PINNsForDriftDiffusionOnMetricGraph, Laczko2025MGNNs_MetricGraphNeuralODEandPDESolver_Solution-basedArchitechtures_PINNsForSchrödingerOnQuantumGraphs} are effective for learning solutions of specific \gls*{PDE} settings on metric graphs and are suited for inverse problems. However, \glspl*{PINN} are not operator-learning frameworks: A change in PDE parameters or boundary conditions typically requires retraining. Since our focus is on flexible models that generalize across varying flow conditions and topologies, \gls*{PINN}-based approaches are not an appropriate baseline in our setting. 

    \item \textbf{PI DEEPONET} is a \textit{LEGO}-like neural operator for metric graphs \cite{Blechschmidt2025MGNOs_NormalMGNOs_Solution-basedArchitechtures_DeepONetsForDriftDiffusionOnMetricGraphs} originally developed for drift–diffusion edge dynamics. While this approach is highly effective for diffusive dynamics, its direct application to purely hyperbolic problems such as linear advection is not straightforward. First, the method assumes flow-directed edges, effectively imposing a fixed transport direction that is incompatible with more general advection fields that can change direction. Second, at inner nodes the coupling is enforced through Kirchhoff–Neumann conditions, which conserve diffusive gradient fluxes $\partial_zc$. For hyperbolic dynamics, the flux is defined as $\nu(t, z) c(t, z)$ which would require a different coupling law such as mixing. For these reasons we omit this work as a baseline for now.

\end{itemize}

All models are trained using the Adam optimizer over a maximum of 1000 epochs. During training we validate the model every 10 epochs and training is stopped early if the validation loss did not improve over the last 50 epochs.

For a fair comparison of \gls*{MeGA-MP} and \gls*{MeGA-MP}$^+$ to the other methods, we run a small-scale hyperparameter tuning for every baseline method using Optuna \cite{optuna_2019}. The search space is specified in Table \ref{table_hyperparameter}. Every configuration is trained for 60 epochs. Trials are terminated at an earlier epoch if the trial's result is worse than the median of results of previous trials at the same epoch (median pruning). New configurations are sampled based on the tree-structured Parzen Estimator (TPE) \cite{watanabe2025treestructuredparzenestimatorunderstanding} implemented in Optuna.

\begin{table}[!h]
\centering
\renewcommand{\arraystretch}{1.3}
\caption{Optuna Search Space for Different Model Types}
\label{table_hyperparameter}
\resizebox{\linewidth}{!}{
\begin{tabular}{l|c|c|c|c}
\hline
\textbf{Parameter} & \textbf{NNConv} & \textbf{PDE-GNN} & \textbf{GPSConv} & \textbf{A-DGN} \\
\hline

\textit{hidden\_dim }
& \multicolumn{4}{c}{\{64, 96, 128, 160\}} \\
\hline

\textit{num\_layers} 
& \multicolumn{4}{c}{\{2, 3, 4, 5, 6\}} \\
\hline

\textit{learning\_rate }
& \multicolumn{4}{c}{$[10^{-4}, 3\times 10^{-3}]$ (log)} \\
\hline

\textit{weight\_decay }
& \multicolumn{4}{c}{$[10^{-6}, 10^{-1}]$ (log)} \\
\hline

\textit{batch\_size }
& \multicolumn{4}{c}{\{16, 32, 64\} (tests with larger batch-sizes were not successful)} \\
\hline

\textit{clip\_grad\_norm }
& --
& $\{10^{-4}, 0.5, 1.0, 2.0, -\}$ 
& --
& --
 \\
\hline

$\n{fu}$ (training) 
& 700 
& $\{32,100,150\}$ 
& 700 
& 700 \\
\hline

\textit{dropout }
& -- 
& [0.0, 0.3]
& [0.0, 0.3] 
& --
\\
\hline

\textit{num\_heads }
& -- 
& -- 
& \{2,4,8\}
& --\\
\hline

\textit{attn\_dropout }
& --
& --  
& [0.0, 0.3]
& -- 
\\
\hline

\hline

\textit{step\_size} / $\epsilon$ 
 & -- & \{1/4, 1/3, 1/2, 1\} & --  & \{ 1.0, 0.1, 0.01, 0.001 \}\\
\hline

$\gamma$ 
& -- & --  & -- & \{ 1.0, 0.1, 0.01, 0.001 \}\\
\hline

$\Phi$ 
 & -- & --  & \{ \small NNConv, GINE, GatedGCN \} & \{ \small NNConv, GINE, GatedGCN \}\\
\hline

\textit{num\_iters }
& -- & -- & -- &  \{ 1, 2, 5, 10, 20, 50 \}\\
\hline

& \multicolumn{4}{c}{\textbf{Fixed Hyperparameters}} \\
\hline



\multirow{1}{*}{$\n{hi}$} 
& \multicolumn{4}{c}{1} \\
\hline

$\n{fu}$ (evaluation)
& \multicolumn{4}{c}{700} \\
\hline

\multirow{1}{*}{\textit{loss\_fn}} 
& \multicolumn{4}{c}{L1} \\
\hline

\multirow{1}{*}{\textit{epochs}} 
& \multicolumn{4}{c}{60 (hyperparameter tuning) / 1000 (training, with early stopping)} \\
\hline

\end{tabular}
}
\end{table}

\newpage
\paragraph{Results}
As an extension of our ablation studies in Section \ref{subsection_Experiment1.2_AblationStudies},
in Figures \ref{figure_AblationStudy_Advection_Hanoi} to \ref{figure_AblationStudy_AdvectionReaction_L-Town},
we compare the performance of \gls*{MeGA-MP}, \gls*{MeGA-MP}$^+$ and \gls*{MeGA-MP}$^-$ on both the advection and the advection-reaction dynamical systems as well as on Hanoi and L-Town qualitatively.

\begin{figure}[!h]
    \centering
    \includegraphics[width=0.99\linewidth]{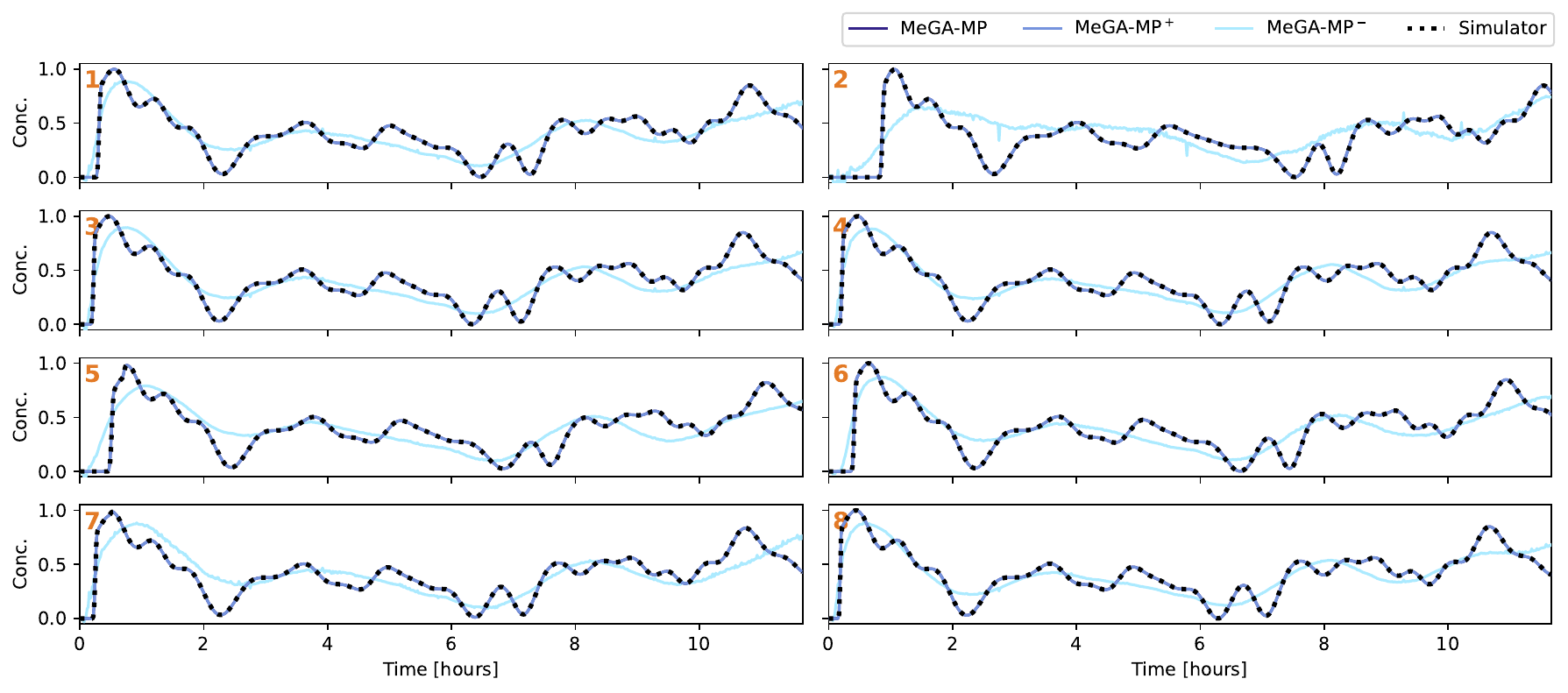}
    \caption{
    Predictions of \gls*{MeGA-MP}, \gls*{MeGA-MP}$^+$ and \gls*{MeGA-MP}$^-$
    for the \textbf{advection} dynamical system 
    on the \gls*{WDS} \textbf{Hanoi}. 
    The selected nodes are the same as in Figure \ref{figure_ComparisonBaselines_Advection_Hanoi_with_map}.}
\label{figure_AblationStudy_Advection_Hanoi}
\end{figure}

\begin{figure}[!h]
    \centering
    \includegraphics[width=0.99\linewidth]{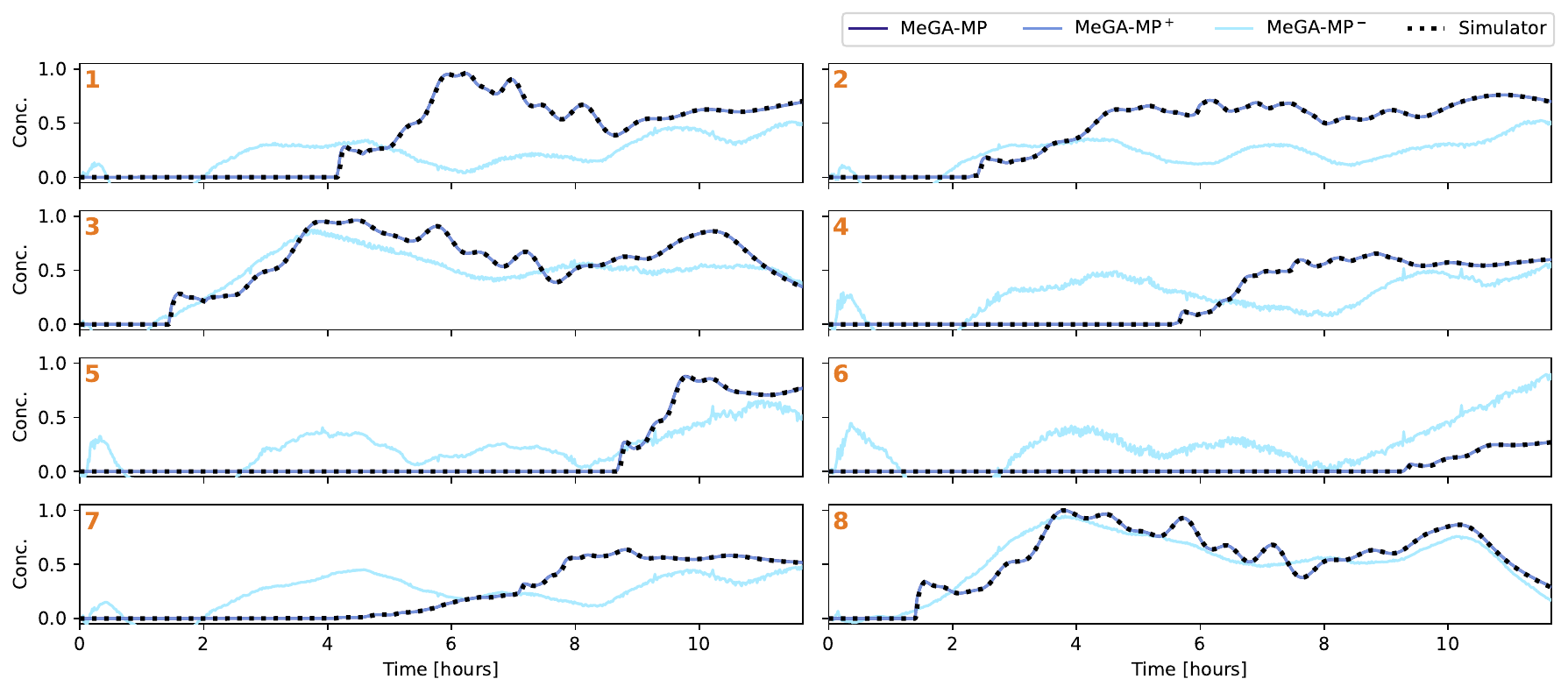}
    \caption{
    Predictions of \gls*{MeGA-MP}, \gls*{MeGA-MP}$^+$ and \gls*{MeGA-MP}$^-$
    for the \textbf{advection} dynamical system 
    on the \gls*{WDS} \textbf{L-Town}. 
    The selected nodes are the same as in Figure \ref{figure_ComparisonBaselines_Advection_L-Town_with_map}.}
    \label{figure_AblationStudy_Advection_L-Town}
\end{figure}

\begin{figure}[!h]
    \centering
    \includegraphics[width=0.99\linewidth]{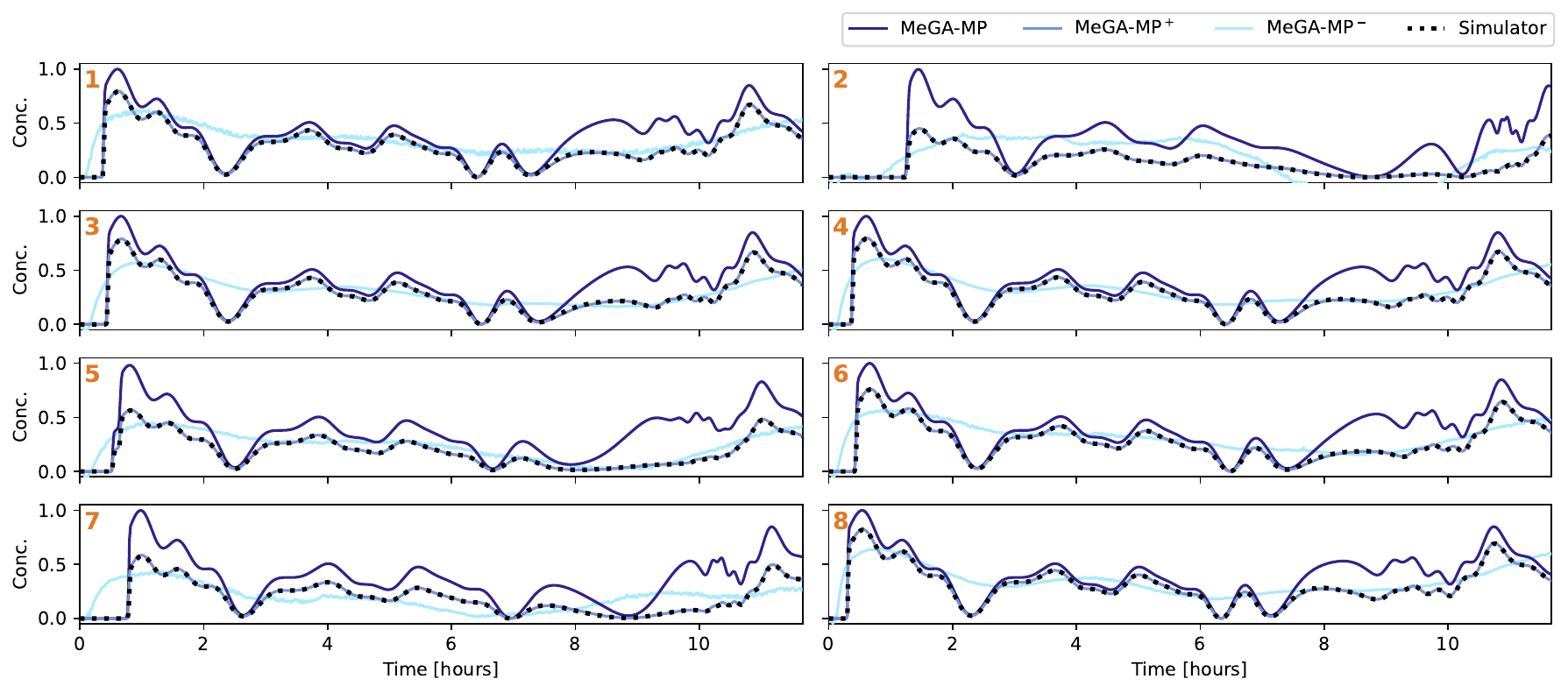}
    \caption{
    Predictions of \gls*{MeGA-MP}, \gls*{MeGA-MP}$^+$ and \gls*{MeGA-MP}$^-$
    for the \textbf{advection-reaction} dynamical system 
    on the \gls*{WDS} \textbf{Hanoi}. 
    The selected nodes are the same as in Figure \ref{figure_ComparisonBaselines_Advection_Hanoi_with_map}.}
\label{figure_AblationStudy_AdvectionReaction_Hanoi}
\end{figure}

\begin{figure}[!h]
    \centering
    \includegraphics[width=0.99\linewidth]{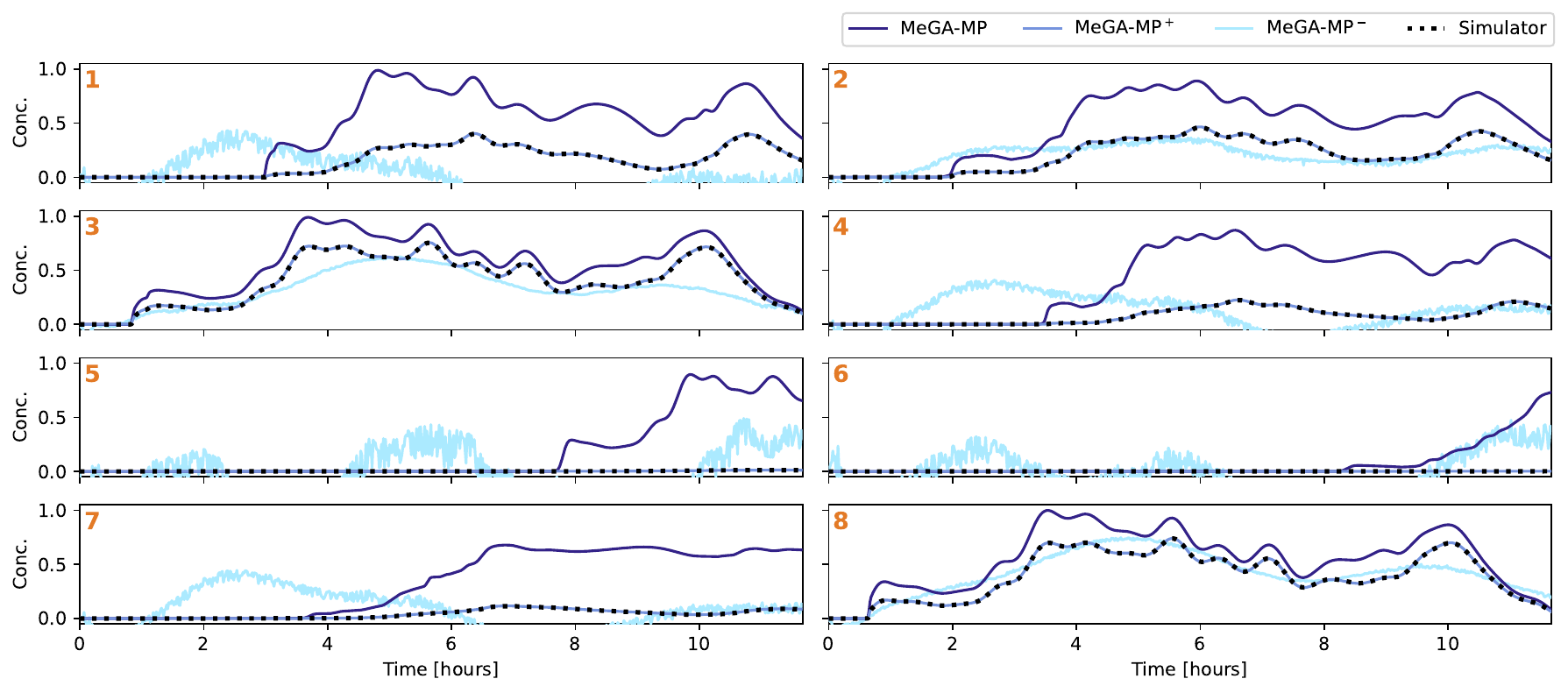}
    \caption{
    Predictions of \gls*{MeGA-MP}, \gls*{MeGA-MP}$^+$ and \gls*{MeGA-MP}$^-$
    for the \textbf{advection-reaction} dynamical system 
    on the \gls*{WDS} \textbf{L-Town}. 
    The selected nodes are the same as in Figure \ref{figure_ComparisonBaselines_Advection_L-Town_with_map}.}
    \label{figure_AblationStudy_AdvectionReaction_L-Town}
\end{figure}

\newpage
\subsection{Experiment 2: Advection on a 1D Euclidean Domain}
\label{subsection_Experiments_AdvectionOnA1DEuclideanDomain_Appendix}

\paragraph{Domain and Dataset}
The boundary value problem which we analyse in Section  \ref{subsection_Experiments_AdvectionOnA1DEuclideanDomain} is given by 
\begin{align}
\label{align_SyntheticBoundaryValueProblem}
\begin{split}
    \frac{\partial c(t,z)}{\partial t}
    &=
    - \nu(t) ~
    \frac{\partial c(t,z)}{\partial z}
    \quad
    \text{with}
    \quad 
    \nu(t)
    =
    0.3 \cdot \sin \left( \frac{2 \pi t}{100} \right) + 0.3
    \text{ and}
    \\
    c(t,0)
    &=
    \exp \left(\frac{1}{2} \left(\frac{t-\mu_t}{\sigma_t}\right)^2 \right)
\end{split}
\end{align}

\noindent for all $t \in [0,100]$ and $z \in [0,100]$ and with $\mu_t = 38$ and $\sigma_t = 18$.
Its analytical solution is computed analogously to the solution of the initial value problem with boundary condition, as discussed next.

\noindent
Additionally, in this Appendix, we analyze 
the initial value problem with boundary condition given by 
\begin{align}
\label{align_SyntheticInitialValueProblemWithBoundaryCondition}
\begin{split}
    \frac{\partial c(t,z)}{\partial t}
    &=
    - \nu(t) ~
    \frac{\partial c(t,z)}{\partial z}
    \quad
    \text{with}
    \quad 
    \nu(t)
    =
    0.3 \cdot \sin \left( \frac{2 \pi t}{100} \right) + 0.3
    \text{ and}
    \\
    c(0,z)
    &=
    \exp \left(\frac{1}{2} \left(\frac{z-\mu_z}{\sigma_z}\right)^2 \right),
    \quad
    c(t,0)
    =
    \exp \left(\frac{1}{2} \left(\frac{t-\mu_t}{\sigma_t}\right)^2 \right)
\end{split}
\end{align}

\noindent for all $t \in [0,100]$ and $z \in [0,100]$ and with $\mu_z = 8$, $\sigma_z = 2.5$, $\mu_t = 38$ and $\sigma_t = 18$.

To compute an analytical solution, we can solve the initial value problem $c_{ivp}(t, z)$ and the boundary value problem $c_{bvp}(t, z)$ separately and sum the individual results. This decomposition is possible since for a positive and incompressible flow $\nu(t) > 0$, characteristic lines do not cross and each point of the solution corresponds to a value of either the initial value function or the boundary value function.

The initial value problem is solved by integrating the velocity field to compute the spatial shift of the Gaussian $c(0, z)$ for each $t \in [0,100]$, via $t \longmapsto \zcurl_0(t) := \int_0^t \nu(s) ~ds$. The antiderivative of $\nu(t)$ is given by 

\begin{align}
\label{align_euclidean_flow_field}
\bar{\nu}(t) = 0.3 \left(t - \frac{\cos(4 \pi t)}{4 \pi} + \frac{1}{4 \pi}\right),
\end{align}

\noindent and the solution of the initial value problem is $c_{ivp}(t, z) = c(0, z + \zcurl_0(t))$.

To solve the boundary value problem, we find the transport time $\delta t(z)$ from the boundary at $z = 0$ to every other point $z$ in the domain. We leverage the same procedure as for our \gls*{MeGA-MP} (cf. Eq. \ref{align_TransportTime_OP}) and compute $\delta t(z)$ as the unique solution of $\int_0^{\delta t(z)} \nu(s) ds = z$. Since $\bar{\nu}(t)$ is strictly increasing the solution is unique and well-defined.
According to Theorem \ref{theorem_PassThroughTime}, the coordinate $(t + \delta t(z), z)$ lies on the characteristic curve that intersects the boundary at $(t, 0)$. We can shift the boundary value Gaussian accordingly and obtain $c_{bvp}(t, z) = c(t + \delta t(z), 0)$.
The solution to the initial value problem with boundary condition is then 
$$
c(t, z) = c_{ivp}(t, z) + c_{bvp}(t, z)
$$
In practice, we compute the respective values numerically at each discrete spatial and temporal grid point.

Next to the analytical solution, the numerical and analytical solvers we compare ourselves to are:

\begin{itemize}
    \item \textbf{Semi-Lagrangian scheme}: We use a custom semi-Lagrangian solver. For every grid point $z \in \Omega$ and iteration step $n+1$, the method computes the departure point $z'$ of a particle arriving at $z$ by tracing backward along the velocity field $\nu(t, z)$. Since the departure point $z'$ is a continuous coordinate, we interpolate spatially to compute the new solution $c(t_{n+1}, z)$. If we assume a spatially constant and temporally piecewise constant flow field, the departure point is simply $z' = z - dt \cdot \nu(t_n, z)$. Otherwise, it is approximated by integrating the characteristic ODE $\frac{dz}{dt} = -\nu(t, z)$ via the RK4 method. The result is then $c(t_{n+1}, z) = \mathcal{I}[c(t_{n}, \cdot)](z')$, where $\mathcal{I}$ is the spatial interpolation operator like the linear interpolation in Eq. \eqref{align_LinearInterpolation}.
    
    \item \textbf{Runge-Kutta-4} is the classical order-four Runge-Kutta (RK4) numerical method for solving \glspl*{PDE} (and \glspl*{ODE}) of the kind
    \begin{align*}
        \frac{\partial c(t,z)}{\partial t}
        = 
        f(t,z,c)
    \end{align*}

    \noindent 
    with the update rule 
    \begin{align*}
        c(t_{n+1},z) 
        =
        c(t_n,z)
        + 
        \frac{dt}{6} \left(k_1 + 2k_2 + 2k_3 + k_4\right)
    \end{align*}        

    \noindent 
    where $n \in \N$ is the iteration step, 
    $dt$ is the step-size and
    $k_1 = f(t_n,z,c)$, 
    $k_2 = f(t_n + \frac{dt}{2}, c(t_n,z) + \frac{dt}{2} k_1)$, 
    $k_3 = f(t_n + \frac{dt}{2}, c(t_n,z) + \frac{dt}{2} k_2)$ and 
    $k_4 = f(t_n + dt, c(t_n,z) + dt k_3)$. We apply the RK4 within a method-of-lines setup. The right-hand side function $f(t,z,c)$ approximates the negative spatial derivative of the flux via a fifth-order Weighted Essentially Non-Oscillatory (WENO5) scheme \cite{WENO5_1996} using Lax-Friedrichs flux-splitting for numerical stability. Specifically, left- and right-moving flux components are reconstructed separately at cell interfaces, resulting in the spatial derivative approximation at cell grid-points $i$:
    $
        f(t,z_i,c) \approx -\frac{\hat{F}_{i+1/2} - \hat{F}_{i-1/2}}{dx}
    $
    where $\hat{F}_{i \pm 1/2}$ represent the combined numerical interface fluxes.
\end{itemize}

\paragraph{Baselines and Results}
In Figure \ref{figure_SyntheticBoundaryValueProblem_AnalyticSolutionVsMeGAMP}, the analytical solution of the boundary value problem, which serves as ground truth, and the \gls*{MeGA-MP} solution are visualized at different discretizations and on the whole resulting grid.
Moreover,
in Figure \ref{figure_SyntheticInitialValueProblemWithBoundaryCondition_AnalyticSolutionVsMeGAMP}, the analytical solution of the initial value problem with boundary values, which serves as ground truth, and the \gls*{MeGA-MP} solution are visualized at different discretizations and on the whole resulting grid.
Additionally, in Figure \ref{figure_SyntheticBoundaryValueProblem}, we visualize the solutions of the initial value problem with boundary conditions \eqref{align_SyntheticInitialValueProblemWithBoundaryCondition} for different solvers and different temporal and spatial discretizations at the last step $t=100$.

As an extension of Table \ref{table_SyntheticBoundaryValueProblem}, 
in Table \ref{table_SyntheticBoundaryValueProblem_extended}, we compare our method to the numerical solvers at different discretizations.
We also report the runtime of each method in Table \ref{table_SyntheticBoundaryValueProblemComplexity}. 
It can be observed that a larger number of edges (smaller $dx$) results in longer runtime, which is expected. 
Note that these values are only meant to convey a rough idea of time complexity since there may be several ways to improve the performance of the baselines. We ran all methods on the CPU.

\begin{figure}[!h]
    \centering
    \begin{subfigure}{0.49\linewidth}
        \centering
        \includegraphics[width=\linewidth]{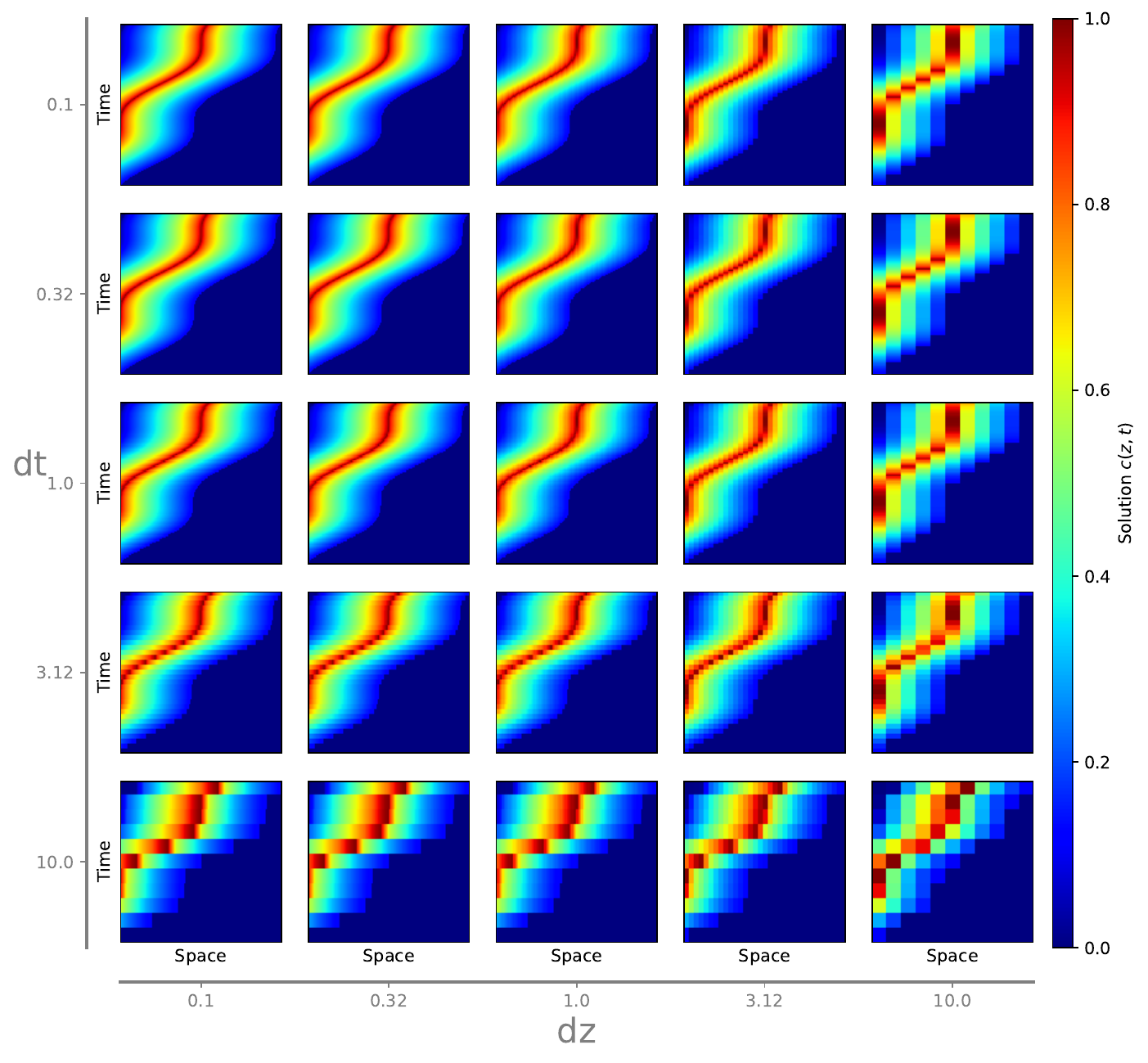}
        \caption{Analytical solution.}
        \label{figure_SyntheticBoundaryValueProblem_AnalyticalSolution}
    \end{subfigure}
    \hfill
    \begin{subfigure}{0.49\linewidth}
        \centering
        \includegraphics[width=\linewidth]{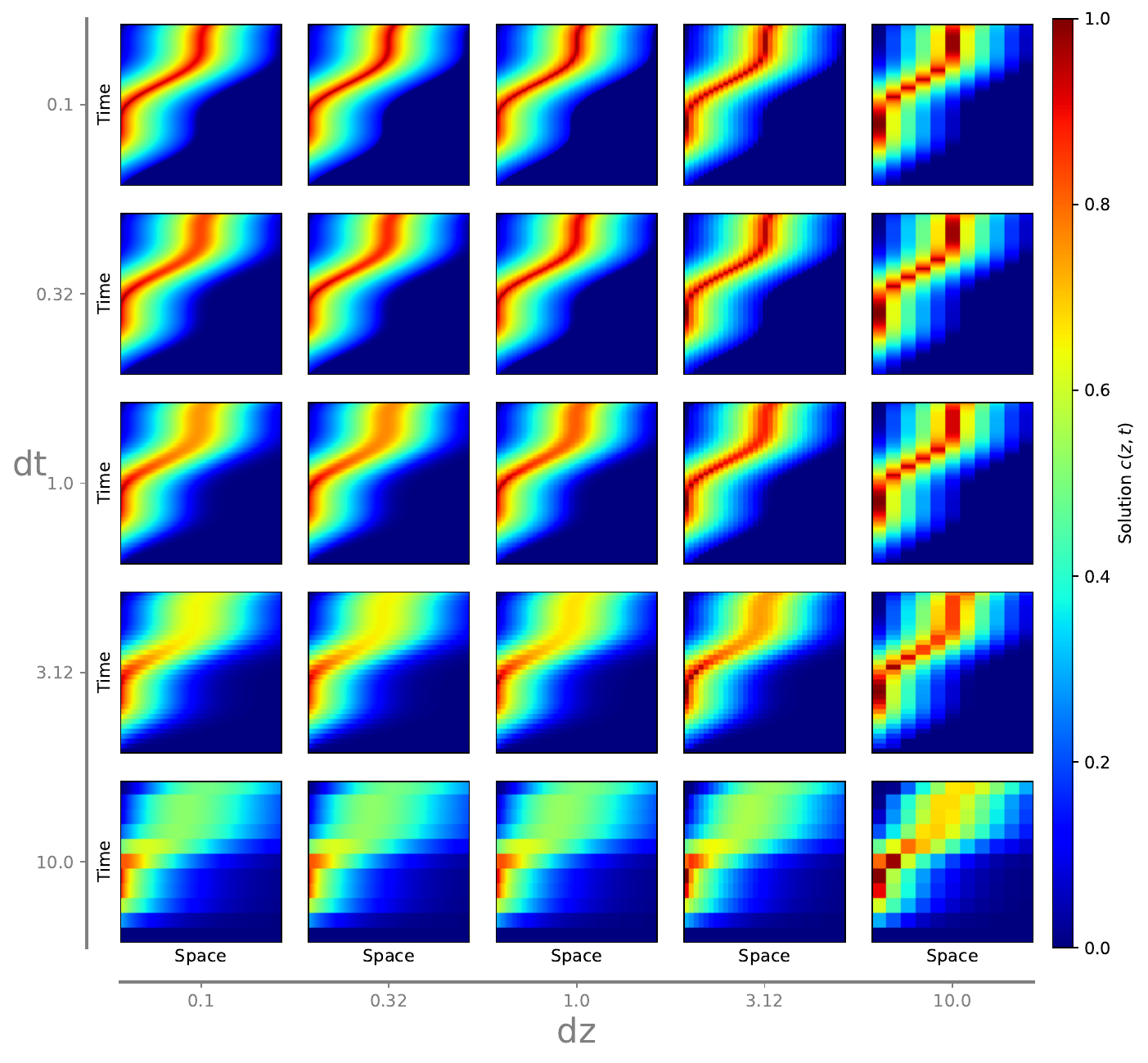}
        \caption{\gls*{MeGA-MP} solution.}
        \label{figure_SyntheticBoundaryValueProblem_MeGAMPSolution}
    \end{subfigure}
    \caption{Solutions of the \textbf{boundary value problem} \eqref{align_SyntheticBoundaryValueProblem} for different temporal and spatial discretizations and on the whole resulting grid }
    \label{figure_SyntheticBoundaryValueProblem_AnalyticSolutionVsMeGAMP}
\end{figure}

\begin{figure}[!h]
    \centering
    \begin{subfigure}{0.49\linewidth}
        \centering
        \includegraphics[width=\linewidth]{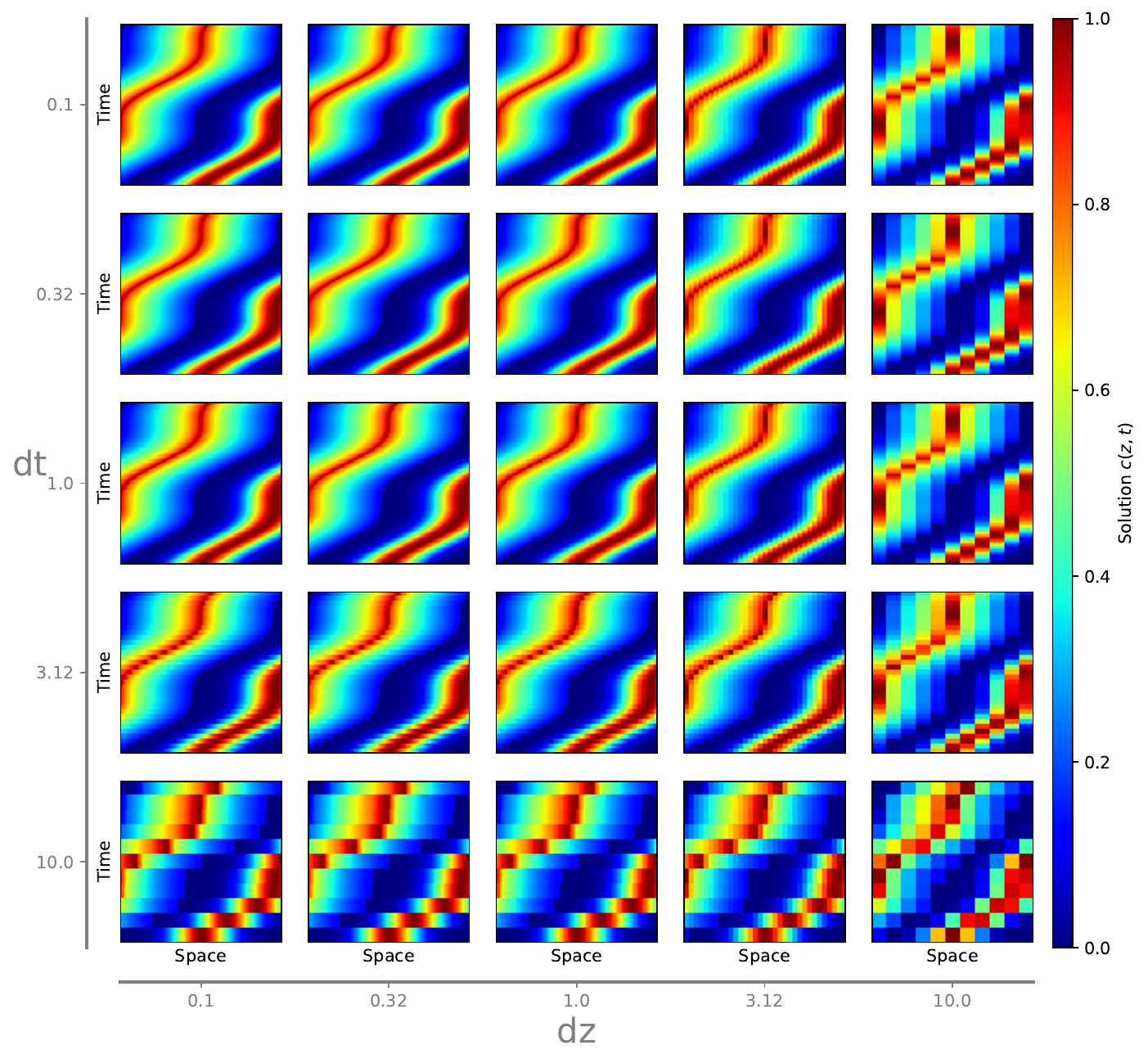}
        \caption{Analytical solution.}
        \label{figure_SyntheticInitialValueProblemWithBoundaryCondition_AnalyticalSolution}
    \end{subfigure}
    \hfill
    \begin{subfigure}{0.49\linewidth}
        \centering
        \includegraphics[width=\linewidth]{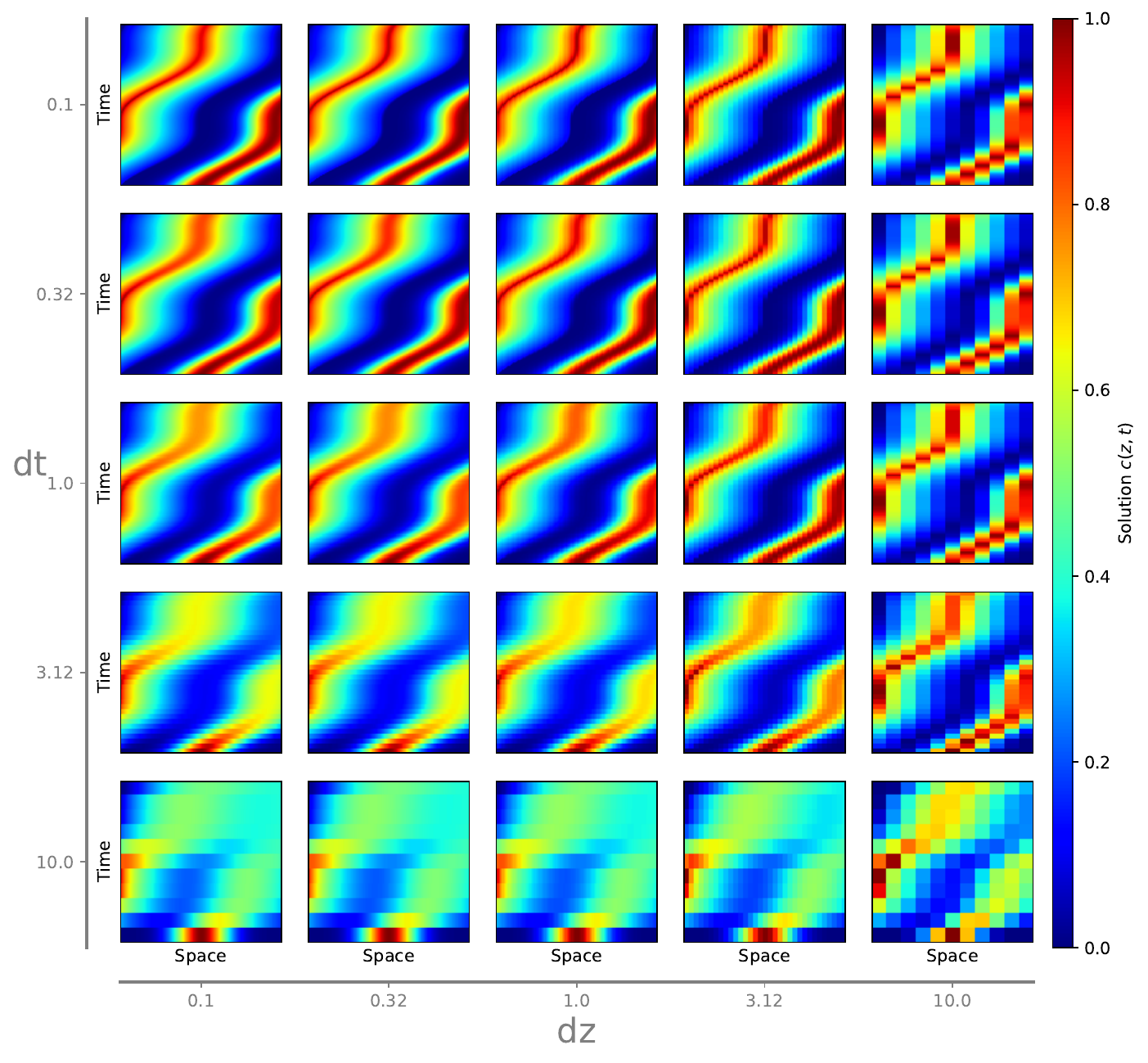}
        \caption{\gls*{MeGA-MP} solution.}
        \label{figure_SyntheticInitialValueProblemWithBoundaryCondition_MeGAMPSolution}
    \end{subfigure}
    \caption{Solutions of \textbf{initial value problem} with boundary condition 
    \eqref{align_SyntheticInitialValueProblemWithBoundaryCondition} for different temporal and spatial discretizations and on the whole resulting grid.}
    \label{figure_SyntheticInitialValueProblemWithBoundaryCondition_AnalyticSolutionVsMeGAMP}
\end{figure}

\begin{figure}[!h]
    \centering
    \includegraphics[width=\linewidth]{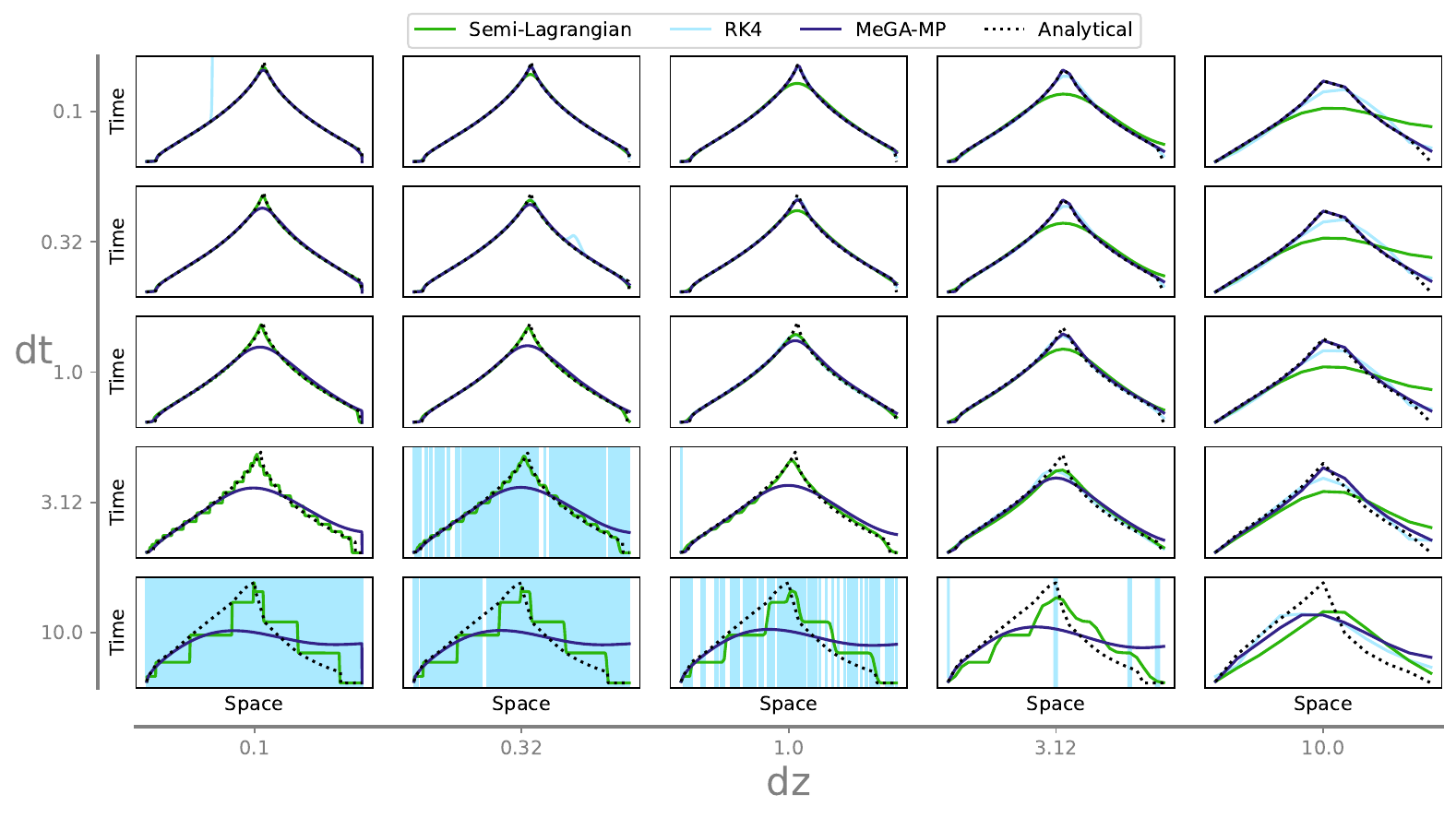}
    \caption{Solutions of the initial value problem with boundary condition \eqref{align_SyntheticBoundaryValueProblem} for different solvers and different temporal and spatial discretizations at the last step $t=100$.}
    \label{figure_SyntheticBoundaryValueProblem}
\end{figure}

\begin{table}[!h]
    \centering
    \caption{\acrshort*{MAE} between analytical ground truth and different solver solutions over all temporal and spatial grid points. 
    The entry \enquote{--} indicates divergence of the method.}
    \label{table_SyntheticBoundaryValueProblem_extended}
    \begin{tabular}{cr|cccc} \toprule
        $dt$ & $dx$ & semi- Lagr. & RK4 & \gls{MeGA-MP}
        \\
\hline \multirow[t]{5}{*}{$0.1$} & \hfill$ 0.10$ & \textbf{0.0016}{\scriptsize$\pm 0.0050$} & -- & 0.0024{\scriptsize$\pm 0.0081$} \\ 
  & \hfill 0.32 & 0.0046{\scriptsize$\pm 0.0114$} & \textbf{0.0013}{\scriptsize$\pm 0.0043$} & \textbf{0.0013}{\scriptsize$\pm 0.0058$} \\ 
  & \hfill 1.00 & 0.0108{\scriptsize$\pm 0.0230$} & 0.0035{\scriptsize$\pm 0.0073$} & \textbf{0.0012}{\scriptsize$\pm 0.0068$} \\ 
  & \hfill 3.12 & 0.0239{\scriptsize$\pm 0.0422$} & 0.0105{\scriptsize$\pm 0.0152$} & \textbf{0.0023}{\scriptsize$\pm 0.0113$} \\ 
  & \hfill10.00 & 0.0562{\scriptsize$\pm 0.0764$} & 0.0288{\scriptsize$\pm 0.0415$} & \textbf{0.0057}{\scriptsize$\pm 0.0182$} \\ 
\hline \multirow[t]{5}{*}{$0.32$} & \hfill$ 0.10$ & 0.0027{\scriptsize$\pm 0.0060$} & -- & 0.0071{\scriptsize$\pm 0.0162$} \\ 
  & \hfill 0.32 & \textbf{0.0038}{\scriptsize$\pm 0.0075$} & 0.0108{\scriptsize$\pm 0.0325$} & 0.0049{\scriptsize$\pm 0.0123$} \\ 
  & \hfill 1.00 & 0.0095{\scriptsize$\pm 0.0184$} & 0.0042{\scriptsize$\pm 0.0082$} & \textbf{0.0031}{\scriptsize$\pm 0.0092$} \\ 
  & \hfill 3.12 & 0.0227{\scriptsize$\pm 0.0400$} & 0.0117{\scriptsize$\pm 0.0160$} & \textbf{0.0036}{\scriptsize$\pm 0.0126$} \\ 
  & \hfill10.00 & 0.0557{\scriptsize$\pm 0.0754$} & 0.0304{\scriptsize$\pm 0.0418$} & \textbf{0.0071}{\scriptsize$\pm 0.0189$} \\ 
\hline \multirow[t]{5}{*}{$1.0$} & \hfill$ 0.10$ & 0.0080{\scriptsize$\pm 0.0144$} & -- & 0.0157{\scriptsize$\pm 0.0298$} \\ 
  & \hfill 0.32 & 0.0080{\scriptsize$\pm 0.0129$} & -- & 0.0143{\scriptsize$\pm 0.0277$} \\ 
  & \hfill 1.00 & 0.0097{\scriptsize$\pm 0.0137$} & \textbf{0.0088}{\scriptsize$\pm 0.0155$} & 0.0103{\scriptsize$\pm 0.0208$} \\ 
  & \hfill 3.12 & 0.0206{\scriptsize$\pm 0.0324$} & 0.0100{\scriptsize$\pm 0.0155$} & \textbf{0.0081}{\scriptsize$\pm 0.0165$} \\ 
  & \hfill10.00 & 0.0545{\scriptsize$\pm 0.0720$} & 0.0291{\scriptsize$\pm 0.0395$} & \textbf{0.0115}{\scriptsize$\pm 0.0222$} \\ 
\hline \multirow[t]{5}{*}{$3.12$} & \hfill$ 0.10$ & 0.0250{\scriptsize$\pm 0.0364$} & -- & 0.0384{\scriptsize$\pm 0.0535$} \\ 
  & \hfill 0.32 & \textbf{0.0243}{\scriptsize$\pm 0.0350$} & -- & 0.0375{\scriptsize$\pm 0.0524$} \\ 
  & \hfill 1.00 & \textbf{0.0231}{\scriptsize$\pm 0.0315$} & -- & 0.0345{\scriptsize$\pm 0.0491$} \\ 
  & \hfill 3.12 & 0.0254{\scriptsize$\pm 0.0321$} & \textbf{0.0231}{\scriptsize$\pm 0.0333$} & 0.0242{\scriptsize$\pm 0.0370$} \\ 
  & \hfill10.00 & 0.0504{\scriptsize$\pm 0.0639$} & 0.0299{\scriptsize$\pm 0.0406$} & \textbf{0.0238}{\scriptsize$\pm 0.0314$} \\ 
\hline \multirow[t]{5}{*}{$10.0$} & \hfill$ 0.10$ & \textbf{0.0763}{\scriptsize$\pm 0.0964$} & -- & 0.0939{\scriptsize$\pm 0.0981$} \\ 
  & \hfill 0.32 & \textbf{0.0752}{\scriptsize$\pm 0.0954$} & -- & 0.0933{\scriptsize$\pm 0.0977$} \\ 
  & \hfill 1.00 & \textbf{0.0728}{\scriptsize$\pm 0.0923$} & -- & 0.0915{\scriptsize$\pm 0.0952$} \\ 
  & \hfill 3.12 & \textbf{0.0675}{\scriptsize$\pm 0.0852$} & -- & 0.0843{\scriptsize$\pm 0.0904$} \\ 
  & \hfill10.00 & \textbf{0.0671}{\scriptsize$\pm 0.0842$} & 0.0728{\scriptsize$\pm 0.0936$} & 0.0761{\scriptsize$\pm 0.0763$} \\ 
        \hline
    \end{tabular}
\end{table}

\begin{table}[!h]
    \centering
    \caption{Time in seconds that it takes each method to solve the \gls{BVP}. The time it takes for the preprocessing to convert flows into transport times (s. Algorithm \ref{algorithm_FlowToTransportTime}) is included in these values. 
    The entry \enquote{--} indicates divergence of the method.}
    \label{table_SyntheticBoundaryValueProblemComplexity}
    \begin{tabular}{cr|ccc|c} \toprule
        $dt$ & $dx$ & semi- Lagr. & RK4 & \gls{MeGA-MP} & Analytical
        \\
       \hline \multirow[t]{5}{*}{$0.1$} & \hfill$ 0.10$ & \textbf{58.1295} & -- & 63.1235 & 0.2867 \\ 
  & \hfill 0.32 & 16.9899 & \textbf{1.0589} & \textbf{5.9495} & 0.0792 \\ 
  & \hfill 1.00 & 5.5392 & \textbf{0.8999} & \textbf{1.3102} & 0.0416 \\ 
  & \hfill 3.12 & 1.7895 & 0.8881 & \textbf{0.5884} & 0.0097 \\ 
  & \hfill10.00 & 0.6871 & 0.8863 & \textbf{0.1444} & 0.0039 \\ 
\hline \multirow[t]{5}{*}{$0.32$} & \hfill$ 0.10$ & 18.0842 & -- & \textbf{13.9027} & 0.0398 \\ 
  & \hfill 0.32 & 5.4553 & \textbf{0.3194} & 1.7582 & 0.0130 \\ 
  & \hfill 1.00 & 1.6860 & \textbf{0.2795} & 0.5185 & 0.0049 \\ 
  & \hfill 3.12 & 0.6071 & 0.2794 & \textbf{0.2051} & 0.0019 \\ 
  & \hfill10.00 & \textbf{0.2222} & 0.2766 & 0.3668 & 0.0013 \\ 
\hline \multirow[t]{5}{*}{$1.0$} & \hfill$ 0.10$ & 6.0633 & -- & \textbf{6.0126} & 0.0109 \\ 
  & \hfill 0.32 & 1.7784 & -- & \textbf{1.1763} & 0.0029 \\ 
  & \hfill 1.00 & 0.5320 & \textbf{0.0893} & 0.2651 & 0.0012 \\ 
  & \hfill 3.12 & 0.1810 & 0.0883 & \textbf{0.0628} & 0.0007 \\ 
  & \hfill10.00 & 0.0699 & 0.0871 & \textbf{0.0218} & 0.0005 \\ 
\hline \multirow[t]{5}{*}{$3.12$} & \hfill$ 0.10$ & \textbf{1.7991} & -- & 3.6538 & 0.0024 \\ 
  & \hfill 0.32 & \textbf{0.5470} & -- & 0.7568 & 0.0012 \\ 
  & \hfill 1.00 & 0.1691 & -- & \textbf{0.1539} & 0.0007 \\ 
  & \hfill 3.12 & 0.0582 & \textbf{0.0283} & 0.0470 & 0.0006 \\ 
  & \hfill10.00 & 0.0223 & 0.0281 & \textbf{0.0191} & 0.0004 \\ 
\hline \multirow[t]{5}{*}{$10.0$} & \hfill$ 0.10$ & \textbf{0.5620} & -- & 2.3683 & 0.0012 \\ 
  & \hfill 0.32 & \textbf{0.1711} & -- & 0.4693 & 0.0008 \\ 
  & \hfill 1.00 & \textbf{0.0570} & -- & 0.1159 & 0.0005 \\ 
  & \hfill 3.12 & \textbf{0.0189} & -- & 0.0400 & 0.0005 \\ 
  & \hfill10.00 & \textbf{0.0075} & 0.0088 & 0.0172 & 0.0005 \\ 
        \hline
    \end{tabular}
\end{table}

\newpage
\section{Proofs}
\label{section_Proofs}

\subsection{Proof of Lemma \ref{lemma_ConstantConcentrationAlongCharacteristicCurves}}

\begin{lemmaNoNb}[Lemma \ref{lemma_ConstantConcentrationAlongCharacteristicCurves}]
    Let $e_{uv} \in E$ hold.
    If the function $c_{uv}$ obeys Equation \eqref{align_EdgeDynamics_Advection}, 
    for each $t_0 \in \R$ and $z_0 \in \R$, the curve
    %
    %
    \begin{align*}
        \gamma_{t_0,z_0}: ~ T_{t_0,z_0} \longrightarrow \R_{\geq 0}, ~~
        t \longmapsto 
        c_{uv} \Big( t, z_0 + \textstyle \int_{t_0}^{t} \nu_{uv}(s) ~ds \Big)
    \end{align*}

    \noindent 
    is constant on
    $
    T_{t_0,z_0}
    =
    \Big \{
    t \in \R 
    ~|~
    0 
    <
    z_0 + \int_{t_0}^{t^{'}} \nu_{uv}(s) ~ds
    <
    l_{uv}
    ~\forall t^{'} \in 
    \big[ \min\{t_0,t\} , \max\{t_0,t\} \big]
    \Big \}.
    $
    %
\end{lemmaNoNb}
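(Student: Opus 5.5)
The plan is the classical method-of-characteristics argument: differentiate $\gamma_{t_0,z_0}$ along the curve and show the derivative vanishes, then conclude constancy on each connected piece of $T_{t_0,z_0}$. Write $z(t) := z_0 + \int_{t_0}^{t} \nu_{uv}(s)\,ds$. By Assumption \ref{assumption_SmoothnessAssumptions}, $\nu_{uv}$ is continuous, so by the fundamental theorem of calculus $z$ is continuously differentiable with $z'(t) = \nu_{uv}(t)$.

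First I dispose of degenerate cases. If $z_0 \notin \Omega_{uv}$, then $T_{t_0,z_0}$ is empty, since $t' = t_0$ always lies in $[\min\{t_0,t\},\max\{t_0,t\}]$ and violates the condition. The statement is then vacuous.

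Next I show that $T_{t_0,z_0}$ is an interval containing $t_0$; this is essentially Lemma \ref{lemma_OnT_{t_0,z_0}}, which I would prove directly here to avoid circularity. Take $t \in T_{t_0,z_0}$ and any $\tilde t$ between $t_0$ and $t$. Then $[\min\{t_0,\tilde t\},\max\{t_0,\tilde t\}]$ is contained in $[\min\{t_0,t\},\max\{t_0,t\}]$, so $\tilde t \in T_{t_0,z_0}$. Hence $T_{t_0,z_0}$ is an interval through $t_0$.

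It is also open. For $t \in T_{t_0,z_0}$, the compact interval between $t_0$ and $t$ is mapped by the continuous $z$ into $(0,l_{uv})$. The image is compact, so it lies in some $[\varepsilon, l_{uv}-\varepsilon]$. By continuity of $z$, the condition $z(t') \in (0,l_{uv})$ persists on a small neighbourhood of $t$.

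Then comes the key computation. For $t \in T_{t_0,z_0}$ we have $(t,z(t)) \in \R \times (0,l_{uv})$, where $c_{uv}$ is differentiable and obeys Equation \eqref{align_EdgeDynamics_Advection}. The chain rule for the differentiable map $c_{uv}$ composed with the differentiable curve $t \mapsto (t,z(t))$ gives
\begin{align*}
\gamma_{t_0,z_0}'(t) = \frac{\partial c_{uv}}{\partial t}(t,z(t)) + \nu_{uv}(t)\,\frac{\partial c_{uv}}{\partial z}(t,z(t)) = 0 .
\end{align*}
The chain rule needs total (Fréchet) differentiability, not just partial derivatives. I read ``differentiable'' in Assumption \ref{assumption_SmoothnessAssumptions} in that sense, and this is the main point needing care.

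Finally, a function with zero derivative on an open interval is constant by the mean value theorem, so $\gamma_{t_0,z_0}$ is constant on $T_{t_0,z_0}$. The main obstacle is really only the bookkeeping showing that $T_{t_0,z_0}$ is an open interval. Without that, vanishing derivative alone would not give global constancy, since $T_{t_0,z_0}$ could a priori be disconnected.
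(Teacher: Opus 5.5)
Your proposal is correct and follows the same method-of-characteristics argument as the paper, which likewise reduces the claim to $\frac{d}{dt}\, c_{uv}(t,z(t)) = 0$ along curves with $z'(t)=\nu_{uv}(t)$. You make explicit two points the paper's proof leaves implicit. The first is the chain-rule step, which needs total differentiability of $c_{uv}$ on $\R\times(0,l_{uv})$. The second is the fact that $T_{t_0,z_0}$ is an interval through $t_0$, which the paper only establishes separately in Lemma \ref{lemma_OnT_{t_0,z_0}}, and which is what lets a vanishing derivative actually yield constancy.
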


\begin{proof}
    Let $c_{uv}: \R \times \Omega_{uv} \rightarrow \R_{\geq 0}$ obey Equation \eqref{align_EdgeDynamics_Advection} (note that Equation \eqref{align_EdgeDynamics_Advection} is only defined for $z \in \Omega_{uv} = (0,l_{uv})$).
    We can consider its space coordinate $z$ as a function $z: \R \rightarrow \Omega_{uv}$ parametrized by time. 
    Consequently, we can use the method of characteristics (cf. \citet{Habermann2013PartialDifferentialEquations}) to derive the following system of \glspl*{ODE} from Equation \eqref{align_EdgeDynamics_Advection} for the composed, now only time-dependent function $c_{uv}: \R \rightarrow \R_{\geq 0}, c_{uv}(t) := c_{uv}(t, z(t))$:
    \begin{align*}
        \frac{dc_{uv}(t)}{dt} = 0, \quad
        \frac{dz(t)}{dt} = \nu_{uv}(t), \quad
        \frac{dt}{dt} = 1.
    \end{align*}
    
    \noindent By the fundamental theorem of calculus, we observe that
    for each $t_0 \in \R$ and $z_0 \in \R$, the function
    \begin{align*}
        z: \R \longrightarrow \R, ~
        t \longmapsto z(t) := z_0 + \int_{t_0}^{t} \nu_{uv}(s) ~ds
    \end{align*}

    \noindent satisfies the second \gls*{ODE}.
    Therefore, by the first \gls*{ODE},  
    we can conclude that the function $c_{uv}$ is constant along characteristic curves 
    $(t, z(t)) = (t, z_0 + \int_{t_0}^{t} \nu_{uv}(s) ~ds)$ 
    along the time $\R$ 
    as well as
    along the space interval $\Omega_{uv}$, i.e., \textit{within} the edge $e_{uv}$ of length $l_{uv} > 0$.
    Therefore, taking into account the domain and co-domain of $c_{uv}$, $\nu_{uv}$ and $z$ as a function, 
    $t_0, t \in \R$
    and
    $z(t^{'}) \in \Omega_{uv} = (0,l_{uv}$)
    need to hold for all $t^{'} \in [\min\{t_0,t\} , \max\{t_0,t\}]$.\footnote{
        Especially to mention, the requirement $z_0 + \int_{t_0}^{t} \nu_{uv}(s) ~ds \in \Omega_{uv}$ does not suffices, cf. Remark \ref{remark_OnT_{t_0,z_0}}.
    }
    
    In summary, 
    the curve 
    $\gamma_{z_0}: T_{t_0,z_0} \rightarrow \R_{\geq 0}, t \mapsto c_{uv}(t, z_0 + \int_{t_0}^{t} \nu_{uv}(s) ~ds)$ 
    is constant on
    \begin{align*}
        T_{t_0,z_0}
        =
        \Big \{
        t \in \R 
        ~|~
        0 
        <
        z_0 + \int_{t_0}^{t^{'}} \nu_{uv}(s) ~ds
        < 
        l_{uv}
        ~\forall t^{'} \in 
        \big[ \min\{t_0,t\} , \max\{t_0,t\} \big]
        \Big \}.
    \end{align*}
\end{proof}

\subsection{Proof of Lemma \ref{lemma_OnT_{t_0,z_0}}}

\begin{lemmaNoNb}[Lemma \ref{lemma_OnT_{t_0,z_0}}]
    Let $e_{uv} \in E$, $t_0 \in \R$, $z_0 \in \R$ hold and let $T_{t_0,z_0}$ be defined as in Lemma \ref{lemma_ConstantConcentrationAlongCharacteristicCurves}.

    \begin{enumerate}
        \item If $z_0 \not \in \Omega_{uv}$ holds, then $T_{t_0,z_0} = \emptyset$ holds.

        \item If $z_0 \in \Omega_{uv}$ holds, then $T_{t_0,z_0} \neq \emptyset$ holds. Even more, $t_0 \in T_{t_0,z_0}$ holds.


        
        \item If $z_0 \in \Omega_{uv}$ holds, 
        there exist $\tn{l} \in [-\infty,t_0)$ and $\tn{r} \in (t_0, \infty]$ 
        such that 

        \begin{enumerate}
            \item if $\tn{l} \neq - \infty$ holds, $z_0 + \int_{t_0}^{\tn{l}} \nu_{uv}(s) ~ds \in \{0,l_{uv}\}$ holds,
            
            \item if $\tn{r} \neq + \infty$ holds, $z_0 + \int_{t_0}^{\tn{r}} \nu_{uv}(s) ~ds \in \{0,l_{uv}\}$ holds and

            \item $T_{t_0,z_0} = (\tn{l},\tn{r}) = (t_0 - (t_0 - \tn{l}),t_0 + (\tn{r} - t_0))$ holds.
        \end{enumerate}
        
        \noindent Equivalently, there exist $\delta \tn{l}, \delta \tn{r} \in (0,\infty]$
        such that $T_{t_0,z_0} = (t_0 - \delta \tn{l},t_0 + \delta \tn{r})$ holds.
    \end{enumerate}
\end{lemmaNoNb}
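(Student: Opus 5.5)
Throughout, write $z(t') := z_0 + \int_{t_0}^{t'} \nu_{uv}(s)\,ds$. By Assumption \ref{assumption_SmoothnessAssumptions}, $\nu_{uv}$ is continuous, so $z$ is continuous (indeed $C^1$) on $\R$ with $z(t_0) = z_0$. All three parts are elementary consequences of this continuity together with the definition of $T_{t_0,z_0}$ in Lemma \ref{lemma_ConstantConcentrationAlongCharacteristicCurves}, and I would prove them in order.

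For part 1, take any $t \in \R$. The interval $[\min\{t_0,t\},\max\{t_0,t\}]$ always contains $t' = t_0$. So membership $t \in T_{t_0,z_0}$ would force $z(t_0) = z_0 \in (0,l_{uv})$, contradicting $z_0 \notin \Omega_{uv}$. Hence $T_{t_0,z_0} = \emptyset$.

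For part 2, choose $t = t_0$. The interval then degenerates to $\{t_0\}$, and the defining condition reduces to $z_0 \in (0,l_{uv})$, which holds by assumption. Thus $t_0 \in T_{t_0,z_0}$, and the set is nonempty.

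For part 3, I would first show that $T_{t_0,z_0}$ is an interval containing $t_0$. If $t \in T_{t_0,z_0}$ and $s$ lies between $t_0$ and $t$, then the interval between $t_0$ and $s$ is contained in the one between $t_0$ and $t$. The defining condition therefore passes to $s$, so $T_{t_0,z_0}$ is an interval, i.e. convex and containing $t_0$.

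Next I would set $t_{\text{r}} := \sup T_{t_0,z_0} \in (t_0,\infty]$ and $t_{\text{l}} := \inf T_{t_0,z_0} \in [-\infty,t_0)$.

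\begin{itemize}
\item \textbf{Openness.} The strict inequalities should make $T_{t_0,z_0}$ open. If $t \in T_{t_0,z_0}$, then the continuous function $z$ maps the compact interval between $t_0$ and $t$ into the open set $(0,l_{uv})$. Hence $z$ stays in $(0,l_{uv})$ on a slightly larger interval, by continuity at the endpoint $t$ (or uniform continuity on a compact neighbourhood). This openness step also yields the strict inequalities $t_{\text{r}} > t_0$ and $t_{\text{l}} < t_0$.
\item \textbf{Identification.} Openness plus convexity gives $T_{t_0,z_0} = (t_{\text{l}},t_{\text{r}})$, which is claim (c).
\item \textbf{Boundary values.} For (b), suppose $t_{\text{r}} < \infty$. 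On $[t_0,t_{\text{r}})$ we have $z \in (0,l_{uv})$, so by continuity $z(t_{\text{r}}) \in [0,l_{uv}]$. If $z(t_{\text{r}})$ lay in the open interval $(0,l_{uv})$, then $z$ would stay in $(0,l_{uv})$ on all of $[t_0,t_{\text{r}}]$. By the openness argument, $T_{t_0,z_0}$ would then extend beyond $t_{\text{r}}$, contradicting the definition of the supremum. Hence $z(t_{\text{r}}) \in \{0,l_{uv}\}$. Part (a) follows symmetrically using the infimum.
\end{itemize}

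Finally, the reformulation follows by setting $\delta t_{\text{l}} := t_0 - t_{\text{l}}$ and $\delta t_{\text{r}} := t_{\text{r}} - t_0$, both in $(0,\infty]$.

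The only step needing care is the openness/extension argument. It must be done on the whole compact interval, not just at the single point $t$, because of the non-monotonicity highlighted in Remark \ref{remark_OnT_{t_0,z_0}}.
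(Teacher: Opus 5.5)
Your proof is correct, but for part 3 it takes a different route from the paper.

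\textbf{The paper's route.} The paper defines the endpoints as exit times. With
\[
A_{\text{l}} = \{t' \le t_0 : z(t') \notin \Omega_{uv}\}, \qquad A_{\text{r}} = \{t' \ge t_0 : z(t') \notin \Omega_{uv}\},
\]
it sets $\tn{l} := \max A_{\text{l}}$ and $\tn{r} := \min A_{\text{r}}$, or $\mp\infty$ if the set is empty. It proves (a) and (b) by contradiction via the intermediate value theorem: if $z(\tn{l})$ lay strictly outside $[0,l_{uv}]$, then a crossing of the boundary in $(\tn{l},t_0)$ would give a later element of $A_{\text{l}}$. It then proves (c) by a direct two-inclusion case analysis.

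\textbf{Your route.} You take $\inf$ and $\sup$ of $T_{t_0,z_0}$ itself. You obtain (c) from two structural facts: $T_{t_0,z_0}$ is star-shaped about $t_0$, hence convex, and it is open. You then get (a) and (b) by a closure argument rather than the intermediate value theorem. The value $z(\tn{r})$ is a limit of values in $(0,l_{uv})$, and it cannot lie in the open interval, since then $\tn{r}$ would belong to $T_{t_0,z_0}$.

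\textbf{What each buys.} Your version never needs the extrema of $A_{\text{l}}$ and $A_{\text{r}}$ to be attained. The paper uses this tacitly, both for $\tn{l}<t_0<\tn{r}$ and for $\tn{l}=\max A_{\text{l}}\in A_{\text{l}}$ in (a); it rests on these sets being closed, a consequence of the continuity of $z$ that the paper does not state. Your version also isolates the only topological input, openness of $T_{t_0,z_0}$ via continuity at the far endpoint, in a single step. The paper's version instead gives the endpoints an explicit meaning as the last and first times the characteristic leaves $\Omega_{uv}$. In that version, openness of $T_{t_0,z_0}$ comes for free from the maximality of $\tn{l}$ and minimality of $\tn{r}$.
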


\begin{proof}
    Similarly to Remark \ref{remark_OnT_{t_0,z_0}}, we define the function
    \begin{align*}
        z: \R \longrightarrow \R, ~
        t \longmapsto z(t) := z_0 + \int_{t_0}^{t} \nu_{uv}(s) ~ds.
    \end{align*}

    \noindent By the fundamental theorem of calculus, the function $z$ is differentiable and therefore, continuous. 
    Moreover, using the function $z$, we can re-write the set $T_{t_0,z_0}$ as
    \begin{align*}
        T_{t_0,z_0}
        =
        \Big \{
        t \in \R 
        ~|~
        z(t^{'})
        \in
        \Omega_{uv} = (0,l_{uv})
        ~\forall t^{'} \in 
        \big[ \min\{t_0,t\} , \max\{t_0,t\} \big]
        \Big \}.
    \end{align*}
    
    \noindent 
    \textit{1.:}
    If $z_0 \not \in \Omega_{uv}$ holds, 
    let us assume that $T_{t_0,z_0} \neq \emptyset$ holds. 
    By definition of  $T_{t_0,z_0}$, 
    there exists a $t \in \R$, such that 
    $z(t^{'}) \in \Omega_{uv}$ holds for all 
    $t^{'} \in [\min\{t_0,t\} , \max\{t_0,t\}]$.
    However, for $t^{'} = t_0 \in [\min\{t_0,t\} , \max\{t_0,t\}]$, we obtain 
    $z(t^{'}) = z(t_0) = z_0 + \int_{t_0}^{t_0} \nu_{uv}(s) ~ds = z_0 \not \in \Omega_{uv}$ 
    -- a contradiction.
    \\
    \\\textit{2.:}
    If $z_0 \in \Omega_{uv}$ holds, 
    we obtain 
    $z(t_0) = z_0 + \int_{t_0}^{t_0} \nu_{uv}(s) ~ds = z_0 \in \Omega_{uv}$ 
    and therefore, trivially, $t_0 \in T_{t_0,z_0}$ and $T_{t_0,z_0} \neq \emptyset$ holds. 
    \\
    \\\textit{3.:}
    We define the sets
    \begin{align*}
        A_{\text{l}}
        :=&
        \{ t^{'} \in (-\infty,t_0] ~|~ z(t^{'}) \not \in \Omega_{uv} \}
        \text{ and }
        \\
        A_{\text{r}}
        :=&
        \{ t^{'} \in [t_0,+\infty) ~|~ z(t^{'}) \not \in \Omega_{uv} \}.
    \end{align*}
    
    \noindent Based on these sets, we choose
    \begin{align*}
        \tn{l} :=
        \begin{cases}
            \max A_{\text{l}} & \text{ if } A_{\text{l}} \neq \emptyset \\
            -\infty           & \text{ if } A_{\text{l}} = \emptyset
        \end{cases}
        \quad \text{ and } \quad
        \tn{r} :=
        \begin{cases}
            \min A_{\text{r}} & \text{ if } A_{\text{r}} \neq \emptyset \\
            +\infty           & \text{ if } A_{\text{r}} = \emptyset
        \end{cases}
        .
    \end{align*}

    \noindent If $z_0 \in \Omega_{uv}$ holds, 
    as seen above, $z(t_0) = z_0 \in \Omega_{uv}$ holds. Therefore, we observe that $t_0 \not \in A_{\text{l}}$ and  $t_0 \not \in A_{\text{r}}$, or equivalently, $A_{\text{l}} \subset (-\infty,t_0)$ and $A_{\text{r}} \subset (t_0,+\infty)$ hold. 
    Consequently, our choices of $\tn{l}$ and $\tn{r}$ indeed satisfy  $\tn{l} \in [-\infty,t_0)$ and $\tn{r} \in (t_0, \infty]$ and even more, $t_0 \in (\tn{l},\tn{r})$ holds.

    We now want to show that for these choices, indeed, \textit{5(a) - 5(c)} hold.
    \\
    \\\textit{3(a), 3(b):}
    \textit{
    If $\tn{l} \neq - \infty$ holds, $z_0 + \int_{t_0}^{\tn{l}} \nu_{uv}(s) ~ds \in \{0,l_{uv}\}$ holds and
    if $\tn{r} \neq + \infty$ holds, $z_0 + \int_{t_0}^{\tn{r}} \nu_{uv}(s) ~ds \in \{0,l_{uv}\}$ holds.
    }

    The claims are equivalent to show that 
    if $A_{\text{l}} \neq \emptyset$ holds, $z(\tn{l}) \in \{0,l_{uv}\}$ holds and
    if $A_{\text{r}} \neq \emptyset$ holds, $z(\tn{r}) \in \{0,l_{uv}\}$ holds.
    
    If $A_{\text{l}} \neq \emptyset$ holds, 
    let us assume that $z(\tn{l}) \not \in \{0,l_{uv}\}$ holds. 
    Since $\tn{l} = \max A_{\text{l}} \in A_{\text{l}}$, $z(\tn{l}) > l_{uv}$ or $z(\tn{l}) < 0$ must hold. 
    \gls*{Wlog}, let us assume that $z(\tn{l}) < 0$ holds.
    Since $z$ is continuous and $z(t_0) = z_0 \in \Omega_{uv}$, i.e., $0 < z(t_0)$ holds, 
    according to the intermediate value theorem, 
    there exists a $t^{'} \in (\tn{l},t_0)$ such that $z(t^{'}) = 0$ holds.
    Therefore, we can conclude that $\max A_{\text{l}} = \tn{l} < t^{'} \in A_{\text{l}}$ holds
    -- a contradiction to the choice of $\tn{l} = \max A_{\text{l}}$.

    If $A_{\text{r}} \neq \emptyset$ holds, 
    let us assume $z(\tn{r}) \not \in \{0,l_{uv}\}$ holds.
    Analogous arguments lead to a contradiction. 
    \\
    \\\textit{3(c), step 1:}
    \textit{$T_{t_0,z_0} \subset (\tn{l},\tn{r})$ holds.}

    Let $t \in T_{t_0,z_0} \subset \R$ and let us assume that $t \not \in (\tn{l},\tn{r})$, or equivalently, $t \leq \tn{l} < t_0$ or $t \geq \tn{r} > t_0$ holds.
    \\\textit{Case 1:}
    If $t \leq t_0$ and $A_{\text{l}} \neq \emptyset$ holds, we observe that $t \leq \tn{l} = \max A_{\text{l}} < t_0$ must hold.
    By definition of $T_{t_0,z_0}$,
    $z(t^{'}) \in \Omega_{uv}$ holds
    for all $t^{'} \in [\min\{t_0,t\} , \max\{t_0,t\}] = [t,t_0]$.
    However, by definition of $A_{\text{l}}$,
    for $t^{'} = \tn{l} \in [t,t_0) \subset  [t,t_0]$, 
    $z(t^{'}) = z(\tn{l}) \not \in \Omega_{uv}$ holds
    -- a contradiction to the first conclusion.
    \\\textit{Case 2:}
    If $t \leq t_0$ and $A_{\text{l}} = \emptyset$ holds, we observe that $t \leq \tn{l} = - \infty$ must hold -- a contradiction.
    \\\textit{Case 3:}
    If $t \geq t_0$ and $A_{\text{l}} \neq \emptyset$ holds, we observe that $t \geq \tn{r} = \min A_{\text{r}} > t_0$ must hold.
    By definition of $T_{t_0,z_0}$,
    $z(t^{'}) \in \Omega_{uv}$ holds
    for all $t^{'} \in [\min\{t_0,t\} , \max\{t_0,t\}] = [t_0,t]$.
    However, by definition of $A_{\text{r}}$,
    for $t^{'} = \tn{r} \in (t_0,t] \subset  [t,t_0]$, 
    $z(t^{'}) = z(\tn{r}) \not \in \Omega_{uv}$ holds
    -- a contradiction to the first conclusion.
    \\\textit{Case 4:}
    If $t \geq t_0$ and $A_{\text{l}} = \emptyset$ holds, we observe that $t \geq \tn{r} = + \infty$ must hold -- a contradiction.
    \\
    \\\textit{3(c), step 2:}
    \textit{$(\tn{l},\tn{r}) \subset T_{t_0,z_0}$ holds.}

    Let $t \in (\tn{l},\tn{r}) \subset \R$ and let us assume that $t \not \in  T_{t_0,z_0}$ holds.
    \\\textit{Case 1:}
    If $t \leq t_0$ and $A_{\text{l}} \neq \emptyset$ holds,
    by definition of $T_{t_0,z_0}$,
    there exists a $t^{'} \in [\min\{t_0,t\} , \max\{t_0,t\}] = [t,t_0] \subset (\tn{l},t_0] = (\max A_{\text{l}},t_0]$ such that
    $z(t^{'}) \not \in \Omega_{uv}$ holds.
    Therefore, we can conclude that $\max A_{\text{l}} = \tn{l} < t^{'} \in A_{\text{l}}$ holds
    -- a contradiction to the choice of $\tn{l} = \max A_{\text{l}}$.
    \\\textit{Case 2:}
    If $t \leq t_0$ and $A_{\text{l}} \neq \emptyset$ holds,
    by definition of $T_{t_0,z_0}$,
    there exists a $t^{'} \in [\min\{t_0,t\} , \max\{t_0,t\}] = [t,t_0] \subset (\tn{l},t_0] = (-\infty,t_0]$ such that
    $z(t^{'}) \not \in \Omega_{uv}$ holds.
    Therefore, we can conclude that $t^{'} \in A_{\text{l}}$ holds
    -- a contradiction to $A_{\text{l}} \neq \emptyset$.
    \\\textit{Case 3:}
    If $t \geq t_0$ and $A_{\text{r}} \neq \emptyset$ holds,
    by definition of $T_{t_0,z_0}$,
    there exists a $t^{'} \in [\min\{t_0,t\} , \max\{t_0,t\}] = [t_0,t] \subset [t_0,\tn{r}) = [t_0,\min A_{\text{r}})$ such that
    $z(t^{'}) \not \in \Omega_{uv}$ holds.
    Therefore, we can conclude that $\min A_{\text{r}} = \tn{r} > t^{'} \in A_{\text{r}}$ holds
    -- a contradiction to the choice of $\tn{r} = \min A_{\text{l}}$.
    \\\textit{Case 4:}
    If $t \geq t_0$ and $A_{\text{r}} \neq \emptyset$ holds,
    by definition of $T_{t_0,z_0}$,
    there exists a $t^{'} \in [\min\{t_0,t\} , \max\{t_0,t\}] = [t_0,t] \subset [t_0,\tn{r}) = [t_0,+\infty)$ such that
    $z(t^{'}) \not \in \Omega_{uv}$ holds.
    Therefore, we can conclude that $t^{'} \in A_{\text{r}}$ holds
    -- a contradiction to $A_{\text{r}} \neq \emptyset$.
\end{proof}

\subsection{Proof of Theorem \ref{theorem_AdvectiveTransportAlongEdges}}

\begin{theoremNoNb}[Theorem \ref{theorem_AdvectiveTransportAlongEdges}]
    Let $e_{uv} \in E$ hold.
    If the function $c_{uv}$ obeys Equation \eqref{align_EdgeDynamics_Advection}, for each $t_0 \in \R$ and $z_0 \in \Omega_{uv}$,
    \begin{align*}
        c_{uv} \left( t, z_0 + \int_{t_0}^{t} \nu_{uv}(s) ~ds \right) 
        = 
        c_{uv} \left( t, z_0 + \nuoverline_{uv}(t_0,t) ~ (t-t_0) \right)
        =
        c_{uv}(t_0,z_0)
    \end{align*}
    
    \noindent holds for all $t \in T_{t_0,z_0}$ with $T_{t_0,z_0}$ as defined in Lemma \ref{lemma_ConstantConcentrationAlongCharacteristicCurves} and for the mean velocity
    \begin{align*}
        \nuoverline_{uv}(t_0,t)
        :=
        \begin{cases}
            \fint_{t_0}^{t} \nu_{uv}(s) ~ds
            & \text{ if } t \neq t_0
            \\
            0
            & \text{ if } t = t_0
        \end{cases}
        =
        \begin{cases}
            \frac{1}{t - t_0}
            \int_{t_0}^{t} \nu_{uv}(s) ~ds
            & \text{ if } t \neq t_0
            \\
            0
            & \text{ if } t = t_0
        \end{cases}
        .
    \end{align*}
\end{theoremNoNb}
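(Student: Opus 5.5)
The statement to prove is Theorem~\ref{theorem_AdvectiveTransportAlongEdges}. My plan is to split the claimed chain of equalities into two independent parts. The first equality is a pure rewriting of the integral through the mean velocity. The second equality is the constancy along characteristics, which Lemma~\ref{lemma_ConstantConcentrationAlongCharacteristicCurves} already provides.

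Fix $t_0 \in \R$ and $z_0 \in \Omega_{uv}$. By Lemma~\ref{lemma_OnT_{t_0,z_0}}.2 we have $t_0 \in T_{t_0,z_0}$, so the set is nonempty. Let $t \in T_{t_0,z_0}$.

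\textbf{First equality.} I will show
\[
\int_{t_0}^{t}\nu_{uv}(s)\,ds = \nuoverline_{uv}(t_0,t)\,(t-t_0),
\]
distinguishing two cases.
\begin{itemize}
\item If $t \neq t_0$, this is immediate from the definition $\nuoverline_{uv}(t_0,t) = \frac{1}{t-t_0}\int_{t_0}^{t}\nu_{uv}(s)\,ds$, after multiplying by $t-t_0 \neq 0$.
\item If $t = t_0$, both sides vanish: the integral is over a degenerate interval, and the right-hand side equals $0 \cdot 0$ by the convention $\nuoverline_{uv}(t_0,t_0)=0$.
\end{itemize}
Hence the two spatial arguments of $c_{uv}$ coincide, and so do the function values. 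Continuity of $\nu_{uv}$ (Assumption~\ref{assumption_SmoothnessAssumptions}) guarantees that the integral is well defined.

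\textbf{Second equality.} Lemma~\ref{lemma_ConstantConcentrationAlongCharacteristicCurves} states that $\gamma_{t_0,z_0}$ is constant on $T_{t_0,z_0}$. Evaluating it at the two points $t$ and $t_0$ gives
\[
c_{uv}\Bigl(t, z_0+\int_{t_0}^{t}\nu_{uv}(s)\,ds\Bigr) = \gamma_{t_0,z_0}(t) = \gamma_{t_0,z_0}(t_0) = c_{uv}(t_0, z_0),
\]
where the last step uses that the integral from $t_0$ to $t_0$ vanishes. The only real subtlety lies in applying the lemma: its constancy conclusion requires that $t$ and $t_0$ belong to one and the same set $T_{t_0,z_0}$, so that the whole characteristic segment stays inside the open edge, where Equation~\eqref{align_EdgeDynamics_Advection} holds. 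This is exactly what membership in $T_{t_0,z_0}$ means.

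If one wants ``constant'' to mean more than piecewise constancy, one can add that $T_{t_0,z_0}$ is an interval by Lemma~\ref{lemma_OnT_{t_0,z_0}}.3. On an interval, a function with zero derivative, as derived via the method of characteristics, is globally constant by the mean value theorem. I would note this as the main point to check carefully; beyond it, the argument is a direct corollary of the two lemmas.
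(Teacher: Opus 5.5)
Your proposal is correct and follows essentially the same route as the paper: it takes the constancy of $\gamma_{t_0,z_0}$ from Lemma~\ref{lemma_ConstantConcentrationAlongCharacteristicCurves}, uses $t_0 \in T_{t_0,z_0}$ (Lemma~\ref{lemma_OnT_{t_0,z_0}}.2) to identify the constant as $c_{uv}(t_0,z_0)$, and gets the first equality by unwinding the definition of the mean velocity. The only differences are cosmetic. The paper identifies the constant by contradiction rather than by direct evaluation, and it writes out only the case $t \neq t_0$ for the mean-velocity rewriting, whereas you also treat the trivial case $t = t_0$ explicitly.
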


\begin{proof}
    For each $t_0 \in \R$ and $z_0 \in \Omega_{uv} \subset \R$, by Lemma \ref{lemma_ConstantConcentrationAlongCharacteristicCurves}, $\gamma_{t_0,z_0}(t) = c_{uv}(t, z_0 + \int_{t_0}^{t} \nu_{uv}(s) ~ds) = c$ holds for all $t \in T_{t_0,z_0}$ and a constant $c \in \R_{\geq 0}$. 
    Thus, it suffices to show that $c = c_{uv}(t_0,z_0)$ holds.

    Let us assume that $c \neq c_{uv}(t_0,z_0)$ holds. 
    Since $z_0 \in \Omega_{uv}$, by Lemma \ref{lemma_OnT_{t_0,z_0}}, $t_0 \in T_{t_0,z_0}$ holds. 
    However, for $t = t_0 \in T_{t_0,z_0}$, we obtain $c = \gamma_{t_0,z_0}(t) = c_{uv}(t, z_0 + \int_{t_0}^{t_0} \nu_{uv}(s) ~ds) = c_{uv}(t_0,z_0)$ -- a contradiction.
    Therefore, $c_{uv}(t, z_0 + \int_{t_0}^{t} \nu_{uv}(s) ~ds) = c_{uv}(t_0,z_0)$ holds.
    Finally, by definition of $\nuoverline_{uv}(t_0,t)$, for $t \neq t_0$, we obtain
    \begin{align*}
        \nuoverline_{uv}(t_0,t) \cdot (t-t_0)
        =
        \frac{t - t_0}{t - t_0}
        \int_{t_0}^{t} \nu_{uv}(s) ~ds
        =
        \int_{t_0}^{t} \nu_{uv}(s) ~ds,
    \end{align*}

    \noindent allowing to conclude that $c_{uv}(t, z_0 + \int_{t_0}^{t} \nu_{uv}(s) ~ds) = c_{uv}(t, z_0 + \nuoverline_{uv}(t_0,t) ~ (t-t_0))$ holds.
\end{proof}

\subsection{Proof of Lemma \ref{lemma_PassThroughTime}}

\begin{lemmaNoNb}[Lemma \ref{lemma_PassThroughTime}]
    Let $e_{uv} \in E$ and $t \in \R$ hold.
    If there exists a pass-through time $\delta t_{uv} \in \R$ \gls*{wrt} $e_{uv} \in E$ and $t \in \R$, the following properties hold:

    \begin{enumerate}
        \item $\delta t_{uv} \in \R_{\neq 0}$ and thus, $t - \delta t_{uv} \neq t$ holds.

        \item There exists no other pass-through time of the same sign 
        (i.e., there exists either exactly one or no positive pass-through time,
        and there exists exactly one or no negative pass-through time).

        \item The mean velocity 
        \begin{align*}
            \nuoverline_{uv}(t) 
            := 
            \nuoverline_{uv}(t - \delta t_{uv},t) 
            = 
            \fint_{t - \delta t_{uv}}^{t} \nu_{uv}(s) ~ds 
            = 
            \frac{1}{\delta t_{uv}} \int_{t - \delta t_{uv}}^{t} \nu_{uv}(s) ~ds
        \end{align*}
     
        during the time interval $[\min\{t - \delta t_{uv},t\},\max\{t - \delta t_{uv},t\}]$ 
        as defined in Theorem \ref{theorem_AdvectiveTransportAlongEdges}
        is equal to
        $\nuoverline_{uv}(t) = \frac{l_{uv}}{\delta t_{uv}}$
        and thus satisfies
        \begin{align*}
            \nuoverline_{uv}(t) > 0 &\iff \delta t_{uv} > 0 \text{ and} \\
            \nuoverline_{uv}(t) < 0 &\iff \delta t_{uv} < 0
            .
        \end{align*}

        \item The flows $q_{uv}(t - \delta t_{uv})$ and $q_{uv}(t)$ linked to the velocity $\nu_{uv}(\cdot) = \frac{q_{vu(\cdot)}}{\alpha_{vu}}$ satisfy
        \begin{align*}
            \begin{cases}
                q_{uv}(t - \delta t_{uv}), q_{uv}(t) \geq 0
                & \text{ if } \delta t_{uv} > 0
                \\
                q_{uv}(t - \delta t_{uv}), q_{uv}(t) \leq 0
                & \text{ if } \delta t_{uv} < 0
            \end{cases}
            .
        \end{align*}
    \end{enumerate}
\end{lemmaNoNb}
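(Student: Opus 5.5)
I will prove the four items in order, working throughout with the auxiliary function $\zcurl(\delta t) = \int_{t-\delta t}^{t} \nu_{uv}(s)\,ds$ from Equation \eqref{align_TransportTime_Functionscurlzcurl} and with Conditions \eqref{align_PassThroughTime_Condition1} and \eqref{align_PassThroughTime_Condition2}. The function $\zcurl$ is continuous and differentiable, with $\zcurl'(\delta t) = \nu_{uv}(t-\delta t)$, because $\nu_{uv}$ is continuous (Assumption \ref{assumption_SmoothnessAssumptions}).

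\emph{Item 1.} If $\delta t_{uv}=0$, then $\zcurl(0)=0\neq l_{uv}$, since $l_{uv}>0$. This contradicts Condition \eqref{align_PassThroughTime_Condition1}.

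\emph{Item 3.} This is immediate from Condition \eqref{align_PassThroughTime_Condition1}: we get $\nuoverline_{uv}(t) = \frac{1}{\delta t_{uv}}\zcurl(\delta t_{uv}) = \frac{l_{uv}}{\delta t_{uv}}$, which is well-defined by item 1. The sign equivalences then follow because $l_{uv}>0$.

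\emph{Item 2 (uniqueness among pass-through times of the same sign).} I treat the positive case; the negative case is symmetric. Suppose $0<\delta_1<\delta_2$ are both positive pass-through times. Then $t-\delta_1 \in (t-\delta_2, t)$, so Condition \eqref{align_PassThroughTime_Condition2} for $\delta_2$, evaluated at $t' = t-\delta_1$, gives $\int_{t-\delta_2}^{t-\delta_1}\nu_{uv}\,ds \in (0,l_{uv})$. On the other hand, additivity of the integral gives
\[
\int_{t-\delta_2}^{t-\delta_1}\nu_{uv}\,ds \;=\; \zcurl(\delta_2)-\zcurl(\delta_1) \;=\; l_{uv}-l_{uv} \;=\; 0,
\]
which is a contradiction. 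For two negative times $\delta_1<\delta_2<0$, the same splitting is applied on the interval $(t, t-\delta_1)$: take the point $t' = t-\delta_2$ inside it and use Condition \eqref{align_PassThroughTime_Condition2} for $\delta_1$. Care is needed here because the integral $\int_{t-\delta}^{t'}$ is oriented, so I will write the orientation out explicitly.

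\emph{Item 4.} This is where the main work lies. Consider first $\delta t_{uv}>0$, and set $g(t') := \int_{t-\delta t_{uv}}^{t'}\nu_{uv}(s)\,ds$ on $[t-\delta t_{uv}, t]$. Then $g$ is continuously differentiable with $g' = \nu_{uv}$, and it satisfies $g(t-\delta t_{uv}) = 0$, $g(t) = l_{uv}$, and $g \in (0,l_{uv})$ on the interior.

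For the left endpoint: if $\nu_{uv}(t-\delta t_{uv})<0$, then $g$ would be negative just to the right of $t-\delta t_{uv}$, contradicting $g>0$ there. Hence $\nu_{uv}(t-\delta t_{uv})\ge 0$.

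For the right endpoint: if $\nu_{uv}(t)<0$, then $g$ would exceed $l_{uv}$ just to the left of $t$, contradicting $g<l_{uv}$ there. Hence $\nu_{uv}(t)\ge 0$.

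Multiplying by $\alpha_{uv}>0$ gives the same signs for $q_{uv}$. For $\delta t_{uv}<0$, the interval is $[t, t-\delta t_{uv}]$ and the orientation of $g$ is reversed: $g$ runs from $0$ at $t-\delta t_{uv}$ to $l_{uv}$ at $t$, moving leftward in time. The same one-sided derivative argument then yields $\nu_{uv}\le 0$ at both endpoints.

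I expect the main obstacle to be keeping the orientation of the integrals consistent in the negative cases of items 2 and 4; the underlying arguments are just additivity of the integral and a one-sided derivative sign test.
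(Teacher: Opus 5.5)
Your proposal is correct and follows the paper's proof essentially step for step. Item 1 is a contradiction from the integral over a degenerate interval being $0\neq l_{uv}$, item 2 splits the integral additively to get $l_{uv}-l_{uv}=0$ at an interior point, item 3 is a direct computation, and item 4 is a local sign argument at both endpoints. The only difference is cosmetic: in item 4 you use the one-sided derivative $g'=\nu_{uv}$, whereas the paper uses continuity of $\nu_{uv}$ to get a neighborhood of strict sign and integrates over it.
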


\begin{proof}
    \textit{1.:}
    Let us assume that $\delta t_{uv} = 0$ holds.
    Consequently,
    (1) by basic results from analysis,
    (2) condition \eqref{align_PassThroughTime_Condition1} and
    (3) definition of the pipe length $l_{uv} > 0$,
    \begin{align*}
        0
        \overset{\text{(1)}}{=}
        \int_{t - 0}^{t} \nu_{uv}(s) ~ds
        \overset{\text{(1)}}{=}
        \int_{t - \delta t_{uv}}^{t} \nu_{uv}(s) ~ds
        \overset{\text{(2)}}{=}
        l_{uv}
        \overset{\text{(3)}}{>}
        0
    \end{align*}

    \noindent holds -- a contradiction.
    \\
    \\\textit{2.:}
    Let us assume that there exist two different pass-through times $\delta t_{1uv}, \delta t_{2uv} \in \R_{\neq 0}$ with the same sign.
    If $\delta t_{1uv}, \delta t_{2uv} > 0$ holds, \gls*{wlog}, let $0 < \delta t_{1uv} < \delta t_{2uv}$ hold.
    Consequently, 
    for $t^{'} = t - \delta t_{1uv} \in (t - \delta t_{2uv},t) = (\min\{t - \delta t_{2uv},t\},\max\{t - \delta t_{2uv},t\})$,
    by
    (1) by basic results from analysis and
    (2) condition \eqref{align_PassThroughTime_Condition1},
    we obtain
    \begin{align*}
        \int_{t - \delta t_{2uv}}^{t^{'}} \nu_{uv}(s) ~ds
        \overset{\text{(1)}}{=}
        \int_{t - \delta t_{2uv}}^{t} \nu_{uv}(s) ~ds
        -
        \int_{t - \delta t_{1uv}}^{t} \nu_{uv}(s) ~ds
        \overset{\text{(2)}}{=}
        l_{uv}
        -
        l_{uv}
        \overset{\text{(1)}}{=}
        0
    \end{align*}

    \noindent -- a contradiction to condition \eqref{align_PassThroughTime_Condition2}.

    If $\delta t_{1uv}, \delta t_{2uv} < 0$ holds, \gls*{wlog}, let $\delta t_{2uv} < \delta t_{1uv} < 0$ hold.
    Consequently, 
    for $t^{'} = t - \delta t_{1uv} \in (t,t - \delta t_{2uv}) = (\min\{t - \delta t_{2uv},t\},\max\{t - \delta t_{2uv},t\})$,
    we can use the same calculation to again obtain a contradiction.
    \\
    \\\textit{3.:}
    By
    (1) definition of the mean velocity $\nuoverline_{uv}(t - \delta t_{uv},t)$ as defined in Theorem \ref{theorem_AdvectiveTransportAlongEdges} and
    (2) condition \eqref{align_PassThroughTime_Condition1},
    we indeed obtain
    \begin{align*}
        \nuoverline_{uv}(t) 
        := 
        \nuoverline_{uv}(t - \delta t_{uv},t) 
        \overset{\text{(1)}}{=}
        \fint_{t - \delta t_{uv}}^{t} \nu_{uv}(s) ~ds 
        \overset{\text{(1)}}{=} 
        \frac{1}{\delta t_{uv}} \int_{t - \delta t_{uv}}^{t} \nu_{uv}(s) ~ds
        \overset{\text{(2)}}{=}
        \frac{l_{uv}}{\delta t_{uv}}.
    \end{align*}

    \noindent The second claim then follows by the fact that by definition of the pipe length, $l_{uv} > 0$ holds.
    \\
    \\\textit{4.:}
    If $\delta t_{uv} > 0$ holds,
    let us assume that $q_{uv}(t - \delta t_{uv}) < 0$, and consequently, $\nu_{uv}(t - \delta t_{uv}) < 0$ holds.
    Since $\nu_{uv} \in C^0(\R,\R)$ is continuous, 
    there exists a $t^{'} \in (t - \delta t_{uv},t) = (\min\{t - \delta t_{uv},t\},\max\{t - \delta t_{uv},t\}) $ 
    such that $\nu_{uv}(s) < 0$ holds 
    for all $s \in (t - \delta t_{uv},t^{'})$.
    Consequently, by
    (1)  basic results from analysis,
    we obtain
    \begin{align*}
        \int_{t - \delta t_{uv}}^{t^{'}}
        \underbrace{
        \nu_{uv}(s) 
        }_{
        < 0
        }
        ~ds
        \overset{\text{(1)}}{<}
        0
    \end{align*}

    \noindent -- a contradiction to condition \eqref{align_PassThroughTime_Condition2}.

    Similarly, 
    if $\delta t_{uv} > 0$ holds,
    let us assume that $q_{uv}(t) < 0$, and consequently, $\nu_{uv}(t) < 0$ holds.
    Since $\nu_{uv} \in C^0(\R,\R)$ is continuous, 
    there exists a $t^{'} \in (t - \delta t_{uv},t) = (\min\{t - \delta t_{uv},t\},\max\{t - \delta t_{uv},t\})$ 
    such that $\nu_{uv}(s) < 0$ holds 
    for all $s \in (t^{'},t)$.
    Consequently, by
    (1) basic results from analysis and
    (2) condition \eqref{align_PassThroughTime_Condition1},
    we obtain
    \begin{align*}
        \int_{t - \delta t_{uv}}^{t^{'}}
        \nu_{uv}(s) ~ds
        \overset{\text{(1)}}{=}
        \underbrace{
        \int_{t - \delta t_{uv}}^{t}
        \nu_{uv}(s) ~ds
        }_{
        \overset{\text{(2)}}{=} l_{uv}
        }
        \underbrace{
        -
        \underbrace{
        \int_{t^{'}}^{t}
        \underbrace{
        \nu_{uv}(s)
        }_{
        < 0
        }
        ~ds
        }_{
        \overset{\text{(1)}}{<} 0
        }
        }_{
        > 0
        }
        >
        l_{uv}
    \end{align*}

    \noindent -- again a contradiction to condition \eqref{align_PassThroughTime_Condition2}.

    In contrast, 
    if $\delta t_{uv} < 0$ holds,
    let us assume that $q_{uv}(t - \delta t_{uv}) > 0$, and consequently, $\nu_{uv}(t - \delta t_{uv}) > 0$ holds.
    Since $\nu_{uv} \in C^0(\R,\R)$ is continuous, 
    there exists a $t^{'} \in (t,t - \delta t_{uv}) = (\min\{t - \delta t_{uv},t\},\max\{t - \delta t_{uv},t\})$ 
    such that $\nu_{uv}(s) > 0$ holds 
    for all $s \in (t^{'},t - \delta t_{uv})$.
    Consequently, by
    (1) basic results from analysis,
    we obtain
    \begin{align*}
        \int_{t - \delta t_{uv}}^{t^{'}}
        \nu_{uv}(s) ~ds
        \overset{\text{(1)}}{=}
        -
        \underbrace{
        \int_{t^{'}}^{t - \delta t_{uv}}
        \underbrace{
        \nu_{uv}(s)
        }_{
        > 0
        }
        ~ds
        }_{
        \overset{\text{(1)}}{>} 0
        }
        <
        0
    \end{align*}

    \noindent -- again a contradiction to condition \eqref{align_PassThroughTime_Condition2}.

    Similarly,
    if $\delta t_{uv} < 0$ holds,
    let us assume that $q_{uv}(t) > 0$, and consequently, $\nu_{uv}(t) > 0$ holds.
    Since $\nu_{uv} \in C^0(\R,\R)$ is continuous, 
    there exists a $t^{'} \in (t,t - \delta t_{uv}) = (\min\{t - \delta t_{uv},t\},\max\{t - \delta t_{uv},t\})$ 
    such that $\nu_{uv}(s) > 0$ holds 
    for all $s \in (t,t^{'})$.
    Consequently, by
    (1) basic results from analysis and
    (2) condition \eqref{align_PassThroughTime_Condition1},
    we obtain
    \begin{align*}
        \int_{t - \delta t_{uv}}^{t^{'}}
        \nu_{uv}(s) ~ds
        \overset{\text{(1)}}{=}&~
        \int_{t - \delta t_{uv}}^{t}
        \nu_{uv}(s) ~ds
        -
        \int_{t^{'}}^{t}
        \nu_{uv}(s) ~ds
        \\
        \overset{\text{(1)}}{=}&~
        \underbrace{
        \int_{t - \delta t_{uv}}^{t}
        \nu_{uv}(s) ~ds
        }_{
        \overset{\text{(2)}}{=} l_{uv}
        }
        \underbrace{
        +
        \underbrace{
        \int_{t}^{t^{'}}
        \underbrace{
        \nu_{uv}(s)
        }_{
        > 0
        }
        ~ds
        }_{
        \overset{\text{(1)}}{>} 0
        }
        }_{
        > 0
        }
        >
        l_{uv}
    \end{align*}

    \noindent -- again a contradiction to condition \eqref{align_PassThroughTime_Condition2}.   

    \begin{remark}
    \label{remark_PassThroughTime}
        Note that if one of the flow rates discussed above are zero, that they can only be zero in this point of time, but not on a whole sub-interval of the time interval $[\min\{t - \delta t_{uv},t\},\max\{t - \delta t_{uv},t\}]$.

        More precisely, 
        if $\delta t_{uv} > 0$ holds,
        let us assume that 
        there exists an $\epsilon > 0$ 
        such that 
        for all $s \in [t - \delta t_{uv},t - \delta t_{uv}+\epsilon)$, 
        $q_{uv}(s) = 0$, and consequently, $\nu_{uv}(s) = 0$, holds.
        Thus, for  
        $t^{'}  = \min\{t - \delta t_{uv}+\epsilon,t\} \in (t - \delta t_{uv},t) = (\min\{t - \delta t_{uv},t\},\max\{t - \delta t_{uv},t\})$,
        by
        (1)  basic results from analysis,
        we obtain
        \begin{align*}
            \int_{t - \delta t_{uv}}^{t^{'}}
            \underbrace{
            \nu_{uv}(s) 
            }_{
            = 0
            }
            ~ds
            \overset{\text{(1)}}{=}
            0
        \end{align*}
    
        \noindent -- a contradiction to condition \eqref{align_PassThroughTime_Condition2}.
        In combination with the findings above, therefore, 
        $q_{uv}(t - \delta t_{uv}) \geq 0$ needs to hold, 
        and if the (unlikely) case $q_{uv}(t - \delta t_{uv}) = 0$ indeed applies, we observe that
        for all $\epsilon > 0$
        there exists an $s \in [t - \delta t_{uv},t - \delta t_{uv}+\epsilon)$ 
        such that
        $q_{uv}(s) \neq 0$, and consequently, $\nu_{uv}(s) \neq 0$, holds.
        Along the same lines, one can also show that indeed, 
        $q_{uv}(s) > 0$, and consequently, $\nu_{uv}(s) > 0$, holds
        (otherwise, we would again obtain a contradiction to condition \eqref{align_SelfLoopTime_Condition2}).

        The other cases are analogous adaptations of this discussion and the findings above.
    \end{remark}
\end{proof}

\subsection{Proof of Lemma \ref{lemma_SelfLoopTime}}

\begin{lemmaNoNb}[Lemma \ref{lemma_SelfLoopTime}]
    Let $e_{uv} \in E$ and $t \in \R$ hold.
    If there exists a self-loop time $\delta t_{uv} \in \R_{\neq 0}$ \gls*{wrt} $e_{uv} \in E$ and $t \in \R$, the following properties hold:

    \begin{enumerate}
        \item There exists no other self-loop time of the same sign 
        (i.e., there exists either exactly one or no positive self-loop time,
        and there exists exactly one or no negative self-loop time).

        \item The mean velocity 
        \begin{align*}
            \nuoverline_{uv}(t) 
            := 
            \nuoverline_{uv}(t - \delta t_{uv},t) 
            = 
            \fint_{t - \delta t_{uv}}^{t} \nu_{uv}(s) ~ds 
            = 
            \frac{1}{\delta t_{uv}} \int_{t - \delta t_{uv}}^{t} \nu_{uv}(s) ~ds
        \end{align*}

        \noindent during the time interval $[\min\{t - \delta t_{uv},t\},\max\{t - \delta t_{uv},t\}]$ 
        as defined in Theorem \ref{theorem_AdvectiveTransportAlongEdges}
        satisfies
        $\nuoverline_{uv}(t) = 0$.

        \item The flows $q_{uv}(t - \delta t_{uv})$ and $q_{uv}(t)$ linked to the velocity $\nu_{uv}(\cdot) = \frac{q_{vu(\cdot)}}{\alpha_{vu}}$ satisfy
        \begin{align*}
            \begin{cases}
                q_{uv}(t - \delta t_{uv}) \geq 0,~ q_{uv}(t) \leq 0
                & \text{ if } \delta t_{uv} > 0
                \\
                q_{uv}(t - \delta t_{uv}) \leq 0,~ q_{uv}(t) \geq 0
                & \text{ if } \delta t_{uv} < 0
            \end{cases}
            .
        \end{align*}
    \end{enumerate}
\end{lemmaNoNb}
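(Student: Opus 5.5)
I will follow the proof of Lemma \ref{lemma_PassThroughTime} closely, using only the defining conditions \eqref{align_SelfLoopTime_Condition1} and \eqref{align_SelfLoopTime_Condition2}, elementary additivity of integrals, and continuity of $\nu_{uv}$ from Assumption \ref{assumption_SmoothnessAssumptions}.

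\emph{Item 1.} Suppose there are two distinct self-loop times $\delta t_{1uv},\delta t_{2uv}$ of the same sign. In the positive case, assume without loss of generality $0<\delta t_{1uv}<\delta t_{2uv}$ and set $t^{'}=t-\delta t_{1uv}$. This point lies in the open interval $(t-\delta t_{2uv},t)$. By additivity and condition \eqref{align_SelfLoopTime_Condition1} applied to both times,
\[
\int_{t-\delta t_{2uv}}^{t^{'}}\nu_{uv}(s)\,ds=\int_{t-\delta t_{2uv}}^{t}\nu_{uv}(s)\,ds-\int_{t-\delta t_{1uv}}^{t}\nu_{uv}(s)\,ds=0-0=0.
\]
This value does not lie in $(0,l_{uv})$, which contradicts \eqref{align_SelfLoopTime_Condition2} for $\delta t_{2uv}$. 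The negative case works the same way with $\delta t_{2uv}<\delta t_{1uv}<0$, where $t^{'}=t-\delta t_{1uv}$ lies in $(t,t-\delta t_{2uv})$.

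\emph{Item 2.} This follows directly. Since $\delta t_{uv}\neq 0$, the mean velocity is defined by the quotient, and it equals $\frac{1}{\delta t_{uv}}\cdot 0=0$ by \eqref{align_SelfLoopTime_Condition1}.

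\emph{Item 3.} I argue by contradiction using continuity, as in item 4 of Lemma \ref{lemma_PassThroughTime}. The four sign claims follow from two observations.

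\begin{itemize}
\item Near the starting endpoint $t-\delta t_{uv}$, the oriented integral $\int_{t-\delta t_{uv}}^{t^{'}}\nu_{uv}$ must become positive. If $\delta t_{uv}>0$ and $\nu_{uv}(t-\delta t_{uv})<0$, then $\nu_{uv}<0$ on some $(t-\delta t_{uv},t^{'})$, so that integral is negative, a contradiction. If $\delta t_{uv}<0$, the starting point is the right endpoint. If $\nu_{uv}(t-\delta t_{uv})>0$, then $\nu_{uv}>0$ on some $(t^{'},t-\delta t_{uv})$, and $\int_{t-\delta t_{uv}}^{t^{'}}\nu_{uv}=-\int_{t^{'}}^{t-\delta t_{uv}}\nu_{uv}<0$, again a contradiction.
\item Near the endpoint $t$, I rewrite the integral using \eqref{align_SelfLoopTime_Condition1}: $\int_{t-\delta t_{uv}}^{t^{'}}\nu_{uv}=-\int_{t^{'}}^{t}\nu_{uv}$. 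If $\delta t_{uv}>0$ and $\nu_{uv}(t)>0$, take $t^{'}$ close to $t$ with $\nu_{uv}>0$ on $(t^{'},t)$; then the integral is negative, a contradiction. If $\delta t_{uv}<0$ and $\nu_{uv}(t)<0$, take $t^{'}>t$ close to $t$ with $\nu_{uv}<0$ on $(t,t^{'})$; then $-\int_{t^{'}}^{t}\nu_{uv}=\int_t^{t^{'}}\nu_{uv}<0$, a contradiction.
\end{itemize}

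Multiplying by $\alpha_{uv}>0$ transfers the signs of $\nu_{uv}$ to $q_{uv}$.

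The main obstacle is keeping the orientation signs straight in the negative case, where the interval runs from $t$ to $t-\delta t_{uv}$ and the ``start'' is the right endpoint. To avoid sign errors, I will write each oriented integral explicitly as $\pm\int_a^b$ with $a<b$ before comparing it to $(0,l_{uv})$. Only the lower bound $0$ of $(0,l_{uv})$ is needed throughout; the upper bound $l_{uv}$ plays no role.
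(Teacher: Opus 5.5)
Your proposal is correct and follows the paper's proof essentially step for step. Both use the same additivity argument with $t^{'}=t-\delta t_{1uv}$ to get uniqueness, the same direct computation for the mean velocity, and the same four continuity-based contradiction cases near $t-\delta t_{uv}$ and near $t$ for the flow signs.
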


\noindent The proof of Lemma \ref{lemma_SelfLoopTime} is very similar to the proof of Lemma \ref{lemma_PassThroughTime} and only requires a few changes. 
The interested reader can compare both proofs and investigate why the changes are required, and how they induce the different results in Lemma \ref{lemma_SelfLoopTime}.3 as compared to Lemma \ref{lemma_PassThroughTime}.4.

\begin{proof}
    \textit{1.:}
    Let us assume that there exist two different self-loop times $\delta t_{1uv}, \delta t_{2uv} \in \R_{\neq 0}$ with the same sign.
    If $\delta t_{1uv}, \delta t_{2uv} > 0$ holds, \gls*{wlog}, let $0 < \delta t_{1uv} < \delta t_{2uv}$ hold.
    Consequently, 
    for $t^{'} = t - \delta t_{1uv} \in (t - \delta t_{2uv},t) = (\min\{t - \delta t_{2uv},t\},\max\{t - \delta t_{2uv},t\})$,
    by
    (1) by basic results from analysis and
    (2) condition \eqref{align_SelfLoopTime_Condition1},
    we obtain
    \begin{align*}
        \int_{t - \delta t_{2uv}}^{t^{'}} \nu_{uv}(s) ~ds
        \overset{\text{(1)}}{=}
        \int_{t - \delta t_{2uv}}^{t} \nu_{uv}(s) ~ds
        -
        \int_{t - \delta t_{1uv}}^{t} \nu_{uv}(s) ~ds
        \overset{\text{(2)}}{=}
        0
        -
        0
        \overset{\text{(1)}}{=}
        0
    \end{align*}

    \noindent -- a contradiction to condition \eqref{align_PassThroughTime_Condition2}.

    If $\delta t_{1uv}, \delta t_{2uv} < 0$ holds, \gls*{wlog}, let $\delta t_{2uv} < \delta t_{1uv} < 0$ hold.
    Consequently, 
    for $t^{'} = t - \delta t_{1uv} \in (t,t - \delta t_{2uv}) = (\min\{t - \delta t_{2uv},t\},\max\{t - \delta t_{2uv},t\})$,
    we can use the same calculation to again obtain a contradiction.
    \\
    \\\textit{2.:}
    By
    (1) definition of the mean velocity $\nuoverline_{uv}(t - \delta t_{uv},t)$ as defined in Theorem \ref{theorem_AdvectiveTransportAlongEdges} and
    (2) condition \eqref{align_SelfLoopTime_Condition1},
    we indeed obtain
    \begin{align*}
        \nuoverline_{uv}(t) 
        := 
        \nuoverline_{uv}(t - \delta t_{uv},t) 
        \overset{\text{(1)}}{=}
        \fint_{t - \delta t_{uv}}^{t} \nu_{uv}(s) ~ds 
        \overset{\text{(1)}}{=} 
        \frac{1}{\delta t_{uv}} \int_{t - \delta t_{uv}}^{t} \nu_{uv}(s) ~ds
        \overset{\text{(2)}}{=}
        0.
    \end{align*}

    \noindent 
    \textit{3.:}
    If $\delta t_{uv} > 0$ holds,
    let us assume that $q_{uv}(t - \delta t_{uv}) < 0$, and consequently, $\nu_{uv}(t - \delta t_{uv}) < 0$ holds.
    Since $\nu_{uv} \in C^0(\R,\R)$ is continuous, 
    there exists a $t^{'} \in (t - \delta t_{uv},t) = (\min\{t - \delta t_{uv},t\},\max\{t - \delta t_{uv},t\})$ 
    such that $\nu_{uv}(s) < 0$ holds 
    for all $s \in (t - \delta t_{uv},t^{'})$.
    Consequently, by
    (1)  basic results from analysis,
    we obtain
    \begin{align*}
        \int_{t - \delta t_{uv}}^{t^{'}}
        \underbrace{
        \nu_{uv}(s) 
        }_{
        < 0
        }
        ~ds
        \overset{\text{(1)}}{<}
        0
    \end{align*}

    \noindent -- a contradiction to condition \eqref{align_SelfLoopTime_Condition2}.

    Similarly, 
    if $\delta t_{uv} > 0$ holds,
    let us assume that $q_{uv}(t) > 0$, and consequently, $\nu_{uv}(t) > 0$ holds.
    Since $\nu_{uv} \in C^0(\R,\R)$ is continuous, 
    there exists a $t^{'} \in (t - \delta t_{uv},t) = (\min\{t - \delta t_{uv},t\},\max\{t - \delta t_{uv},t\})$ 
    such that $\nu_{uv}(s) > 0$ holds 
    for all $s \in (t^{'},t)$.
    Consequently, by
    (1) basic results from analysis and
    (2) condition \eqref{align_SelfLoopTime_Condition1},
    we obtain
    \begin{align*}
        \int_{t - \delta t_{uv}}^{t^{'}}
        \nu_{uv}(s) ~ds
        \overset{\text{(1)}}{=}
        \underbrace{
        \int_{t - \delta t_{uv}}^{t}
        \nu_{uv}(s) ~ds
        }_{
        \overset{\text{(2)}}{=} 0
        }
        \underbrace{
        -
        \underbrace{
        \int_{t^{'}}^{t}
        \underbrace{
        \nu_{uv}(s)
        }_{
        > 0
        }
        ~ds
        }_{
        \overset{\text{(1)}}{>} 0
        }
        }_{
        < 0
        }
        <
        0
    \end{align*}

    \noindent -- again a contradiction to condition \eqref{align_SelfLoopTime_Condition2}.

    In contrast, 
    if $\delta t_{uv} < 0$ holds,
    let us assume that $q_{uv}(t - \delta t_{uv}) > 0$, and consequently, $\nu_{uv}(t - \delta t_{uv}) > 0$ holds.
    Since $\nu_{uv} \in C^0(\R,\R)$ is continuous, 
    there exists a $t^{'} \in (t,t - \delta t_{uv}) = (\min\{t - \delta t_{uv},t\},\max\{t - \delta t_{uv},t\})$ 
    such that $\nu_{uv}(s) > 0$ holds 
    for all $s \in (t^{'},t - \delta t_{uv})$.
    Consequently, by
    (1) basic results from analysis,
    we obtain
    \begin{align*}
        \int_{t - \delta t_{uv}}^{t^{'}}
        \nu_{uv}(s) ~ds
        \overset{\text{(1)}}{=}
        -
        \underbrace{
        \int_{t^{'}}^{t - \delta t_{uv}}
        \underbrace{
        \nu_{uv}(s)
        }_{
        > 0
        }
        ~ds
        }_{
        \overset{\text{(1)}}{>} 0
        }
        <
        0
    \end{align*}

    \noindent -- again a contradiction to condition \eqref{align_SelfLoopTime_Condition2}.

    Similarly,
    if $\delta t_{uv} < 0$ holds,
    let us assume that $q_{uv}(t) < 0$, and consequently, $\nu_{uv}(t) < 0$ holds.
    Since $\nu_{uv} \in C^0(\R,\R)$ is continuous, 
    there exists a $t^{'} \in (t,t - \delta t_{uv}) = (\min\{t - \delta t_{uv},t\},\max\{t - \delta t_{uv},t\})$ 
    such that $\nu_{uv}(s) < 0$ holds 
    for all $s \in (t,t^{'})$.
    Consequently, by
    (1) basic results from analysis and
    (2) condition \eqref{align_SelfLoopTime_Condition1},
    we obtain
    \begin{align*}
        \int_{t - \delta t_{uv}}^{t^{'}}
        \nu_{uv}(s) ~ds
        \overset{\text{(1)}}{=}&~
        \int_{t - \delta t_{uv}}^{t}
        \nu_{uv}(s) ~ds
        -
        \int_{t^{'}}^{t}
        \nu_{uv}(s) ~ds
        \\
        \overset{\text{(1)}}{=}&~
        \underbrace{
        \int_{t - \delta t_{uv}}^{t}
        \nu_{uv}(s) ~ds
        }_{
        \overset{\text{(2)}}{=} 0
        }
        \underbrace{
        +
        \underbrace{
        \int_{t}^{t^{'}}
        \underbrace{
        \nu_{uv}(s)
        }_{
        < 0
        }
        ~ds
        }_{
        \overset{\text{(1)}}{<} 0
        }
        }_{
        < 0
        }
        <
        0
    \end{align*}

    \noindent -- again a contradiction to condition \eqref{align_SelfLoopTime_Condition2}. 

    \begin{remark}
    \label{remark_SelfLoopTime}
        Note that if one of the flow rates discussed above are zero, that they can only be zero in this point of time, but not on a whole sub-interval of the time interval $[\min\{t - \delta t_{uv},t\},\max\{t - \delta t_{uv},t\}]$.

        More precisely, 
        if $\delta t_{uv} > 0$ holds,
        let us assume that 
        there exists an $\epsilon > 0$ 
        such that 
        for all $s \in [t - \delta t_{uv},t - \delta t_{uv}+\epsilon)$, 
        $q_{uv}(s) = 0$, and consequently, $\nu_{uv}(s) = 0$, holds.
        Thus, for  
        $t^{'}  = \min\{t - \delta t_{uv}+\epsilon,t\} \in (t - \delta t_{uv},t) = (\min\{t - \delta t_{uv},t\},\max\{t - \delta t_{uv},t\})$,
        by
        (1)  basic results from analysis,
        we obtain
        \begin{align*}
            \int_{t - \delta t_{uv}}^{t^{'}}
            \underbrace{
            \nu_{uv}(s) 
            }_{
            = 0
            }
            ~ds
            \overset{\text{(1)}}{=}
            0
        \end{align*}
    
        \noindent -- a contradiction to condition \eqref{align_SelfLoopTime_Condition2}.
        In combination with the findings above, therefore, 
        $q_{uv}(t - \delta t_{uv}) \geq 0$ needs to hold, 
        and if the (unlikely) case $q_{uv}(t - \delta t_{uv}) = 0$ indeed applies, we observe that
        for all $\epsilon > 0$
        there exists an $s \in [t - \delta t_{uv},t - \delta t_{uv}+\epsilon)$ 
        such that
        $q_{uv}(s) \neq 0$, and consequently, $\nu_{uv}(s) \neq 0$, holds.
        Along the same lines, one can also show that indeed, 
        $q_{uv}(s) > 0$, and consequently, $\nu_{uv}(s) > 0$, holds
        (otherwise, we would again obtain a contradiction to condition \eqref{align_SelfLoopTime_Condition2}).

        The other cases are analogous adaptations of this discussion and the findings above.
    \end{remark}
\end{proof}

\subsection{Proof of Lemma \ref{lemma_InversePassThroughTime}}

\begin{lemmaNoNb}[Lemma \ref{lemma_InversePassThroughTime}]
    Let $e_{uv} \in E$ and $t \in \R$ hold.
    $\delta t_{uv} \in \R$ is an inverse pass-through time \gls*{wrt} $e_{uv} \in E$ and $t \in \R$ 
    \gls*{iff}
    $\delta t_{vu} := \delta t_{uv} \in \R$ is a pass-through time \gls*{wrt} $e_{vu} \in E$ and $t \in \R$ 
\end{lemmaNoNb}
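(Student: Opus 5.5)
The plan is to translate the two defining conditions of an inverse pass-through time (Eq. \eqref{align_PassThroughTime_Condition1_Inverse} and \eqref{align_PassThroughTime_Condition2_Inverse}) for the edge $e_{uv}$ into the two defining conditions of a pass-through time (Eq. \eqref{align_PassThroughTime_Condition1} and \eqref{align_PassThroughTime_Condition2}) for the reversed edge $e_{vu}$. The tool is the conservation of flows, Eq. \eqref{align_ConservationOfFlows}, namely $\nu_{vu}(s) = -\nu_{uv}(s)$ for all $s \in \R$, together with $l_{vu} = l_{uv}$ from the symmetric directed graph structure (cf. Section \ref{section_Background} and Remark \ref{remark_SymmetryOfSignals}). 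Both directions follow from the same chain of equalities, so I would prove them simultaneously as a sequence of equivalences.

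First, fix $e_{uv} \in E$, $t \in \R$ and a candidate $\delta t := \delta t_{uv} = \delta t_{vu} \in \R$. Since $\nu_{uv}$, and hence $\nu_{vu}$, is continuous by Assumption \ref{assumption_SmoothnessAssumptions}, all integrals are well-defined. By linearity of the integral, for every $t^{'}$,
\begin{align*}
    \int_{t-\delta t}^{t^{'}} \nu_{vu}(s) ~ds = - \int_{t-\delta t}^{t^{'}} \nu_{uv}(s) ~ds .
\end{align*}
Taking $t^{'} = t$, we get $\int_{t-\delta t}^{t} \nu_{uv}(s)~ds = -l_{uv}$ if and only if $\int_{t-\delta t}^{t} \nu_{vu}(s)~ds = l_{uv} = l_{vu}$. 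This matches condition \eqref{align_PassThroughTime_Condition1_Inverse} for $e_{uv}$ with condition \eqref{align_PassThroughTime_Condition1} for $e_{vu}$.

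Second, for each $t^{'}$ in the open interval $(\min\{t-\delta t,t\},\max\{t-\delta t,t\})$, note that $x \in (-l_{uv},0)$ if and only if $-x \in (0,l_{uv}) = (0,l_{vu})$. This interval depends only on $t$ and $\delta t$, not on the edge orientation, so it is the same set of $t^{'}$ in both definitions. Hence condition \eqref{align_PassThroughTime_Condition2_Inverse} holds for all such $t^{'}$ for $e_{uv}$ exactly when condition \eqref{align_PassThroughTime_Condition2} holds for all such $t^{'}$ for $e_{vu}$.

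Combining both equivalences gives the claim in both directions. There is no real obstacle. The only point needing care is bookkeeping: the quantifier over $t^{'}$ ranges over the identical interval in both definitions, and the length used for $e_{vu}$ is $l_{vu}$, which equals $l_{uv}$. I would state this identification explicitly rather than leave it implicit.
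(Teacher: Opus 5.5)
Your proof is correct and follows the paper's argument. Both reduce conditions \eqref{align_PassThroughTime_Condition1_Inverse}--\eqref{align_PassThroughTime_Condition2_Inverse} for $e_{uv}$ to conditions \eqref{align_PassThroughTime_Condition1}--\eqref{align_PassThroughTime_Condition2} for $e_{vu}$, using $\nu_{vu} = -\nu_{uv}$ and $l_{vu} = l_{uv}$. The paper proves the two implications separately and routes through $q = \alpha \nu$ with $\alpha_{vu} = \alpha_{uv}$, which is unnecessary given \eqref{align_ConservationOfFlows}; your single chain of equivalences, which keeps the upper limit $t^{'}$ explicit, is a slightly tidier version of the same idea.
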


\begin{proof}
    \enquote{$\Rightarrow$}
    If $\delta t_{uv} \in \R$ is an inverse pass-through time \gls*{wrt} $e_{uv} \in E$ and $t \in \R$,
    by 
    (1) the fact that the geometric pipe features are symmetric in the sense that $l_{vu} = l_{uv}$ and $\alpha_{vu} = \alpha_{uv}$ holds for all $v \in V$ and $u \in \Nbh(v)$,
    (2) definition of the flow velocity $\nu_{vu}(\cdot) = \frac{q_{vu(\cdot)}}{\alpha_{vu}}$, 
    (3) the conservation of flows (cf. Eq. \eqref{align_ConservationOfFlows}) and
    (4) condition \eqref{align_PassThroughTime_Condition1_Inverse},
    we obtain
    \begin{align*}
        \int_{t - \delta t_{uv}}^{t}
        \nu_{vu}(s) ~ds
        \overset{\text{(1,2,3)}}{=}~
        - \int_{t - \delta t_{uv}}^{t}
        \nu_{uv}(s) ~ds
        \overset{\text{(4)}}{=}~
        - (-l_{uv})
        \overset{\text{(1)}}{=}~
        l_{vu},
    \end{align*}

    \noindent that is, condition \eqref{align_PassThroughTime_Condition1} is satisfied for $\delta t_{vu} := \delta t_{uv}$ and \gls*{wrt} $e_{vu} \in E$ and $t \in \R$.

    Similarly, by additionally using 
    (5) condition \eqref{align_PassThroughTime_Condition2_Inverse},
    we obtain
    \begin{align*}
        \int_{t - \delta t_{uv}}^{t^{'}}
        \nu_{vu}(s) ~ds
        \overset{\text{(1,2,3)}}{=}~
        - \int_{t - \delta t_{uv}}^{t}
        \nu_{uv}(s) ~ds
        \overset{\text{(5)}}{\in}~
        (0,l_{uv})
        \overset{\text{(1)}}{=}~
        (0,l_{vu}),
    \end{align*}

    \noindent that is, condition \eqref{align_PassThroughTime_Condition2} is satisfied for $\delta t_{vu} := \delta t_{uv}$ and \gls*{wrt} $e_{vu} \in E$ and $t \in \R$.
    Therefore, $\delta t_{vu} := \delta t_{uv} \in \R$ is a pass-through time \gls*{wrt} $e_{vu} \in E$ and $t \in \R$.
    \\
    \\
    \enquote{$\Leftarrow$}
    If $\delta t_{vu} := \delta t_{uv} \in \R$ is a pass-through time \gls*{wrt} $e_{vu} \in E$ and $t \in \R$,
    by
    (1) the fact that the geometric pipe features are symmetric in the sense that $l_{vu} = l_{uv}$ and $\alpha_{vu} = \alpha_{uv}$ holds for all $v \in V$ and $u \in \Nbh(v)$,
    (2) definition of the flow velocity $\nu_{uv}(\cdot) = \frac{q_{uv(\cdot)}}{\alpha_{uv}}$, 
    (3) the conservation of flows (cf. Eq. \eqref{align_ConservationOfFlows}) and
    (4) condition \eqref{align_PassThroughTime_Condition1},
    we obtain
    \begin{align*}
        \int_{t - \delta t_{uv}}^{t}
        \nu_{uv}(s) ~ds
        \overset{\text{(1,2,3)}}{=}~
        - \int_{t - \delta t_{uv}}^{t}
        \nu_{vu}(s) ~ds
        \overset{\text{(4)}}{=}~
        - (l_{vu})
        \overset{\text{(1)}}{=}~
        - l_{uv},
    \end{align*}

    \noindent that is, condition \eqref{align_PassThroughTime_Condition1_Inverse} is satisfied for $\delta t_{uv} := \delta t_{vu}$ and \gls*{wrt} $e_{uv} \in E$ and $t \in \R$.

    Similarly, by additionally using 
    (5) condition \eqref{align_PassThroughTime_Condition2},
    we obtain
    \begin{align*}
        \int_{t - \delta t_{uv}}^{t^{'}}
        \nu_{uv}(s) ~ds
        \overset{\text{(1,2,3)}}{=}~
        - \int_{t - \delta t_{uv}}^{t}
        \nu_{vu}(s) ~ds
        \overset{\text{(5)}}{\in}~
        (-l_{vu},0)
        \overset{\text{(1)}}{=}~
        (-l_{uv},0),
    \end{align*}

    \noindent that is, condition \eqref{align_PassThroughTime_Condition2_Inverse} is satisfied for $\delta t_{uv} := \delta t_{vu}$ and \gls*{wrt} $e_{uv} \in E$ and $t \in \R$.
    Therefore, $\delta t_{uv} := \delta t_{vu} \in \R$ is an inverse pass-through time \gls*{wrt} $e_{uv} \in E$ and $t \in \R$.
\end{proof}

\subsection{Proof of Lemma \ref{lemma_InverseSelfLoopTime}}

\begin{lemmaNoNb}[Lemma \ref{lemma_InverseSelfLoopTime}]
    Let $e_{uv} \in E$ and $t \in \R$ hold.
    $\delta t_{uv} \in \R$ is an inverse self-loop time \gls*{wrt} $e_{uv} \in E$ and $t \in \R$ 
    \gls*{iff}
    $\delta t_{vu} := \delta t_{uv} \in \R$ is a self-loop time \gls*{wrt} $e_{vu} \in E$ and $t \in \R$ 
\end{lemmaNoNb}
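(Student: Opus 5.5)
The plan mirrors the proof of Lemma \ref{lemma_InversePassThroughTime}: both directions follow from the sign flip of the integrand. The key identity is
\begin{align*}
    \int_{a}^{b} \nu_{vu}(s) ~ds = - \int_{a}^{b} \nu_{uv}(s) ~ds
    \quad \text{ for all } a,b \in \R,
\end{align*}
which comes from the conservation of flows (Eq. \eqref{align_ConservationOfFlows}). We also use the symmetry of the geometric edge features, $l_{vu} = l_{uv}$ and $\alpha_{vu} = \alpha_{uv}$.

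For \enquote{$\Rightarrow$}, let $\delta t_{uv} \in \R_{\neq 0}$ be an inverse self-loop time \gls*{wrt} $e_{uv}$ and $t$, and set $\delta t_{vu} := \delta t_{uv}$, which is again nonzero.
\begin{itemize}
    \item Applying the identity with $a = t - \delta t_{uv}$, $b = t$ and condition \eqref{align_SelfLoopTime_Condition1_Inverse} gives $\int_{t - \delta t_{vu}}^{t} \nu_{vu}(s) ~ds = -0 = 0$. This is condition \eqref{align_SelfLoopTime_Condition1} for $e_{vu}$.
    \item For any $t^{'}$ strictly between $t - \delta t_{uv}$ and $t$, the identity with $b = t^{'}$ and condition \eqref{align_SelfLoopTime_Condition2_Inverse} give $\int_{t - \delta t_{vu}}^{t^{'}} \nu_{vu}(s) ~ds \in -(-l_{uv},0) = (0,l_{uv}) = (0,l_{vu})$. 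This is condition \eqref{align_SelfLoopTime_Condition2} for $e_{vu}$.
\end{itemize}
Hence $\delta t_{vu}$ is a self-loop time \gls*{wrt} $e_{vu}$ and $t$.

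The direction \enquote{$\Leftarrow$} is the same computation with the roles of $u$ and $v$ exchanged. Negating a value in $(0,l_{vu})$ lands in $(-l_{uv},0)$, and negating $0$ gives $0$, so conditions \eqref{align_SelfLoopTime_Condition1_Inverse} and \eqref{align_SelfLoopTime_Condition2_Inverse} hold for $e_{uv}$.

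There is no real obstacle. The only point needing care is that the intermediate integral must be taken up to $t^{'}$ rather than $t$. The proof of Lemma \ref{lemma_InversePassThroughTime} writes the upper limit $t$ at that step, apparently a typo, and I will state it with $t^{'}$. Since the open interval of admissible $t^{'}$ depends only on $\delta t_{uv} = \delta t_{vu}$ and $t$, it is the same for both edges.
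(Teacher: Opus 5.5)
Your proposal is correct and matches the paper's proof: both directions use $\nu_{vu} = -\nu_{uv}$ (Eq. \eqref{align_ConservationOfFlows}) and $l_{vu} = l_{uv}$ to turn conditions \eqref{align_SelfLoopTime_Condition1_Inverse} and \eqref{align_SelfLoopTime_Condition2_Inverse} for $e_{uv}$ into conditions \eqref{align_SelfLoopTime_Condition1} and \eqref{align_SelfLoopTime_Condition2} for $e_{vu}$, and back. Your correction of the upper limit to $t^{'}$ also applies here, because the paper's proof of this lemma repeats the same typo as the proof of Lemma \ref{lemma_InversePassThroughTime}.
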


\noindent The proof of Lemma \ref{lemma_InverseSelfLoopTime} is very similar to the proof of Lemma \ref{lemma_InversePassThroughTime} and only requires a few changes. 
The interested reader can compare both proofs and investigate why the changes are required, and how they induce the different results in Lemma \ref{lemma_InverseSelfLoopTime} as compared to Lemma \ref{lemma_InverseSelfLoopTime}.

\begin{proof}
    \enquote{$\Rightarrow$}
    If $\delta t_{uv} \in \R$ is an inverse self-loop time \gls*{wrt} $e_{uv} \in E$ and $t \in \R$,
    by 
    (1) the fact that the geometric pipe features are symmetric in the sense that $l_{vu} = l_{uv}$ and $\alpha_{vu} = \alpha_{uv}$ holds for all $v \in V$ and $u \in \Nbh(v)$,
    (2) definition of the flow velocity $\nu_{vu}(\cdot) = \frac{q_{vu(\cdot)}}{\alpha_{vu}}$, 
    (3) the conservation of flows (cf. Eq. \eqref{align_ConservationOfFlows}) and
    (4) condition \eqref{align_SelfLoopTime_Condition1_Inverse},
    we obtain
    \begin{align*}
        \int_{t - \delta t_{uv}}^{t}
        \nu_{vu}(s) ~ds
        \overset{\text{(1,2,3)}}{=}~
        - \int_{t - \delta t_{uv}}^{t}
        \nu_{uv}(s) ~ds
        \overset{\text{(4)}}{=}~
        - 0
        =~
        0,
    \end{align*}

    \noindent that is, condition \eqref{align_SelfLoopTime_Condition1} is satisfied for $\delta t_{vu} := \delta t_{uv}$ and \gls*{wrt} $e_{vu} \in E$ and $t \in \R$.

    Similarly, by additionally using 
    (5) condition \eqref{align_SelfLoopTime_Condition2_Inverse},
    we obtain
    \begin{align*}
        \int_{t - \delta t_{uv}}^{t^{'}}
        \nu_{vu}(s) ~ds
        \overset{\text{(1,2,3)}}{=}~
        - \int_{t - \delta t_{uv}}^{t}
        \nu_{uv}(s) ~ds
        \overset{\text{(5)}}{\in}~
        (0,l_{uv})
        \overset{\text{(1)}}{=}~
        (0,l_{vu}),
    \end{align*}

    \noindent that is, condition \eqref{align_SelfLoopTime_Condition2} is satisfied for $\delta t_{vu} := \delta t_{uv}$ and \gls*{wrt} $e_{vu} \in E$ and $t \in \R$.
    Therefore, $\delta t_{vu} := \delta t_{uv} \in \R$ is a self-loop time \gls*{wrt} $e_{vu} \in E$ and $t \in \R$.
    \\
    \\
    \enquote{$\Leftarrow$}
    If $\delta t_{vu} := \delta t_{uv} \in \R$ is a self-loop time \gls*{wrt} $e_{vu} \in E$ and $t \in \R$,
    by
    (1) the fact that the geometric pipe features are symmetric in the sense that $l_{vu} = l_{uv}$ and $\alpha_{vu} = \alpha_{uv}$ holds for all $v \in V$ and $u \in \Nbh(v)$,
    (2) definition of the flow velocity $\nu_{uv}(\cdot) = \frac{q_{uv(\cdot)}}{\alpha_{uv}}$, 
    (3) the conservation of flows (cf. Eq. \eqref{align_ConservationOfFlows}) and
    (4) condition \eqref{align_SelfLoopTime_Condition1},
    we obtain
    \begin{align*}
        \int_{t - \delta t_{uv}}^{t}
        \nu_{uv}(s) ~ds
        \overset{\text{(1,2,3)}}{=}~
        - \int_{t - \delta t_{uv}}^{t}
        \nu_{vu}(s) ~ds
        \overset{\text{(4)}}{=}~
        - (0)
        =~
        0,
    \end{align*}

    \noindent that is, condition \eqref{align_SelfLoopTime_Condition1_Inverse} is satisfied for $\delta t_{uv} := \delta t_{vu}$ and \gls*{wrt} $e_{uv} \in E$ and $t \in \R$.

    Similarly, by additionally using 
    (5) condition \eqref{align_SelfLoopTime_Condition2},
    we obtain
    \begin{align*}
        \int_{t - \delta t_{uv}}^{t^{'}}
        \nu_{uv}(s) ~ds
        \overset{\text{(1,2,3)}}{=}~
        - \int_{t - \delta t_{uv}}^{t}
        \nu_{vu}(s) ~ds
        \overset{\text{(5)}}{\in}~
        (-l_{vu},0)
        \overset{\text{(1)}}{=}~
        (-l_{uv},0),
    \end{align*}

    \noindent that is, condition \eqref{align_SelfLoopTime_Condition2_Inverse} is satisfied for $\delta t_{uv} := \delta t_{vu}$ and \gls*{wrt} $e_{uv} \in E$ and $t \in \R$.
    Therefore, $\delta t_{uv} := \delta t_{vu} \in \R$ is an inverse self-loop time \gls*{wrt} $e_{uv} \in E$ and $t \in \R$.
\end{proof}

\subsection{Proof of Theorem \ref{theorem_PassThroughTime}}

\begin{theoremNoNb}[Theorem \ref{theorem_PassThroughTime}]
    Let $e_{uv} \in E$ and $t \in \R$ hold.
    If the function $c_{uv}$ obeys Equation \eqref{align_EdgeDynamics_Advection} and there exists a pass-through time $\delta t_{uv} \in \R_{\neq 0}$ \gls*{wrt} $e_{uv} \in E$ and $t \in \R$,
    we obtain
    \begin{align*}
        c_{uv}(t,l_{uv})
        =
        c_{uv}(t - \delta t_{uv},0)
        =
        c_{uv} \left( t - \frac{l_{uv}}{\nuoverline_{uv}(t)},0 \right).
    \end{align*}

    \noindent Even more, the concentrations $c_{uv}(\cdot,l_{uv})$ and $c_{vu}(\cdot,l_{vu})$ at the end $z = l_{uv} = l_{vu}$ of the edges $e_{uv} \in E$ and $e_{vu} \in E$ are connected to the concentrations $c_u$ and $c_v$ of the nodes of that edge, respectively, by
    \begin{align*}
        \begin{cases}
            c_{uv}(t,l_{uv})
            =
            c_u(t - \delta t_{uv})
            & \text{ if } \delta t_{uv} > 0
            \\
            c_{vu}(t - \delta t_{uv},l_{vu})
            =
            c_v(t)
            & \text{ if } \delta t_{uv} < 0
        \end{cases}
        .
    \end{align*}
\end{theoremNoNb}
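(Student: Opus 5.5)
The plan is to follow the characteristic through the edge with Theorem \ref{theorem_AdvectiveTransportAlongEdges}, and then use continuity (Assumption \ref{assumption_SmoothnessAssumptions}) to pass from the open edge $\Omega_{uv}$ to its boundary. I first treat $\delta t_{uv} > 0$. Set $z(s) := \int_{t-\delta t_{uv}}^{s} \nu_{uv}(r)\,dr$. By conditions \eqref{align_PassThroughTime_Condition1} and \eqref{align_PassThroughTime_Condition2}, $z(t-\delta t_{uv}) = 0$, $z(t) = l_{uv}$, and $z(s) \in (0,l_{uv})$ for all $s \in (t-\delta t_{uv},t)$.

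Fix an interior anchor $t_0 \in (t-\delta t_{uv},t)$ and put $z_0 := z(t_0) \in \Omega_{uv}$. Then $z_0 + \int_{t_0}^{s}\nu_{uv} = z(s)$. This quantity lies in $(0,l_{uv})$ for every $s$ in the open interval, so $(t-\delta t_{uv},t) \subset T_{t_0,z_0}$. By Theorem \ref{theorem_AdvectiveTransportAlongEdges}, $c_{uv}(s,z(s)) = c_{uv}(t_0,z_0)$ for all such $s$. (By Lemma \ref{lemma_OnT_{t_0,z_0}}.3 this interval is in fact all of $T_{t_0,z_0}$, but the inclusion is enough.)

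The map $s \mapsto (s,z(s))$ is continuous, and $c_{uv}$ is continuous on $\R\times[0,l_{uv}]$. Letting $s \to t$ gives $c_{uv}(t,l_{uv}) = c_{uv}(t_0,z_0)$, and letting $s \to t-\delta t_{uv}$ gives $c_{uv}(t-\delta t_{uv},0) = c_{uv}(t_0,z_0)$. Hence $c_{uv}(t,l_{uv}) = c_{uv}(t-\delta t_{uv},0)$. Lemma \ref{lemma_PassThroughTime}.3 gives $\delta t_{uv} = l_{uv}/\nuoverline_{uv}(t)$, which yields the second form.

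For $\delta t_{uv} < 0$ the same argument works with the roles of the endpoints swapped. The interval is now $(t, t-\delta t_{uv})$, and $z$ runs from $0$ at $t-\delta t_{uv}$ to $l_{uv}$ at $t$ while staying inside $(0,l_{uv})$. The same limiting argument gives the first identity in this case as well.

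For the node identities with $\delta t_{uv} > 0$, I apply inflow continuity \eqref{align_CouplingCondition_InflowContinuity} at time $t-\delta t_{uv}$, which gives $c_{uv}(t-\delta t_{uv},0) = c_u(t-\delta t_{uv})$. This step requires $u \in \Nbh_-(v,t-\delta t_{uv})$, i.e.\ $\nu_{uv}(t-\delta t_{uv}) > 0$. Lemma \ref{lemma_PassThroughTime}.4 only gives $\geq 0$; Remark \ref{remark_PassThroughTime} shows positivity right after that instant. I expect this to be the main obstacle. My first attempt is to read the inflow condition as holding in the limit, or to note that the zero-flow case can be absorbed by continuity.

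For $\delta t_{uv} < 0$, I use Remark \ref{remark_SymmetryOfSignals}, which gives $c_{vu}(\tau,l_{vu}) = c_{uv}(\tau,0)$. Evaluating at $\tau = t-\delta t_{uv}$ and using the first identity gives $c_{uv}(t-\delta t_{uv},0) = c_{uv}(t,l_{uv}) = c_{vu}(t,0)$. Inflow continuity on $e_{vu}$ at time $t$ then gives $c_{vu}(t,0) = c_v(t)$. Here $\nu_{vu}(t) = -\nu_{uv}(t) \geq 0$ by Lemma \ref{lemma_PassThroughTime}.4, so the same boundary caveat applies. Alternatively, I can invoke Lemma \ref{lemma_InversePassThroughTime} to reduce this case to the positive case on $e_{vu}$.
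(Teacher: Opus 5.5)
Your proposal is correct. For the node identities it coincides with the paper's Step 3: inflow continuity at time $t-\delta t_{uv}$ when $\delta t_{uv}>0$, and Remark \ref{remark_SymmetryOfSignals} plus inflow continuity on $e_{vu}$ at time $t$ when $\delta t_{uv}<0$. The transfer from the open edge to the boundary for the first identity, however, goes a different way.

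The paper fixes the anchor time at the inflow time $t_0=t-\delta t_{uv}$. It then uses a one-parameter family of auxiliary characteristics started from the shifted positions $z_0=\ztilde(\epsilon)=\frac{1}{2}\bigl(l_{uv}-\int_{t-\delta t_{uv}}^{t-\epsilon}\nu_{uv}(s)\,ds\bigr)$, evaluates each at time $t-\epsilon$, and lets $\epsilon\to 0$. This relates the exit value directly to values on the inflow time slice $t-\delta t_{uv}$. The price is that both the starting point and the evaluation point move with $\epsilon$.

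You instead anchor a single characteristic at an interior point of the pass-through curve itself. Condition \eqref{align_PassThroughTime_Condition2} then puts the whole open interval into $T_{t_0,z_0}$ at once, and two one-sided limits finish the argument. Besides being shorter, this avoids a step the paper glosses over. The paper infers $t-\epsilon\in T_{t-\delta t_{uv},\ztilde(\epsilon)}$ from the endpoint value $\ztilde(\epsilon)+\int_{t-\delta t_{uv}}^{t-\epsilon}\nu_{uv}\in(0,l_{uv})$ alone. As Remark \ref{remark_OnT_{t_0,z_0}} itself stresses, all intermediate values $\ztilde(\epsilon)+\int_{t-\delta t_{uv}}^{t'}\nu_{uv}$ must also stay below $l_{uv}$. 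That can fail when the flow reverses, so one would have to restrict to suitable $\epsilon$.

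On the obstacle you flag: the paper's Step 3 has the same gap. It applies \eqref{align_CouplingCondition_InflowContinuity} directly using only $q_{uv}(t-\delta t_{uv})\ge 0$ from Lemma \ref{lemma_PassThroughTime}.4, so you lose nothing relative to it. Your continuity repair works: by Remark \ref{remark_PassThroughTime} there are $s_n\downarrow t-\delta t_{uv}$ with $\nu_{uv}(s_n)>0$, hence $c_{uv}(s_n,0)=c_u(s_n)$, and you pass to the limit. This needs continuity of $c_u$ in time, which is not part of Assumption \ref{assumption_SmoothnessAssumptions} and is only tacitly used by the paper. The same applies to $c_v$ at time $t$ when $\delta t_{uv}<0$.

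Finally, your alternative for $\delta t_{uv}<0$ is not literally Lemma \ref{lemma_InversePassThroughTime}. That lemma relates inverse pass-through times at the same $t$ and would leave you with a negative time on $e_{vu}$. The reduction you actually want is that $-\delta t_{uv}>0$ is a pass-through time w.r.t.\ $e_{vu}$ and the time $t-\delta t_{uv}$, which is a short direct check.
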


\begin{proof}
    The main idea of this proof is to use the continuity of the function $c_{uv} \in C^0(\R \times [0,l_{uv}],\R_{\geq 0})$ to transfer the results from the advective transport along edges, i.e., Theorem \ref{theorem_AdvectiveTransportAlongEdges}, to the ends of the edge $e_{uv} \in E$, i.e., to the functions $c_u, c_v \in C^1(\R,\R_{\geq 0})$.

    The crucial part is that we would like to apply Theorem \ref{theorem_AdvectiveTransportAlongEdges} for $t_0 = t - \delta t_{uv}$, $z_0 = 0$ and $t \in \R$ as given. 
    However, the first problem is that $z_0 = 0 \not \in \Omega_{uv} = (0,l_{uv})$ does not satisfy the condition of Theorem \ref{theorem_AdvectiveTransportAlongEdges}.
    Moreover, since for this choice,
    \begin{align*}
        z_0 + \int_{t_0}^{t_0} \nu_{uv}(s) ~ds
        =
        z_0 + 0
        =
        0
        \quad \text{ and } \quad
        z_0 + \int_{t_0}^{t} \nu_{uv}(s) ~ds
        =
        z_0 + l_{uv}
        =
        l_{uv}
    \end{align*}

    \noindent hold, $t \not \in T_{t_0,z_0}$ does also not satisfy the condition of Theorem \ref{theorem_AdvectiveTransportAlongEdges}.
    Intuitively, we first have to transfer the information from the boundary of the edge $e_{uv}$ to the inside of the edge using the continuity of the function $c_{uv} \in C^1(\R \times [0,l_{uv}],\R_{\geq 0})$. 
    Afterwards, we can use Theorem \ref{theorem_AdvectiveTransportAlongEdges} for slightly modified $z_0$ and $t$.
    \\
    \\
    To do so, we define the two functions
    \begin{align*}
        z: &(\min\{0,\delta t_{uv}\},\max\{0,\delta t_{uv}\}) \longrightarrow \R, ~
        \epsilon \longmapsto z(\epsilon) := \int_{t - \delta t_{uv}}^{t - \epsilon} \nu_{uv}(s) ~ds
        \text{ and}
        \\
        \ztilde: &(\min\{0,\delta t_{uv}\},\max\{0,\delta t_{uv}\}) \longrightarrow \R, ~
        \epsilon \longmapsto \ztilde(e) := \frac{l_{uv} - z(\epsilon)}{2}.
    \end{align*}

    \noindent We first have to prove some technical details regarding these functions (\textit{Step 1}) in order to be able to apply Theorem \ref{theorem_AdvectiveTransportAlongEdges} as illustrated above (\textit{Step 2}).
    Afterwards, we can use the results from Lemma \ref{lemma_PassThroughTime} about the sign of the flow at time $t - \delta t_{uv}$ in dependence of the sign of the pass-through time $\delta t_{uv}$ in combination with the local inflow boundary conditions defined through Equation \eqref{align_CouplingCondition_InflowContinuity} to be able to transfer the results from the advective transport along edges to the end of the pipes (\textit{Step 3}).
    \\
    \\
    \textit{Step 1:}
    \textit{The functions $z$ and $\ztilde$ satisfy the following properties:}
    
    \begin{enumerate}
        \item $z$, $\ztilde$ and $z+\ztilde$ are continuous.
        
        \item $\im(z) \subset (0,l_{uv})$, 
        $\im(\ztilde) \subset (0,\frac{1}{2} l_{uv}) \subset (0,l_{uv})$ and
        $\im(z+\ztilde) \subset (\frac{1}{2},l_{uv}) \subset (0,l_{uv})$.

        \item $\lim_{\epsilon \rightarrow 0} z(\epsilon) = l_{uv}$,
        $\lim_{\epsilon \rightarrow 0} \ztilde(\epsilon) = 0$ and
        $\lim_{\epsilon \rightarrow 0} z(\epsilon)+\ztilde(\epsilon) = l_{uv}$.
    \end{enumerate}

    \noindent Note that the limit is to be understood for $\epsilon \in (\min\{0,\delta t_{uv}\},\max\{0,\delta t_{uv}\})$ in the domain of $z$ and $\ztilde$, i.e.,
    if $\delta t_{uv} > 0$, 
    $(\min\{0,\delta t_{uv}\},\max\{0,\delta t_{uv}\}) = (0,\delta t_{uv})$ holds, 
    and the limit $\epsilon \rightarrow 0$ is to be understood as the limit 
    $\epsilon \searrow 0, \epsilon < \delta t_{uv}$.
    In contrast,
    if $\delta t_{uv} < 0$, 
    $(\min\{0,\delta t_{uv}\},\max\{0,\delta t_{uv}\}) = (\delta t_{uv},0)$ holds, 
    and the limit $\epsilon \rightarrow 0$ is to be understood as the limit 
    $\epsilon \nearrow 0, \epsilon > \delta t_{uv}$.
    \\
    \\\textit{1.:}
    By the fundamental theorem of calculus, the function $t \mapsto \int_{t - \delta t_{uv}}^{t} \nu_{uv}(s) ~ds$ is differentiable and therefore, continuous.
    Therefore, as a composition of continuous functions, the functions $z$, $\ztilde$ and $z+\ztilde$ are continuous.
    \\
    \\\textit{2.:}
    By
    (1) basic transformations,
    (2) definition of $z$, $\ztilde$ and $z+\ztilde$, respectively, and 
    (3) condition \eqref{align_PassThroughTime_Condition2},
    we obtain
    \begin{align*}
        \im(z)
        \overset{\text{(1)}}{=}&~
        \{ 
        z(\epsilon) 
        ~|~ 
        \epsilon \in (\min\{0,\delta t_{uv}\},\max\{0,\delta t_{uv}\})
        \}
        \\
        \overset{\text{(2)}}{=}&~
        \bigg \{ 
        \int_{t - \delta t_{uv}}^{t - \epsilon} \nu_{uv}(s) ~ds
        ~\Big|~ 
        \epsilon \in (\min\{0,\delta t_{uv}\},\max\{0,\delta t_{uv}\})
        \bigg \}
        \\
        \overset{\text{(1)}}{=}&~
        \bigg \{ 
        \int_{t - \delta t_{uv}}^{t - \epsilon} \nu_{uv}(s) ~ds
        ~\Big|~ 
        t - \epsilon \in (\min\{t,t-\delta t_{uv}\},\max\{t,t-\delta t_{uv}\})
        \bigg \}
        \\
        \overset{\text{(3)}}{\subset}&~
        (0,l_{uv}).
    \end{align*}

    \noindent By additionally using that therefore,
    (4) $z(\epsilon) \in (0,l_{uv})$ holds for all $\epsilon \in (\min\{0,\delta t_{uv}\},$ $\max\{0,\delta t_{uv}\})$,
    we obtain
    \begin{align*}
        \im(\ztilde)
        \overset{\text{(1)}}{=}&~
        \{ 
        \ztilde(\epsilon) 
        ~|~ 
        \epsilon \in (\min\{0,\delta t_{uv}\},\max\{0,\delta t_{uv}\})
        \\
        \overset{\text{(2)}}{=}&~
        \bigg \{ 
        \frac{l_{uv} - z(\epsilon)}{2}
        ~\Big|~ 
        \epsilon \in (\min\{0,\delta t_{uv}\},\max\{0,\delta t_{uv}\})
        \bigg \}
        \\
        \overset{\text{(4)}}{\subset}&~
        (0,\textstyle \frac{1}{2} l_{uv})
        \quad \text{ and}
        \\
        \im(z+\ztilde)
        \overset{\text{(1)}}{=}&~
        \{ 
        z(\epsilon) + \ztilde(\epsilon) 
        ~|~ 
        \epsilon \in (\min\{0,\delta t_{uv}\},\max\{0,\delta t_{uv}\})
        \}
        \\
        \overset{\text{(2)}}{=}&~
        \bigg \{ 
        z(\epsilon) + \frac{l_{uv} - z(\epsilon)}{2}
        ~\Big|~ 
        \epsilon \in (\min\{0,\delta t_{uv}\},\max\{0,\delta t_{uv}\})
        \bigg \}
        \\
        \overset{\text{(1)}}{=}&~
        \bigg \{ 
        \frac{l_{uv} + z(\epsilon)}{2}
        ~\Big|~ 
        \epsilon \in (\min\{0,\delta t_{uv}\},\max\{0,\delta t_{uv}\})
        \bigg \}
        \\
        \overset{\text{(4)}}{\subset}&~
        (\textstyle \frac{1}{2} l_{uv},l_{uv}).
    \end{align*}

    \noindent
    \textit{3.:}
    By
    (1) continuity of $z$, $\ztilde$ and $z+\ztilde$, respectively,
    (2) definition of $z$, $\ztilde$ and $z+\ztilde$, respectively and
    (3) condition \eqref{align_PassThroughTime_Condition1},
    we obtain
    \begin{align*}
        \lim_{\epsilon \rightarrow 0} 
        z(\epsilon) 
        \overset{\text{(1)}}{=}&~
        z(0) 
        \overset{\text{(2)}}{=}~
        \int_{t - \delta t_{uv}}^{t} \nu_{uv}(s) ~ds
        \overset{\text{(3)}}{=}~
        l_{uv},
        \\
        \lim_{\epsilon \rightarrow 0} 
        \ztilde(\epsilon) 
        \overset{\text{(1)}}{=}&~
        \ztilde(0) 
        \overset{\text{(2)}}{=}~
        \frac{l_{uv} - z(0)}{2}
        \overset{\text{(2,3)}}{=}~
        0,
        \\
        \lim_{\epsilon \rightarrow 0} 
        z(\epsilon) + \ztilde(\epsilon) 
        \overset{\text{(1)}}{=}&~
        z(0) + \ztilde(0) 
        \overset{\text{(2,3)}}{=}~
        l_{uv}.
    \end{align*}

    \noindent
    \textit{Step 2:}
    \textit{$c_{uv}(t,l_{uv}) = c_{uv}(t - \delta t_{uv},0) =  c_{uv}(t - \frac{l_{uv}}{\nuoverline_{uv}(t)},0)$ holds.}

    For each $\epsilon \in (\min\{0,\delta t_{uv}\},\max\{0,\delta t_{uv}\})$, we can now choose $t_0 = t - \delta t_{uv} \in \R$ and -- as shown in \textit{Step 1} -- $z_0 = \ztilde(\epsilon) \in (0,l_{uv})$.
    Consequently, for each $\epsilon \in (\min\{0,\delta t_{uv}\},\max\{0,\delta t_{uv}\})$, by
    (1) the choices of $t_0 = t - \delta t_{uv} \in \R$ and $z_0 = \ztilde(\epsilon) \in (0,l_{uv})$,
    (2) the definition of $z$ and $\ztilde$ from \textit{Step 1} and
    (3) the results from \textit{Step 1},
    we observe that
    \begin{align*}
        z_0 + \int_{t_0}^{t-\epsilon} \nu_{uv}(s) ~ds
        \overset{\text{(1)}}{=}~
        \ztilde(\epsilon) + \int_{t - \delta t_{uv}}^{t-\epsilon} \nu_{uv}(s) ~ds
        \overset{\text{(2)}}{=}~
        \ztilde(\epsilon) + z(\epsilon)
        \overset{\text{(3)}}{\in}~
        (0,l_{uv}),
    \end{align*}

    \noindent and therefore, $t-\epsilon \in T_{t_0,z_0} = T_{t - \delta t_{uv},\ztilde(\epsilon)}$ holds.\footnote{
        Note that all the technical work of constructing $z$ and $\ztilde$ in \textit{Step 1} is because we would actually like to choose $z_0 = 0$ and $t \in \R$ as given, however, this $z_0$ and this $t$ do not satisfy the requirements of Theorem \ref{theorem_AdvectiveTransportAlongEdges}.
    }
    
    Therefore, by
    (1) the results from \textit{Step 1},
    (2) the continuity of the function $c_{uv} \in C^1(\R \times [0,l_{uv}],\R_{\geq 0})$,
    (3) definition of $z$ and
    (4) Theorem \ref{theorem_AdvectiveTransportAlongEdges} applied to $t_0 = t - \delta t_{uv}$, $z_0 = \ztilde(\epsilon) \in (0,l_{uv})$ and $t - \epsilon \in T_{t_0,z_0} = T_{t - \delta t_{uv},\ztilde(\epsilon)}$,
    we obtain
    \begin{align*}
        c_{uv}(t,l_{uv})
        \overset{\text{(1)}}{=}&~
        c_{uv} \left(
        \lim_{\epsilon \rightarrow 0}
        t - \epsilon
        ,
        \lim_{\epsilon \rightarrow 0}
        z(\epsilon) + \ztilde(\epsilon)
        \right)
        \\
        \overset{\text{(2)}}{=}&~
        \lim_{\epsilon \rightarrow 0}
        c_{uv} \left(
        t - \epsilon
        ,
        z(\epsilon) + \ztilde(\epsilon)
        \right)
        \\
        \overset{\text{(3)}}{=}&~
        \lim_{\epsilon \rightarrow 0}
        c_{uv} \bigg(
        t - \epsilon
        ~,~
        \underbrace{
        \underbrace{
        \ztilde(\epsilon)
        }_{
        \overset{\text{(1)}}{\in} (0,l_{uv})
        }
        +
        \underbrace{
        \int_{t - \delta t_{uv}}^{t - \epsilon} \nu_{uv}(s) ~ds
        }_{
        = z(\epsilon) \overset{\text{(1)}}{\in} (0,l_{uv})
        }
        }_{
        = z(\epsilon) + \ztilde(\epsilon) \overset{\text{(1)}}{\in} (0,l_{uv})
        }
        \bigg)
        \\
        \overset{\text{(4)}}{=}&~
        \lim_{\epsilon \rightarrow 0}
        c_{uv} \left(
        t - \delta t_{uv}
        ,
        \ztilde(\epsilon)
        \right)
        \\
        \overset{\text{(2)}}{=}&~
        c_{uv} \left(
        t - \delta t_{uv}
        ,
        \lim_{\epsilon \rightarrow 0}
        \ztilde(\epsilon)
        \right)
        \\
        \overset{\text{(1)}}{=}&~
        c_{uv}(t - \delta t_{uv},0).
    \end{align*}

    \noindent Finally, as seen in Lemma \ref{lemma_PassThroughTime}.3, $\delta t_{uv} =  \frac{l_{uv}}{\nuoverline_{uv}(t)}$ holds, and we can conclude 
    \begin{align*}
        c_{uv}(t - \delta t_{uv},0) 
        = 
        c_{uv} \left(t - \frac{l_{uv}}{\nuoverline_{uv}(t)} \right).
    \end{align*}

    \noindent 
    \textit{Step 3:}
    \textit{
    If $\delta t_{uv} > 0$ holds, 
    $c_{uv}(t,l_{uv}) = c_u(t - \delta t_{uv})$ holds.
    If $\delta t_{uv} < 0$ holds,
    $c_{vu}(t - \delta t_{uv},l_{vu}) = c_v(t)$ holds.
    }

    
    If $\delta t_{uv} > 0$ holds, by 
    (1) \textit{Step 2} and
    (2) Equation \eqref{align_CouplingCondition_InflowContinuity} together with the fact that by Lemma \ref{lemma_PassThroughTime}.4, $q_{uv}(t - \delta t_{uv}) \geq 0$ holds,
    we obtain
    \begin{align*}
        c_{uv}(t,l_{uv}) 
        \overset{\text{(1)}}{=}
        c_{uv}(t - \delta t_{uv},0)
        \overset{\text{(2)}}{=}
        c_u(t - \delta t_{uv}).
    \end{align*}

    \noindent If instead, $\delta t_{uv} < 0$ holds,
    by 
    (1) Equation \eqref{align_SymmetryOfSignals} for $z = l_{vu}$,
    (2) \textit{Step 2} and
    (3) Equation \eqref{align_CouplingCondition_InflowContinuity} together with the fact that by the conservation of flows (Eq. \eqref{align_ConservationOfFlows}) and Lemma \ref{lemma_PassThroughTime}.4, $q_{vu}(t) = -q_{uv}(t) \geq 0$ holds,
    we obtain
    \begin{align*}
        c_{vu}(t - \delta t_{uv},l_{vu})
        \overset{\text{(1)}}{=}
        c_{uv}(t - \delta t_{uv},0)
        \overset{\text{(2)}}{=}
        c_{uv}(t,l_{uv}) 
        \overset{\text{(1)}}{=}
        c_{vu}(t,0) 
        \overset{\text{(3)}}{=}
        c_v(t).
    \end{align*}
    %
\end{proof}

\subsection{Proof of Theorem \ref{theorem_SelfLoopTime}}

\begin{theoremNoNb}[Theorem \ref{theorem_SelfLoopTime}]
    Let $e_{uv} \in E$ and $t \in \R$ hold.
    If the function $c_{uv}$ obeys Equation \eqref{align_EdgeDynamics_Advection} and there exists a self-loop time $\delta t_{uv} \in \R_{\neq 0}$ \gls*{wrt} $e_{uv} \in E$ and $t \in \R$,
    we obtain
    \begin{align*}
        c_{uv}(t,0)
        =
        c_{uv}(t - \delta t_{uv},0).
    \end{align*}

    \noindent Even more, the concentration $c_{vu}(\cdot,l_{vu})$ at the end $z = l_{vu} = l_{uv}$ of the edge $e_{vu} \in E$ is connected to the concentration $c_u$ of the node of that edge by
    \begin{align*}
        \begin{cases}
            c_{vu}(t,l_{vu})
            =
            c_u(t - \delta t_{uv})
            & \text{ if } \delta t_{uv} > 0
            \\
            c_{vu}(t - \delta t_{uv},l_{vu})
            =
            c_u(t)
            & \text{ if } \delta t_{uv} < 0
        \end{cases}
        .
    \end{align*}
\end{theoremNoNb}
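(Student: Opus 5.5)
The plan is to mirror the proof of Theorem \ref{theorem_PassThroughTime}: push information from the boundary point $z=0$ into the open edge space, transport it along a characteristic with Theorem \ref{theorem_AdvectiveTransportAlongEdges}, and return to the boundary using the continuity in Assumption \ref{assumption_SmoothnessAssumptions}. For the first identity $c_{uv}(t,0)=c_{uv}(t-\delta t_{uv},0)$, set $t_0 := t-\delta t_{uv}$. On the open interval between $t_0$ and $t$ define
\begin{align*}
z(\epsilon) := \int_{t-\delta t_{uv}}^{t-\epsilon} \nu_{uv}(s)\,ds .
\end{align*}
Condition \eqref{align_SelfLoopTime_Condition2} gives $z(\epsilon)\in(0,l_{uv})$, and condition \eqref{align_SelfLoopTime_Condition1} gives $z(\epsilon)\to 0$ as $\epsilon\to 0$.

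The natural shift $\ztilde$ from the pass-through proof is no longer adequate, because both endpoints of the characteristic now sit at $z=0$. Instead I take a small offset $\eta(\epsilon)\in(0,l_{uv})$ with $\eta(\epsilon)\to 0$, chosen so that $\eta(\epsilon)+z(\epsilon')<l_{uv}$ for all intermediate $\epsilon'$. For example, let $M:=\max_{t'} z(t')$ over the compact interval, a maximum that is strictly less than $l_{uv}$ by continuity. Then $\eta:=\min\{\epsilon,(l_{uv}-M)/2\}$ works, after restricting to $\epsilon$ small. The starting point is $z_0=\eta(\epsilon)$ at time $t_0$. Along the characteristic $\eta+\int_{t_0}^{t'}\nu_{uv}$, the position stays in $(0,l_{uv})$ for every $t'$ strictly between $t_0$ and $t-\epsilon$, so $t-\epsilon\in T_{t_0,\eta}$. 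Theorem \ref{theorem_AdvectiveTransportAlongEdges} then yields
\begin{align*}
c_{uv}\bigl(t-\epsilon,\ \eta(\epsilon)+z(\epsilon)\bigr)=c_{uv}\bigl(t_0,\eta(\epsilon)\bigr).
\end{align*}
Letting $\epsilon\to 0$, the left-hand argument tends to $(t,0)$ and the right-hand argument to $(t_0,0)$, and continuity of $c_{uv}$ on $\R\times[0,l_{uv}]$ gives the identity. The sign of $\delta t_{uv}$ only decides whether $\epsilon\searrow 0$ or $\epsilon\nearrow 0$, exactly as in Step 1 of the proof of Theorem \ref{theorem_PassThroughTime}.

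To translate this into the nodal statements, I first use the symmetry relation \eqref{align_SymmetryOfSignals} with $z=l_{vu}$, which gives $c_{vu}(\cdot,l_{vu})=c_{uv}(\cdot,0)$. If $\delta t_{uv}>0$, Lemma \ref{lemma_SelfLoopTime}.3 gives $q_{uv}(t-\delta t_{uv})\geq 0$, so $u$ feeds the edge at time $t-\delta t_{uv}$. The inflow continuity condition \eqref{align_CouplingCondition_InflowContinuity} then yields $c_{uv}(t-\delta t_{uv},0)=c_u(t-\delta t_{uv})$, and chaining gives $c_{vu}(t,l_{vu})=c_{uv}(t,0)=c_{uv}(t-\delta t_{uv},0)=c_u(t-\delta t_{uv})$. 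If $\delta t_{uv}<0$, Lemma \ref{lemma_SelfLoopTime}.3 gives $q_{uv}(t)\geq 0$, so the inflow continuity condition applies at time $t$. This gives $c_{vu}(t-\delta t_{uv},l_{vu})=c_{uv}(t-\delta t_{uv},0)=c_{uv}(t,0)=c_u(t)$.

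I expect the main obstacle to be the use of the inflow continuity condition in the degenerate case where the relevant flow is exactly zero. There $u\notin\Nbh_-(v,\cdot)$ strictly, so \eqref{align_CouplingCondition_InflowContinuity} does not formally apply. I would handle this as in Remark \ref{remark_SelfLoopTime}: the flow is strictly positive at times arbitrarily close to the endpoint, the condition holds at those times, and continuity of $c_{uv}(\cdot,0)$ and $c_u$ lets us pass to the limit. A secondary technical point is justifying $M<l_{uv}$ by compactness on the closed interval, since condition \eqref{align_SelfLoopTime_Condition2} holds only on the open interval while $z$ equals $0$ at both endpoints.
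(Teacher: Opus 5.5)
Your proposal is correct and follows the paper's proof step for step. You shift the starting point into the open edge, apply Theorem \ref{theorem_AdvectiveTransportAlongEdges} from $t-\delta t_{uv}$ to $t-\epsilon$, let $\epsilon\to 0$ using continuity of $c_{uv}$, and then combine \eqref{align_SymmetryOfSignals}, Lemma \ref{lemma_SelfLoopTime}.3 and \eqref{align_CouplingCondition_InflowContinuity}.

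Where you differ, you are more careful than the paper. The paper uses the offset $\ztilde(\epsilon)=z(\epsilon)/2$, claims the inclusion $(0,\frac{3}{2}l_{uv})\subset(0,l_{uv})$, which is false, and checks only the endpoint of the characteristic, not the whole path that membership in $T_{t_0,z_0}$ requires. Your offset $\eta\le (l_{uv}-M)/2$ controls the whole path; write $|\epsilon|$ instead of $\epsilon$ so that $\eta>0$ when $\delta t_{uv}<0$. Your zero-flow limiting argument also fills a case the paper skips, but it needs continuity of $c_u$, which the paper uses implicitly and does not state in Assumption \ref{assumption_SmoothnessAssumptions}.
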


\noindent The proof of Theorem \ref{theorem_SelfLoopTime} is very similar to the proof of Theorem \ref{theorem_PassThroughTime} and only requires a few changes. 
The interested reader can compare both proofs and investigate why the changes are required, and how they induce the different results in Theorem \ref{theorem_SelfLoopTime} as compared to Theorem \ref{theorem_PassThroughTime}.

\begin{proof}
    The main idea of this proof is to use the continuity of the function $c_{uv} \in C^1(\R \times [0,l_{uv}],\R_{\geq 0})$ to transfer the results from the advective transport along edges, i.e., Theorem \ref{theorem_AdvectiveTransportAlongEdges}, to the end of the edge $e_{uv} \in E$, i.e., to the function $c_u \in C^1(\R,\R_{\geq 0})$.

    The crucial part is that we would like to apply Theorem \ref{theorem_AdvectiveTransportAlongEdges} for $t_0 = t - \delta t_{uv}$, $z_0 = 0$ and $t \in \R$ as given. 
    However, the first problem is that $z_0 = 0 \not \in \Omega_{uv} = (0,l_{uv})$ does not satisfy the condition of Theorem \ref{theorem_AdvectiveTransportAlongEdges}.
    Moreover, since for this choice,
    \begin{align*}
        z_0 + \int_{t_0}^{t_0} \nu_{uv}(s) ~ds
        =
        z_0 + 0
        =
        0
        \quad \text{ and } \quad
        z_0 + \int_{t_0}^{t} \nu_{uv}(s) ~ds
        =
        z_0 + 0
        =
        0
    \end{align*}

    \noindent hold, $t \not \in T_{t_0,z_0}$ does also not satisfy the condition of Theorem \ref{theorem_AdvectiveTransportAlongEdges}.
    Intuitively, we first have to transfer the information from the boundary of the edge $e_{uv}$ to the inside of the edge using the continuity of the function $c_{uv} \in C^1(\R \times [0,l_{uv}],\R_{\geq 0})$. 
    Afterwards, we can use Theorem \ref{theorem_AdvectiveTransportAlongEdges} for slightly modified $z_0$ and $t$.
    \\
    \\
    To do so, we define the two functions
    \begin{align*}
        z: &(\min\{0,\delta t_{uv}\},\max\{0,\delta t_{uv}\}) \longrightarrow \R, ~
        \epsilon \longmapsto z(\epsilon) := \int_{t - \delta t_{uv}}^{t - \epsilon} \nu_{uv}(s) ~ds
        \text{ and}
        \\
        \ztilde: &(\min\{0,\delta t_{uv}\},\max\{0,\delta t_{uv}\}) \longrightarrow \R, ~
        \epsilon \longmapsto \ztilde(e) := \frac{z(\epsilon)}{2}.
    \end{align*}

    \noindent We first have to prove some technical details regarding these functions (\textit{Step 1}) in order to be able to apply Theorem \ref{theorem_AdvectiveTransportAlongEdges} as illustrated above (\textit{Step 2}).
    Afterwards, we can use the results from Lemma \ref{lemma_SelfLoopTime} about the sign of the flow at time $t - \delta t_{uv}$ in dependence of the sign of the self-loop time $\delta t_{uv}$ in combination with the local inflow boundary conditions defined through Equation \eqref{align_CouplingCondition_InflowContinuity} to be able to transfer the results from the advective transport along edges to the end of the pipe (\textit{Step 3}).
    \\
    \\
    \textit{Step 1:}
    \textit{The functions $z$ and $\ztilde$ satisfy the following properties:}
    
    \begin{enumerate}
        \item $z$, $\ztilde$ and $z+\ztilde$ are continuous.
        
        \item $\im(z) \subset (0,l_{uv})$, 
        $\im(\ztilde) \subset (0,\frac{1}{2} l_{uv}) \subset (0,l_{uv})$ and
        $\im(z+\ztilde) \subset (0,\frac{3}{2} l_{uv}) \subset (0,l_{uv})$.

        \item $\lim_{\epsilon \rightarrow 0} z(\epsilon) = 0$,
        $\lim_{\epsilon \rightarrow 0} \ztilde(\epsilon) = 0$ and
        $\lim_{\epsilon \rightarrow 0} z(\epsilon)+\ztilde(\epsilon) = 0$.
    \end{enumerate}

    \noindent Note that the limit is to be understood for $\epsilon \in (\min\{0,\delta t_{uv}\},\max\{0,\delta t_{uv}\})$ in the domain of $z$ and $\ztilde$, i.e.,
    if $\delta t_{uv} > 0$, 
    $(\min\{0,\delta t_{uv}\},\max\{0,\delta t_{uv}\}) = (0,\delta t_{uv})$ holds, 
    and the limit $\epsilon \rightarrow 0$ is to be understood as the limit 
    $\epsilon \searrow 0, \epsilon < \delta t_{uv}$.
    In contrast,
    if $\delta t_{uv} < 0$, 
    $(\min\{0,\delta t_{uv}\},\max\{0,\delta t_{uv}\}) = (\delta t_{uv},0)$ holds, 
    and the limit $\epsilon \rightarrow 0$ is to be understood as the limit 
    $\epsilon \nearrow 0, \epsilon > \delta t_{uv}$.
    \\
    \\\textit{1.:}
    By the fundamental theorem of calculus, the function $t \mapsto \int_{t - \delta t_{uv}}^{t} \nu_{uv}(s) ~ds$ differentiable and therefore, continuous.
    Therefore, as compositions of continuous functions, the functions $z$, $\ztilde$ and $z+\ztilde$ are continuous.
    \\
    \\\textit{2.:}
    By
    (1) basic transformations,
    (2) definition of $z$, $\ztilde$ and $z+\ztilde$, respectively, and 
    (3) condition \eqref{align_SelfLoopTime_Condition2},
    we obtain
    \begin{align*}
        \im(z)
        \overset{\text{(1)}}{=}&~
        \{ 
        z(\epsilon) 
        ~|~ 
        \epsilon \in (\min\{0,\delta t_{uv}\},\max\{0,\delta t_{uv}\})
        \}
        \\
        \overset{\text{(2)}}{=}&~
        \bigg \{ 
        \int_{t - \delta t_{uv}}^{t - \epsilon} \nu_{uv}(s) ~ds
        ~\Big|~ 
        \epsilon \in (\min\{0,\delta t_{uv}\},\max\{0,\delta t_{uv}\})
        \bigg \}
        \\
        \overset{\text{(1)}}{=}&~
        \bigg \{ 
        \int_{t - \delta t_{uv}}^{t - \epsilon} \nu_{uv}(s) ~ds
        ~\Big|~ 
        t - \epsilon \in (\min\{t,t-\delta t_{uv}\},\max\{t,t-\delta t_{uv}\})
        \bigg \}
        \\
        \overset{\text{(3)}}{\subset}&~
        (0,l_{uv}).
    \end{align*}

    \noindent By additionally using that therefore,
    (4) $z(\epsilon) \in (0,l_{uv})$ holds for all $\epsilon \in (\min\{0,\delta t_{uv}\},$ $\max\{0,\delta t_{uv}\})$,
    we obtain
    \begin{align*}
        \im(\ztilde)
        \overset{\text{(1)}}{=}&~
        \{ 
        \ztilde(\epsilon) 
        ~|~ 
        \epsilon \in (\min\{0,\delta t_{uv}\},\max\{0,\delta t_{uv}\})
        \\
        \overset{\text{(2)}}{=}&~
        \bigg \{ 
        \frac{z(\epsilon)}{2}
        ~\Big|~ 
        \epsilon \in (\min\{0,\delta t_{uv}\},\max\{0,\delta t_{uv}\})
        \bigg \}
        \\
        \overset{\text{(4)}}{\subset}&~
        (0,\textstyle \frac{1}{2} l_{uv})
        \quad \text{ and}
        \\
        \im(z+\ztilde)
        \overset{\text{(1)}}{=}&~
        \{ 
        z(\epsilon) + \ztilde(\epsilon) 
        ~|~ 
        \epsilon \in (\min\{0,\delta t_{uv}\},\max\{0,\delta t_{uv}\})
        \}
        \\
        \overset{\text{(2)}}{=}&~
        \bigg \{ 
        z(\epsilon) + \frac{z(\epsilon)}{2}
        ~\Big|~ 
        \epsilon \in (\min\{0,\delta t_{uv}\},\max\{0,\delta t_{uv}\})
        \bigg \}
        \\
        \overset{\text{(1)}}{=}&~
        \bigg \{ 
        \frac{3}{2} z(\epsilon)
        ~\Big|~ 
        \epsilon \in (\min\{0,\delta t_{uv}\},\max\{0,\delta t_{uv}\})
        \bigg \}
        \\
        \overset{\text{(4)}}{\subset}&~
        (0,\textstyle \frac{3}{2} l_{uv}).
    \end{align*}

    \noindent
    \textit{3.:}
    By
    (1) continuity of $z$, $\ztilde$ and $z+\ztilde$, respectively,
    (2) definition of $z$, $\ztilde$ and $z+\ztilde$, respectively and
    (3) condition \eqref{align_SelfLoopTime_Condition1},
    we obtain
    \begin{align*}
        \lim_{\epsilon \rightarrow 0} 
        z(\epsilon) 
        \overset{\text{(1)}}{=}&~
        z(0) 
        \overset{\text{(2)}}{=}~
        \int_{t - \delta t_{uv}}^{t} \nu_{uv}(s) ~ds
        \overset{\text{(3)}}{=}~
        0,
        \\
        \lim_{\epsilon \rightarrow 0} 
        \ztilde(\epsilon) 
        \overset{\text{(1)}}{=}&~
        \ztilde(0) 
        \overset{\text{(2)}}{=}~
        \frac{z(0)}{2}
        \overset{\text{(2,3)}}{=}~
        0,
        \\
        \lim_{\epsilon \rightarrow 0} 
        z(\epsilon) + \ztilde(\epsilon) 
        \overset{\text{(1)}}{=}&~
        z(0) + \ztilde(0) 
        \overset{\text{(2,3)}}{=}~
        0.
    \end{align*}

    \noindent
    \textit{Step 2:}
    \textit{$c_{uv}(t,0) = c_{uv}(t - \delta t_{uv},0)$ holds.}

    For each $\epsilon \in (\min\{0,\delta t_{uv}\},\max\{0,\delta t_{uv}\})$, we can now choose $t_0 = t - \delta t_{uv} \in \R$ and -- as shown in \textit{Step 1} -- $z_0 = \ztilde(\epsilon) \in (0,l_{uv})$.
    Consequently, for each $\epsilon \in (\min\{0,\delta t_{uv}\},\max\{0,\delta t_{uv}\})$, by
    (1) the choices of $t_0 = t - \delta t_{uv} \in \R$ and $z_0 = \ztilde(\epsilon) \in (0,l_{uv})$,
    (2) the definition of $z$ and $\ztilde$ from \textit{Step 1} and
    (3) the results from \textit{Step 1},
    we observe that
    \begin{align*}
        z_0 + \int_{t_0}^{t-\epsilon} \nu_{uv}(s) ~ds
        \overset{\text{(1)}}{=}~
        \ztilde(\epsilon) + \int_{t - \delta t_{uv}}^{t-\epsilon} \nu_{uv}(s) ~ds
        \overset{\text{(2)}}{=}~
        \ztilde(\epsilon) + z(\epsilon)
        \overset{\text{(3)}}{\in}~
        (0,l_{uv}),
    \end{align*}

    \noindent and therefore, $t-\epsilon \in T_{t_0,z_0} = T_{t - \delta t_{uv},\ztilde(\epsilon)}$ holds.\footnote{
        Note that all the technical work of constructing $z$ and $\ztilde$ in \textit{Step 1} is because we would actually like to choose $z_0 = 0$ and $t \in \R$ as given; however, this $z_0$ and this $t$ do not satisfy the requirements of Theorem \ref{theorem_AdvectiveTransportAlongEdges}.
    }
    
    Therefore, by
    (1) the results from \textit{Step 1},
    (2) the continuity of the function $c_{uv} \in C^1(\R \times [0,l_{uv}],\R_{\geq 0})$,
    (3) definition of $z$ and
    (4) Theorem \ref{theorem_AdvectiveTransportAlongEdges} applied to $t_0 = t - \delta t_{uv}$, $z_0 = \ztilde(\epsilon) \in (0,l_{uv})$ and $t - \epsilon \in T_{t_0,z_0} = T_{t - \delta t_{uv},\ztilde(\epsilon)}$,
    we obtain
    \begin{align*}
        c_{uv}(t,0)
        \overset{\text{(1)}}{=}&~
        c_{uv} \left(
        \lim_{\epsilon \rightarrow 0}
        t - \epsilon
        ,
        \lim_{\epsilon \rightarrow 0}
        z(\epsilon) + \ztilde(\epsilon)
        \right)
        \\
        \overset{\text{(2)}}{=}&~
        \lim_{\epsilon \rightarrow 0}
        c_{uv} \left(
        t - \epsilon
        ,
        z(\epsilon) + \ztilde(\epsilon)
        \right)
        \\
        \overset{\text{(3)}}{=}&~
        \lim_{\epsilon \rightarrow 0}
        c_{uv} \bigg(
        t - \epsilon
        ~,~
        \underbrace{
        \underbrace{
        \ztilde(\epsilon)
        }_{
        \overset{\text{(1)}}{\in} (0,l_{uv})
        }
        +
        \underbrace{
        \int_{t - \delta t_{uv}}^{t - \epsilon} \nu_{uv}(s) ~ds
        }_{
        = z(\epsilon) \overset{\text{(1)}}{\in} (0,l_{uv})
        }
        }_{
        = z(\epsilon) + \ztilde(\epsilon) \overset{\text{(1)}}{\in} (0,l_{uv})
        }
        \bigg)
        \\
        \overset{\text{(4)}}{=}&~
        \lim_{\epsilon \rightarrow 0}
        c_{uv} \left(
        t - \delta t_{uv}
        ,
        \ztilde(\epsilon)
        \right)
        \\
        \overset{\text{(2)}}{=}&~
        c_{uv} \left(
        t - \delta t_{uv}
        ,
        \lim_{\epsilon \rightarrow 0}
        \ztilde(\epsilon)
        \right)
        \\
        \overset{\text{(1)}}{=}&~
        c_{uv}(t - \delta t_{uv},0).
    \end{align*}

    \noindent 
    \textit{Step 3:}
    \textit{
    If $\delta t_{uv} > 0$ holds, 
    $c_{vu}(t,l_{vu}) = c_u(t - \delta t_{uv})$ holds.
    If $\delta t_{uv} < 0$ holds,
    $c_{vu}(t - \delta t_{uv},l_{vu}) = c_u(t)$ holds.
    }

    
    If $\delta t_{uv} > 0$ holds, by 
    (1) Equation \eqref{align_SymmetryOfSignals} for $z = l_{vu}$,
    (2) \textit{Step 2} and
    (3) Equation \eqref{align_CouplingCondition_InflowContinuity} together with the fact that by Lemma \ref{lemma_SelfLoopTime}.4, $q_{uv}(t - \delta t_{uv}) \geq 0$ holds,
    we obtain
    \begin{align*}
        c_{vu}(t,l_{vu})
        \overset{\text{(1)}}{=}
        c_{uv}(t,0) 
        \overset{\text{(2)}}{=}
        c_{uv}(t - \delta t_{uv},0)
        \overset{\text{(3)}}{=}
        c_u(t - \delta t_{uv}).
    \end{align*}

    \noindent If instead, $\delta t_{uv} < 0$ holds,
    by 
    (1) Equation \eqref{align_SymmetryOfSignals} for $z = l_{vu}$,
    (2) \textit{Step 2} and
    (3) Equation \eqref{align_CouplingCondition_InflowContinuity} together with the fact that by Lemma \ref{lemma_SelfLoopTime}.4, $q_{uv}(t) \geq 0$ holds,
    we obtain
    \begin{align*}
        c_{vu}(t - \delta t_{uv},l_{vu})
        \overset{\text{(1)}}{=}
        c_{uv}(t - \delta t_{uv},0)
        \overset{\text{(2)}}{=}
        c_{uv}(t,0) 
        \overset{\text{(3)}}{=}
        c_u(t).
    \end{align*}
    %
    %
\end{proof}

\subsection{Proof of Theorem \ref{theorem_ExistenceAndUniquenessOfPositiveTransportTimes}}

\begin{theoremNoNb}[Theorem \ref{theorem_ExistenceAndUniquenessOfPositiveTransportTimes}]
    Let $e_{uv} \in E$ and $t \in \R$ hold.
    If $\nu_{uv}(t) \neq 0$ holds and the set
    \begin{align*}
        A_{uv}(t) :=& 
        \bigg \{
        \delta t \in (0,\infty) 
        ~ \big |~
        \int_{t - \delta t}^{t} \nu_{uv}(s) ~ds
        \in 
        \{-l_{uv},0,l_{uv}\}
        \bigg \}
        \\
        =&
        \bigg \{
        \delta t \in (0,\infty) 
        ~ \big |~
        \zcurl(\delta t)
        \in 
        \{-l_{uv},0,l_{uv}\}
        \bigg \}
    \end{align*}

    \noindent is not empty, the following properties hold:

    \begin{enumerate}
        \item $\delta t_{uv} 
        := 
        \min A_{uv}(t) 
        = 
        \min \{\delta t \in (0,\infty) ~ \big |~ \zcurl(\delta t) \in \{-l_{uv},0,l_{uv}\}\} \in (0,\infty)$ exists.

        \item $\delta t_{uv}$ is a positive pass-through time, inverse pass-through time, self-loop time or inverse self-loop time.
        More specifically, 
        $\delta t_{uv}$ is (always \gls*{wrt} $e_{uv}$ and $t$) ...
        \begin{align*}
            & \text{... an inverse pass-through time}
            && \iff
            \zcurl(\delta t_{uv}) = -l_{uv},
            \\
            & \text{... a self-loop time} 
            \text{ or }
            \text{an inverse self-loop time}
            && \iff
            \zcurl(\delta t_{uv}) = 0,
            \\
            & \text{... a pass-through time}
            && \iff
            \zcurl(\delta t_{uv}) = l_{uv}.
        \end{align*}

        \item There exists no other positive pass-through time, inverse pass-through time, self-loop time or inverse self-loop time than $\delta t_{uv}$.
    \end{enumerate}
\end{theoremNoNb}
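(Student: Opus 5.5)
The plan is to exploit that $\zcurl(\delta t)=\int_{t-\delta t}^{t}\nu_{uv}(s)\,ds$ is continuous, indeed $C^1$ with $\zcurl'(\delta t)=\nu_{uv}(t-\delta t)$, and that $\zcurl(0)=0$.

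For item 1, we first use $\nu_{uv}(t)\neq 0$, say w.l.o.g. $\nu_{uv}(t)>0$; the case $\nu_{uv}(t)<0$ is symmetric, with signs flipped. By continuity there is $\eta>0$ with $\nu_{uv}>0$ on $[t-\eta,t]$, so $\zcurl$ is strictly increasing on $[0,\eta]$. Hence $\zcurl(\delta t)\in(0,l_{uv})$ for all sufficiently small $\delta t>0$, shrinking $\eta$ if necessary. In particular $\inf A_{uv}(t)\geq\eta'>0$ for some $\eta'>0$. The set $A_{uv}(t)=\zcurl^{-1}(\{-l_{uv},0,l_{uv}\})\cap(0,\infty)$ is closed in $(0,\infty)$ as a preimage of a closed set under a continuous map. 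Since it is bounded away from $0$ and nonempty, its infimum lies in it, so the minimum $\delta t_{uv}$ exists and is positive. This positivity-of-the-infimum step is the main obstacle: without $\nu_{uv}(t)\neq 0$, zeros of $\zcurl$ could accumulate at $0$.

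For item 2, we again assume $\nu_{uv}(t)>0$. On $(0,\delta t_{uv})$, $\zcurl$ avoids $\{-l_{uv},0,l_{uv}\}$ by minimality, and it takes values in $(0,l_{uv})$ near $0$. By the intermediate value theorem, $\zcurl((0,\delta t_{uv}))\subset(0,l_{uv})$. Consequently $\zcurl(\delta t_{uv})\in\{0,l_{uv}\}$, and the value $-l_{uv}$ is impossible in this case. We then translate to the conditions of Definition \ref{definition_TransportTimes}. For $t'\in(t-\delta t_{uv},t)$ we have $\int_{t-\delta t_{uv}}^{t'}\nu_{uv}=\zcurl(\delta t_{uv})-\zcurl(t-t')$. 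If $\zcurl(\delta t_{uv})=l_{uv}$, this lies in $(0,l_{uv})$, giving a pass-through time. If $\zcurl(\delta t_{uv})=0$, it equals $-\zcurl(t-t')\in(-l_{uv},0)$, giving an inverse self-loop time. In the case $\nu_{uv}(t)<0$ we get $\zcurl(\delta t_{uv})\in\{-l_{uv},0\}$, yielding an inverse pass-through time or a self-loop time. Conversely, each type in Definition \ref{definition_TransportTimes} forces the stated value of $\zcurl$ by its first condition, which establishes the equivalences.

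For item 3, suppose $\delta t'>0$ is any positive transport time of one of the four types. Its first condition gives $\zcurl(\delta t')\in\{-l_{uv},0,l_{uv}\}$, so $\delta t'\in A_{uv}(t)$ and $\delta t'\geq\delta t_{uv}$. If $\delta t'>\delta t_{uv}$, we take $t'=t-\delta t_{uv}$, which lies in the open interval $(t-\delta t',t)$. Then $\int_{t-\delta t'}^{t'}\nu_{uv}=\zcurl(\delta t')-\zcurl(\delta t_{uv})$ is a difference of two elements of $\{-l_{uv},0,l_{uv}\}$, so it lies in $\{-2l_{uv},-l_{uv},0,l_{uv},2l_{uv}\}$. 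This set is disjoint from both open intervals $(0,l_{uv})$ and $(-l_{uv},0)$ required by the second condition, a contradiction. This mirrors the uniqueness arguments of Lemmas \ref{lemma_PassThroughTime}.2 and \ref{lemma_SelfLoopTime}.1, and yields $\delta t'=\delta t_{uv}$.
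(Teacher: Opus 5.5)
Your proof is correct and follows the paper's argument in all essentials. You use the continuity of $\zcurl$; you use $\nu_{uv}(t)\neq 0$ to keep $A_{uv}(t)$ away from $0$ so the minimum exists (the paper does this by a sequential closedness argument by contradiction rather than your direct monotonicity bound); you apply minimality plus the intermediate value theorem on $(0,\delta t_{uv})$; and your uniqueness step is the same difference-of-levels contradiction. The one real variation is in item 2: you use the sign of $\nu_{uv}(t)$ to decide whether $\zcurl((0,\delta t_{uv}))$ lies in $(0,l_{uv})$ or $(-l_{uv},0)$, whereas the paper obtains one-sidedness by connectedness alone and leaves self-loop versus inverse self-loop undecided — your route is slightly sharper and yields Lemma \ref{lemma_TransportTimes} essentially for free.
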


\begin{proof}
    By the fundamental theorem of calculus, the function $t^{'} \mapsto \int_{t}^{t^{'}} \nu_{uv}(s) ~ds$ is differentiable and therefore, continuous.
    Therefore, as a composition of continuous functions, the function $\zcurl$ is continuous.
    \\
    \\
    \noindent
    \textit{1:}
    By assumption, $A_{uv}(t) \neq \emptyset$ holds. 
    Moreover, by definition, the value $0 \in \R$ is a lower bound of $A_{uv}(t) \subset (0,\infty)$. 
    Therefore, it suffices to show that $A_{uv}(t)$ is closed in $\R$; 
    in this case, the minimum of $A_{uv}(t)$ exists.

    To do so, let 
    $(\delta t_n)_{n \in \N}$ be a sequence in $A_{uv}(t)$ 
    such that 
    $\delta t_n \rightarrow \delta t_0$ holds
    for $n \rightarrow \infty$. 
    We need to show that $\delta t_0 \in A_{uv}(t)$ holds.
    \\
    \\\textit{Step 1.1:}
    \textit{$\zcurl(\delta t_0) \in \{-l_{uv},0,l_{uv}\}$ holds.}

    On the one hand,
    by 
    the continuity of $\zcurl$, 
    we obtain that 
    $\zcurl(\delta t_n) \rightarrow \zcurl(\delta t_0)$ holds
    for $n \rightarrow \infty$.
    One the other hand, 
    since
    $(\delta t_n)_{n \in \N}$ is a sequence in $A_{uv}(t)$,
    $(\zcurl(\delta t_n))_{n \in \N}$ is a sequence in $\{-l_{uv},0,l_{uv}\}$.
    Consequently, since 
    $\{-l_{uv},0,l_{uv}\}$ is closed in $\R$,
    the limit $\zcurl(\delta t_0)$ is also an element of $\{-l_{uv},0,l_{uv}\}$.
    \\
    \\\textit{Step 1.2:}
    \textit{$\delta t_0 > 0$ holds.}

    Since
    $(\delta t_n)_{n \in \N}$ is a sequence in $A_{uv}(t)$,
    $\delta t_n > 0$ holds for a $n \in \N$.
    Consequently, the limit
    $\delta t_0 = \lim_{n \rightarrow \infty} \delta t_n$ 
    satisfies
    $\delta t_0 \geq 0$,
    and it suffices to show that 
    $\delta t_0 \neq 0$
    holds.

    Let us assume that $\delta t_0 = 0$ holds.
    By 
    the continuity of $\nu_{uv}$
    and
    the assumption that $\nu_{uv}(t) \neq 0$ holds,
    there exists an $\epsilon > 0$ 
    such that
    $\nu_{uv}(s) > 0$ or $\nu_{uv}(s) < 0$ holds
    for all $s \in (t-\epsilon,t+\epsilon)$.
    \gls*{Wlog}, we can assume that $\epsilon < l_{uv}$ holds 
    (else, we choose a smaller $\epsilon$ that naturally satisfies this condition).
    Consequently,
    (1) by basic results from analysis,
    \begin{align}
    \label{align_InProof_zcurlNotZero}
        \zcurl(\delta t)
        =
        \int_{t - \delta t}^{t} 
        \underbrace{
        \nu_{uv}(s) 
        }_{
        > 0 \text{ or } <0
        }
        ~ds
        \overset{\text{(1)}}{\neq}
        0
    \end{align}

    \noindent holds for all $\delta t \in (0,\epsilon)$.
    Specifically, 
    since 
    $\delta t_n \rightarrow \delta t_0 = 0$ holds
    for $n \rightarrow \infty$,
    for the $\epsilon > 0$ from above,
    there exists an $n_0 \in \N$
    such that for all $n \geq n_0$,
    $|\delta t_n| < \epsilon$ holds;
    and since $(\delta t_n)_{n \in \N}$ is a sequence in $A_{uv}(t)$,
    even more,
    $\delta t_n \in (0,\epsilon)$ holds
    for all $n \geq n_0$.
    Consequently,
    for all $n \geq n_0$,
    Equation \eqref{align_InProof_zcurlNotZero} applies to $\delta t_n$;
    and since again $(\delta t_n)_{n \in \N}$ is a sequence in $A_{uv}(t)$, 
    even more,
    \begin{align}
    \label{align_InProof_zcurlIsPMLength}
        \zcurl(\delta t_n) \in \{-l_{uv},l_{uv}\}
    \end{align}

    \noindent 
    holds for all $n \geq n_0$.

    At the same time, 
    by the continuity of $\zcurl$, 
    we obtain that 
    for the $\epsilon > 0$ from above,
    there exists an $n_1 \in \N$
    such 
    \begin{align}
    \label{align_InProof_zcurlIsContinuous}
        |\zcurl(\delta t_n) - \zcurl(\delta t_0)|
        <
        \epsilon
    \end{align}

    \noindent
    holds for all $n \geq n_1$.
    
    Bringing these findings together,
    by
    (1) basic transformations,
    (2) Equation \eqref{align_InProof_zcurlIsPMLength}
    (3) the definition of $\zcurl$ (cf. Equation \eqref{align_TransportTime_Functionscurlzcurl}),
    (4) the assumption that $\delta t_0 = 0$ holds,
    (5) basic results from analysis,
    (6) Equation \eqref{align_InProof_zcurlIsContinuous} and
    (7) the choice of $\epsilon$,
    we obtain
    \begin{align*}
        l_{uv}
        \overset{\text{(1)}}{=}~
        |\pm l_{uv} - 0|
        \overset{\text{(2,3,4,5)}}{=}~
        |\zcurl(\delta t_n) - \zcurl(\delta t_0)|
        \overset{\text{(6)}}{<}~
        \epsilon
        \overset{\text{(7)}}{<}~
        l_{uv}
    \end{align*}
    
    for all $n \geq \max\{n_0,n_1\}$
    -- a contradiction.
    \\
    \\\textit{2:}
    For $\delta t_{uv} := \min A_{uv}(t)$, we show an even stronger claim (namely Lemma \ref{lemma_TransportTimes}.1) in several steps.
    \\
    \\\textit{Step 2.1:}
    \textit{$\im \zcurl_{|(0,\delta t_{uv})} \subset (-l_{uv},0) \sqcup (0,l_{uv})$ holds.}

    Let us assume that there exists a
    $\delta t \in (0,\delta t_{uv})$
    such that
    $\zcurl(\delta t) \in (-\infty,-l_{uv}] \sqcup \{0\} \sqcup [l_{uv},\infty)$
    holds.
    \\\textit{Case 1:}
    If $\zcurl(\delta t) \in \{-l_{uv},0,l_{uv}\}$ holds,
    we obtain $\delta t \in A_{uv}(t)$ with $\delta t < \delta t_{uv}$
    -- a contradiction to the choice of $\delta t_{uv} = \min A_{uv}(t)$ as the minimum of $A_{uv}(t)$.
    \\\textit{Case 2:}
    If $\zcurl(\delta t) \in (-\infty,-l_{uv})$ holds,
    by 
    the fact that $\zcurl(0) = 0$ holds and
    the intermediate value theorem,
    there exists a
    $\delta t^{'} \in (0,\delta t)$,
    such that
    $\zcurl(\delta t^{'}) = -l_{uv}$ holds.
    Consequently, 
    we obtain $\delta t^{'} \in A_{uv}(t)$  with $\delta t^{'} < \delta t_{uv}$
    -- a contradiction to the choice of $\delta t_{uv} = \min A_{uv}(t)$ as the minimum of $A_{uv}(t)$.
    \\\textit{Case 3:}
    If $\zcurl(\delta t) \in (l_{uv},\infty)$ holds,
    by 
    the fact that $\zcurl(0) = 0$ holds and
    the intermediate value theorem,
    there exists a 
    $\delta t^{'} \in (0,\delta t)$,
    such that
    $\zcurl(\delta t^{'}) = l_{uv}$ holds.
    Consequently, 
    we obtain $\delta t^{'} \in A_{uv}(t)$  with $\delta t^{'} < \delta t_{uv}$
    -- a contradiction to the choice of $\delta t_{uv} = \min A_{uv}(t)$ as the minimum of $A_{uv}(t)$.
    \\
    \\\textit{Step 2.2:}
    \textit{Either $\im z_{|(0,\delta t_{uv})} \subset (-l_{uv},0)$ or $\im z_{|(0,\delta t_{uv})} \subset (0,l_{uv})$ holds.}

    By step \textit{Step 2.1}, 
    $\im z_{|(0,\delta t_{uv})} \subset (-l_{uv},0) \sqcup (0,l_{uv})$ holds.
    Let us assume that there exist
    $\delta t_1, \delta t_2 \in (0,\delta t_{uv})$
    such that
    $\zcurl(\delta t_1) \in (-l_{uv},0)$ and $\zcurl(\delta t_2) \in (0,l_{uv})$
    holds.
    Again by the mean value theorem,
    there exists a
    $\delta t \in (\delta t_1,\delta_2)$,
    such that
    $\zcurl(\delta t) = 0$ holds.
    Consequently, 
    we obtain $\delta t \in A_{uv}(t)$  with $\delta t < \delta t_2 < \delta t_{uv}$
    -- a contradiction to the choice of $\delta t_{uv} = \min A_{uv}(t)$ as the minimum of $A_{uv}(t)$.
    \\
    \\\textit{Step 2.3:}
    \textit{
    If additionally, $\zcurl(\delta t_{uv}) = -l_{uv}$ holds,
    then $\im z_{|(0,\delta t_{uv})} \subset (-l_{uv},0)$ holds.
    If in contrast, additionally, $\zcurl(\delta t_{uv}) = l_{uv}$ holds,
    then $\im z_{|(0,\delta t_{uv})} \subset (0,l_{uv})$ holds.
    }

    By step \textit{Step 2.2},
    either $\im z_{|(0,\delta t_{uv})} \subset (-l_{uv},0)$ or $\im z_{|(0,\delta t_{uv})} \subset (0,l_{uv})$ holds.
    If additionally, 
    $\zcurl(\delta t_{uv}) = -l_{uv}$ holds,
    let us assume that there exists a
    $\delta t \in (0,\delta t_{uv})$
    such that
    $\zcurl(\delta t)  \in (0,l_{uv})$ 
    holds.
    Again by the intermediate value theorem,
    there exists a
    $\delta t^{'} \in (\delta t,\delta t_{uv})$,
    such that
    $\zcurl(\delta t^{'}) = 0$ holds.
    Consequently, 
    we obtain $\delta t^{'} \in A_{uv}(t)$  with $\delta t^{'} < \delta t_{uv}$
    -- a contradiction to the choice of $\delta t_{uv} = \min A_{uv}(t)$ as the minimum of $A_{uv}(t)$.

    If in contrast,
    $\zcurl(\delta t_{uv}) = l_{uv}$ holds,
    let us assume that there exists a
    $\delta t \in (0,\delta t_{uv})$
    such that
    $\zcurl(\delta t)  \in (-l_{uv},0)$ 
    holds.
    Again by the intermediate value theorem,
    there exists a
    $\delta t^{'} \in (\delta t,\delta t_{uv})$,
    such that
    $\zcurl(\delta t^{'}) = 0$ holds.
    Consequently, 
    we obtain $\delta t^{'} \in A_{uv}(t)$  with $\delta t^{'} < \delta t_{uv}$
    -- a contradiction to the choice of $\delta t_{uv} = \min A_{uv}(t)$ as the minimum of $A_{uv}(t)$.
    \\
    \\\textit{Step 2.4:}
    \textit{For $\delta t_{uv} := \min A_{uv}(t)$, we obtain}
    \begin{align*}
        \begin{cases}
            \im z_{|(0,\delta t_{uv})} \subset (-l_{uv},0) 
            & \text{ if }
            \zcurl(\delta t_{uv}) = -l_{uv},
            \\
            \im z_{|(0,\delta t_{uv})} \subset (-l_{uv},0) 
            \text{ or }
            \im z_{|(0,\delta t_{uv})} \subset (0,l_{uv})
            & \text{ if }
            \zcurl(\delta t_{uv}) = 0,
            \\
            \im z_{|(0,\delta t_{uv})} \subset (0,l_{uv})
            & \text{ if }
            \zcurl(\delta t_{uv}) = l_{uv}. 
        \end{cases}
    \end{align*}

    \noindent 
    The claim follows immediately from \textit{Step 2.2} and \textit{Step 2.3}.
    \\
    \\\textit{Step 2.5:}
    \textit{$\delta t_{uv}$ is (always \gls*{wrt} $e_{uv}$ and $t$) ...}
        \begin{align*}
            & \text{... an inverse pass-through time}
            && \iff
            \zcurl(\delta t_{uv}) = -l_{uv},
            \\
            & \text{... a self-loop time} 
            \text{ or }
            \text{an inverse self-loop time}
            && \iff
            \zcurl(\delta t_{uv}) = 0,
            \\
            & \text{... a pass-through time}
            && \iff
            \zcurl(\delta t_{uv}) = l_{uv}. 
        \end{align*}

    \noindent 
    \enquote{$\Rightarrow$}
    If $\delta t_{uv}$ is an inverse pass-through time, by 
    (1) definition of $\zcurl$ and
    (2) condition \eqref{align_PassThroughTime_Condition1_Inverse}, 
    we obtain 
    \begin{align*}
        \zcurl(\delta t_{uv})
        \overset{\text{(1)}}{=}
        \int_{t - \delta t_{uv}}^{t}
        \nu_{uv}(s) ~ds
        \overset{\text{(2)}}{=}
        - l_{uv}.
    \end{align*}
    
    \noindent Analogously, 
    if $\delta t_{uv}$ is a self-loop time or an inverse self-loop time,
    we use condition \eqref{align_SelfLoopTime_Condition1} or \eqref{align_SelfLoopTime_Condition1_Inverse}
    to conclude that $\zcurl(\delta t_{uv}) = 0$ holds.
    If $\delta t_{uv}$ is a pass-through time, 
    we use condition \eqref{align_PassThroughTime_Condition1} 
    to conclude that $\zcurl(\delta t_{uv}) = l_{uv}$ holds.
    \\
    \\
    \enquote{$\Leftarrow$}
    If $\zcurl(\delta t_{uv}) = -l_{uv}$ holds, by 
    (1) definition of $\zcurl$ and
    (2) this assumption,
    we obtain
    \begin{align*}
        \int_{t - \delta t_{uv}}^{t}
        \nu_{uv}(s) ~ds
        \overset{\text{(1)}}{=}
        \zcurl(\delta t_{uv})
        \overset{\text{(2)}}{=}
        - l_{uv}.
    \end{align*}

    \noindent that is, condition \eqref{align_PassThroughTime_Condition1_Inverse} is satisfied.
    
    Consequently, for any  $t^{'} \in (\min\{t - \delta t_{uv},t\},\max\{t - \delta t_{uv},t\}) = (t - \delta t_{uv},t)$,
    or equivalently, for any $t^{'} = t - \delta t$ with $\delta t \in (0,\delta t_{uv})$,
    by 
    (1) by basic results from analysis,
    (2) condition \eqref{align_PassThroughTime_Condition1_Inverse},
    (3) definition of $\zcurl$ and
    (4) \textit{Step 2.4},
    we obtain
    
    \begin{align*}
        \int_{t - \delta t_{uv}}^{t^{'}}
        \nu_{uv}(s) ~ds
        \overset{\text{(1)}}{=}
        \underbrace{
        \int_{t - \delta t_{uv}}^{t}
        \nu_{uv}(s) ~ds
        }_{
        \overset{\text{(2)}}{=}
        -l_{uv}
        }
        -
        \underbrace{
        \int_{t - \delta t}^{t}
        \nu_{uv}(s) ~ds
        }_{
        \overset{\text{(3)}}{=}
        \zcurl(\delta t)
        \overset{\text{(4)}}{\in}
        (-l_{uv},0)
        }
        \overset{\text{(1)}}{\in}
        (-l_{uv},0),
    \end{align*}

    \noindent that is, condition \eqref{align_PassThroughTime_Condition2_Inverse} is satisfied, too, and thus, $\delta t_{uv}$ is an inverse pass-through time.

    Analogously,
    if $\zcurl(\delta t_{uv}) = 0$ holds,
    we use this assumption to show that
    condition \eqref{align_SelfLoopTime_Condition1}, which is equal to condition \eqref{align_SelfLoopTime_Condition1_Inverse}, is satisfied.
    If $\zcurl(\delta t_{uv}) = l_{uv}$ holds,
    we use this assumption to show that
    condition \eqref{align_PassThroughTime_Condition1} is satisfied.
    Consequently, we use these conditions and \textit{Step 2.4} to show that
    condition \eqref{align_SelfLoopTime_Condition2}, \eqref{align_SelfLoopTime_Condition2_Inverse} and \eqref{align_PassThroughTime_Condition2} are satisfied in alignment with the conditions in \textit{Step 2.4}.
    We leave the details as an exercise to the reader, however, a summary of the cases to be considered is given as follows:
    \begin{align*}
        \underbrace{
        \int_{t - \delta t_{uv}}^{t}
        \nu_{uv}(s) ~ds
        }_{
        \scriptsize
        \begin{array}{l}
            = ~ -l_{uv}
            \\
            = ~ 0 
            \\
            = ~ 0 
            \\
            = ~ l_{uv}
        \end{array}
        }
        -
        \underbrace{
        \int_{t - \delta t}^{t}
        \nu_{uv}(s) ~ds
        }_{
        \scriptsize
        \begin{array}{l}
            \in ~ (-l_{uv},0)
            \\
            \in ~ (-l_{uv},0)
            \\
            \in ~ (0,l_{uv})
            \\
            \in ~ (0,l_{uv})
        \end{array}
        }
        \overset{\text{(1)}}{=}
        \underbrace{
        \int_{t - \delta t_{uv}}^{t^{'} = t - \delta t}
        \nu_{uv}(s) ~ds
        }_{
        \scriptsize
        \begin{array}{l}
            \in ~ (-l_{uv},0) 
            \\
            \in ~ (0,l_{uv})
            \\
            \in ~ (-l_{uv},0)
            \\
            \in ~ (0,l_{uv})
        \end{array}
        }
        {\color{white}
        \underbrace{
        \int_{t - \delta t_{uv}}^{t^{'} = t - \delta t}
        \nu_{uv}(s) ~ds
        }_{
        \scriptsize
        {\color{black}
        \begin{array}{l}
            \text{ if } 
            \zcurl(\delta t_{uv}) = -l_{uv}
            \\
            \text{ if } 
            \zcurl(\delta t_{uv}) = 0
            \text{ and }
            \im z_{|(0,\delta t_{uv})} \subset (-l_{uv},0)
            \\
            \text{ if } 
            \zcurl(\delta t_{uv}) = 0
            \text{ and }
            \im z_{|(0,\delta t_{uv})} \subset (0,l_{uv})
            \\
            \text{ if } 
            \zcurl(\delta t_{uv}) = l_{uv}
        \end{array}
        }
        }
        }
    \end{align*}

    \noindent
    \textit{3.:}
    Similar to the proof of Lemma \ref{lemma_PassThroughTime}.2 and Lemma \ref{lemma_SelfLoopTime}.1, let us assume that there exist two different and positive times $\delta t_{1uv}, \delta t_{2uv} \in \R_{\neq 0}$ which are each either a pass-through time, an inverse pass-through time, a self-loop time or an inverse self-loop time.
    \gls*{Wlog}, let $0 < \delta t_{1uv} < \delta t_{2uv}$ hold.
    Consequently, 
    for $t^{'} = t - \delta t_{1uv} \in (t - \delta t_{2uv},t) = (\min\{t - \delta t_{2uv},t\},\max\{t - \delta t_{2uv},t\})$,
    by
    (1) by basic results from analysis and
    (2) one or two of the conditions \eqref{align_PassThroughTime_Condition1}, \eqref{align_PassThroughTime_Condition1_Inverse}, \eqref{align_SelfLoopTime_Condition1} or \eqref{align_SelfLoopTime_Condition1_Inverse}\footnote{
        Note that we have $3 \cdot 3 = 9$ possible combinations:
        If $\delta t_{2uv}$ is a pass-through time, we work with condition \eqref{align_PassThroughTime_Condition1}.
        If $\delta t_{2uv}$ is an inverse pass-through time, we work with condition \eqref{align_PassThroughTime_Condition1_Inverse}.
        If $\delta t_{2uv}$ is a(n) (inverse) self-loop time, we work with condition \eqref{align_SelfLoopTime_Condition1}, which is equal to condition \eqref{align_SelfLoopTime_Condition1_Inverse}.
        Each of these three options need to be combined with the three analogous options for the second time $\delta t_{1uv}$.
    },
    we obtain
    \begin{align*}
        \int_{t - \delta t_{2uv}}^{t^{'}} \nu_{uv}(s) ~ds
        \overset{\text{(1)}}{=}
        \int_{t - \delta t_{2uv}}^{t} \nu_{uv}(s) ~ds
        -
        \int_{t - \delta t_{1uv}}^{t} \nu_{uv}(s) ~ds
        \overset{\text{(2)}}{\in}
        \{-2 l_{uv},-l_{uv},0,l_{uv},2 l_{uv}\}
    \end{align*}

    \noindent -- a contradiction to one of the conditions \eqref{align_PassThroughTime_Condition2}, \eqref{align_PassThroughTime_Condition2_Inverse}, \eqref{align_SelfLoopTime_Condition2} or \eqref{align_SelfLoopTime_Condition2_Inverse}\footnote{
        Note that we have four possible combinations:
        If $\delta t_{2uv}$ is a pass-through time, we work with condition \eqref{align_PassThroughTime_Condition2}.
        If $\delta t_{2uv}$ is an inverse pass-through time, we work with condition \eqref{align_PassThroughTime_Condition2_Inverse}.
        If $\delta t_{2uv}$ is a self-loop time, we work with condition \eqref{align_SelfLoopTime_Condition2}.
        If $\delta t_{2uv}$ is an inverse self-loop time, we work with condition \eqref{align_SelfLoopTime_Condition2_Inverse}.
    }.
\end{proof}

\subsection{Proof of Lemma \ref{lemma_TransportTimes}}

\begin{lemmaNoNb}[Lemma \ref{lemma_TransportTimes}]
    Let $e_{uv} \in E$ and $t \in \R$ hold.
    In the setting of Theorem \ref{theorem_ExistenceAndUniquenessOfPositiveTransportTimes},
    the following property holds:

    \begin{enumerate}
        \item $\delta t_{uv}$ is  (always \gls*{wrt} $e_{uv}$ and $t$) ...
        \begin{align*}
            & \text{... a pass-through time}
            && \iff
            \zcurl(\delta t_{uv}) \in \{-l_{uv},l_{uv}\}
            \text{ and }
            q_{uv}(t) > 0,
            \\
            & \text{... an inverse pass-through time}
            && \iff
            \zcurl(\delta t_{uv}) \in \{-l_{uv},l_{uv}\}
            \text{ and }
            q_{uv}(t) < 0,
            \\
            & \text{... a self-loop time} 
            && \iff
            \zcurl(\delta t_{uv}) = 0
            \text{ and }
            q_{uv}(t) < 0,
            \\
            &\text{... an inverse self-loop time} 
            && \iff
            \zcurl(\delta t_{uv}) = 0
            \text{ and }
            q_{uv}(t) > 0.
        \end{align*}
        
    \end{enumerate}
\end{lemmaNoNb}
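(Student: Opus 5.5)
We work in the setting of Theorem \ref{theorem_ExistenceAndUniquenessOfPositiveTransportTimes}, so $\nu_{uv}(t) \neq 0$, $A_{uv}(t) \neq \emptyset$ and $\delta t_{uv} = \min A_{uv}(t) > 0$. By Theorem \ref{theorem_ExistenceAndUniquenessOfPositiveTransportTimes}.2, $\delta t_{uv}$ is exactly one of the four kinds of transport time. The type is determined by $\zcurl(\delta t_{uv}) \in \{-l_{uv},0,l_{uv}\}$, except that the value $0$ does not yet separate self-loops from inverse self-loops. Since the four types are mutually exclusive and exhaust all cases, and since $q_{uv}(t) = \alpha_{uv}\nu_{uv}(t)$ with $\alpha_{uv}>0$ is nonzero (so $q_{uv}(t) > 0$ or $q_{uv}(t) < 0$), it suffices to prove the four forward implications. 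The backward implications then follow by exclusion: the right-hand conditions are pairwise disjoint and cover all cases, while the left-hand types also partition all cases.

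For the forward directions, the key tool is the sign of $\zcurl(\delta t)$ for small $\delta t > 0$. By continuity of $\nu_{uv}$, the sign of $\nu_{uv}$ equals that of $\nu_{uv}(t)$ on some interval $(t-\epsilon, t]$. Hence $\zcurl(\delta t) = \int_{t-\delta t}^t \nu_{uv}(s)\,ds$ has the sign of $q_{uv}(t)$ for all $\delta t \in (0,\epsilon)$. By Step 2.4 in the proof of Theorem \ref{theorem_ExistenceAndUniquenessOfPositiveTransportTimes}, the image $\zcurl((0,\delta t_{uv}))$ lies entirely in $(0,l_{uv})$ or entirely in $(-l_{uv},0)$. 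Combining the two facts gives $\zcurl((0,\delta t_{uv})) \subset (0,l_{uv})$ if $q_{uv}(t) > 0$, and $\zcurl((0,\delta t_{uv})) \subset (-l_{uv},0)$ if $q_{uv}(t) < 0$.

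We then translate this into the defining conditions of Definition \ref{definition_TransportTimes}, using the identity $\int_{t-\delta t_{uv}}^{t-\delta t}\nu_{uv} = \zcurl(\delta t_{uv}) - \zcurl(\delta t)$. Consider first a pass-through time. Here $\zcurl(\delta t_{uv}) = l_{uv}$, and condition \eqref{align_PassThroughTime_Condition2} forces $\zcurl(\delta t) = l_{uv} - (\text{value in }(0,l_{uv})) \in (0,l_{uv})$ near $\delta t \to 0$. Therefore $\nu_{uv}$ cannot be negative near $t$, and since it is nonzero, $q_{uv}(t) > 0$. Lemma \ref{lemma_PassThroughTime}.4 gives the same conclusion modulo the zero case. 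The inverse pass-through case is symmetric and yields $q_{uv}(t) < 0$. For a self-loop time, $\zcurl(\delta t_{uv}) = 0$, and \eqref{align_SelfLoopTime_Condition2} gives $-\zcurl(\delta t) \in (0,l_{uv})$, i.e. $\zcurl(\delta t) < 0$ for small $\delta t$, hence $q_{uv}(t) < 0$. This agrees with Lemma \ref{lemma_SelfLoopTime}.3, which gives $q_{uv}(t) \le 0$, sharpened by $\nu_{uv}(t) \neq 0$. The inverse self-loop case analogously gives $q_{uv}(t) > 0$.

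The main subtlety is bookkeeping. The equivalences in Theorem \ref{theorem_ExistenceAndUniquenessOfPositiveTransportTimes}.2 tie $\zcurl(\delta t_{uv}) = l_{uv}$ to pass-through times and $-l_{uv}$ to inverse pass-through times. The present statement instead phrases the split via $\pm l_{uv}$ together with the sign of $q_{uv}(t)$. So we must check consistency: $\zcurl(\delta t_{uv}) = l_{uv}$ forces $q_{uv}(t) > 0$ by the sign argument above, and $-l_{uv}$ forces $q_{uv}(t) < 0$. Once that is established, the exclusion argument closes all four biconditionals. The only delicate point is the use of $\nu_{uv}(t) \neq 0$ to rule out the degenerate zero case flagged in Remarks \ref{remark_PassThroughTime} and \ref{remark_SelfLoopTime}.
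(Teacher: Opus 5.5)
Your proof is correct and follows essentially the same route as the paper: you first establish the four forward implications by pinning down the sign of $q_{uv}(t)$, and then obtain the converses by exclusion, which is exactly the paper's contradiction argument. The only difference is that you derive the sign facts directly from condition \eqref{align_PassThroughTime_Condition2} and its analogues, combined with the local sign of $\nu_{uv}$ near $t$. The paper instead cites Lemma \ref{lemma_PassThroughTime}.4 and Lemma \ref{lemma_SelfLoopTime}.3, and treats the inverse cases via Lemmas \ref{lemma_InversePassThroughTime} and \ref{lemma_InverseSelfLoopTime} together with the conservation of flows; your detour through Step 2.4 and the consistency check are not actually needed.
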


\begin{proof}
    \enquote{$\Rightarrow$}
    If $\delta t_{uv}$ is a pass-through time, 
    similarly to \textit{Step 2.5} in the proof of Theorem \ref{theorem_ExistenceAndUniquenessOfPositiveTransportTimes}.2,
    by 
    (1) definition of $\zcurl$ and
    (2) condition \eqref{align_PassThroughTime_Condition1}, 
    we obtain 
    \begin{align*}
        \zcurl(\delta t_{uv})
        \overset{\text{(1)}}{=}
        \int_{t - \delta t_{uv}}^{t}
        \nu_{uv}(s) ~ds
        \overset{\text{(2)}}{=}
        l_{uv}
        \in 
        \{-l_{uv},l_{uv}\}.
    \end{align*}

    \noindent Moreover, by Lemma \ref{lemma_PassThroughTime} and the choice of $\delta t_{uv} > 0$ , $q_{uv}(t) \geq 0$ holds.
    Since $q_{uv}(t) \neq 0$ holds by assumption, $q_{uv}(t) > 0$ follows.

    Similarly, 
    if $\delta t_{uv}$ is an inverse pass-through time (\gls*{wrt} $e_{uv}$ and $t$),
    by
    (1) definition of $\zcurl$ and
    (2) condition \eqref{align_PassThroughTime_Condition1_Inverse},
    we obtain
    \begin{align*}
        \zcurl(\delta t_{uv})
        \overset{\text{(1)}}{=}
        \int_{t - \delta t_{uv}}^{t}
        \nu_{uv}(s) ~ds
        \overset{\text{(2)}}{=}
        - l_{uv}
        \in 
        \{-l_{uv},l_{uv}\}.
    \end{align*}

    \noindent Additionally, we use Lemma \ref{lemma_InversePassThroughTime} to conclude that 
    $\delta t_{uv}$ is a pass-through time 
    \gls*{wrt} $e_{vu}$ and $t$.
    Therefore, by
    (1) the conservation of flows (Eq. \eqref{align_ConservationOfFlows}),
    (2) Lemma \ref{lemma_PassThroughTime} and the choice of $\delta t_{uv} > 0$ considered as a pass-through time \gls*{wrt} $e_{vu}$ and $t$,
    we obtain
    \begin{align*}
        q_{uv}(t)
        \overset{\text{(1)}}{=}
        - 
        \underbrace{
        q_{vu}(t)
        }_{
        \overset{\text{(2)}}{\geq}
        0
        }
        \leq
        0.
    \end{align*}

    \noindent  Since $q_{uv}(t) \neq 0$ holds by assumption, $q_{uv}(t) < 0$ follows.

    Analogously, 
    if $\delta t_{uv}$ is a self-loop time,
    we use condition \eqref{align_SelfLoopTime_Condition1} to conclude that $\zcurl(\delta t_{uv}) = 0$ holds and
    Lemma \ref{lemma_SelfLoopTime} to conclude that $q_{uv}(t) \leq 0$ and thus, $q_{uv}(t) < 0$ holds.
    If $\delta t_{uv}$ is an inverse self-loop time,
    we use condition \eqref{align_SelfLoopTime_Condition1_Inverse} to conclude that $\zcurl(\delta t_{uv}) = 0$ holds and
    Lemma \ref{lemma_InverseSelfLoopTime}, the conservation of flows and Lemma \ref{lemma_SelfLoopTime} to conclude that $q_{uv}(t) = - q_{vu}(t) \geq 0$ and thus, $q_{uv}(t) > 0$ holds.
    \\
    \\
    \enquote{$\Leftarrow$}
    If $\zcurl(\delta t_{uv}) \in \{-l_{uv},l_{uv}\}$ holds, 
    we know by Theorem \ref{theorem_ExistenceAndUniquenessOfPositiveTransportTimes}.2 that 
    $\delta t_{uv}$ is either a pass-through time or an inverse pass-through time.

    If additionally, 
    $q_{uv}(t) > 0$ holds,
    let us assume that $\delta t_{uv}$ is an inverse pass-through time.
    Consequently, by the same arguments as in the inclusion above,
    we can conclude that in this case,
    $q_{uv}(t) = - q_{vu}(t) \leq 0$ holds
    -- a contradiction.

    If instead, additionally,
    $q_{uv}(t) < 0$ holds,
    let us assume that $\delta t_{uv}$ is a pass-through time.
    Consequently, by the same arguments as in the inclusion above, 
    we can conclude that in this case,
    $q_{uv}(t) \geq 0$ holds
    -- a contradiction.
    \\
    \\
    If $\zcurl(\delta t_{uv}) = 0$ holds, 
    we know by Theorem \ref{theorem_ExistenceAndUniquenessOfPositiveTransportTimes}.2 that 
    $\delta t_{uv}$ is either a self-loop time or an inverse self-loop time.

    If additionally, 
    $q_{uv}(t) < 0$ holds,
    let us assume that $\delta t_{uv}$ an inverse self-loop time.
    Consequently, by the same arguments as in the inclusion above, 
    we can conclude that in this case,
    $q_{uv}(t) \geq 0$ holds
    -- a contradiction.

    If instead, additionally,
    $q_{uv}(t) > 0$ holds,
    let us assume that $\delta t_{uv}$ is a self-loop time.
    Consequently, by the same arguments as in the inclusion above, 
    we can conclude that in this case,
    $q_{uv}(t) \leq 0$ holds
    -- a contradiction.
\end{proof}

\subsection{Proof of Theorem \ref{theorem_PhysicalAlignment} and \ref{theorem_PhysicalAlignment_fullVersion}}

\begin{theoremNoNb}[Theorem \ref{theorem_PhysicalAlignment} and \ref{theorem_PhysicalAlignment_fullVersion}]
    Let $v \in V$ and $t \in \R$ hold.
    We assume that Assumption \ref{assumption_SmoothnessAssumptions} holds and that for each $u \in \Nbh(v)$, 
    $A_{uv}(t)$ is not empty 
    (that is, the \gls*{OP} \eqref{align_TransportTime_OP} has a solution).
    If the function $c_{v}$ obeys the mixing at nodes (Eq. \eqref{align_CouplingCondition_MixingAtNodes}) and 
    if for each $u \in \Nbh(v)$, the function $c_{uv}$ obeys the advection edge dynamics (Eq. \eqref{align_EdgeDynamics_Advection}) with inflow continuity condition (Eq. \eqref{align_CouplingCondition_InflowContinuity}), 
    then
    \begin{align*}
        c_v(t) 
        =~
        \sum_{ 
        \substack{
        u \in \Nbh_-(v,t) \\ 
        \zcurl(\delta t_{uv}) = \pm l_{uv}
        }}
        w_{uv}(t) \cdot c_u(t - \delta t_{uv})
        +
        \sum_{ 
        \substack{
        u \in \Nbh_-(v,t) \\ 
        \zcurl(\delta t_{uv}) = 0
        }}
        w_{uv}(t) \cdot c_v(t - \delta t_{uv})
        =~
        \sum_{u \in \Nbh(v)}
        w_{uv}(t) \cdot \varphi(t,e_{uv},\cm,\num)
    \end{align*}

    \noindent
    holds, where $\delta t_{uv} = \min A_{uv}(t)$ is the transport time as defined in Theorem \ref{theorem_ExistenceAndUniquenessOfPositiveTransportTimes}.
    In other words, Equation \eqref{align_MessagePassing_Advection} with $\phi$ as defined in Equation \eqref{align_MessageFunction_Advection} holds.
\end{theoremNoNb}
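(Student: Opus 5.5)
We start from the mixing-at-nodes identity (Eq. \eqref{align_CouplingCondition_MixingAtNodes}) and rewrite it in the weighted form of Eq. \eqref{align_CouplingCondition_MixingAtNodes_MessagePassingFormulation}. Since $w_{uv}(t)=\relu(q_{uv}(t))/\sum_{u'}\relu(q_{u'v}(t))$ vanishes exactly when $q_{uv}(t)\le 0$, we have $c_v(t)=\sum_{u\in\Nbh_-(v,t)} w_{uv}(t)\,c_{uv}(t,l_{uv})$. The sum over $\Nbh(v)$ on the right-hand side therefore reduces to a sum over $\Nbh_-(v,t)$, which is the middle expression of the statement. So the whole claim reduces to a termwise identity: for every $u\in\Nbh_-(v,t)$,
\[
c_{uv}(t,l_{uv}) = \phi(t,e_{uv},\cm,\num).
\]
For such $u$ we have $\nu_{uv}(t)>0$, in particular $\nu_{uv}(t)\neq 0$. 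By assumption $A_{uv}(t)\neq\emptyset$, so Theorem \ref{theorem_ExistenceAndUniquenessOfPositiveTransportTimes} applies and $\delta t_{uv}=\min A_{uv}(t)>0$ is well defined.

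Next we split into cases using Lemma \ref{lemma_TransportTimes}. Because $q_{uv}(t)=\alpha_{uv}\nu_{uv}(t)>0$, only two of its four cases can occur. If $\zcurl(\delta t_{uv})=\pm l_{uv}$, then $\delta t_{uv}$ is a (positive) pass-through time w.r.t.\ $e_{uv}$ and $t$. If $\zcurl(\delta t_{uv})=0$, then it is an inverse self-loop time. This observation also justifies the label ``$\pm l_{uv}$'' in the statement: the value $-l_{uv}$ cannot actually occur when $q_{uv}(t)>0$, since Lemma \ref{lemma_TransportTimes} would then force $q_{uv}(t)<0$. It is harmless in the formula.

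In the pass-through case, Theorem \ref{theorem_PassThroughTime} with $\delta t_{uv}>0$ gives $c_{uv}(t,l_{uv})=c_u(t-\delta t_{uv})$ directly. That theorem already combines Lemma \ref{lemma_PassThroughTime}.4, which gives $q_{uv}(t-\delta t_{uv})\ge 0$, with the inflow condition \eqref{align_CouplingCondition_InflowContinuity}. In the inverse self-loop case we first invoke Lemma \ref{lemma_InverseSelfLoopTime}: $\delta t_{uv}$ is then a self-loop time w.r.t.\ $e_{vu}$ and $t$. Applying Theorem \ref{theorem_SelfLoopTime} to the edge $e_{vu}$ with the roles of $u$ and $v$ swapped, and with positive time, yields $c_{uv}(t,l_{uv})=c_v(t-\delta t_{uv})$, since the edge-end concentration of the reversed edge $e_{uv}$ at $l_{uv}$ is exactly what that theorem outputs. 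In both cases the result coincides with the definition of $\phi$ in Eq. \eqref{align_MessageFunction_Advection}. Summing with the weights $w_{uv}(t)$ then gives the claim.

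The main obstacle is the bookkeeping in the inverse self-loop case. We must check carefully that the role-swap in Theorem \ref{theorem_SelfLoopTime} produces $c_{uv}(\cdot,l_{uv})$ and not $c_{vu}(\cdot,l_{vu})$. Writing that theorem for the pair $(v,u)$, it reads $c_{uv}(t,l_{uv})=c_v(t-\delta t_{vu})$ with $\delta t_{vu}=\delta t_{uv}$, which is what we need. Along the way we also need the symmetry relation \eqref{align_SymmetryOfSignals}, and we need the inflow continuity condition to be available for $v$ as an inflow neighbour of $u$ at time $t-\delta t_{uv}$. The latter is guaranteed by Lemma \ref{lemma_SelfLoopTime}.3, applied to $e_{vu}$, up to the degenerate zero-flow instants discussed in Remark \ref{remark_SelfLoopTime}.
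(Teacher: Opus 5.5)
Your proposal is correct and follows the paper's proof essentially step for step. The $\relu$-weights reduce the sum to $\Nbh_-(v,t)$, Lemma \ref{lemma_TransportTimes} with $q_{uv}(t)>0$ splits the terms into pass-through and inverse self-loop times, and Theorem \ref{theorem_PassThroughTime} resp.\ Lemma \ref{lemma_InverseSelfLoopTime} with Theorem \ref{theorem_SelfLoopTime} identify $c_{uv}(t,l_{uv})$ with $\phi$; you are, if anything, slightly more careful than the paper by invoking Theorem \ref{theorem_ExistenceAndUniquenessOfPositiveTransportTimes} only for inflow neighbours (where $\nu_{uv}(t)\neq 0$ is guaranteed) and by noting that $\zcurl(\delta t_{uv})=-l_{uv}$ cannot occur there.
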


\begin{proof}
    \textbf{Well-definedness:}
    Since by assumption, for each $u \in \Nbh(v)$, the set $A_{uv}(t)$ as defined in Theorem \ref{theorem_ExistenceAndUniquenessOfPositiveTransportTimes} is not empty, 
    according to Theorem \ref{theorem_ExistenceAndUniquenessOfPositiveTransportTimes}.1, a transport time $\delta t_{uv} = \min A_{uv}(t) \in (0,\infty)$ exists for each $u \in \Nbh(v)$.
    Even more, by Theorem \ref{theorem_ExistenceAndUniquenessOfPositiveTransportTimes}.3, is it uniquely determined by the condition that it is positive, allowing to define the map $\Nbh(v) \rightarrow (0,\infty), u \mapsto \delta t_{uv}$ that associates a unique transport time with each neighbor $u \in \Nbh(v)$.
    Especially to mention, for each $u \in \Nbh(v)$ and by definition of the transport time $\delta t_{uv}$ (Theorem \ref{theorem_ExistenceAndUniquenessOfPositiveTransportTimes}), $\zcurl(\delta t_{uv}) \in \{-l_{uv},0,l_{uv}\}$ holds.
    \\
    \\
    \textbf{Equality:}
    By
    (1) definition of the in-, out- and no-flow neighborhoods and
    (2) the conservation of flows (Eq. \eqref{align_ConservationOfFlows}),
    we obtain
    \begin{align*}
        \Nbh_\pm(v,t) 
        :\overset{\text{(1)}}{=}&~
        \{ u \in \Nbh(v) ~|~ \sgn(q_{vu}(t)) = \pm 1 \}
        \overset{\text{(2)}}{=}
        \{ u \in \Nbh(v) ~|~ \sgn(q_{uv}(t)) = \mp 1 \}
        \text{ and}
        \\
        \Nbh_0(v) 
        :\overset{\text{(1)}}{=}&~
        \{ u \in \Nbh(v) ~|~ \sgn(q_{vu}(t)) = {\color{white} \pm} 0 \}
        \overset{\text{(2)}}{=}
        \{ u \in \Nbh(v) ~|~ \sgn(q_{uv})(t) = {\color{white} \pm} 0 \}
        .
    \end{align*}

    \noindent Clearly, these three sub-neighborhoods define a partition of the neighborhood $\Nbh(v)$, that is, they are pairwise disjoint and $\Nbh(v) = \Nbh_-(v,t) \sqcup \Nbh_+(v,t) \sqcup \Nbh_0(v,t)$ holds. 
    Together with the definition of the $\relu$ function $\relu(q) = \max\{0,q\}$ for all $q \in \R$, we observe that for each $u \in \Nbh(v)$,
    \begin{align}
    \label{align_InProof_FlowInDependenceOfNeighborhoods}
        \relu(q_{uv}(t))
        =
        \begin{cases}
            q_{uv}(t)
            & \text{ if } u \in \Nbh_-(v,t)
            \\
            0
            & \text{ if } u \in \Nbh_+(v,t)
            \\
            0
            & \text{ if } u \in \Nbh_0(v,t)
        \end{cases}
    \end{align}

    \noindent 
    holds.
    Consequently,
    by
    (1) Equation \eqref{align_CouplingCondition_MixingAtNodes},
    (2) Equation \eqref{align_InProof_FlowInDependenceOfNeighborhoods},
    (3) the definition of the weights $w_{uv}(t)$ (Eq. \eqref{align_CouplingCondition_MixingAtNodes_MessagePassingFormulation}),
    (4) the well-definedness investigations from above and
    (5) basic transformations,
    we obtain
    \begin{align*}
        c_v(t)
        \overset{\text(1)}{=}&~
        \frac{
        \sum_{u \in \Nbh_-(v,t)}
        q_{uv}(t) \cdot c_{uv}(t,l_{uv})
        }{
        \sum_{u \in \Nbh_-(v,t)} 
        q_{uv}(t)
        }
        \\
        \overset{\text(2)}{=}&~
        \frac{
        \sum_{u \in \Nbh(v)}
        \relu(q_{uv}(t)) \cdot c_{uv}(t,l_{uv})
        }{
        \sum_{u \in \Nbh(v)} 
        \relu(q_{uv}(t))
        }
        \\
        \overset{\text(3)}{=}&~
        \sum_{u \in \Nbh(v)}
        w_{uv}(t) \cdot c_{uv}(t,l_{uv})
        \\
        \overset{\text(3)}{=}&~
        \sum_{u \in \Nbh_-(v,t)}
        w_{uv}(t) \cdot c_{uv}(t,l_{uv})
        \\
        \overset{\text(4,5)}{=}&~
        \sum_{ \substack{
        u \in \Nbh_-(v,t) \\
        \zcurl(\delta t_{uv}) = \pm l_{uv}
        }}
        w_{uv}(t) \cdot c_{uv}(t,l_{uv})
        +
        \sum_{ \substack{
        u \in \Nbh_-(v,t) \\
        \zcurl(\delta t_{uv}) = 0
        }}
        w_{uv}(t) \cdot c_{uv}(t,l_{uv}).
    \end{align*}

    \noindent
    This proves the first equality.
    Since the definition of $\Nbh_-(v,t)$ indicates that $q_{uv}(t) > 0$ is positive, 
    by Lemma \ref{lemma_TransportTimes}.1, 
    the transport times $\delta t_{uv} > 0$ in the first sum are pass-through times, while
    the transport times $\delta t_{uv} > 0$ in the second sum are inverse self-loop times.
    Since again by Lemma \ref{lemma_TransportTimes}.1, 
    the other direction also holds, we can conclude the calculations as follows:
    By
    (1) applying Theorem \ref{theorem_PassThroughTime} to $c_{uv}(t,l_{uv})$ in the first sum and Lemma \ref{lemma_InversePassThroughTime} and Theorem \ref{theorem_SelfLoopTime} to $c_{uv}(t,l_{uv})$ the second sum,
    (2) the definition of $\phi$ (Equation \eqref{align_MessageFunction_Advection}),
    (3) basic transformations,
    (4) the definition of the transport time $\delta t_{uv}$ (Theorem \ref{theorem_ExistenceAndUniquenessOfPositiveTransportTimes}) and again, 
    (5) the definition of the weights $w_{uv}(t)$ (Eq. \eqref{align_CouplingCondition_MixingAtNodes_MessagePassingFormulation}),
     we obtain
    \begin{align*}
        c_v(t)
        \overset{\text(1)}{=}&~
        \sum_{ \substack{
        u \in \Nbh_-(v,t) \\
        \zcurl(\delta t_{uv}) = \pm l_{uv}
        }}
        w_{uv}(t) \cdot c_u(t-\delta t_{uv})
        +
        \sum_{ \substack{
        u \in \Nbh_-(v,t) \\
        \zcurl(\delta t_{uv}) = 0
        }}
        w_{uv}(t) \cdot c_v(t-\delta t_{uv})
        \\
        \overset{\text(2)}{=}&~
        \sum_{ \substack{
        u \in \Nbh_-(v,t) \\
        \zcurl(\delta t_{uv}) = \pm l_{uv}
        }}
        w_{uv}(t) \cdot \phi(t,e_{uv},\cm,\num)
        +
        \sum_{ \substack{
        u \in \Nbh_-(v,t) \\
        \zcurl(\delta t_{uv}) = 0
        }}
        w_{uv}(t) \cdot \phi(t,e_{uv},\cm,\num)
        \\
        \overset{\text(3)}{=}&~
        \sum_{ \substack{
        u \in \Nbh_-(v,t) \\
        \zcurl(\delta t_{uv}) \in \{-l_{uv},0,l_{uv}\}
        }}
        w_{uv}(t) \cdot \phi(t,e_{uv},\cm,\num)
        \\
        \overset{\text(4)}{=}&~
        \sum_{u \in \Nbh_-(v,t)}
        w_{uv}(t) \cdot \phi(t,e_{uv},\cm,\num)
        \\
        \overset{\text(5)}{=}&~
        \sum_{u \in \Nbh(v)}
        w_{uv}(t) \cdot \phi(t,e_{uv},\cm,\num).
    \end{align*}

    \noindent
    This proves the second equality.
\end{proof}

\subsection{Proof of Theorem \ref{theorem_TreesWithChildrenAggregationScheme}}

\begin{theoremNoNb}[Theorem \ref{theorem_TreesWithChildrenAggregationScheme}]
    Let $\T = (A,E)$ be a finite tree with root node $a_0 \in A$, \textbf{int}ernal nodes $\An{int}$ and \textbf{leaf} nodes $\An{leaf}$ (that is, $A = \{a_0\} \sqcup \An{int} \sqcup \An{leaf}$ holds).
    If each node $a \in A$ is associated with a magnitude $x_a \in \R$ that satisfies
    \begin{align}
    \label{align_ChildrenAggregationScheme}
            x_a
            =
            \sum_{b \in \Children(a)}
            \omega_{ab} \cdot x_b
            \text{ for all } 
            a \in \{a_0\} \sqcup \An{int},
    \end{align}

    \noindent 
    where the set $\Children(a) \subset A$ consists of the children of $a \in A$ and $w_{ab}$ is a weight of the edge $e_{ab} \in E$ that depends on both the parent $a \in A$ and its child $b \in \Children(a)$, 
    then the magnitude $x_0 := x_{a_0}$ of the root node $a_0 \in A$ is determined by the magnitudes $x_{l_p} := x_{a_{l_p}}$ of the leaf nodes $a_{l_p} \in \An{leaf}$ by
    \begin{align}
    \label{align_ChildrenAggregationScheme_closedForm}
        x_0
        =
        \sum_{p = (a_0,...,a_{l_p}) \in \Pa(a_0,\An{leaf})}
        \omega(a_0,...,a_{l_p})
        \cdot
        x_{l_p},
    \end{align}

    \noindent 
    where 
    $\Pa(a_0,\An{leaf})$ be the set of paths connecting the root node $a_0$ to one of its leaf nodes $\An{leaf}$ 
    and
    \begin{align}
    \label{align_PathWeights}
        \omega(a_0,...,a_{l_p})
        :=
        \prod_{l = 0}^{l_p-1}
        \omega_{a_la_{l+1}}
    \end{align}

    \noindent 
    is the product of weights along a(n) (arbitrary) path $p = (a_0,...,a_{l_p}) \in \Pa$ with length $l_p \in \N$.
\end{theoremNoNb}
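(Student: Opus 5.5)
The plan is to prove Equation \eqref{align_ChildrenAggregationScheme_closedForm} by strong induction on the height of the tree, that is, on the maximal path length $L := \max_{p \in \Pa(a_0,\An{leaf})} l_p$. This is finite because $\T$ is finite. The same statement will be applied not only to $\T$ but also to every subtree $\T_b$ rooted at a child $b \in \Children(a_0)$. For this to work, I first note that $\T_b$ inherits all hypotheses: its internal nodes are internal nodes of $\T$, so Equation \eqref{align_ChildrenAggregationScheme} holds for them. Its leaves are exactly the leaves of $\T$ that descend from $b$. Its height is at most $L-1$.

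\textbf{Base case.} If $L = 0$, the root is itself a leaf. The only path is $p = (a_0)$, and the empty product gives $\omega(a_0) = 1$, so the formula reads $x_0 = x_0$. For $L = 1$ the formula is literally Equation \eqref{align_ChildrenAggregationScheme} applied at $a_0$.

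\textbf{Induction step.} Assume the claim for all such trees of height less than $L$, where $L \geq 1$, so $a_0 \notin \An{leaf}$. Apply Equation \eqref{align_ChildrenAggregationScheme} at the root:
\begin{align*}
x_0 = \sum_{b \in \Children(a_0)} \omega_{a_0 b}\, x_b .
\end{align*}
Applying the induction hypothesis to each subtree $\T_b$ gives
\begin{align*}
x_b = \sum_{p' = (b,\dots,a_{l})} \omega(b,\dots,a_{l})\, x_{a_l},
\end{align*}
where the sum runs over the root-to-leaf paths of $\T_b$. Substituting and distributing $\omega_{a_0 b}$ across the inner sum turns each term into $\omega(a_0,b,\dots,a_l)\, x_{a_l}$, by the definition of the path weight in Equation \eqref{align_PathWeights}.

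The main obstacle is the combinatorial bookkeeping: showing that the map $(b,p') \mapsto (a_0) \circ p'$ is a bijection from $\bigsqcup_{b \in \Children(a_0)} \Pa(b,\An{leaf} \cap \T_b)$ onto $\Pa(a_0,\An{leaf})$. I would handle it with the tree property, namely uniqueness of paths. Every root-to-leaf path of positive length passes through exactly one child $b$ of $a_0$ as its second vertex, and its tail is a root-to-leaf path of $\T_b$. Conversely, prepending $a_0$ to such a tail gives a path in $\T$. The sets for different children are disjoint, because distinct children have disjoint descendant sets.

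With the bijection in hand, the double sum collapses to the single sum over $\Pa(a_0,\An{leaf})$, which yields Equation \eqref{align_ChildrenAggregationScheme_closedForm}.
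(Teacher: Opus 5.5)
Your proof is correct, but it is organized differently from the paper's.

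\textbf{The paper's route.} The paper never passes to subtrees. It proves, by induction on the depth $l$, a ``frontier'' identity: $x_{a_0}$ equals the sum of $\omega(a_0,\dots,a_{l_p})\,x_{a_{l_p}}$ over root-to-leaf paths with $l_p \le l$, plus the sum of $\omega(a_0,\dots,a_l)\,x_{a_l}$ over the not-yet-expanded nodes at depth $l$. The base case $l=1$ is the hypothesis at the root. Each step expands every frontier node via Equation~\eqref{align_ChildrenAggregationScheme}, and taking $l=\max_p l_p$ empties the frontier.

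\textbf{Your route.} You recurse bottom-up through the subtrees $\T_b$. This forces you to state the result for rooted trees whose root may itself be a childless leaf. That is a slight departure from the convention $A=\{a_0\}\sqcup\An{int}\sqcup\An{leaf}$, but it is genuinely needed, since no hypothesis is available at a leaf child $b$. Your $L=0$ base case handles it correctly. You then put all the combinatorics into one explicit bijection $(b,p')\mapsto (a_0)\circ p'$.

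\textbf{What your route buys.} The counting becomes transparent. In the paper, the frontier term is written as a sum over full paths $p\in\Pa$ with $l_p>l$, yet its summand depends only on the length-$l$ prefix. It therefore has to be read as a sum over distinct prefixes, i.e.\ internal nodes at depth $l$; otherwise each such node is counted once per leaf below it. Your bijection makes ``each leaf path exactly once'' explicit.

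\textbf{What the paper's route buys.} It gives the truncated identity at every intermediate depth. This is essentially what the proof of Theorem~\ref{theorem_ConvergenceOfModelIterations} invokes in Case~2.2, where contributions of paths longer than $k$ are cut off at depth $k$. You can recover it from your result by applying it to the tree pruned at depth $l$. The leaves of that tree are the original leaves of depth $\le l$ together with the internal nodes of depth exactly $l$.
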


\begin{proof}
    For the sake of brevity, in this proof, we use the abbreviation $\Pa = \Pa(a_0,\An{leaf})$.
    \\
    \\
    \textit{Step 1:} 
    \textit{For each $l \in \N$, we obtain}
    \begin{align}
    \label{align_ChildrenAggregationScheme_closedForm_fixedPathLength}
        x_{a_0}
        =
        \sum_{ \substack{p = (a_0,...,a_{l_p}) \in \Pa: \\ l_p \leq l} }
        \omega(a_0,...,a_{l_p})
        \cdot
        x_{a_{l_p}}
        +
        \sum_{ \substack{p = (a_0,...,a_{l_p}) \in \Pa: \\ l_p > l} }
        \omega(a_0,...,a_l)
        \cdot
        x_{a_l}.
    \end{align}

    \noindent 
    We prove the claim by induction to $l \in \N$.
    \\
    \textit{Induction base:}
    If $l = 1$, 
    by
    (1) Equation \eqref{align_ChildrenAggregationScheme} applied to $x_{a_0}$,
    (2) a change in notation,
    (3) basic transformations,
    (4) the definition of $\omega$ (cf. \eqref{align_PathWeights}) and
    (5) the choice of $l = 1$,
    we obtain
    \begin{align*}
        x_{a_0}
        \overset{\text{(1)}}{=}&~
        \sum_{b \in \Children(a_0)}
        \omega_{a_0b} \cdot x_b
        \\
        \overset{\text{(2,3)}}{=}&~
        \sum_{a_1 \in \Children(a_0) \cap \An{leaf}}
        \omega_{a_0a_1} \cdot x_{a_1}
        +
        \sum_{a_1 \in \Children(a_0) \cap \An{int}}
        \omega_{a_0a_1} \cdot x_{a_1}
        \\
        \overset{\text{(3)}}{=}&~
        \sum_{ \substack{p = (a_0,a_1) \in \Pa: \\ l_p = 1} }
        \omega_{a_0a_1} \cdot x_{a_1}
        +
        \sum_{ \substack{p = (a_0,a_1,...,a_{l_p}) \in \Pa: \\ l_p > 1} }
        w_{a_0a_1} \cdot x_{a_1}
        \\
        \overset{\text{(4)}}{=}&~
        \sum_{ \substack{p = (a_0,a_1) \in \Pa: \\ l_p = 1} }
        \omega(a_0,a_1) \cdot x_{a_1}
        +
        \sum_{ \substack{p = (a_0,...,a_{l_p}) \in \Pa: \\ l_p > 1} }
        \omega(a_0,a_1) \cdot x_{a_l}
        \\
        \overset{\text{(3,5)}}{=}&~
        \sum_{ \substack{p = (a_0,...,a_{l_p}) \in \Pa: \\ l_p \leq l} }
        \omega(a_0,...,a_{l_p}) \cdot x_{a_{l_p}}
        +
        \sum_{ \substack{p = (a_0,...,a_{l_p}) \in \Pa: \\ l_p > l} }
        \omega(a_0,...,a_l) \cdot x_{a_l}.
    \end{align*}

    \noindent
    \textit{Induction hypothesis:}
    Equation \eqref{align_ChildrenAggregationScheme_closedForm_fixedPathLength} holds for all $\tilde{l} \leq l \in \N$.
    
    \textit{Induction step:}
    By
    (1) the \textit{induction hypothesis},
    we obtain
    \begin{align*}
        x_{a_0}
        \overset{\text{(1)}}{=}&~
        \sum_{ \substack{p = (a_0,...,a_{l_p}) \in \Pa: \\ l_p \leq l} }
        \omega(a_0,...,a_{l_p})
        \cdot
        x_{a_{l_p}}
        +
        \sum_{ \substack{p = (a_0,...,a_{l_p}) \in \Pa: \\ l_p > l} }
        \omega(a_0,...,a_l)
        \cdot
        x_{a_l}.
    \end{align*}

    \noindent
    Similar to the induction base, 
    by
    (1) Equation \eqref{align_ChildrenAggregationScheme} applied to $x_{a_l}$, (note that since $l < l_p$, $a_l \in \An{int}$ must hold),
    (2) a change in notation,
    (3) basic transformations and
    (4) the definition of $\omega$ (cf. \eqref{align_PathWeights}),
    we transform the second summand to
    \begin{align*}
        &~
        \sum_{ \substack{p = (a_0,...,a_{l_p}) \in \Pa: \\ l_p > l} }
        \omega(a_0,...,a_l)
        \cdot
        x_{a_l}
        \\
        \overset{\text{(1)}}{=}&~
        \sum_{ \substack{p = (a_0,...,a_{l_p}) \in \Pa: \\ l_p > l} }
        \omega(a_0,...,a_l)
        \cdot
        \left(
        \sum_{b \in \Children(a_l)}
        \omega_{a_lb} \cdot x_b
        \right)
        \\
        \overset{\text{(2,3)}}{=}&~
        \sum_{ \substack{p = (a_0,...,a_{l_p}) \in \Pa: \\ l_p \geq l+1} }
        \omega(a_0,...,a_l)
        \cdot
        \left(
        \sum_{a_{l+1} \in \Children(a_l) \cap \An{leaf}}
        \omega_{a_la_{l+1}} \cdot x_{a_{l+1}}
        +
        \sum_{a_{l+1} \in \Children(a_l) \cap \An{in}}
        \omega_{a_la_{l+1}} \cdot x_{a_{l+1}}
        \right)
        \\
        \overset{\text{(3)}}{=}&~
        \sum_{ \substack{p = (a_0,...,a_{l_p}) \in \Pa: \\ l_p \geq l+1} } ~~~
        \sum_{a_{l+1} \in \Children(a_l) \cap \An{leaf}}
        \omega(a_0,...,a_l)
        \cdot
        \omega_{a_la_{l+1}} 
        \cdot 
        x_{a_{l+1}}
        \\
        +&~
        \sum_{ \substack{p = (a_0,...,a_{l_p}) \in \Pa: \\ l_p \geq l+1} } ~~~
        \sum_{a_{l+1} \in \Children(a_l) \cap \An{in}}
        \omega(a_0,...,a_l)
        \cdot
        \omega_{a_la_{l+1}}
        \cdot
        x_{a_{l+1}}
        \\
        \overset{\text{(3)}}{=}&~
        \sum_{ \substack{p = (a_0,...,a_{l_p}) \in \Pa: \\ l_p = l+1} } ~~~
        \omega(a_0,...,a_l)
        \cdot
        \omega_{a_la_{l+1}} 
        \cdot 
        x_{a_{l+1}}
        \\
        +&~
        \sum_{ \substack{p = (a_0,...,a_{l_p}) \in \Pa: \\ l_p > l+1} } ~~~
        \omega(a_0,...,a_l)
        \cdot
        \omega_{a_la_{l+1}}
        \cdot
        x_{a_{l+1}}
        \\
        \overset{\text{(4)}}{=}&~
        \sum_{ \substack{p = (a_0,...,a_{l_p}) \in \Pa: \\ l_p = l+1} } ~~~
        \omega(a_0,...,a_l,a_{l+1})
        \cdot 
        x_{a_{l+1}}
        ~+~
        \sum_{ \substack{p = (a_0,...,a_{l_p}) \in \Pa: \\ l_p > l+1} } ~~~
        \omega(a_0,...,a_l,a_{l+1})
        \cdot
        x_{a_{l+1}}.
    \end{align*}

    \noindent
    Bringing it all together, again by
    (1) basic transformations,
    we obtain
    \begin{align*}
        x_{a_0}
        =&~
        \sum_{ \substack{p = (a_0,...,a_{l_p}) \in \Pa: \\ l_p \leq l} }
        \omega(a_0,...,a_{l_p})
        \cdot
        x_{a_{l_p}}
        +
        \sum_{ \substack{p = (a_0,...,a_{l_p}) \in \Pa: \\ l_p > l} }
        \omega(a_0,...,a_l)
        \cdot
        x_{a_l}
        \\
        =&~
        \sum_{ \substack{p = (a_0,...,a_{l_p}) \in \Pa: \\ l_p \leq l} }
        \omega(a_0,...,a_{l_p})
        \cdot
        x_{a_{l_p}}
        \\
        +&~
        \sum_{ \substack{p = (a_0,...,a_{l_p}) \in \Pa: \\ l_p = l+1} } ~~~
        \omega(a_0,...,a_l,a_{l+1})
        \cdot 
        x_{a_{l+1}}
        \\
        +&~
        \sum_{ \substack{p = (a_0,...,a_{l_p}) \in \Pa: \\ l_p > l+1} } ~~~
        \omega(a_0,...,a_l,a_{l+1})
        \cdot
        x_{a_{l+1}}
        \\
        \overset{\text{(1)}}{=}&~
        \sum_{ \substack{p = (a_0,...,a_{l_p}) \in \Pa: \\ l_p < l+1} }
        \omega(a_0,...,a_{l_p})
        \cdot
        x_{a_{l_p}}
        \\
        +&~
        \sum_{ \substack{p = (a_0,...,a_{l_p}) \in \Pa: \\ l_p = l+1} } ~~~
        \omega(a_0,...,a_{l_p})
        \cdot 
        x_{a_{l_p}}
        \\
        +&~
        \sum_{ \substack{p = (a_0,...,a_{l_p}) \in \Pa: \\ l_p > l+1} } ~~~
        \omega(a_0,...,a_{l+1})
        \cdot
        x_{a_{l+1}}
        \\
        \overset{\text{(1)}}{=}&~
        \sum_{ \substack{p = (a_0,...,a_{l_p}) \in \Pa: \\ l_p \leq l+1} } ~~~
        \omega(a_0,...,a_{l_p})
        \cdot 
        x_{a_{l_p}}
        ~~+~~
        \sum_{ \substack{p = (a_0,...,a_{l_p}) \in \Pa: \\ l_p > l+1} } ~~~
        \omega(a_0,...,a_{l+1})
        \cdot
        x_{a_{l+1}},
    \end{align*}

    \noindent
    which concludes the induction step.
    \\
    \\
    \textit{Step 2:}
    \textit{Equation \eqref{align_ChildrenAggregationScheme_closedForm} holds.}

    Since $\T$ is finite, that is, $\T$ an acyclic graph with a finite number of nodes $|A| < \infty$, also each path $p \in \Pa$ has to be finite. 
    Therefore, we can choose $l = \max_{p \in \Pa} l_p \in \N$. 
    Consequently, the second sum in \textit{Step 1} is empty and the condition in the first sum in \textit{Step 1} is naturally satisfied. This yields Equation \eqref{align_ChildrenAggregationScheme_closedForm}.
\end{proof}

\subsection{Proof of Theorem \ref{theorem_AdvectionMessagePassing_closedForm}}

\begin{theoremNoNb}[Theorem \ref{theorem_AdvectionMessagePassing_closedForm}]
    Let $v \in V$ and $t \in \R$ hold.
    Let $\T(v,t)$ be the inflow tree of $v$ at time $t$ as defined in Definition \ref{definition_InflowTreeAndInterpolationInflowTree}.
    If $c_v(t)$ is defined by Equation \eqref{align_MessagePassing_Advection}, then
    \begin{align*}
        c_v(t)
        =~
        \sum_{ p = ((v_0,t_0)=(v,t),(u_1,v_1,t_1),...,(u_{l_p},v_{l_p},t_{l_p})) \in \Pa(v,t) } ~
        \left(
        \prod_{l = 0}^{l_p-1}  
        w_{u_{l+1}v_l}(t_l)
        \right)
        \cdot
        c_{v_{l_p}}(t_{l_p})
    \end{align*}

    \noindent 
    with weights $w_{u_{l+1}v_l}(t_l)$ according to Equation \eqref{align_CouplingCondition_MixingAtNodes_MessagePassingFormulation} holds.
\end{theoremNoNb}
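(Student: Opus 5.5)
The plan is to reduce the statement to Theorem \ref{theorem_TreesWithChildrenAggregationScheme}, applied to the inflow tree $\T(v,t)$ of Definition \ref{definition_InflowTreeAndInterpolationInflowTree}. To each tree node $a_l = (u_l,v_l,t_l)$ I attach the magnitude $x_{a_l} := c_{v_l}(t_l)$, so that the root carries $x_{a_0} = c_v(t)$. To each tree edge from a parent $a_l$ to a child $a_{l+1}$ I attach the weight $\omega_{a_l a_{l+1}} := w_{u_{l+1}v_l}(t_l)$. With these choices, the product $\omega(a_0,\dots,a_{l_p})$ along a root-to-leaf path is exactly $\prod_{l=0}^{l_p-1} w_{u_{l+1}v_l}(t_l)$, and $x_{a_{l_p}} = c_{v_{l_p}}(t_{l_p})$. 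Formula \eqref{align_ChildrenAggregationScheme_closedForm} therefore becomes \eqref{align_MessagePassing_Advection_closedForm} directly.

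Two hypotheses of Theorem \ref{theorem_TreesWithChildrenAggregationScheme} have to be checked. The first is the children-aggregation identity \eqref{align_ChildrenAggregationScheme} at every non-leaf node $a_l$. Here I apply Equation \eqref{align_MessagePassing_Advection} at $(v_l,t_l)$. Since $w_{uv_l}(t_l)=0$ for $u\notin \Nbh_-(v_l,t_l)$ (the $\relu$ in \eqref{align_CouplingCondition_MixingAtNodes_MessagePassingFormulation}), the sum reduces to $\sum_{u\in\Nbh_-(v_l,t_l)} w_{uv_l}(t_l)\,\phi(t_l,e_{uv_l},\cm,\num)$. By \eqref{align_MessageFunction_Advection}, $\phi(t_l,e_{uv_l},\cm,\num)$ equals $c_u(t_l-\scurl(t_l,e_{uv_l},\num))$ when $\zcurl\neq 0$, and $c_{v_l}(t_l-\scurl(\cdot))$ when $\zcurl=0$. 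This is exactly $c_{v_{l+1}}(t_{l+1})$, with $v_{l+1}$ chosen by \eqref{align_(Interpolation)InflowTree_SourceNode}. Because the children of $a_l$ are in bijection with $\Nbh_-(v_l,t_l)$, the identity $x_{a_l}=\sum_{b\in\Children(a_l)}\omega_{a_lb}x_b$ follows. Well-definedness of $\scurl$ comes from Theorem \ref{theorem_ExistenceAndUniquenessOfPositiveTransportTimes}, under the standing assumption that $A_{uv}(t)$ is nonempty whenever the summand contributes. One subtle point is that a tree node whose inflow neighborhood is empty is a leaf by construction, so the identity is only required at internal nodes and the root, as the theorem demands.

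The second hypothesis is finiteness of $\T(v,t)$, and I expect this to be the main obstacle. The branching is finite, since $|\Nbh_-|\le|\Nbh(v)|<\infty$. Depth, however, is only controlled by the stopping criterion, which the definition leaves open. The fallback is that branches end at boundary nodes, which have empty inflow neighborhoods. My approach is to read the statement relative to a stopping criterion that makes $\T(v,t)$ finite, for example one analogous to Remark \ref{remark_FiniteDepthOfInterpolationInflowTrees}: stop once $t_l$ falls below a fixed time at which values are known. Strict decrease $t_0>t_1>\dots$ from positive transport times, together with a positive lower bound on the transport times, then yields finite depth.

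Alternatively, I could avoid full finiteness. Step 1 of the proof of Theorem \ref{theorem_TreesWithChildrenAggregationScheme} holds for every depth $l$, by induction using only the local identity, so it gives a truncated identity at every depth. The claim follows once the remainder term vanishes, which is the case as soon as all paths terminate. I will state this finiteness assumption explicitly and then conclude by invoking Theorem \ref{theorem_TreesWithChildrenAggregationScheme}.
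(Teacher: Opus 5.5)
Your proposal is correct and follows the paper's proof. Like the paper, you attach $x_{a_l}=c_{v_l}(t_l)$ to the nodes of $\T(v,t)$ and $\omega_{a_la_{l+1}}=w_{u_{l+1}v_l}(t_l)$ to its edges, verify the children-aggregation identity via Equation~\eqref{align_MessagePassing_Advection} together with the match between the case split of $\phi$ and the source-node rule \eqref{align_(Interpolation)InflowTree_SourceNode}, and then invoke Theorem~\ref{theorem_TreesWithChildrenAggregationScheme}. Your one addition is to make explicit that $\T(v,t)$ must be finite, which the paper leaves implicit in the open stopping criterion of Definition~\ref{definition_InflowTreeAndInterpolationInflowTree}. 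Two caveats on that part: since branching is finite, your truncation ``alternative'' is equivalent to finiteness by K\"onig's lemma, and a positive lower bound on transport times is an extra assumption, not a consequence of mere continuity of $\nu$.
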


\begin{proof}
    For each layer $l \in \N_0$ of the tree, 
    we now associate each node $a_l = (u_l,v_l,t_l) \in A$ of that layer with the magnitude $x_l := x_{a_l} := c_{v_l}(t_l)$.
    \\
    \\\textit{Step 1:}
    \textit{Together with these magnitudes, 
    $\T(v,t)$ defines a tree in terms of Theorem \ref{theorem_TreesWithChildrenAggregationScheme}.}

    In order to prove the claim, we first of all investigate on the term $\phi(t_l,e_{u_{l+1}v_l},\cm,\num)$ for a given parent node $a_l = (u_l,v_l,t_l)$ in the $l$-th layer and its child node $a_{l+1} = (u_{l+1},v_{l+1},t_{l+1}) \in \Children(a_l)$.\footnote{
        Especially to mention, by definition of the inflow tree $\T(v,t)$ (Definition \ref{definition_InflowTreeAndInterpolationInflowTree}), $u_{l+1} \in \Nbh_-(v_t,t_l)$ holds, that is, the edge $e_{u_{l+1}v_l} \in E$ exists.
    } 
    Indeed, by
    (1) Equation \eqref{align_MessageFunction_Advection} and
    (2) the definition of the inflow tree $\T(v,t)$ (Definition \ref{definition_InflowTreeAndInterpolationInflowTree}),
    for any $l \in \N_0$,
    $\phi(t_l,e_{u_{l+1}v_l},\cm,\num)$ is equal to
    \begin{align}
    \label{align_InProof_InflowTree_Magnitudes}
    \begin{split}
        &~
        \phi(t_l,e_{u_{l+1}v_l},\cm,\num)
        \\
        \overset{\text{(1)}}{=}&~
        \begin{cases}
            c_{u_{l+1}}(t_l - \scurl(t_l,e_{u_{l+1}v_l},\num))
            & \text{ if } 
            \zcurl(\scurl(t_l,e_{u_{l+1}v_l},\num)) \in \{-l_{uv},l_{uv}\},
            \\
            {\color{white}_{_{+1}}}
            c_{v_l}(t_l - \scurl(t_l,e_{u_{l+1}v_l},\num)) 
            & \text{ if } 
            \zcurl(\scurl(t_l,e_{u_{l+1}v_l},\num)) = 0
        \end{cases}
        \\
        \overset{\text{(2)}}{=}&~
        \begin{cases}
            c_{u_{l+1}}(t_{l+1})
            & \text{ if } 
            \zcurl(\scurl(t_l,e_{u_{l+1}v_l},\num)) \in \{-l_{uv},l_{uv}\},
            \\
            {\color{white}_{_{+1}}}
            c_{v_l}(t_{l+1})
            & \text{ if } 
            \zcurl(\scurl(t_l,e_{u_{l+1}v_l},\num)) = 0
        \end{cases}
        \\
        \overset{\text{(2)}}{=}&~
        c_{v_{l+1}}(t_{l+1})
        \\
        \overset{\text{(2)}}{=}&~
        x_{l+1}.
    \end{split}
    \end{align}

    Consequently, again by
    (1) the definition of the inflow tree $\T(v,t)$ (Definition \ref{definition_InflowTreeAndInterpolationInflowTree}),
    (2) Equation \eqref{align_MessagePassing_Advection} and
    (3) Equation \eqref{align_InProof_InflowTree_Magnitudes},
    for any $a_l \in \{a_0\} \sqcup \An{int}$, 
    we obtain
    \begin{align*}
        x_{a_l}
        :=&~
        x_l
        \overset{\text{(1)}}{=}~
        c_{v_l}(t_l)
        \\
        \overset{\text{(2)}}{=}&~
        \sum_{u_{l+1} \in \Nbh_-(v_l,t_l)}
        w_{u_{l+1}v_l}(t_l)
        \cdot
        \phi(t_l,e_{u_{l+1}v_l},\cm,\num)
        \\
        \overset{\text{(3)}}{=}&~
        \sum_{u_{l+1} \in \Nbh_-(v_l,t_l)}
        w_{u_{l+1}v_l}(t_l)
        \cdot
        x_{{l+1}}
        \\
        \overset{\text{(1)}}{=}&~
        \sum_{a_{l+1} \in \Children(a_l)}
        \omega_{a_la_{l+1}} 
        \cdot
        x_{a_{l+1}}
    \end{align*}

    \noindent 
    with weights 
    \begin{align}
    \label{align_InProof_InflowTree_Weights}
        \omega_{a_la_{l+1}} 
        :=
        w_{u_{l+1}v_l}(t_l).
    \end{align}

    \noindent
    Consequently, $\T(v,t)$ is a tree in terms of Theorem \ref{theorem_TreesWithChildrenAggregationScheme}.
    \\
    \\\textit{Step 2:}
    \textit{Equation \eqref{align_MessagePassing_Advection_closedForm} holds.}
    
    Again by
    (1) the definition of the inflow tree $\T(v,t)$ (Definition \ref{definition_InflowTreeAndInterpolationInflowTree}),
    (2) Theorem \ref{theorem_TreesWithChildrenAggregationScheme} (which we can apply according to \textit{Step 1}) and
    (3) Equation \eqref{align_InProof_InflowTree_Weights},
    we obtain
    \begin{align*}
        c_v(t)
        \overset{\text{(1)}}{=}&~
        c_{v_0}(t_0)
        \overset{\text{(1)}}{=}~
        x_0
        \\
        \overset{\text{(2)}}{=}&~
        \sum_{p = (a_0,...,a_{l_p}) \in \Pa(a_0,\An{leaf})}
        \omega(a_0,...,a_{l_p})
        \cdot
        x_{l_p}
        \\
        \overset{\text{(1,3)}}{=}&~
        \sum_{ p = ((v_0,t_0)=(v,t),(u_1,v_1,t_1),...,(u_{l_p},v_{l_p},t_{l_p})) \in \Pa(v,t) } ~
        \left(
        \prod_{l=0}^{l_p-1}
        w_{u_{l+1}v_l}(t_l)
        \right)
        \cdot
        c_{v_{l_p}}(t_{l_p}),
    \end{align*}

    \noindent
    which was to show. 
\end{proof}

\subsection{Proof of Theorem \ref{theorem_AdvectionMessagePassingIterations_closedForm}}

\begin{theoremNoNb}[Theorem \ref{theorem_AdvectionMessagePassingIterations_closedForm}]
    Let $v \in V$ and $t \in T = \Tn{hi} \cup \Tn{fu}$ hold.
    Let $\That(v,t)$ be the interpolation inflow tree of $v$ at time $t$ as defined in Definition \ref{definition_InflowTreeAndInterpolationInflowTree}.
    Let $k \geq \max_{p \in \Pahat(v,t)} l_p$ hold.
    If $\chat_v^{(k)}(t)$ is defined by Equation \eqref{align_MessagePassing_AdvectionIterations}, then
    \small
    \begin{align*}
        \chat_v^{(k)}(t)
        =&~
        \sum_{ p = ((v_0,t_0)=(v,t),(u_1,v_1,t_1),...,(u_{l_p},v_{l_p},t_{l_p})) \in \Pahat(v,t) } ~
        \left(
        \prod_{l = 0}^{l_p-1}  
        w_{u_{l+1}v_l}(t_l)
        \cdot
        a(t_{l+1})
        \right)
        \cdot
        \chat_{v_{l_p}}^{(k-l_p)}(t_{l_p})
    \end{align*}
    \normalsize

    \noindent 
    with weights $w_{u_{l+1}v_l}(t_l)$ and $a(t_{l+1})$ according to Equation \eqref{align_CouplingCondition_MixingAtNodes_MessagePassingFormulation} and \eqref{align_LinearInterpolation}, respectively, holds.
\end{theoremNoNb}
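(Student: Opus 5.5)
We will mirror the proof of Theorem \ref{theorem_AdvectionMessagePassing_closedForm}, but with an iteration index attached to each tree node. For each node $a_l=(u_l,v_l,t_l)$ in layer $l$ of $\That(v,t)$ we set the magnitude $x_{a_l}:=\chat_{v_l}^{(k-l)}(t_l)$. The hypothesis $k\geq \max_{p\in\Pahat(v,t)} l_p$ guarantees $k-l\geq 0$ for every node, so all these quantities are defined. The goal is to check that $\That(v,t)$, with these magnitudes and suitable edge weights, is a tree in the sense of Theorem \ref{theorem_TreesWithChildrenAggregationScheme}. The claimed formula is then exactly Equation \eqref{align_ChildrenAggregationScheme_closedForm}, with path weights $\prod_l \omega_{a_la_{l+1}}$.

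\textit{Step 1 (local recursion).} Take an internal or root node $a_l$. By the stopping criterion of Definition \ref{definition_InflowTreeAndInterpolationInflowTree}, $v_l\in V\setminus\Vb$ and $t_l\in\Tn{fu}$. Also $k-l\geq 1$, because $a_l$ is not a leaf and some path through it has length $>l$. So the second case of Equation \eqref{align_MessagePassing_AdvectionIterations} applies:
\begin{align*}
\chat_{v_l}^{(k-l)}(t_l)=\sum_{u\in\Nbh_-(v_l,t_l)} w_{uv_l}(t_l)\,\varphi(t_l,e_{uv_l},\cmhat^{(k-l-1)},\num).
\end{align*}
We then expand $\varphi$ using Equation \eqref{align_MessageFunction_Advection_LinearInterpolation}. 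This gives $a(t_j)\chat_{w}^{(k-l-1)}(t_j)+a(t_{j+1})\chat_{w}^{(k-l-1)}(t_{j+1})$, where $w=u$ when $\zcurl\neq0$ and $w=v_l$ when $\zcurl=0$. These are exactly the source nodes $v_{l+1}$ from Equation \eqref{align_(Interpolation)InflowTree_SourceNode}. The two interpolation times $t_{l+1}\in\{t_j,t_{j+1}\}$ are exactly the two children of $a_l$ attached to the neighbor $u_{l+1}=u$. Summing over children therefore gives
\begin{align*}
x_{a_l}=\sum_{a_{l+1}\in\Children(a_l)} w_{u_{l+1}v_l}(t_l)\,a(t_{l+1})\,x_{a_{l+1}},
\end{align*}
with $x_{a_{l+1}}=\chat_{v_{l+1}}^{(k-l-1)}(t_{l+1})$. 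So the edge weights are $\omega_{a_la_{l+1}}:=w_{u_{l+1}v_l}(t_l)\,a(t_{l+1})$.

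\textit{Step 2 (apply the tree theorem).} The tree is finite by Remark \ref{remark_FiniteDepthOfInterpolationInflowTrees}, since transport times are positive and every path eventually reaches $\Tn{hi}$ or $\Vb$. Theorem \ref{theorem_TreesWithChildrenAggregationScheme} then yields the claimed formula, with leaf magnitudes $\chat_{v_{l_p}}^{(k-l_p)}(t_{l_p})$. The degenerate case $v\in\Vb$ or $t\in\Tn{hi}$ is trivial: the tree consists only of the root, the maximum path length is $0$, and the empty product equals $1$.

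The main obstacle is the bookkeeping in Step 1. The children of $a_l$ must be in bijection with the pairs (inflow neighbor, interpolation endpoint), and the indexing $k-l$ must be consistent across branches of different depth. There are two edge cases to handle explicitly. First, if $a(t_{j+1})=0$ (when $t-\delta t_{uv}=t_j$), we still keep both children, which is harmless because the zero weight makes the extra child contribute nothing. Second, both children of a node in layer $l$ must receive the same index $k-l-1$ even though their subtrees may have different depths; this is consistent because the indexing depends only on the layer, not on the depth of the subtree below the node.
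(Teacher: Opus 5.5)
Your proposal is correct and follows the paper's proof essentially step for step. You use the same magnitudes $x_{a_l}=\chat_{v_l}^{(k-l)}(t_l)$, expand $\varphi$ via Equation \eqref{align_MessageFunction_Advection_LinearInterpolation} so that the two interpolation endpoints become the children of $a_l$, take edge weights $\omega_{a_la_{l+1}}=w_{u_{l+1}v_l}(t_l)\,a(t_{l+1})$, and apply Theorem \ref{theorem_TreesWithChildrenAggregationScheme}. Your explicit checks that internal nodes fall into the second case of Equation \eqref{align_MessagePassing_AdvectionIterations} with $k-l\geq 1$, and your separate handling of the single-node tree, are details the paper leaves implicit.

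The one inaccuracy is your reason for finiteness. In $\That(v,t)$ the right endpoint satisfies $t_{j+1}=t_l$ whenever the transport time is at most $dt$, so times along paths are only non-increasing, and positivity of transport times does not bound the depth. Finiteness should instead be read off from the hypothesis $k\geq\max_{p\in\Pahat(v,t)}l_p$, which can only hold for a finite $k$ when $\That(v,t)$ has bounded depth.
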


\begin{proof}
    Similar to the proof of Theorem \ref{theorem_AdvectionMessagePassing_closedForm},
    for each layer $l \in \N_0$ of the tree, 
    we now associate each node $a_l = (u_l,v_l,t_l) \in A$ of that layer with the magnitude $x_l := x_{a_l} := \chat_{v_l}^{(k-l)}(t_l)$.
    \\
    \\\textit{Step 1:}
    \textit{Together with these magnitudes, 
    $\That(v,t)$ defines a tree in terms of Theorem \ref{theorem_TreesWithChildrenAggregationScheme}.}

    In order to prove the claim, we first of all investigate on the term $\varphi(t_l,e_{u_{l+1}v_l},\cmhat^{(k-(l+1))},\num)$ for a given parent node $a_l = (u_l,v_l,t_l)$ in the $l$-th layer and its child node $a_{l+1} = (u_{l+1},v_{l+1},t_{l+1}) \in \Children(a_l)$.
    Indeed, by
    (1) Equation \eqref{align_MessageFunction_Advection_LinearInterpolation},
    for any $l \in \N_0$,
    $\varphi(t_l,e_{u_{l+1}v_l},\cmhat^{(k-(l+1))},\num)$ is equal to
    \begin{align*}
    \begin{split}
        &~
        \varphi(t_l,e_{u_{l+1}v_l},\cmhat^{(k-(l+1))},\num)
        \\
        \overset{\text{(1)}}{=}&~
        \begin{cases}
            a(t_j) \cdot \chat_{u_{l+1}}^{(k-(l+1))}(t_j)
            +
            a(t_{j+1}) \cdot \chat_{u_{l+1}}^{(k-(l+1))}(t_{j+1})
            & \text{ if } 
            \zcurl(\scurl(t_l,e_{u_{l+1}v_l},\num)) \in \{-l_{uv},l_{uv}\},
            \\
            a(t_j) \cdot \chat_{v_l}^{(k-(l+1))}(t_j)
            +
            a(t_{j+1}) \cdot \chat_{v_l}^{(k-(l+1))}(t_{j+1})
            & \text{ if } 
            \zcurl(\scurl(t_l,e_{u_{l+1}v_l},\num)) = 0
        \end{cases}
    \end{split}
    \end{align*}

    \noindent 
    for the $j \in I$ for which $t_l - \scurl(t_l,e_{u_{l+1}v_l},\num) \in [t_j,t_{j+1})$ holds.
    Following 
    (1) the definition of the interpolation inflow tree $\That(v,t)$ (Definition \ref{definition_InflowTreeAndInterpolationInflowTree}),
    we obtain
    \small
    \begin{align}
    \label{align_InProof_InterpolationInflowTree_Magnitudes}
    \begin{split}
        &~
        \varphi(t_l,e_{u_{l+1}v_l},\cmhat^{(k-(l+1))},\num)
        \\
        \overset{\text{(1)}}{=}&~
        \begin{cases}
            \sum_{ \substack{ t_{l+1} \in \{t_j,t_{j+1}\}: \\ t_l - \scurl(t_l,e_{u_{l+1}v_l},\num) \in [t_j,t_{j+1}) } }
            a(t_{l+1}) 
            \cdot 
            \chat_{u_{l+1}}^{(k-(l+1))}(t_{l+1})
            & \text{ if } 
            \zcurl(\scurl(t_l,e_{u_{l+1}v_l},\num)) \in \{-l_{uv},l_{uv}\},
            \\
            \sum_{ \substack{ t_{l+1} \in \{t_j,t_{j+1}\}: \\ t_l - \scurl(t_l,e_{u_{l+1}v_l},\num) \in [t_j,t_{j+1}) } }
            a(t_{l+1}) 
            \cdot 
            \chat_{v_l}^{(k-(l+1))}(t_{l+1})
            & \text{ if } 
            \zcurl(\scurl(t_l,ee_{u_{l+1}v_l},\num)) = 0
        \end{cases}
        \\
        \overset{\text{(1)}}{=}&~
        \sum_{ \substack{ t_{l+1} \in \{t_j,t_{j+1}\}: \\ t_l - \scurl(t_l,e_{u_{l+1}v_l},\num) \in [t_j,t_{j+1}) } }
        a(t_{l+1}) 
        \cdot 
        \chat_{v_{l+1}}^{(k-(l+1))}(t_{l+1})
        \\
        \overset{\text{(1)}}{=}&~
        \sum_{ \substack{ t_{l+1} \in \{t_j,t_{j+1}\}: \\ t_l - \scurl(t_l,e_{u_{l+1}v_l},\num) \in [t_j,t_{j+1}) } }
        a(t_{l+1}) 
        \cdot 
        x_{l+1}.
    \end{split}
    \end{align}
    \normalsize

    Consequently, again by
    (1) the definition of the interpolation inflow tree $\T(v,t)$ (Definition \ref{definition_InflowTreeAndInterpolationInflowTree}),
    (2) Equation \eqref{align_MessagePassing_AdvectionIterations},
    (3) Equation \eqref{align_InProof_InterpolationInflowTree_Magnitudes} and
    (4) basic transformations,
    for any $a_l \in \{a_0\} \sqcup \An{int}$, 
    we obtain
    \begin{align*}
        x_{a_l}
        :=&~
        x_l
        \overset{\text{(1)}}{=}~
        \chat_{v_l}^{(k-l)}(t_l)
        \\
        \overset{\text{(2)}}{=}&~
        \sum_{u_{l+1} \in \Nbh_-(v_l,t_l)}
        w_{u_{l+1}v_l}(t_l)
        \cdot
        \varphi(t_l,e_{v_{l+1}v_l},\cmhat^{(k-(l+1))},\num)
        \\
        \overset{\text{(3)}}{=}&~
        \sum_{u_{l+1} \in \Nbh_-(v_l,t_l)}
        w_{u_{l+1}v_l}(t_l)
        \cdot
        \Bigg(
        \sum_{ \substack{ t_{l+1} \in \{t_j,t_{j+1}\}: \\ t_l - \scurl(t_l,e_{u_{l+1}v_l},\num) \in [t_j,t_{j+1}) } }
        a(t_{l+1}) 
        \cdot 
        x_{l+1}
        \Bigg)
        \\
        \overset{\text{(4)}}{=}&~
        \sum_{u_{l+1} \in \Nbh_-(v_l,t_l)}
        \sum_{ \substack{ t_{l+1} \in \{t_j,t_{j+1}\}: \\ t_l - \scurl(t_l,e_{(l+1)l},\num) \in [t_j,t_{j+1}) } }
        w_{u_{l+1}v_l}(t_l)
        \cdot
        a(t_{l+1}) 
        \cdot 
        x_{l+1}
        \\
        \overset{\text{(1)}}{=}&~
        \sum_{a_{l+1} \in \Children(a_l)}
        \omega_{a_la_{l+1}}
        \cdot
        x_{a_{l+1}}
    \end{align*}

    \noindent 
    with weights 
    \begin{align}
    \label{align_InProof_InterpolationInflowTree_Weights}
        \omega_{a_la_{l+1}} 
        :=
        w_{u_{l+1}v_l}(t_l)
        \cdot
        a(t_{l+1}).
    \end{align}

    \noindent
    Consequently, $\That(v,t)$ is a tree in terms of Theorem \ref{theorem_TreesWithChildrenAggregationScheme}.
    \\
    \\\textit{Step 2:}
    \textit{Equation \eqref{align_MessagePassing_AdvectionIterations_closedForm} holds.}

    Again by
    (1) the definition of the interpolation inflow tree $\That(v,t)$ (Definition \ref{definition_InflowTreeAndInterpolationInflowTree}),
    (2) Theorem \ref{theorem_TreesWithChildrenAggregationScheme} (which we can apply according to \textit{Step 1}) and
    (3) Equation \eqref{align_InProof_InterpolationInflowTree_Weights},
    we obtain
    
    \begin{align*}
        \chat_v^{(k)}(t)
        \overset{\text{(1)}}{=}&~
        \chat_{v_0}^{(k-0)}(t_0)
        \overset{\text{(1)}}{=}~
        x_{a_0}
        \\
        \overset{\text{(2)}}{=}&~
        \sum_{p = (a_0,...,a_{l_p}) \in \Pa(a_0,\An{leaf})}
        \omega(a_0,...,a_{l_p})
        \cdot
        x_{l_p}
        \\
        \overset{\text{(1,3)}}{=}&~
        \sum_{p = ((v_0,t_0)=(v,t),(u_1,v_1,t_1),...,(u_{l_p},v_{l_p},t_{l_p})) \in \Pahat(v,t)}
        \left(
        \prod_{l=0}^{l_p-1}
        w_{u_{l+1}v_l}(t_l)
        \cdot 
        a(t_{l+1})
        \right)
        \cdot
        \chat_{v_{l_p}}^{(k-l_p)}(t_{l_p}),
    \end{align*}

    \noindent
    which was to show. 
\end{proof}

\subsection{Proof of Theorem \ref{theorem_ConvergenceOfModelIterations}}

\begin{theoremNoNb}[Theorem \ref{theorem_ConvergenceOfModelIterations}]
    There exists a $\kappa \in \N$, such that $\chat_v^{(k)}(t)$ as defined in Equation \eqref{align_MessagePassing_AdvectionIterations} is equal to zero for all $k > \kappa$, $v \in V$ and $t \in T = \Tn{hi} \cup \Tn{fu}$.
\end{theoremNoNb}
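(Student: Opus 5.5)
We reduce everything to the closed form of Theorem~\ref{theorem_AdvectionMessagePassingIterations_closedForm} and then exploit the indexing of the iterates together with the masking in Equation~\eqref{align_MessagePassing_AdvectionIterations}.

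\textbf{Step 1: a uniform depth bound.} For every pair $(v,t)$ with $v \in V$ and $t \in T$ we consider the interpolation inflow tree $\That(v,t)$ of Definition~\ref{definition_InflowTreeAndInterpolationInflowTree}. By Remark~\ref{remark_FiniteDepthOfInterpolationInflowTrees} this tree has finite depth. We set $\kappa(v,t) := \max_{p \in \Pahat(v,t)} l_p$.

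In the trivial case $v \in \Vb$ or $t \in \Tn{hi}$, the tree consists only of its root, so $\kappa(v,t)=0$. In this case Equation~\eqref{align_MessagePassing_AdvectionIterations} directly gives $\chat_v^{(k)}(t)=0$ for all $k \geq 1$.

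Since $V$ and $T$ are finite, we can define
\[
\kappa := \max_{v \in V,\, t \in T} \kappa(v,t) \in \N_0 .
\]
If this maximum is $0$, we may replace it by $1$ to land in $\N$.

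\textbf{Step 2: only iteration $0$ is nonzero at the leaves.} Fix $v \in V \setminus \Vb$, $t \in \Tn{fu}$ and $k > \kappa \geq \kappa(v,t)$. Theorem~\ref{theorem_AdvectionMessagePassingIterations_closedForm} applies and yields
\[
\chat_v^{(k)}(t) = \sum_{p \in \Pahat(v,t)} \Big(\prod_{l=0}^{l_p-1} w_{u_{l+1}v_l}(t_l)\, a(t_{l+1})\Big)\, \chat_{v_{l_p}}^{(k-l_p)}(t_{l_p}).
\]
By the stopping criterion of $\That(v,t)$, every leaf $(u_{l_p},v_{l_p},t_{l_p})$ satisfies $v_{l_p} \in \Vb$ or $t_{l_p} \in \Tn{hi}$. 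Moreover $k - l_p \geq k - \kappa(v,t) \geq 1$. The first case of the $k\geq1$ line of Equation~\eqref{align_MessagePassing_AdvectionIterations} therefore gives $\chat_{v_{l_p}}^{(k-l_p)}(t_{l_p}) = 0$. Every summand vanishes, so $\chat_v^{(k)}(t)=0$.

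Combining this with the trivial case of Step~1 proves the claim for all $v \in V$ and $t \in T$.

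\textbf{Main obstacle.} The delicate point is whether the tree is genuinely finite when built from discrete interpolation times. The children carry $t_{l+1} \in \{t_j, t_{j+1}\}$, and $t_{j+1}$ may equal $t_l$ itself when the transport time is shorter than $dt$. In that case strict time decrease, as argued in Remark~\ref{remark_FiniteDepthOfInterpolationInflowTrees}, could fail.

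To handle this, I would first try to argue exactly as in the remark: the times $t_l$ lie in the finite set $T$, and each step satisfies $t_{l+1} \le t_l$. If only non-strict decrease is available, I would either invoke the remark as stated (which the paper permits) or note that the positive lower bound $t_{\min}$ on transport times forces strict decrease when $dt$ is small enough. Failing that, I would add the implicit assumption that the interpolation node $t_{j+1} < t_l$.

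Once finiteness of every $\That(v,t)$ is granted, the rest is purely combinatorial.
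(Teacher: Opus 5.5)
Your Steps 1--2 follow the paper's argument: its Case~1 and Case~2.1, with $\kappa$ the maximal path length over the finitely many pairs $(v,t)$. The paper's extra Case~2.2 for $k<\kappa(v,t)$ is not needed for the statement, so dropping it is fine.

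The gap is the one you flag yourself, and it is shared with the paper: finiteness of $\That(v,t)$, taken from Remark~\ref{remark_FiniteDepthOfInterpolationInflowTrees}. Your diagnosis is right. If the transport time at a grid time $t_l$ is below $dt$, the right interpolation node is $t_{j+1}=t_l$, and the remark's chain $t_0>t_1>\cdots$ fails. None of your remedies (a)--(c) repairs this:
\begin{itemize}
\item (a) only yields $t_{l+1}\le t_l$, which does not bound the depth.
\item (b) invokes the very claim you have just refuted.
\item (c) fails because transport times have no positive lower bound. If $\nu_{uv}$ changes sign from negative to positive at $t^*$, say linearly, the inverse self-loop time at a grid time $t>t^*$ is about $2(t-t^*)$. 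It is therefore below $dt$ whenever the sign change falls within $dt/2$ before a grid time, and no choice of $dt$ rules that out. The paper's $t_{\min}=\min l_{uv}/|\nu_{uv}(t)|$ does not bound self-loop times at all, and $T$ is given data anyway.
\end{itemize}

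In fact the statement is false without an extra hypothesis, so your option (d) is necessary, not a fallback. Take $v\notin\Vb$, $t\in\Tn{fu}$, and $u\in\Nbh_-(v,t)$ whose transport time is an inverse self-loop time $\delta t_{uv}<dt$ with $t-\delta t_{uv}$ off the grid. Then $t_{j+1}=t$ and $a(t_{j+1})=(dt-\delta t_{uv})/dt>0$. Since all iterates are nonnegative, Equation~\eqref{align_MessagePassing_AdvectionIterations} gives
\[
\chat_v^{(k)}(t)\;\ge\; w_{uv}(t)\,a(t_{j+1})\,\chat_v^{(k-1)}(t)\qquad\text{for all }k\ge 2 .
\]
Suppose moreover $\chat_v^{(1)}(t)>0$, for example $t=t_{i+1}$ with $c_v(t_i)>0$, or a boundary inflow neighbour with positive injection. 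Then $\chat_v^{(k)}(t)>0$ for every $k$, so no $\kappa$ exists. Correspondingly, $(v,t)$ has the child $(u,v,t)$ with the same node and time, and $\That(v,t)$ is infinite.

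A hypothesis under which your Steps 1--2 go through verbatim is the following. The children with $t_{l+1}=t_l$ must induce an acyclic relation $(v_l,t_l)\to(v_{l+1},t_l)$ on $V\times T$. In particular this forbids inverse self-loop times below $dt$ at grid times, and directed flow cycles of sub-$dt$ pass-throughs at a fixed grid time. Under it, along any path the time is non-increasing and stays constant for at most $|V|-1$ consecutive steps. Hence every path has length below $|V|\,|T|$.

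The simplest sufficient condition is that all transport times at grid times exceed $dt$, which gives strict decrease. The paper's own experiments ($dt=60$\,s, pipe lengths down to $0.1$\,m) violate that condition, so the acyclicity version is the one worth stating.
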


\begin{proof}
    Let $v \in V = \Vb \sqcup (V \setminus \Vb)$ and $t \in \Tn{hi} \sqcup \Tn{fu}$ hold.
    We first identify a node- and time-dependent $\kappa(v,t)$.
    \\
    \\\textit{Case 1:}
    If $v \in \Vb$ or $t \in \Tn{hi}$ holds, 
    by Equation \eqref{align_MessagePassing_AdvectionIterations},
    $\chat_v^{(k)}(t)$ is 
    equal to zero if $k > 0$ and
    equal to $c_v^{(k)}(t)$ if $k = 0$.
    Therefore, in this case, $\kappa(v,t) = 0$.
    \\
    \\\textit{Case 2.1:}
    If $v \in V \setminus \Vb$, $t \in \Tn{fu}$ and $k \geq \max_{p \in \Pahat(v,t)} l_p$ holds, 
    by Theorem \ref{theorem_AdvectionMessagePassingIterations_closedForm}, 
    we can express $\chat_v^{(k)}(t)$ as a function of values
    $\chat_{v_{l_p}}^{(k-l_p)}(t_{l_p})$ for paths $p = ((v_0,t_0)=(v,t),(u_1,v_1,t_1),...,(u_{l_p},v_{l_p},t_{l_p})) \in \Pahat(v,t)$.
    By definition of $\Pahat(v,t)$ (Definition \ref{definition_InflowTreeAndInterpolationInflowTree}), 
    $v_{l_p} \in \Vb$ or $t_{l_p} \in \Tn{hi}$ holds
    (and even more, $v_l \in V \setminus \Vb$ and $t_l \in \Tn{fu}$ holds for all $l = 0,...,l_p-1$).
    Therefore, again
    by Equation \eqref{align_MessagePassing_AdvectionIterations},
    for each path $p \in \Pahat(v,t)$ with length $l_p \leq k$, 
    $\chat_{v_{l_p}}^{(k-l_p)}(t_{l_p})$ is 
    equal to zero if $k > l_p$ and 
    equal to $c_{v_{l_p}}(t_{l_p})$ if $k = l_p$.
    \\
    \\\textit{Case 2.2:}
    If $v \in V \setminus \Vb$, $t \in \Tn{fu}$ and $k < \max_{p \in \Pahat(v,t)} l_p$ holds, 
    we can express $\chat_v^{(k)}(t)$ as a function of values
    $\chat_{v_{l_p}}^{(k-l_p)}(t_{l_p})$ for paths $p = ((v_0,t_0)=(v,t),(u_1,v_1,t_1),...,(u_{l_p},v_{l_p},t_{l_p})) \in \Pahat(v,t)$ 
    which satisfy $l_p \leq k$, and values
    $\chat_{v_l}^{(k-l)}(t_l) = \chat_{v_{l}}^{(0)}(t_l)$ for $l = k$ for paths $p = ((v_0,t_0)=(v,t),(u_1,v_1,t_1),...,(u_l,v_l,t_l),...,(u_{l_p},v_{l_p},t_{l_p})) \in \Pahat(v,t)$
    which satisfy $k < l_p$.
    For the former paths, 
    the same findings as in \textit{Case 2.1} hold.
    For the latter paths, 
    by definition of $\Pahat(v,t)$ (Definition \ref{definition_InflowTreeAndInterpolationInflowTree}), 
    $v_l \in V \setminus \Vb$ and $t_l \in \Tn{fu}$ holds.
    Therefore, again
    by Equation \eqref{align_MessagePassing_AdvectionIterations},
    for each path $p \in \Pahat(v,t)$ with length $l_p > k$,
    $\chat_{v_l}^{(k-l)}(t_l) = \chat_{v_{l}}^{(0)}(t_l)$ is 
    equal to zero.
    \\
    \\\textit{Case 2:}
    In summary, 
    if $v \in V \setminus \Vb$ and $t \in \Tn{fu}$ holds, 
    we can express $\chat_v^{(k)}(t)$ as a function of values 
    $\chat_{v_{l_p}}^{(k-l_p)}(t_{l_p})$ which are only non-zero for $k \in \{l_p ~|~ p \in \Pahat(v,t)\}$.
    Therefore, in this case, $\kappa(v,t) = \max_{p \in \Pahat(v,t)} l_p$.
    \\
    \\
    Finally, we choose $\kappa = \max_{v \in V} \max_{t \in \Tn{hi} \sqcup \Tn{fu}} \kappa(v,t) = \max_{v \in V} \max_{t \in \Tn{hi} \sqcup \Tn{fu}} \max_{p \in \Pahat(v,t)} l_p$. 
    Then the claim follows as long as $l_p \in \Pahat(v,t)$ is finite for all $v \in V$ and $t \in \Tn{hi} \sqcup \Tn{fu}$. 
    We have discussed that this is the case in Remark \ref{remark_FiniteDepthOfInterpolationInflowTrees}.
\end{proof}

\subsection{Proof of Theorem \ref{theorem_InterpolationError} and Theorem \ref{theorem_InterpolationError_fullVersion}}

\begin{theoremNoNb}[Theorem \ref{theorem_InterpolationError} and Theorem \ref{theorem_InterpolationError_fullVersion}]
    Let $v \in V$ and $t \in T = \Tn{hi} \cup \Tn{fu}$ hold.
    Let $\T(v,t)$ and $\That(v,t)$ be the inflow tree and the interpolation inflow tree of $v$ at time $t$ as defined in Definition \ref{definition_InflowTreeAndInterpolationInflowTree}, respectively.
    Let Assumption \ref{assumption_IdentifiablePaths} and \ref{assumption_EqualtTimeForJumpingIntoHistory} hold.
    Then the prediction error in-between $c_v(t)$ and $\chat_v(t)$ is given by
    \begin{align*}
        &~~~
        |c_v(t) - \chat_v(t)| 
        \\
        \leq&~
        \sum_{ \ptilde \in \Pahat(v,t) } ~
        a(\that_1,...,\that_{l_p})
        \cdot
        \left(
        \underbrace{
        |c_{v_{l_p}}(t_{l_p}) - c_{v_{l_p}}(\that_{l_p})|
        }_{
        \text{Temporal interpolation error}
        }
        +
        |c_{v_{l_p}}(t_{l_p})|
        \cdot
        \underbrace{
        |w(t_0,...,t_{l_p-1}) - w(\that_0,...,\that_{l_p-1})|
        }_{
        \text{Accumulated weighting error}
        }
        \right)
    \end{align*}

    \noindent
    with joint paths $\ptilde = ((v_0,t_0)=(v,t),(u_1,v_1,t_1,\that_1),...,(u_{l_p},v_{l_p},t_{l_p},\that_{l_p})) \in \Pahat(v,t)$ according to Definition \ref{definition_JointPaths},
    \begin{align*}
        a(\that_1,...,\that_{l_p})
        :=
        \prod_{l = 0}^{l_p-1}
        a(\that_{l+1})
        \quad \text{ and } \quad
        w(\that_0,...,\that_{l_p-1})
        :=
        \prod_{l = 0}^{l_p-1}  
        w_{u_{l+1}v_l}(\that_l)
    \end{align*}

    \noindent 
    with weights $w_{u_{l+1}v_l}(t_l)$ and $a(t_{l+1})$ according to Equation \eqref{align_CouplingCondition_MixingAtNodes_MessagePassingFormulation} and \eqref{align_LinearInterpolation}, respectively.
    If the temporal interpolation error is zero, we obtain $|c_v(t) - \chat_v(t)| = 0$.
\end{theoremNoNb}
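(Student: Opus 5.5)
The plan is to write both $c_v(t)$ and $\chat_v(t)$ as sums over paths of an inflow tree, match the summands through Assumption \ref{assumption_IdentifiablePaths} and Assumption \ref{assumption_EqualtTimeForJumpingIntoHistory}, and then bound the difference path by path.

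\textbf{Closed form of the ground truth.} We take the inflow tree $\T(v,t)$ with the same stopping rule as $\That(v,t)$: stop as soon as $v_l \in \Vb$ or $t_l$ lies in the history range. Theorem \ref{theorem_AdvectionMessagePassing_closedForm} then gives
\[
c_v(t)=\sum_{p\in\Pa(v,t)} w(t_0,\dots,t_{l_p-1})\, c_{v_{l_p}}(t_{l_p}).
\]

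\textbf{Closed form of the prediction.} Summing Theorem \ref{theorem_AdvectionMessagePassingIterations_closedForm} over $k$ and using the case analysis in the proof of Theorem \ref{theorem_ConvergenceOfModelIterations}, only the term $k=l_p$ survives for each path. That term contributes $\chat^{(0)}_{v_{l_p}}(\that_{l_p}) = c_{v_{l_p}}(\that_{l_p})$, since leaves lie in $\Vb$ or in $\Tn{hi}$. Hence
\[
\chat_v(t)=\sum_{\phat\in\Pahat(v,t)} a(\that_1,\dots,\that_{l_p})\, w(\that_0,\dots,\that_{l_p-1})\, c_{v_{l_p}}(\that_{l_p}).
\]
Some care is needed at a leaf where only one of the two interpolation nodes is in history. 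Assumption \ref{assumption_EqualtTimeForJumpingIntoHistory} excludes this, so both siblings are leaves at the same depth. That depth is well defined because both the sibling times and the source node $v_{l+1}$ coincide.

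\textbf{Matching the two sums.} By Assumption \ref{assumption_IdentifiablePaths}, every $\phat$ corresponds to a unique $p$, which gives the joint path $\ptilde$ of Definition \ref{definition_JointPaths}. Conversely, each $p$ is hit by the $2^{l_p}$ interpolation variants of $\that_1,\dots,\that_{l_p}$. Since the sibling weights $a(t_j)+a(t_{j+1})=1$ at every layer, an induction over the layers (from leaves upward) gives
\[
\sum_{\phat \mapsto p} a(\that_1,\dots,\that_{l_p})=1 .
\]
Consequently $c_v(t)$ can be rewritten as
\[
c_v(t)=\sum_{\ptilde} a(\that_1,\dots,\that_{l_p})\, w(t_0,\dots,t_{l_p-1})\, c_{v_{l_p}}(t_{l_p}).
\]

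\textbf{Bounding the difference.} Subtract the two expressions termwise and add and subtract $a\cdot w(\that)\cdot c(t_{l_p})$. Using $a\ge 0$, $0\le w(\that)\le 1$ and the triangle inequality, each summand is bounded by
\[
a\,\bigl(|c(t_{l_p})-c(\that_{l_p})| + |c(t_{l_p})|\,|w(t)-w(\that)|\bigr),
\]
which is the stated bound. If $\phi=\varphi$, the discrete scheme reproduces the continuous one exactly. Then both the temporal interpolation error and the weight discrepancy vanish, since the weights are evaluated on the same effective values, and the error is zero.

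\textbf{Main obstacle.} The hardest step is the regrouping identity $\sum a = 1$ over the interpolation variants of a fixed ground-truth path. The difficulty is bookkeeping: the interpolation times at layer $l+1$ depend on $\that_l$ rather than $t_l$, so the tree $\That(v,t)$ is not literally a product of per-layer binary choices attached to $p$. I would handle this by working directly with the joint-path indexing and inducting over the layers, invoking the assumptions at each layer exactly where needed.
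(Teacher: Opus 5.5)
Your derivation of the bound is the paper's proof step for step. You use the closed forms from Theorems \ref{theorem_AdvectionMessagePassing_closedForm} and \ref{theorem_AdvectionMessagePassingIterations_closedForm}, and collapse $\sum_k$ so that each path contributes only at $k=l_p$. You then redistribute each ground-truth path over its interpolation variants via $a(t_j)+a(t_{j+1})=1$ to index both sums by joint paths, and finish with $|w_1c_1-w_2c_2|\le|c_1-c_2|+|c_1|\,|w_1-w_2|$ for $w_1,w_2\in[0,1]$. Your layer-by-layer induction for $\sum_{\phat\mapsto p}a=1$ is more explicit than the paper's one-line appeal to the weights summing to one.

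The step that does not go through as written is the final zero-error clause. If $\phi=\varphi$ only means that the interpolated values are exact, the bracketed terms do not vanish summand by summand. Take a joint path where some $\that_l$ is the right node $t_{j+1}\neq t_l$. There, both $|c_{v_{l_p}}(t_{l_p})-c_{v_{l_p}}(\that_{l_p})|$ and $|w(t_0,\dots,t_{l_p-1})-w(\that_0,\dots,\that_{l_p-1})|$ are in general nonzero. So ``the weights are evaluated on the same effective values'' is false there, and your bound need not be zero.

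The missing observation, which is what the paper uses, is that zero interpolation error means every shifted time lands on its left node. That node carries $a=1$ and equals $t_l$, while the right node carries $a=0$. By induction over the layers, every joint path with $a(\that_1,\dots,\that_{l_p})\neq 0$ then has $\that_l=t_l$ for all $l$, and on exactly those summands both terms vanish.

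Alternatively, you can bypass the bound. Suppose $\phi=\varphi$ at the grid times. Then $c$ restricted to $T$ satisfies, at every internal node of $\That(v,t)$, the same children-aggregation identity with weights $w_{u_{l+1}v_l}(\that_l)\,a(\that_{l+1})$ as $\chat$, and the leaf values agree. Theorem \ref{theorem_TreesWithChildrenAggregationScheme} then expresses $c_v(t)$ by exactly the sum over $\Pahat(v,t)$ that you derived for $\chat_v(t)$. This gives $c_v(t)=\chat_v(t)$ directly, without Assumptions \ref{assumption_IdentifiablePaths} and \ref{assumption_EqualtTimeForJumpingIntoHistory}.
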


\begin{proof}
    By Theorem \ref{theorem_AdvectionMessagePassing_closedForm} and \ref{theorem_AdvectionMessagePassingIterations_closedForm}, 
    \small
    \begin{align*}
        c_v(t)
        =&~
        \sum_{ p = ((v_0,t_0)=(v,t),(u_1,v_1,t_1),...,(u_{l_p},v_{l_p},t_{l_p})) \in \Pa(v,t) } ~
        \left(
        \prod_{l = 0}^{l_p-1}  
        w_{u_{l+1}v_l}(t_l)
        \right)
        \cdot
        c_{v_{l_p}}(t_{l_p})
        \text{ and }
        \\
        \chat_v^{(k)}(t)
        =&~
        \sum_{ p = ((v_0,t_0)=(v,t),(u_1,v_1,t_1),...,(u_{l_p},v_{l_p},t_{l_p})) \in \Pahat(v,t) } ~
        \left(
        \prod_{l = 0}^{l_p-1}  
        w_{u_{l+1}v_l}(t_l)
        \cdot
        a(t_{l+1})
        \right)
        \cdot
        \chat_{v_{l_p}}^{(k-l_p)}(t_{l_p})
    \end{align*}
    \normalsize

    \noindent 
    holds for all $k \geq \max_{p \in \Pahat(v,t)} l_p$.
    As already discussed in the proof of Theorem \ref{theorem_ConvergenceOfModelIterations}, for a $k < \max_{p \in \Pahat(v,t)} l_p$, paths $p \in \Pahat(v,t)$ with $l_p \leq k$ contribute with the same contribution $\chat_{v_{l_p}}^{(k-l_p)}(t_{l_p}) = c_{v_{l_p}}(t_{l_p})$ to $\chat_v^{(k)}(t)$ as they would for a larger $k$, while the ones with $l_p > k$ contribute with $\chat_{v_l}^{(k-l)}(t_l) = 0$ (for $l = k < l_p)$. 
    More specifically, considering a flexible $k \in \N$, each path contributes only once to $\chat_v^{(k)}(t)$, namely for $k = l_p$.
    Therefore, $\chat_{v_{l_p}}^{(k-l_p)}(t_{l_p}) = \delta_{kl_p} \cdot c_{v_{l_p}}(t_{l_p})$ holds for all $k \geq l_p$ and also gives a meaning for cases $k < l_p$.
    Consequently, by
    (1) Equation \eqref{align_MessagePassing_AdvectionIterations} and
    (2) the findings above,
    we can express $\chat_v(t)$ by
    \small
    \begin{align*}
        \chat_v(t)
        \overset{\text{(1)}}{=}&~
        \sum_{k=0}^{\infty} 
        \chat_v^{(k)}(t)
        \\
        \overset{\text{(2)}}{=}&~
        \sum_{k=0}^{\infty} ~
        \sum_{ p = ((v_0,t_0)=(v,t),(u_1,v_1,t_1),...,(u_{l_p},v_{l_p},t_{l_p})) \in \Pahat(v,t) } ~
        \left(
        \prod_{l = 0}^{l_p-1}  
        w_{u_{l+1}v_l}(t_l)
        \cdot
        a(t_{l+1})
        \right)
        \cdot
        \chat_{v_{l_p}}^{(k-l_p)}(t_{l_p})
        \\
        \\
        \overset{\text{(2)}}{=}&~
        \sum_{ p = ((v_0,t_0)=(v,t),(u_1,v_1,t_1),...,(u_{l_p},v_{l_p},t_{l_p})) \in \Pahat(v,t) } ~
        \left(
        \prod_{l = 0}^{l_p-1}  
        w_{u_{l+1}v_l}(t_l)
        \cdot
        a(t_{l+1})
        \right)
        \cdot
        c_{v_{l_p}}(t_{l_p}).
    \end{align*}
    \normalsize

    \noindent 

    Now we want to use the shared paths $\ptilde$ from Equation \eqref{align_JointPaths} to express $c_v(t)$ and $\chat_v(t)$ using the same summands. To achieve this, it is important to note that as an artifact of the interpolation, there are multiple paths $\phat \in \Pahat(v,t)$ that are identifiable with a single path $p \in \Pa(v,t)$. 
    However, due to the fact that certain interpolation weights $a(t_{l+1})$ add up to one, a single contribution $c_{v_{l_p}}(t_{l_p})$ of a path $p \in \Pa(v,t)$ to $c_v(t)$, weighted by the weight
    \begin{align*}
        \left(
        \prod_{l = 0}^{l_p-1}  
        w_{u_{l+1}v_l}(t_l)
        \right),
    \end{align*}

    \noindent
    can be distributed to the multiple contributions $c_{v_{l_p}}(t_{l_p})$ of a path $p \in \Pa(v,t)$ to $c_v(t)$, weighted by the weights
    \begin{align*}
        \left(
        \prod_{l = 0}^{l_p-1}  
        w_{u_{l+1}v_l}(t_l)
        \cdot
        a(t_{l+1})
        \right)
    \end{align*}

    \noindent 
    associated with corresponding paths $\phat \in \Pahat(v,t)$ that each are identifiable with the path $p \in \Pa(v,t)$.    
    Using the shared paths $\ptilde$ notation from Equation \eqref{align_JointPaths}, this allows to express both $c_v(t)$ and $\chat_v(t)$ as 
    \small
    \begin{align*}
        c_v(t)
        =&~
        \sum_{ \ptilde = ((v_0,t_0)=(v,t),(u_1,v_1,t_1,\that_1),...,(u_{l_p},v_{l_p},t_{l_p},\that_{l_p})) \in \Pahat(v,t) } ~
        \left(
        \prod_{l = 0}^{l_p-1}  
        w_{u_{l+1}v_l}(t_l)
        \cdot
        a(\that_{l+1})
        \right)
        \cdot
        c_{v_{l_p}}(t_{l_p})
        \text{ and }
        \\
        \chat_v^{(k)}(t)
        =&~
        \sum_{ \ptilde = ((v_0,t_0)=(v,t),(u_1,v_1,t_1,\that_1),...,(u_{l_p},v_{l_p},t_{l_p},\that_{l_p})) \in \Pahat(v,t) } ~
        \left(
        \prod_{l = 0}^{l_p-1}  
        w_{u_{l+1}v_l}(\that_l)
        \cdot
        a(\that_{l+1})
        \right)
        \cdot
        c_{v_{l_p}}(\that_{l_p}),
    \end{align*}
    \normalsize

    \noindent
    making the sums perfectly comparable.
    Therefore, by
    (1) basic transformations and triangle-inequality and
    (2) the fact that 
    $|w_1 c_1 - w_2 c_2| 
    = 
    |w_1 c_1 + w_2 c_1 - w_2 c_1 - w_2 c_2| 
    = 
    |w_2 c_1 - w_2 c_2 + c_1 w_1 - c_1 w_2| 
    \leq 
    |c_1 - c_2| + |c_1| |w_1 - w_2|$ 
    holds for any $w_1, w_2 \in [0,1]$ and $c_1, c_2 \in \R$,
    we obtain
    \begin{align*}
        & \quad
        |c_v(t) - \chat_v(t)|
        \\
        \overset{\text{(1)}}{\leq}&~
        \sum_{ \ptilde \in \Pahat(v,t) } ~
        \left(
        \prod_{l = 0}^{l_p-1}
        a(\that_{l+1})
        \right)
        \cdot
        \left|
        \left(
        \prod_{l = 0}^{l_p-1}  
        \underbrace{
        w_{u_{l+1}v_l}(t_l)
        }_{
        \in [0,1]
        }
        \right)
        \cdot
        c_{v_{l_p}}(t_{l_p})
        -
        \left(
        \prod_{l = 0}^{l_p-1}  
        \underbrace{
        w_{u_{l+1}v_l}(\that_l)
        }_{
        \in [0,1]
        }
        \right)
        \cdot
        c_{v_{l_p}}(\that_{l_p})
        \right|
        \\
        \overset{\text{(2)}}{\leq}&~
        \sum_{ \ptilde \in \Pahat(v,t) } ~
        \left(
        \prod_{l = 0}^{l_p-1}
        a(\that_{l+1})
        \right)
        \cdot
        \Bigg(
        \quad \quad
        |c_{v_{l_p}}(t_{l_p}) - c_{v_{l_p}}(\that_{l_p})|
        \\
        &
        \quad \quad \quad \quad \quad \quad \quad \quad \quad \quad \quad \quad 
        +~~~~
        |c_{v_{l_p}}(t_{l_p})|
        \cdot
        \left|
        \left(
        \prod_{l = 0}^{l_p-1}  
        w_{u_{l+1}v_l}(t_l)
        \right)
        -
        \left(
        \prod_{l = 0}^{l_p-1}  
        w_{u_{l+1}v_l}(\that_l)
        \right)
        \right|
        \Bigg).
    \end{align*}
    
    If the temporal interpolation error is zero, that is, if $\phi = \varphi$ holds, all interpolation times $\that_l$ corresponding to left interpolation bound will be equal to $t_l$ hand have interpolation weights $a(\that_l) = 1$. Interpolation times $\that_l$ corresponding to right interpolation bounds have interpolation weights $a(\that_l) = 0$ (cf. Eq. \eqref{align_LinearInterpolation}.
    Therefore, in this case, the only non-zero summands in the error bound are the ones where $t_l = \that_l$ holds for all $l = 0,...,l_p$, in which case $|c_v(t) - \chat_v(t)| = 0$ holds. 
    This proves the claim.
    

\end{proof}

\end{document}